\def\@acmplainbodyfont{\itshape}
\def\@acmplainindent{\z@}
\def\@acmplainheadfont{\bfseries}
\def\@acmplainnotefont{\normalfont}
\newtheoremstyle{acmplain}%
  {.5\baselineskip\@plus.2\baselineskip
    \@minus.2\baselineskip}
  {0pt}
  {\@acmplainbodyfont}
  {\@acmplainindent}
  {\@acmplainheadfont}
  {.}
  {.5em}
  {\thmname{#1}\thmnumber{ #2}\thmnote{ {\@acmplainnotefont(#3)}}}
\def\@acmdefinitionbodyfont{\normalfont}
\def\@acmdefinitionindent{\z@}
\def\@acmdefinitionheadfont{\bfseries}
\def\@acmdefinitionnotefont{\normalfont}
\newtheoremstyle{acmdefinition}%
  {.5\baselineskip\@plus.2\baselineskip
    \@minus.2\baselineskip}
  {0pt}
  {\@acmdefinitionbodyfont}
  {\@acmdefinitionindent}
  {\@acmdefinitionheadfont}
  {.}
  {.5em}
  {\thmname{#1}\thmnumber{ #2}\thmnote{ {\@acmdefinitionnotefont(#3)}}}
\def\@acmproofbodyfont{\normalfont}
\def\@acmproofindent{\z@}
\def\@acmproofheadfont{\itshape}
\newtheoremstyle{acmproof}%
  {.5\baselineskip\@plus.2\baselineskip
    \@minus.2\baselineskip}
  {0pt}
  {\@acmproofbodyfont}
  {\@acmproofindent}
  {\@acmproofheadfont}
  {.}
  {.5em}
  {\ifstrempty{#3}{\thmname{#1}}{#3}}
\theoremstyle{acmproof}
\renewenvironment{proof}
               {\pushQED{\qed}\proofnew}
               {\popQED\endproofnew}  
\theoremstyle{acmplain}
\newtheorem{theorem}{Theorem}[section]
\newtheorem{lemma}[theorem]{Lemma}
\theoremstyle{acmdefinition}
\newtheorem{definition}[theorem]{Definition}
\definecolor{gray}{rgb}{0.96, 0.96, 0.96}
\newenvironment{theorem*}
  {\begin{theorem}}%
  {\end{theorem}}
\newcommand{\mathboldcommand}[1]{\mathbb{#1}}
\newcommand{\bbM}{\mathboldcommand{M}}
\newcommand{\bbN}{\mathboldcommand{N}}
\newcommand{\bbR}{\mathboldcommand{R}}
\newcommand{\mathcalcommand}[1]{\mathcal{#1}}
\newcommand{\mcA}{\mathcalcommand{A}}
\newcommand{\mcF}{\mathcalcommand{F}}
\newcommand{\mcI}{\mathcalcommand{I}}
\newcommand{\mcO}{\mathcalcommand{O}}
\newcommand{\mcR}{\mathcalcommand{R}}
\newcommand{\mcS}{\mathcalcommand{S}}
\DeclareMathAlphabet{\mathpzc}{T1}{pzc}{m}{it}
\newcommand{\edit}[1]{{#1}}
\newcommand*{\commentout}[1]{}
\newlength{\parskiptrue}
\definecolor{lred}{rgb}{1.0, 0.5, 0.5}
\definecolor{lorange}{rgb}{1.00, 0.90, 0.20}
\definecolor{lgreen}{rgb}{0.35, 0.95, 0.35}
\definecolor{lime}{rgb}{0.9, 1.0, 0.6}
\definecolor{lblue}{rgb}{1.0, 0.85, 0.75}
\newcommand*\wt[1]{\mathpalette\wthelper{#1}}
\newcommand*\wthelper[2]{%
        \hbox{\dimen@\accentfontxheight#1%
                \accentfontxheight#11.1\dimen@
                $\m@th#1\widetilde{#2}$%
                \accentfontxheight#1\dimen@
        }%
}
\newcommand*\accentfontxheight[1]{%
        \fontdimen5\ifx#1\displaystyle
                \textfont
        \else\ifx#1\textstyle
                \textfont
        \else\ifx#1\scriptstyle
                \scriptfont
        \else
                \scriptscriptfont
        \fi\fi\fi3
}
\newcommand*\wh[1]{\mathpalette\whhelper{#1}}
\newcommand*\whhelper[2]{%
        \hbox{\dimen@\accentfontxheight#1%
                \accentfontxheight#11.2\dimen@
                $\m@th#1\widehat{#2}$%
                \accentfontxheight#1\dimen@
        }%
}
\newcommand{\oset}[3][0ex]{%
  \mathrel{\mathop{#3}\limits^{
    \vbox to#1{\kern-3\ex@
    \hbox{$\scriptstyle#2$}\vss}}}}
\newcommand*{\defeq}{\triangleq}
\newcommand*{\indc}[1]{\mathbf{1}_{#1}}
\newcommand*{\relu}{\mathrm{ReLU}}
\newcommand*{\intr}{\mathit{int}}
\newcommand*{\cl}{\mathit{cl}}
\newcommand*{\bd}{\mathit{bd}}
\newcommand*{\pbd}{\mathit{pbd}}
\newcommand*{\spann}{\mathit{span}}
\newcommand*{\trsp}{\mathsf{T}} 
\newcommand*{\dom}{\mathit{dom}}
\newcommand*{\code}[1]{{#1}} 
\newcommand*{\ttP}{\texttt{P}}
\newcommand*{\ttQ}{\texttt{Q}}
\newcommand*{\ttw}{\texttt{w}}
\newcommand*{\ttx}{\texttt{x}}
\newcommand*{\ttlp}{\texttt{(}}
\newcommand*{\ttrp}{\texttt{)}}
\newcommand*{\DF}[1]{{D{#1}}} 
\newcommand*{\PDF}[2]{{D_{#2}{#1}}} 
\newcommand*{\ADF}[1]{{D^\mathtt{AD}{#1}}} 
\newcommand*{\APDF}[1]{{\partial^\mathtt{AD}{#1}}}
\newcommand*{\adf}[1]{\ADF{#1}} 
\newcommand*{\ext}[1]{\wt{#1}}
\newcommand*{\sem}[1]{\llbracket {{#1}} \rrbracket}
\newcommand*{\semad}[1]{\llbracket {{#1}} \rrbracket^\mathtt{AD}}
\newcommand*{\ndf}[1]{ \mathsf{ndf}({#1}) } 
\newcommand*{\ncdf}[1]{ \mathsf{ncdf}({#1}) } 
\newcommand*{\bdz}[1]{ \mathsf{bdz}({#1}) } 
\newcommand*{\ndfM}[1]{ \mathsf{ndf}_\Omega({#1}) }
\newcommand*{\ndfR}[1]{ \mathsf{ndf}_\bbR({#1}) }
\newcommand*{\incM}[1]{ \mathsf{inc}_\Omega({#1}) }
\newcommand*{\incR}[1]{ \mathsf{inc}_\bbR({#1}) }
\newcommand*{\Idx}{    \mathsf{Idx}}
\newcommand*{\clR}{\mcR_{\mathsf{cl}}}
\renewcommand{\paragraph}[1]{{#1}}
\setlist{noitemsep, topsep=0pt} 
\icmltitlerunning{On the Correctness of Automatic Differentiation for Neural Networks with Machine-Representable Parameters}
\begin{document}
\twocolumn[
  \icmltitle{On the Correctness of Automatic Differentiation \\ for Neural Networks with Machine-Representable Parameters}
  



\begin{icmlauthorlist}
\icmlauthor{Wonyeol Lee}{stanford}
\icmlauthor{Sejun Park}{ku}
\icmlauthor{Alex Aiken}{stanford}
\end{icmlauthorlist}

\icmlaffiliation{stanford}{Stanford University, USA}
\icmlaffiliation{ku}{Korea University, South Korea}

\icmlcorrespondingauthor{Wonyeol Lee}{wonyeol.lee.cs@gmail.com}
\icmlcorrespondingauthor{Sejun Park}{sejun.park000@gmail.com}

\icmlkeywords{Machine Learning, ICML}

\vskip 0.3in
]



\printAffiliationsAndNotice{}  


\hypersetup{linktoc=all}
\etocdepthtag.toc{mtchapter}

\begin{abstract}
  Recent work has shown that \edit{forward- and reverse-mode automatic differentiation (AD)} over the reals is almost always correct in a mathematically precise sense.
  However, actual programs work with {\em machine-representable numbers} (e.g., floating-point numbers), not reals.
  In this paper, we study the correctness of AD
  when the parameter space of a neural network consists solely of machine-representable numbers.
  \edit{%
  In particular, we analyze two sets of parameters on which AD can be incorrect:
  the incorrect set on which the network is differentiable but AD does not compute its derivative,
  and the non-differentiable set on which the network is non-differentiable.
  For a neural network {\em with bias parameters}, we first prove that the incorrect set is always empty.
  We then prove a tight bound on the size of the non-differentiable set,
  }%
  which is linear in the number of non-differentiabilities in activation functions,
  and give a simple necessary and sufficient condition for a parameter to be in this set.
  We further prove that AD always computes a Clarke subderivative even on the non-differentiable set.
  We also extend these results to neural networks possibly {without bias parameters}.
\end{abstract}

\section{Introduction}
\label{sec:intro}

\edit{Forward- and reverse-mode automatic differentiation (AD) are popular algorithms for computing the derivative of a function represented by a program \cite{GriewankW08}.} 
Diverse practical systems for AD have been developed
for general-purpose programs \cite{PearlmutterS08, Adolc12, Tapenade13, Autograd15, Adijac16, Juliadiff16, Diffsharp16},
and particularly for machine-learning programs \cite{Theano10, Torch11, Caffe14, CNTK16, Tangent18, Chainer19},
including TensorFlow \cite{Tensorflow16}, PyTorch \cite{Pytorch17}, and JAX \cite{Jax18b}. 
The development of such AD systems has been a driving force of the rapid advances
in deep learning (and machine learning in general) in the past 10 years
\cite{BaydinPRS17, LecunBH15, Schmidhuber15-DL}. 

Recently, the correctness of AD has been actively studied for various types of programs.
For programs that only use differentiable functions,
AD is correct {\em everywhere}, 
i.e., it computes the derivative of a given program at all inputs
\cite{AbadiP20, BrunelMP20, KrawiecJKEEF22, SmedingV23, RadulPFJM23, Elliott18, BartheCLG20, HuotSV20, Vakar21}.
On the other hand, for programs that use non-differentiable functions (e.g., $\relu$\footnote{$\relu(x) \defeq \max\{x,0\}$.}),
AD can be incorrect at some inputs \cite{KakadeL18}. 

There are two cases where AD is incorrect.
The first case is when the function $f$ represented by a given program is differentiable at some $x$,
but AD returns a value different from the derivative of $f$ at $x$. 
For instance, consider a program\footnote{It appeared in \citet{KakadeL18}.} 
that represents the identity function, defined as $\relu(x) - \relu(-x)$.
If AD uses zero as a ``derivative'' of $\relu$ at $x=0$, as is standard (e.g., in TensorFlow and PyTorch),
it returns zero for this program at $x=0$ while the true derivative is one.
The second case is when $f$ is non-differentiable at some $x$,
\edit{%
  but AD does not return a generalized notion of derivative (e.g., Clarke subdifferential) of $f$ at $x$.
  For example, $\relu(x) - \frac{1}{2}\relu(-x)$ represents a function that is non-differentiable at $x=0$
  with the Clarke subdifferential $[\frac{1}{2}, 1]$, but AD outputs $0$ at $x=0$.%
}%

Although AD can be incorrect, recent works show that 
for a large class of programs using non-differentiable functions, AD is correct {\em almost everywhere},
i.e., it is incorrect at most on a Lebesgue measure-zero subset of the input domain of a program 
\cite{BolteP20a, BolteP20b, LeeYRY20, HuotLMS22, MazzaP21}.

These prior works, however, have a limitation: they consider AD over the real numbers,
but in practice, inputs to a program are always {\em machine-representable numbers} such as $32$-bit floating-point numbers.
Since the set of machine-representable numbers is countable (and usually finite),
it is always a Lebesgue measure-zero subset of the real numbers.
Hence, AD could be incorrect on {\em all} machine-representable inputs according to prior works,
and this is indeed possible.
Consider a program\footnote{Inspired by \citet{BolteP20b, MazzaP21}.}
for a function from $\bbR$ to $\bbR$, defined as
\begin{gather*}
  \sum_{c \in \bbM}\! \Big[ \lambda x + \Big(\frac{1}{|\bbM|} - \lambda\Big)\!\Big(\relu(x-c) - \relu(-x+c)\Big) \Big],
\end{gather*}
where $\bbM \subseteq \bbR$ is a finite set of machine-representable numbers and $\lambda \in \bbR \setminus \{1\}$ is an arbitrary constant. 
Then, the program represents the affine function $x \mapsto x + a$ for $a = (\lambda - \smash{\frac{1}{|\bbM|}}) \times \sum_{c \in \bbM} c$,
but AD incorrectly computes its derivative at any $x \,\,{\in}\,\, \bbM$ as $\lambda$ (the arbitrarily chosen value) if zero is used as a ``derivative'' of $\relu$ at $0$ as before.%
\footnote{We can even make AD return different values at different $x \in \bbM$,
by using a different $\lambda_i$ for each $c_i \in \bbM$.
Similarly, we can also construct a program such that at all machine-representable numbers $\bbM$,
the program is non-differentiable and AD returns arbitrary values.}

Given these observations, we raise the following questions:
for a program that represents a neural network,
at which machine-representable inputs to the program (i.e., parameters to the network) can AD be incorrect,
and how many such inputs can there be?
In this work, we tackle these questions and present the first theoretical results. 
In particular, we study the two sets of machine-representable parameters of a neural network
on which AD \edit{can be} incorrect:
the {\em incorrect set}, on which the network is differentiable but AD does not compute its derivative,
and the {\em non-differentiable set}, on which the network is non-differentiable. 

{\bf Summary of results.}
We focus on neural networks consisting of alternating analytic pre-activation functions (e.g., fully-connected and convolution layers) 
and pointwise continuous activation functions (e.g., $\relu$ and $\mathrm{Sigmoid}$).
The first set of our results (\Cref{sec:bias}) is for such networks {\em with bias parameters} at every layer, and is summarized as follows.
\begin{itemize}[leftmargin=1em, itemsep=0.3em]
\item
  We prove that the incorrect set is {\em always empty}, not only over machine-representable parameters but also over real-valued ones.
  To our knowledge, this is the first result showing that the incorrect set can be empty
  for a class of neural networks using possibly non-differentiable functions; prior works only bounded the measure of this set.
\item 
  On the other hand, the non-differentiable set can be non-empty. 
  We give a tight bound on its density over all machine-representable parameters,
  which has the form $n/|\bbM|$ where $n$ is 
  the {\em total number of non-differentiable points} in activation functions.
  This result implies that in practice, the non-differentiable set often has a low density,
  especially if we use high-precision parameters (e.g., use $32$-bit floating-point numbers for $\bbM$, where $|\bbM|\;{\approx}\;2^{32}$). 
\item
  To better describe the non-differentiable set,
  we provide a simple, easily verifiable {\em necessary and sufficient condition} for a parameter to be in the non-differentiable set.
  Given that deciding the \edit{non-differentiability} of a neural network is NP-hard in general \cite{BolteBPP23},
  our result is surprising: having bias parameters is sufficient to efficiently decide the \edit{non-differentiability}.
\item 
  Given that the non-differentiable set can be non-empty, a natural question arises: what does AD compute on this set?
  We prove that AD {\em always computes a Clarke subderivative} (a generalized derivative)
  even on the non-differentiable set. That is, AD is an efficient algorithm for computing a Clarke subderivative in this case. 
\end{itemize}

The second set of our results (\Cref{sec:nobias}) extends the above results to neural networks possibly {\em without bias parameters} at some layers,
and is summarized as follows.
\begin{itemize}[leftmargin=1em, itemsep=0.3em]
\item
  As we observed in the $\relu(x)-\relu(-x)$ example, the incorrect set can be non-empty in this case. 
  Thus, we prove tight bounds on the density of both the incorrect and non-differentiable sets,
  which have the form $n'/|\bbM|$ 
  where $n'$ is linear in the total number of non-differentiable points in activation functions
  as well as the total number of boundary points in activation functions' zero sets.
\item
  We provide simple, easily verifiable sufficient conditions on parameters
  under which AD computes the standard derivative or a Clarke subderivative.
\end{itemize}

Our theoretical results carry two main practical implications: AD for neural networks is correct on most machine-representable parameters, and it is correct more often with bias parameters. 
For networks with bias parameters at all layers,
our results further provide an exact characterization of when AD is correct and what it computes.

\edit{%
  We remark that many of our results, especially all the results not about the density of certain sets,
  hold not only for machine-representable parameters but also for real-valued ones.
  On the other hand,
}%
our results may not be directly applicable to neural networks
with non-analytic pre-activation functions or non-pointwise activation functions;
we discuss such limitations in \Cref{sec:discuss}.

{\bf Organization.}
We first introduce notation and the problem setup (\Cref{sec:setup}).
We then present our main results for neural networks with bias parameters (\Cref{sec:bias})
and extend them to neural networks possibly without bias parameters (\Cref{sec:nobias}).
We conclude the paper with \edit{related work and discussion (\Cref{sec:related}--\ref{sec:concl})}.%

\section{Problem Setup}
\label{sec:setup}

\subsection{Notation and Definitions}
\label{sec:prelim}

We use the following notation and definitions. 
Let $\bbN$ and $\bbR$ be the sets of positive integers and real numbers, respectively.
For $n \in \bbN$, we use $[n] \defeq \{1, 2, \allowbreak \ldots, n\}$ and $\vec{0}_n \defeq (0, \ldots, 0) \in \bbR^n$,
and often drop $n$ from $\vec{0}_n$ when the subscript is clear from context.
For $x=(x_1, \allowbreak \ldots, \allowbreak x_n) \in \bbR^n$, we use $x_{-i}\defeq(x_1, \ldots, \allowbreak x_{i-1}, \allowbreak x_{i+1}, \allowbreak \ldots, \allowbreak x_n)$.
We call $A \subseteq \bbR$ an {\em interval} if it is $[a,b]$, $[a,b)$, $(a,b]$, or $(a,b)$ for some $a, b \in \bbR \cup \{\pm \infty\}$.
For $A \subseteq \bbR^n$, 
$\indc{A} : \bbR^n \to \{0,1\}$ denotes the indicator function of $A$.
We say that $f : \bbR^n \to \bbR^m$ is {\em analytic} if it is infinitely differentiable
and its Taylor series at any $x \in \bbR^n$ converges to $f$ on some neighborhood of $x$.
For any $f : \bbR^n \to \bbR^m$, 
\[\DF{f} : \bbR^n \to \bbR^{m \times n} \cup \{\bot\}\] 
denotes the standard derivative of $f$, where $f(x)=\bot$ denotes that $f$ is non-differentiable at $x$. 
Lastly, for $f:\bbR\to\bbR$, 
\begin{align*}
  \ndf{f}&\defeq\{x\in\bbR\mid\text{$f$ is non-differentiable at $x$}\},\\
  \bdz{f}&\defeq\bd(\{x\in\bbR\mid f(x)=0\})
\end{align*}
denote the set of non-differentiable points of $f$ and the boundary of the zero set of $f$, respectively.

\subsection{Neural Networks}
\label{sec:nn}

We define a neural network as follows. 
Given the number of layers $L \in \bbN$, let $N_0 \in \bbN$ be the dimension of input data,
$N_l \in \bbN$ and $W_l \in \bbN \cup\{0\}$ be the number of neurons and the number of parameters at layer $l \in [L]$,
and $N \defeq N_1 + \cdots + N_L$ and $W \defeq W_1 + \cdots + W_L$.
Further, for each $l \in [L]$,
let $\tau_l : \bbR^{N_{l-1}} \times \bbR^{W_l} \to \bbR^{N_l}$ be an analytic {\em pre-activation function}
and $\sigma_l : \bbR^{N_l} \to \bbR^{N_l}$ be a pointwise, continuous {\em activation function}, i.e.,
\begin{align*}
  \sigma_l(x_1,\dots,x_{N_l}) \defeq \big( \sigma_{l,1}(x_1), \ldots, \sigma_{l,N_l}(x_{N_l}) \big)
\end{align*}
for some continuous $\sigma_{l,i}: \bbR \to \bbR$.
Under this setup, we define a neural network as a function of model parameters: 
given input data $c \in \bbR^{N_0}$, a {\em neural network} $z_L(\,\cdot\,;c) : \bbR^{W} \to \bbR^{N_L}$ is defined as
\begin{align}
  \label{eq:nn-def}
  z_L(w;c) \defeq (\sigma_L \circ \tau_L^{\langle w_L\rangle} \circ \cdots \circ \sigma_1 \circ \tau_1^{\langle w_1\rangle})(c),
\end{align}
where $w \defeq (w_1, \dots, w_L)$, $w_l \defeq (w_{l,1}, \ldots, w_{l,W_l}) \in \bbR^{W_l}$, and $\smash{\tau_l^{\langle w_l \rangle}}(x) \defeq \tau_l(x, w_l)$.
We say such $z_L$ {\em has $L$ layers, $N$ neurons, and $W$ parameters}.

We next define the {\em activation neurons} $z_l(\,\cdot\,; c) : \bbR^W \to \bbR^{N_l}$ 
and the {\em pre-activation values} $y_l(\,\cdot\,; c):\bbR^W \to \bbR^{N_l}$ at layer $l\in[L]$, as we defined $z_L$ above:
\begin{align*}
  z_l(w;c) & \defeq (\sigma_l \circ \tau_l^{\langle w_l\rangle} \circ \cdots \circ \sigma_1 \circ \tau_1^{\langle w_1\rangle})(c),
  \\
  y_l(w;c) &\defeq \tau_l^{\langle w_l \rangle }(z_{l-1}(w; c)),
\end{align*}
where $z_0(w; c)\defeq c$. 
Since the input data $c$ is fixed while we compute the derivative of $z_L$ with respect to $w$ (e.g., in order to train $z_L$), 
we often omit $c$ and simply write $z_l(w)$ and $y_l(w)$ to denote $z_l(w;c)$ and $y_l(w;c)$, respectively.

For the set of all indices of neurons
\[ \Idx \defeq \{ (l, i) \mid l \in [L], i \in [N_l]\} \]
and for each $(l,i) \in \Idx$,
we use $y_{l,i}, z_{l,i}: \bbR^{W} \to \bbR$ and $\tau_{l,i} : \bbR^{N_{l-1}} \times \bbR^{W_l} \to \bbR$
to denote the functions that take only the {$i$-th} output component of $y_{l}$, $z_{l}$, and $\tau_{l}$, respectively.
Note that we defined $\sigma_{l,i}$ above in a slightly different way:
its domain is not $\bbR^{N_l}$ (i.e., the domain of $\sigma_l$) but $\bbR$.

Finally, we introduce the notion of {piecewise-analytic}%
\footnote{It is inspired by the notion of PAP in \citet{LeeYRY20}.}
to consider possibly non-differentiable activation functions.

\begin{definition*}
  \label{def:pafun}
  A function $f : \bbR \to \bbR$ is {\em piecewise-analytic}
  if there exist $n \in \bbN$, a partition $\{A_i\}_{i \in [n]}$ of $\bbR$ consisting of non-empty intervals,
  and analytic functions $\{f_i : \bbR \to \bbR \}_{i \in [n]}$ such that
  $f = f_i$ on $A_i$ for all $i \in [n]$.
\end{definition*}
\begin{assumptionnon*}
  $\sigma_{l,i}$ is piecewise-analytic for all $(l,i)\in\Idx$.
\end{assumptionnon*}

The class of piecewise-analytic functions includes not only all analytic functions
but also many non-differentiable functions widely used in neural networks such as ReLU, LeakyReLU, and HardSigmoid.
Hence, our definition of neural networks includes a rich class of practical networks: 
$\tau_l$ can be any analytic function (e.g., a fully-connected, convolution, or normalization layer),
and $\sigma_l$ can be any pointwise continuous and piecewise-analytic function (e.g., ReLU, LeakyReLU, or HardSigmoid).

In practice, we often apply AD to the composition of a neural network $z_L$ and a loss function $\ell$
(e.g., Softmax followed by CrossEntropy), to compute the derivative of the loss value of $z_L$ with respect to its parameters.
We emphasize that all of our results \edit{except for lower bounds (i.e., \Cref{thm:ndf-lbound-bias,thm:ndf-lbound-nobias,thm:inc-lbound-nobias})} continue to hold
even if we replace $z_L$ \edit{in their conclusions} by $\ell\circ z_L$ for any analytic $\ell : \bbR^{N_L} \to \bbR^m$. 
For simplicity, however, we state our results only for $z_L$ and not for $\ell \circ z_L$.

\subsection{Automatic Differentiation}
\label{sec:ad}

\edit{%
  Given a program that represents a neural network $z_L$ as in \Cref{eq:nn-def}, AD essentially computes the function
  \[\ADF{z_L} : \bbR^W \to \bbR^{N_L \times W}\]
  by applying the chain rule of differentiation to \Cref{eq:nn-def}.
  That is, $\ADF{z_L}$ is defined as the product of $\smash{\adf{\tau_{l,i}}}$ and $\smash{\adf{\sigma_{l,i}}}$ for $(l,i) \in \Idx$,
  where $\smash{\adf{\tau_{l,i}}} : \smash{\bbR^{N_{l-1}}} \times \smash{\bbR^{W_l}} \to \smash{\bbR^{1 \times (N_{l-1}+W_l)}}$ and
  $\smash{\adf{\sigma_{l,i}}} : \smash{\bbR^{N_{l}}} \to \smash{\bbR^{1 \times N_l}}$
  denote the ``derivatives'' of $\tau_{l,i}$ and $\sigma_{l,i}$ that AD uses in its computation
  (see Appendix~\ref{sec:pf-ad} for more details).
  Here $\ADF{z_L}$, $\adf{\tau_{l,i}}$, and $\adf{\sigma_{l,i}}$ can be different
  from the standard derivatives $\DF{z_L}$, $\DF{\tau_{l,i}}$, and $\DF{\sigma_{l,i}}$,
  partly because the former never return $\bot$ even at non-differentiable points while the latter always return $\bot$ at those points.
  We note that $\ADF{z_L}$ expresses what practical AD systems (e.g., TensorFlow, PyTorch) essentially compute in {\em both} forward-mode and reverse-mode.%
}%

\edit{By definition,} the output $\adf{z_L}$ of AD
depends on the choice of $\adf{\tau_{l,i}}$ and $\adf{\sigma_{l,i}}$. 
To focus on the standard choices made by practical AD systems, 
we introduce the notion of an {extended derivative}.

\begin{definition*}
  \label{def:extderiv}
  A function $g:\bbR^n \to \bbR^{m \times n}$ is an {\em extended derivative} of $f : \bbR^n \to \bbR^m$ if
  for all $x \in \bbR^n$ with $\DF{f}(x) \neq \bot$, it holds that $g(x) = \DF{f}(x)$.
\end{definition*}
\begin{assumptionnon*}
  $\adf{f}$ is an extended derivative of $f$ for all $f \in \{ \tau_{l,i}, \sigma_{l,i} \mid (l,i) \in \Idx \}$.
\end{assumptionnon*}

We note that a differentiable function $f$ has a unique extended derivative which is the standard derivative $\DF{f}$ of $f$.
In contrast, a non-differentiable function $f$ has (uncountably) many extended derivatives:
e.g., $\indc{(0, \infty)} + c \cdot \indc{\{0\}}$ is an extended derivative of $\relu$ for all $c \in \bbR$,
\edit{where $\indc{A}$ denotes the indicator function of a set $A$}.

Among many extended derivatives, some of them are used more frequently in practice,
which we characterize as {consistency}.

\begin{definition*}
  \label{def:consistent}
  For $f : \bbR^n \to \bbR^m$, an extended derivative $g$ of $f$ is {\em consistent} if
  for all $x \in \bbR^n$ with $\DF{f}(x) = \bot$, it holds that $g(x) = \lim_{k \to \infty} \DF{f}(x_k)$ for some $x_k \to x$.%
  \footnote{\edit{%
    Any consistent extended derivative of $f$ is an element of the so-called Bouligand subdifferential of $f$ \cite{CuiP21}.
    But the converse does not hold in general.%
  }}
\end{definition*}

For instance, $\indc{(0, \infty)}$ and  $\indc{[0, \infty)}$ are consistent extended derivatives of $\relu$
but $\indc{(0, \infty)} + c \cdot \indc{\{0\}}$ is not for all $c \in \bbR \setminus \{0, 1\}$;
among them, $\adf{\relu} = \indc{(0, \infty)}$ is typically used by popular AD systems (e.g., TensorFlow and PyTorch).
Although $\adf{f}$ is usually consistent in practice,
we do not assume it by default (and explicitly assume it only when necessary)
to make our results as general as possible,
and to study whether the values of extended derivatives at non-differentiable points matter to AD.

\subsection{Incorrect and Non-Differentiable Sets}
\label{sec:machrep-params}

In practice, the parameters of a neural network cannot be arbitrary real numbers (as machines cannot represent them),
but can only be machine-representable numbers $\bbM \subseteq \bbR$,
where $\bbM$ is often chosen as the set of all $32$-bit floating-point numbers. 
To this end, we consider
\[\Omega \defeq \bbM^W \subseteq \bbR^W,\]
the set of parameters that a neural network $z_L : \bbR^W \to \bbR^{N_L}$ can take in practice.
We assume that $\bbM$ is an arbitrary finite subset of $\bbR$ throughout the paper;
e.g., it can be the set of $n$-bit floating-point (or fixed-point) numbers for any $n \in \bbN$.

To better understand the correctness of AD, we study the following two disjoint subsets of $\Omega$
on which \edit{AD can return an incorrect output}.

\begin{definition*}
  \label{def:ndf-inc}
  For a neural network $z_L$, define the {\em incorrect set} and the {\em non-differentiable set} of $z_L$ as
  \setlength{\belowdisplayskip}{0.5\abovedisplayskip}
  \begin{align*}
    \!\!\!\!\!\!\!\!\!\!\!\!\!\!\!\!\!\!
    \incM{z_L} &\defeq \rlap{$\{w \in \Omega \mid \DF{z_L}(w) \,{\neq}\, \bot,\, \ADF{z_L}(w) \,{\neq}\, \DF{z_L}(w) \},$}
    \\
    \!\!\!\!\!\!\!\!\!\!\!\!\!\!\!\!\!\!
    \ndfM{z_L} &\defeq \{w \in \Omega \mid \DF{z_L}(w) \,{=}\, \bot\}.
    \hspace{60pt} 
  \end{align*}
\end{definition*}

These two sets correspond
to the two cases when AD \edit{can be} incorrect:
on the incorrect set $\incM{z_L}$, $z_L$ is differentiable but AD does not compute its standard derivative;
on the non-differentiable set $\ndfM{z_L}$, $z_L$ is non-differentiable
\edit{and AD may not compute a generalized notion of derivative (e.g., Clarke subdifferential)}.
Here $\ndfM{z_L} \subseteq \Omega$ is different from $\ndf{f} \subseteq \bbR$,
which was defined in \Cref{sec:prelim} for $f : \bbR \to \bbR$.%

\section{\mbox{Correctness of Automatic Differentiation for} Neural Networks with Bias Parameters}
\label{sec:bias}

Our main objective is to understand the incorrect and non-differentiable sets.
In particular, we focus on neural networks with bias parameters (defined below) in this section
and consider more general neural networks in \Cref{sec:nobias}.
For the former class of neural networks,
we characterize the incorrect and non-differentiable sets
in \cref{sec:bias-incorrect} and \cref{sec:bias-nondiff},
and establish a connection between AD and Clarke subderivatives (a generalized notion of derivative)
in \cref{sec:bias-clarke}.

We start by defining neural networks with {bias parameters}.

\begin{definition*}
  \label{def:has-bias}
  A pre-activation function $\tau_l: \smash{\bbR^{N_{l-1}}} \times \smash{\bbR^{W_l}} \to \smash{\bbR^{N_l}}$
  of a neural network {\em has bias parameters}
  \edit{%
    if $W_l \geq N_l$ and there exist
    $\smash{f_{1}}, \allowbreak \ldots, \allowbreak \smash{f_{N_l}} : \bbR^{N_{l-1}} \times \bbR^{W_l - N_l} \to \bbR$ such that 
    \[
    \tau_{l,i}(x, (u, v)) = f_i(x, u) + v_i
    \]
    for all $i \in [N_l]$ and $(x, u, v) \in \bbR^{N_{l-1}} \times \bbR^{W_l - N_l} \times \bbR^{N_l}$.
  }%
  Here $v_i$ is called the {\em bias parameter of $\tau_{l,i}$}.
  A neural network $z_L$ {\em has bias parameters} if $\tau_l$ has bias parameters for all $l \in [L]$.
  \end{definition*}

Many popular pre-activation functions  are typically implemented with bias parameters. 
\edit{%
  For example, fully-connected layers, attention layers (e.g., MultiheadAttention), and some normalization layers (e.g., LayerNorm) do so.
  Yet not all pre-activation functions have bias parameters in practice.
  For instance, convolutional layers and other normalization layers (e.g., BatchNorm) usually do not satisfy \Cref{def:has-bias}:
  they do contain some bias terms, but each of these terms is used to compute multiple output values (instead of a single output value as in our definition).%
}

\subsection{Characterization of the Incorrect Set} 
\label{sec:bias-incorrect}

We first show that the incorrect set of a neural network is {always empty} if the network has bias parameters,
i.e., AD computes the standard derivative wherever the network is differentiable.

\begin{theorem*}
  \label{thm:inc-zero-bias}
  If a neural network $z_L$ has bias parameters, then for all $w \in \bbR^W$ at which $z_L$ is differentiable,
  \begin{align}
    \label{eq:inc-zero-bias}
    \ADF{z_L}(w) = \DF{z_L}(w).
  \end{align}
  This implies that $|\incM{z_L}| = 0$.
\end{theorem*}

It should be emphasized that \cref{eq:inc-zero-bias} is not only for machine-representable parameters,
but also for any {real-valued} parameters.
Compared to existing results, this result is surprising.
For instance, \citet{LeeYRY20,BolteP20b} show that
the incorrect set over $\bbR^n$ (not over $\bbM^n$) has Lebesgue measure zero for some classes of programs,
but they do not give any results on whether the set can be empty.
In contrast, \Cref{thm:inc-zero-bias} states that 
the incorrect set over $\bbR^n$ is {empty} for a smaller, yet still large class of programs, i.e., neural networks with bias parameters.

In \cref{thm:inc-zero-bias}, the condition that $z_L$ has bias parameters plays a crucial role.
Namely, \Cref{thm:inc-zero-bias} does not hold if this condition is dropped.
For instance, consider a neural network $z_L : \bbR \to \bbR$ that is essentially the same as $f : \bbR \to \bbR$ with
\( f(w) = \relu(w) - \relu(-w) \) (which we discussed in~\Cref{sec:intro}). 
Then, $z_L$ does not have bias parameters, and $\incM{z_L}$ is non-empty 
if $\adf{\relu} = \smash{\indc{(0,\infty)}}$ is used.

The proof of \Cref{thm:inc-zero-bias} consists of the following two arguments:
for all $w \in \bbR^W$ with $\DF{z_L}(w) \neq \bot$, 
\begin{itemize}
\item[(i)] if $y_{l,i}(w) \in \ndf{\sigma_{l,i}}$, then $\partial z_L/\partial z_{l,i} = \vec{0}$ at $w$, and
\item[(ii)] if (i) holds, then $\ADF{z_L}(w) = \DF{z_L}(w)$. 
\end{itemize}
That is, (i) if a pre-activation value $y_{l,i}$ touches a non-differ\-entiable point of its activation function $\sigma_{l,i}$,
then the derivative of $z_L$ with respect to $z_{l,i}$ should always be zero;
and (ii) \Cref{thm:inc-zero-bias} follows from (i).
We point out that the proof of (i) relies heavily on the bias parameter condition.
For more details, see Appendix~\ref{sec:pf-inc}.

\vspace{-0.01in}
\subsection{Characterization of the Non-Differentiable Set} 
\label{sec:bias-nondiff}

We next show that if a neural network has bias parameters, then the density of the non-differentiable set in $\Omega$ is {bounded by $n/|\bbM|$},
where $n$ is 
the total number of non-differentiable points in activation functions.

\begin{theorem*}
  \label{thm:ndf-inc-ubound-bias}
  If a neural network $z_L$ has bias parameters, 
  \begin{align*}
    \frac{|\ndfM{z_L}|}{|\Omega|}
    \leq \frac{1}{|\bbM|} {\sum_{(l,i) \in \Idx}} | \ndf{\sigma_{l,i}} |
  \end{align*}
  where $\ndf{f}$ is the set of non-differentiable points of $f$.
\end{theorem*}

In many practical settings, the bound in \cref{thm:ndf-inc-ubound-bias} is often small, especially under high-precision parameters.
For example, $\bbM$ is frequently chosen as the set of $32$-bit floating-point numbers so $|\bbM|\approx 2^{32}$, 
while $|\Idx|$ (the number of neurons) is often smaller than $2^{32}$
and $|\ndf{\sigma_{l,i}}|$ is typically small (e.g., $0$ for differentiable $\sigma_{l,i}$, $1$ for $\relu$, and $2$ for $\mathrm{HardSigmoid}$).
This implies that in practice, the non-differentiable set often has a low density in $\Omega$.
We remark, however, that the bound in \cref{thm:ndf-inc-ubound-bias} can grow large
in low-precision settings (e.g., when parameters are represented by $\le 16$-bit numbers).

Although the bound in \Cref{thm:ndf-inc-ubound-bias} can be large in some cases (e.g., when $|\bbM|$ is small),
we prove that the bound is in general {tight} up to a constant multiplicative factor.

\begin{theorem*}
  \label{thm:ndf-lbound-bias}
  For any $\bbM \subseteq \bbR$ and $n, \alpha \in \bbN$ with $1 \leq |\bbM| < \infty$, $n \geq 2$, and $\alpha \leq |\bbM|/(n-1)$,
  there is a neural network $z_L : \bbR^W \to \bbR$ with bias parameters that satisfies
  \begin{align*}
    \frac{|\ndfM{z_L}|}{|\Omega|}
    \geq \frac{1}{2} \cdot \frac{1}{|\bbM|} \sum_{(l,i) \in \Idx} | \ndf{\sigma_{l,i}} |
  \end{align*}
  and the following: $z_L$ has $n+1$ neurons and $|\ndf{\sigma_{1,i}}| = \alpha$ for all $i \in [N_1]$.
\end{theorem*}

In \Cref{thm:ndf-lbound-bias}, the condition $\alpha \leq |\bbM|/(n-1)$ is for achieving the constant $1/2$ in the bound.
A similar bound can be derived for a larger $\alpha$ (i.e., $\alpha>|\bbM|/(n-1)$) but with a constant smaller than $1/2$.

\Cref{thm:ndf-inc-ubound-bias,thm:ndf-lbound-bias} describe how large the non-differ\-entiable set $\ndfM{z_L}$ can be,
but give no clue about exactly which parameters constitute this set. 
To better understand this, 
we present an easily verifiable {necessary and sufficient} condition for characterizing $\ndfM{z_L}$.

\begin{theorem*}
  \label{thm:cor-deriv-bias}
  If a neural network $z_L$ has bias parameters, then the following are equivalent for all $w \in \bbR^W$.
  \begin{itemize}
  \item $z_L$ is non-differentiable at $w$.
  \item $y_{l,i}(w) \in \ndf{\sigma_{l,i}}$
    and $\smash{\APDF{z_L}} / \partial z_{l,i} \neq \smash{\vec{0}}$ at $w$ for some $(l,i)\in\Idx$.
  \end{itemize}
\end{theorem*}

Here $\smash{\APDF{z_L}} / \partial z_{l,i}$ denotes the partial derivative of $z_L$ with respect to $z_{l,i}$ 
that {reverse-mode} AD (e.g., backpropagation) computes as a byproduct of computing $\ADF{z_L}$
(see Appendix~\ref{sec:pf-apd} for more details). 
Hence, \Cref{thm:cor-deriv-bias} implies that we can efficiently%
\footnote{in $\mcO(N_L T)$ time for a neural network $z_L \,{:}\, \bbR^W \,{\to}\, \bbR^{N_L}$
where $T$ is the time to evaluate $z_L(w)$, because reverse-mode AD takes $\mcO(N_L T)$ time to compute $\ADF{z_L}(w)$.}
decide whether a neural network with bias parameters is \edit{non-differentiable} at a (real-valued) parameter or not.
This result is surprising given a recent, relevant result that 
deciding such \edit{non-differentiability} 
is NP-hard in general \cite{BolteBPP23}.

We now sketch the proof of \Cref{thm:ndf-inc-ubound-bias}, 
to explain how we obtain the bound in the theorem and where we use the bias parameter condition. 
First, we prove that if $y_{l,i}(w)$ does not touch any non-differentiable point of $\sigma_{l,i}$ for all $(l,i) \in \Idx$,
then $z_L$ is differentiable at $w$.
In other words,
\begin{align}
  \label{eq:bias-nondiff-proof-sketch-1}
  \ndfM{z_L} \subseteq \!\!
  \bigcup_{(l,i) \in \Idx}\,
  \bigcup_{c \in \ndf{\sigma_{l,i}}} \!\!
  \{ w \in \Omega \,|\; y_{l,i}(w) = c \}.
\end{align}
Second, we prove that for all $(l,i) \in \Idx$ and $c \in \bbR$,
\begin{align}
  \label{eq:bias-nondiff-proof-sketch-2}
  {\big| \{ w \in \Omega \,|\; y_{l,i}(w) = c  \} \big|} \leq {|\bbM|^{W-1}}.
\end{align}
This inequality is invalid in general, but is valid when $\tau_{l}$ has bias parameters.
If the parameter $w$ has a value $v = (v_1, \ldots, v_W)$ and its $j$-th entry $v_{j}$ corresponds to the bias parameter of $\tau_{l,i}$,
then $y_{l,i}(v) = f(v_{-j}) + v_{j}$ for some function $f$.
Hence, for any $v_{-j} \in \bbM^{W-1}$, there is at most one $v_{j} \in \bbM$ achieving $y_{l,i}(v) = c$,
and this implies the above inequality.
Finally, we prove that \cref{thm:ndf-inc-ubound-bias} follows from the above two results.
The full proofs of \Cref{thm:ndf-inc-ubound-bias,thm:ndf-lbound-bias,thm:cor-deriv-bias} are presented
in Appendices~\ref{sec:pf-ndfinc}, \ref{sec:pf-lowerbd}, and \ref{sec:pf-corderiv}, respectively.

\subsection{Connection to Clarke Subderivatives}
\label{sec:bias-clarke}

We have so far observed that with bias parameters, the incorrect set is always empty
but the non-differentiable set may not be. 
A natural question is then:
what does AD compute on the non-differentiable set?
We answer this question by showing that AD computes a Clarke subderivative%
\footnote{The {\em Clarke subdifferential} of $f : \bbR^n \!\to \bbR^m$ at $x \in \bbR^n$ refers to the convex hull of
$\{ \smash{\lim_{n \to \infty}} \DF{f}(x_n) \mid x_n \to x,\, \allowbreak \DF{f}(x_n) \neq \bot \} \subseteq \bbR^{m \times n}$,
and an element of the Clarke subdifferential is called a {\em Clarke subderivative} \cite{Clarke90,KakadeL18}.} 
{everywhere} (including on the non-differentiable set), if it uses {\em consistent} extended derivatives for activation functions.

\begin{theorem*}
  \label{thm:clarke-subdiff-bias}
  If a neural network $z_L$ has bias parameters and $\adf{\sigma_{l,i}}$ is consistent for all $(l,i) \in \Idx$, then for all $w \in \bbR^W$, 
  \begin{align*}
    \ADF{z_L}(w) =
    \begin{cases}
      \DF{z_L}(w) & \text{if $\DF{z_L}(w) \neq \bot$}
      \\
      \begin{array}{@{}l@{}}
        \lim_{n \to \infty} \DF{z_L}(w'_n)
        \\[-2pt]
        \;\;\;\text{for some $w'_n \to w$}
      \end{array}
      & \text{if $\DF{z_L}(w) = \bot$}.
    \end{cases}
  \end{align*}
  This implies that $\ADF{z_L}$ is a Clarke subderivative of~$z_L$.
\end{theorem*}

\Cref{thm:clarke-subdiff-bias} is not only a new result about AD,
but also gives a positive answer to a long-standing open question about Clarke subgradients \cite{Clarke75,KakadeL18,BolteBPP23}:
are there a sufficiently large class $\mcF$ of {\em scalar} functions
and a deterministic algorithm $\mcA$ that computes a {\em Clarke subgradient} (i.e., subderivative)
of $f \in \mcF$ at $x \in \bbR^n$ efficiently
\edit{(i.e., in time $\mcO(T)$ that is independent of $n$, where $T$ is time to evaluate $f(x)$)}?
In other words, is there a so-called ``Cheap Subgradient Principle''?
For instance, \citet{KakadeL18} propose an efficient algorithm $\mcA'$ (for some $\mcF'$)
but $\mcA'$ is not deterministic,
whereas \citet{BartonKSW18,KhanB15} propose deterministic algorithms $\mcA''$ (for some $\mcF''$)
but $\mcA''$ are not efficient.
In contrast, \Cref{thm:clarke-subdiff-bias} implies that for 
neural networks with bias parameters, 
a Clarke subgradient at any (real-valued) parameter can be computed deterministically and efficiently,
even by the vanilla reverse-mode AD.
In this sense, we provide a new understanding on the computational aspects of Clarke subgradients.

We note that \Cref{thm:clarke-subdiff-bias} no longer holds without any of its conditions:
having bias parameters and using consistent extended derivatives.
One can confirm this using the following examples:
$z_L(w) = \relu(w) - \relu(-w)$ with $\adf{\relu} = \smash{\indc{(0, \infty)}}$
(in which $z_L$ does not have bias parameters as observed in \Cref{sec:bias-incorrect}),
and $\wh{z}_L(w) = \relu(w)$ with $\adf{\relu} = \smash{\indc{(0, \infty)}} + c \cdot \smash{\indc{\{0\}}}$ for any $c \in \bbR \setminus [0,1]$
(in which $\adf{\relu}$ is not consistent).
For the proof of \Cref{thm:clarke-subdiff-bias}, see Appendix~\ref{sec:pf-subdiff}.
\section{\mbox{Correctness of Automatic Differentiation for} Neural Networks without Bias Parameters}
\label{sec:nobias}

In this section, we investigate the correctness of AD for neural networks that may or may not have bias parameters.
For such general networks, 
however, considering only the properties of activation functions such as $\ndf{\sigma_{l,i}}$ (as we did in \Cref{sec:bias}) 
is insufficient to derive non-trivial bounds on the size of the incorrect and non-differentiable sets, as long as general pre-activation functions are used.

To illustrate this, consider neural networks $z_L, \smash{\wh{z}_L} : \bbR \to \bbR$ 
that are essentially the same as $f, \smash{\wh{f}}: \bbR \to \bbR$
with $f(w) = \relu(h(w)) - \relu(-h(w))$ and $\smash{\wh{f}}(w) = \relu(h(w))$,
where $h : \bbR \to \bbR$ is some analytic pre-activation function satisfying $h(x) = 0$ and $\DF{h}(x) = 1$ for all $x \in \bbM$.
Suppose that $\adf{\relu} = \indc{(0,\infty)}$.
Then, we have $\incM{z_L} = \ndfM{\smash{\wh{z}_L}} = \Omega$
even though $z_L$ and $\smash{\wh{z}_L}$ have only $\leq 2$ non-differentiable points in their activation functions.
The main culprit of having such large $\incM{z_L}$ and $\ndfM{\smash{\wh{z}_L}}$,
even with a tiny number of non-differentiable points in activation functions,
is that $z_L$ and $\smash{\wh{z}_L}$ use the unrealistic pre-activation function $h$ which does not have bias parameters.

To exclude such extreme cases and focus on realistic neural networks,
we will often consider {\em well-structured biaffine} pre-activation functions when they do not have bias parameters.

\begin{definition*}
  \label{def:biaffine}
  A pre-activation function $\tau_l : \bbR^{N_{l-1}} \times \bbR^{W_l} \to \bbR^{N_l}$ is {\em well-structured biaffine}
  if there are $M_i \in \bbR^{{N_{l-1}} \times W_l}$ and $c_i \in \bbR$ for all $i \in [N_l]$ such that
  \[
  \tau_{l,i}(x,u) = x^\trsp M_i u + c_i
  \]
  and each column of $M_i$ has at most one non-zero entry.
\end{definition*}

Any fully-connected or convolution layers are well-structured biaffine when they do not have bias parameters.
Thus, a large class of neural networks is still under our consideration even after we impose the above restriction.
\edit{Yet some pre-activation functions (e.g., normalization and attention layers) are not well-structured biaffine whether or not they have bias parameters.}

We now present our results for neural networks possibly without bias parameters,
extending Theorems~\ref{thm:inc-zero-bias}--\ref{thm:clarke-subdiff-bias}.

\subsection{Bounds for Non-Differentiable and Incorrect Sets}

We first bound the density of the non-differentiable and incorrect sets in $\Omega$, extending \Cref{thm:ndf-inc-ubound-bias}. 

\begin{theorem*}
  \label{thm:ndf-inc-ubound-nobias}
  If a pre-activation function $\tau_l$ has bias parameters or is well-structured biaffine for all $l\in[L]$, then 
  \begin{align*}
    & \frac{|\ndfM{z_L} \cup \incM{z_L}|}{|\Omega|}
    \\
    & \qquad
    \leq \frac{1}{|\bbM|} {\sum_{(l,i) \in \Idx}}
    \Big| \ndf{\sigma_{l,i}} \cup \big(\bdz{\sigma_{l,i}} \cap S_{l+1}\big) \Big|,
  \end{align*}
  where $\bdz{f}$ is the boundary of $f$'s zero set (see \Cref{sec:prelim}), and
  \begin{align*}
    S_l \defeq
    \begin{cases}
      \emptyset~~ & \text{if $l > L$ or $\tau_l$ has bias parameters}
      \\ \bbR~~ & \text{otherwise}.
    \end{cases}
  \end{align*}
\end{theorem*}

We note that if $z_L$ has bias parameters, \Cref{thm:ndf-inc-ubound-nobias} reduces to \cref{thm:ndf-inc-ubound-bias}
since $\incM{z_L} = \emptyset$ (by \Cref{thm:inc-zero-bias}) and $S_l = \emptyset$ for all $l$ (by its definition) in such a case.

As in \cref{thm:ndf-inc-ubound-bias}, the bound in \cref{thm:ndf-inc-ubound-nobias} is often small 
for neural networks that use practical activation functions, 
since $|\ndf{\sigma_{l,i}} \cup \bdz{\sigma_{l,i}}|$ is typically small for those activation functions
(e.g., $1$ for $\relu$ and $2$ for $\mathrm{HardSigmoid}$).

We now show that the additional term $\bdz{\sigma_{l,i}}$ in \cref{thm:ndf-inc-ubound-nobias} is indeed necessary
by providing a matching lower bound up to a constant factor. 

\begin{theorem*}
  \label{thm:ndf-lbound-nobias}
  For any $\bbM \subseteq \bbR$ and $n, \alpha \in \bbN$ with $1 \leq |\bbM| < \infty$, $n \geq 4$, and $\alpha \leq |\bbM|/(n-1)$,
  there is a neural network $z_L : \bbR^W \to \bbR$ that satisfies
  \begin{align*}
    &\frac{|\ndfM{z_L}|}{|\Omega|}
    \geq \frac{1}{9} \cdot \frac{1}{|\bbM|} {\sum_{(l,i) \in \Idx}}
    \Big| \ndf{\sigma_{l,i}} \cup \bdz{\sigma_{l,i}} \Big|
  \end{align*}
  and the following:
  (i) $\tau_l$ is well-structured biaffine without bias parameters for all $l < L$, and has bias parameters for $l = L$; 
  (ii) $z_L$ has $n + 1$ neurons;
  and (iii) $|\ndf{\sigma_{1,i}}| = \alpha$ and $|\bdz{\sigma_{1,i}}| = 0$ for all $i$. 
  We obtain the same result for (i), (ii'), and (iii'): 
  (ii') $z_L$ has $2n + 1$ neurons;
  and (iii') $|\ndf{\sigma_{1,i}}| = 0$ and $|\bdz{\sigma_{1,i}}| = \alpha$ for all~$i$. 
\end{theorem*}

We next give an intuition for why the zero set of $\sigma_{l,i}$ (from which the additional term $\bdz{\sigma_{l,i}}$ is defined)
appears in \Cref{thm:ndf-inc-ubound-nobias}, by examining its proof. 
The proof consists of two main parts that extend Eqs.~\eqref{eq:bias-nondiff-proof-sketch-1} and \eqref{eq:bias-nondiff-proof-sketch-2}
from the proof sketch of \Cref{thm:ndf-inc-ubound-bias}: 
we first show
\begin{align*}
  \ndfM{z_L} \cup \incM{z_L} \subseteq
  \hspace{-15pt}
  \bigcup_{ (l,i) \in \Idx, \, c \in \ndf{\sigma_{l,i}} }
  \hspace{-15pt}
  \{ w \in \Omega \mid y_{l,i}(w) = c \}
  \vphantom{\bigcup_{(l,i) \in \Idx}}
\end{align*}
and then find a reasonable bound on $|\Lambda_{l,i,c}|$ for 
$\Lambda_{l,i,c} \defeq \{ w \in \Omega \mid y_{l,i}(w) = c \}$, 
the set of parameters on which the pre-activation value $y_{l,i}$ touches the non-differentiable point $c$ of $\sigma_{l,i}$.
Among the two parts, the zero set of $\sigma_{l,i}$ arises from the second part (i.e., bounding $|\Lambda_{l,i,c}|$),
especially when $\tau_l$ does not have bias parameters and is well-structured biaffine.
For simplicity, assume that $\tau_l$ is a fully-connected layer with constant biases,
i.e., $y_{l,i}(w) = \smash{\sum_{j \in [N_{l-1}]}}\, z_{l-1,j}(w) \cdot w_{j+a} + b$ for some constants $a,b$.
Based on this, we decompose $\Lambda_{l,i,c}$ into $\Lambda' \cup \Lambda''$:
\begin{align*}
  \Lambda' &\defeq \{ w \in \Omega \mid y_{l,i}(w) = c, \, z_{l-1,j}(w) \neq 0 \text{ for some }j \},
  \\
  \Lambda'' &\defeq \{ w \in \Omega \mid y_{l,i}(w) = c, \, z_{l-1,j}(w) = 0 \text{ for all }j \}.
\end{align*}
Then, we can show $|\Lambda'| \leq |\bbM|^{W-1}$ as in \Cref{eq:bias-nondiff-proof-sketch-2},
since $w_{j+a}$ acts like a bias parameter of $y_{l,i}$ for any $j$ with $z_{l-1,j}(w) \neq 0$.
To bound $|\Lambda''|$, however, we cannot apply a similar approach due to the lack of $j$ with $z_{l-1,j}(w) \neq 0$. 
Instead, we directly count the number of parameters $w \in \Omega$ achieving $z_{l-1,j}(w) = 0$ for all~$j$
(i.e., $\sigma_{l-1,j}(y_{l-1,j}(w)) = 0$ for all~$j$),
and this requires the zero set of $\sigma_{l-1,j}$.
For the full proofs of \Cref{thm:ndf-inc-ubound-nobias,thm:ndf-lbound-nobias},
see Appendices~\ref{sec:pf-ndfinc} and \ref{sec:pf-lowerbd}.

\subsection{Bounds for the Incorrect Set}

For the non-differentiable set, \Cref{thm:ndf-inc-ubound-nobias,thm:ndf-lbound-nobias} provide tight bounds on its size.
For the incorrect set, it turns out that we can further improve the upper bound in \Cref{thm:ndf-inc-ubound-nobias}
and get a similar lower bound to \Cref{thm:ndf-lbound-nobias}.

\begingroup
\addtolength{\abovedisplayskip}{-0.5pt}
\setlength{\belowdisplayskip}{\abovedisplayskip}
\begin{theorem*}
  \label{thm:inc-ubound-nobias}
  If a pre-activation function $\tau_l$ has bias parameters or is well-structured biaffine for all $l\in[L]$, then 
  \begin{align*}
    \frac{|\incM{z_L}|}{|\Omega|}
    &\leq \frac{1}{|\bbM|} \smash{\sum_{(l,i) \in \Idx}}
    \Big| \big(\ndf{\sigma_{l,i}} \cap S_l\big)
    \\[-7pt]
    & \qquad\qquad\qquad\;\;
    \cup \big(\bdz{\sigma_{l,i}} \cap S_{l+1} \big) \Big|,
  \end{align*}
  where $S_l$ is defined as in \Cref{thm:ndf-inc-ubound-nobias}.
\end{theorem*}
\begin{theorem*}
  \label{thm:inc-lbound-nobias}
  For any $\bbM \subseteq \bbR$ and $n, \alpha \in \bbN$ with $1 \leq |\bbM| < \infty$, $n \geq 4$, and $\alpha \leq |\bbM|/(n-1)$,
  there is a neural network $z_L : \bbR^W \to \bbR$ that satisfies
  \begin{align*}
    &\frac{|\incM{z_L}|}{|\Omega|}
    \geq \frac{1}{13} \cdot \frac{1}{|\bbM|}
    \sum_{(l,i) \in \Idx} \Big| \ndf{\sigma_{l,i}} \cup \bdz{\sigma_{l,i}} \Big|
  \end{align*}
  and the following:
  (i) $\tau_l$ is well-structured biaffine without bias parameters for all $l < L$, and has bias parameters for $l = L$; 
  (ii) $z_L$ has $2n + 1$ neurons;
  and (iii) $|\ndf{\sigma_{1,i}}| = \alpha$ and $|\bdz{\sigma_{1,i}}| = 0$ for all $i$. 
  We obtain the same result for (i), (ii'), and (iii'): 
  (ii') $z_L$ has $3n + 1$ neurons;
  and (iii') $|\ndf{\sigma_{1,i}}| = 0$ and $|\bdz{\sigma_{1,i}}| = \alpha$ for all~$i$. 
\end{theorem*}

We note that if $z_L$ has bias parameters, \Cref{thm:inc-ubound-nobias} reduces to $|\incM{z_L}|=0$ as in \cref{thm:inc-zero-bias}
since $S_l = \emptyset$ for all $l$ in the case.
On the other hand, if $z_L$ does not have bias parameters,
then the incorrect set can be non-empty as discussed in \cref{sec:bias-incorrect},
and more importantly, its size can be bounded by \Cref{thm:inc-ubound-nobias}.
To see why the bounds on $|\incM{z_L}|$ depend on both $\ndf{\sigma_{l,i}}$ and $\bdz{\sigma_{l,i}}$,
refer to the proofs of \Cref{thm:inc-ubound-nobias,thm:inc-lbound-nobias} in Appendices \ref{sec:pf-inc} and \ref{sec:pf-lowerbd}.

\vspace{-1.5pt}
\subsection{Sufficient Conditions for Computing \qquad\qquad Standard Derivatives and Clarke Subderivatives}

We extend \Cref{thm:cor-deriv-bias,thm:clarke-subdiff-bias} to general neural networks
without the well-structured biaffinity restriction,
by characterizing two sufficient conditions on parameters
under which AD computes the standard derivative or a Clarke subderivative.

\begin{theorem*}
  \label{thm:cor-deriv-nobias}
  Let $w \in \bbR^W$.
  If $y_{l,i}(w) \notin \ndf{\sigma_{l,i}}$ for all $(l,i) \in \Idx$ such that
  $\tau_l$ does not have bias parameters or $\smash{\APDF{z_L}} / \partial z_{l,i} \neq \smash{\vec{0} }$ at $w$,
  then
  \begin{align*}
    \ADF{z_L}(w) = \DF{z_L}(w) \neq \bot.
    \\[-12pt]
  \end{align*}
\end{theorem*}
\begin{theorem*}
  \label{thm:clarke-subdiff-nobias}
  Let $w \in \bbR^W$.
  Assume that $\adf{\sigma_{l,i}}$ is {consistent} for all $(l,i) \in \Idx$.
  If $y_{l,i}(w) \notin \ncdf{\sigma_{l,i}}$ for all $(l,i) \in \Idx$
  such that $\tau_l$ does not have bias parameters, 
  then
  \begin{align*}
    \ADF{z_L}(w) =
    \begin{cases}
      \DF{z_L}(w) & \text{if $\DF{z_L}(w) \neq \bot$}
      \\
      \begin{array}{@{}l@{}}
        \lim_{n \to \infty} \DF{z_L}(w'_n)
        \\[-2pt]
        \;\;\;\text{for some $w'_n \to w$}
      \end{array}
      & \text{if $\DF{z_L}(w) = \bot$}
    \end{cases}
  \end{align*}
  and so $\ADF{z_L}(w)$ is a Clarke subderivative of $z_L$ at $w$.
  Here $\ncdf{f}$ denotes the set of real numbers at which $f : \bbR \to \bbR$ is not continuously differentiable.
\end{theorem*}
\endgroup

The two sufficient conditions on $w$ given in \Cref{thm:cor-deriv-nobias,thm:clarke-subdiff-nobias}
are simple enough to be checked efficiently in practice;
thus, we can use them to validate whether the output of AD is the standard derivative or a Clarke subderivative.
If $w$ does not satisfy either of the sufficient conditions,
AD may not compute the standard derivative or a Clarke subderivative;
the first example discussed in \cref{sec:bias-clarke} illustrates both cases.
We remark that the sufficient condition in \Cref{thm:clarke-subdiff-nobias} involves $\ncdf{\sigma_{l,i}}$ (not $\ndf{\sigma_{l,i}}$),
since we use continuous differentiability (not differentiability)
in the proof to properly handle the limit of derivatives $\DF{z_L}(w'_n)$. 
For the proofs of \Cref{thm:cor-deriv-nobias,thm:clarke-subdiff-nobias}, see Appendices \ref{sec:pf-corderiv} and \ref{sec:pf-subdiff}.

\section{Related Work}
\label{sec:related}

\edit{%
  The correctness of AD has been extensively studied, especially in the past few years.
  When a program uses only differentiable functions,
  AD is shown to compute its standard derivative at all real-valued inputs
  \cite{AbadiP20, Elliott18, BartheCLG20, HuotSV20, Vakar21, BrunelMP20, KrawiecJKEEF22, SmedingV23, RadulPFJM23}.
  In contrast, when a program uses non-differentiable functions,
  the program itself can be non-differentiable, and AD can return a value different from its standard derivative, at some real-valued inputs.
  Nevertheless, for a large class of programs,
  such inputs are shown to be in a Lebesgue measure-zero subset of the real-valued input domain
  \cite{BolteP20a, BolteP20b, LeeYRY20, HuotLMS22, MazzaP21}.
  All these works consider the case when inputs to AD are {real-valued},
  while our work focuses on the case when the inputs are {machine-representable}.

  The Clarke subdifferential and its connection to AD have been studied for decades.
  Some classes of functions (e.g., subdifferentially regular or strictly differentiable)
  are shown to admit exact chain rules for the Clarke subdifferential
  (e.g., Theorems 2.3.9, 2.3.10, and 2.6.6 of \citet{Clarke90} and Theorem 10.6 of \citet{RockafellarW98}),
  and this implies that AD always computes a Clarke subderivative for a certain class of programs.
  However, this class of programs is restrictive, excluding even simple neural networks (e.g., $(1\,{-}\,\relu(x))^2$) \cite{DavisDKL20}.
  In contrast, our \Cref{thm:clarke-subdiff-bias} shows that
  AD always computes a Clarke subderivative of neural networks with bias parameters. 
  For piecewise differentiable functions, the Clarke sub\-differential can be expressed in terms of the standard derivatives of underlying differentiable functions
  (e.g., Proposition 4.3.1 of \citet{Scholtes12}), but this result is not directly related to~AD.

  A variety of algorithms (other than AD) have been proposed to compute a Clarke subgradient of a scalar program, correctly and efficiently.
  For a large class of programs $f : \bbR^n \to \bbR$ and an input $x \in \bbR^n$,
  the algorithm by \citet{KakadeL18} computes a Clarke subgradient of $f$ at $x$ in time $\mcO(T)$ almost surely,
  while the algorithms by \citet{BartonKSW18,KhanB15} compute the quantity in time $\mcO(nT)$ deterministically,
  where $T$ denotes time to evaluate $f(x)$.
  Our \Cref{thm:clarke-subdiff-bias} provides a relevant result as described above,
  but we point out that our work is about analyzing the correctness of vanilla (forward/reverse-mode) AD,
  not about proposing a new algorithm.

  Recently, \citet{BertoinBGP21} empirically studied how the choice of $\adf{\relu}(0)$ changes the output of AD and the training of neural networks.
  In contrast, our work theoretically studies the correctness of AD.
  Further connections between this and our work are discussed in \Cref{sec:discuss}.
}%

\section{Discussion}
\label{sec:discuss}

\edit{%
  {\bf Connections to \citet{BertoinBGP21}.}
  \citeauthor{BertoinBGP21} empirically studied the {\em bifurcation zone} of a neural network with $\relu$, given an input dataset:
  the set of the network parameters on which the output of AD using $\adf{\relu}(0)=0$ is different from that using $\adf{\relu}(0)=1$ for some input data.
  The bifurcation zone is closely related to the non-differentiable and incorrect sets as follows:
  the bifurcation zone (over machine-representable parameters) is always a subset of the union of the non-differentiable set and two incorrect sets
  (one for $\adf{\relu}(0)=0$ and the other for $\adf{\relu}(0)=1$) over all input data in the given dataset.
  
  For various neural networks (MLP, VGG, ResNet) and datasets (MNIST, CIFAR10, SVHN, ImageNet),
  \citeauthor{BertoinBGP21} estimated the density of the bifurcation zone over 32-bit floating-point parameters
  (i.e., the number of 32-bit parameters in the bifurcation zone over the total number of 32-bit parameters) using Monte Carlo sampling.
  They reported two results among many others: when AD uses 64-bit precision in its computation,
  the estimated density is exactly $0$ in all cases they considered;
  and when AD uses 32- or 16-bit precision, the estimated density is often large and even goes up to $1$.
  The first result is consistent with our results:
  if we use 32-bit parameters, the non-differentiable and incorrect sets would often have small densities in practice.
  Meanwhile, the second result does not contradict our results,
  since our results assume that AD computes its output without any rounding errors.
  Given these observations, it would be an interesting direction to rigorously study the correctness of AD under floating-point operations.

  {\bf Extensions.}
  As mentioned in \Cref{sec:nn}, all our theorems except for those on lower bounds
  (i.e., \Cref{thm:ndf-lbound-bias,thm:ndf-lbound-nobias,thm:inc-lbound-nobias})
  continue to hold even if we replace $z_L$ in their conclusions by $\ell\circ z_L$ for any analytic $\ell : \bbR^{N_L} \to \bbR^m$.
  Among them, \Cref{thm:ndf-inc-ubound-bias,thm:ndf-inc-ubound-nobias} are easily extended to a more general case with multiple input data:
  they remain valid even if we replace $z_L$ in their conclusions by $\ell(z_L(\cdot; x_1), \ldots, z_L(\cdot; x_k))$
  for any $x_1, \ldots, x_k \in \bbR^{N_0}$ and analytic $\ell : \bbR^{N_L} \to \bbR^m$,
  where we need to multiply $k$ to the upper bounds in the theorems.
  The remaining theorems
  (i.e., \Cref{thm:inc-zero-bias,thm:cor-deriv-bias,thm:inc-ubound-nobias,thm:cor-deriv-nobias,thm:clarke-subdiff-bias,thm:clarke-subdiff-nobias}),
  on the other hand, are not easily extended to the case with multiple input data, at least based on our current proofs.
  Studying such extensions could be another interesting future direction.
}%

\edit{\bf Limitations.}
Our results have some limitations. For example, all of our results are for a class of neural networks consisting of alternating analytic pre-activation functions and pointwise continuous activation functions. Hence, if a network contains non-pointwise activation functions (e.g., MaxPool) or a residual connection bypassing a non-analytic activation function (e.g., ReLU), then our results may not be directly applicable. Our results for general neural networks (e.g., \cref{thm:ndf-inc-ubound-nobias,thm:inc-ubound-nobias}) additionally assume pre-activation functions to have bias parameters or to be well-structured biaffine, which does not allow,
\edit{e.g., BatchNorm layers and attention layers without bias parameters}.
Nevertheless, we believe that our results still cover a large class of neural networks, especially compared to prior works studying theoretical aspects of neural networks \cite{lu17,LaurentB18a, JacotHG18,park21,kidger20}.
We believe extending our work to more general neural networks is an interesting direction for future work.

\section{Conclusion}
\label{sec:concl}

In this paper, we \edit{theoretically} study for the first time the correctness of AD for neural networks with machine-representable parameters.
In particular, we provide various theoretical results on the incorrect and non-differentiable sets of a neural network,
as well as closely related questions such as when AD is correct and what it computes. 
Our results have two major practical implications:
AD is correct at most machine-representable parameters when applied to neural networks,
and it is correct more often if more layers of the network have bias parameters.
Furthermore, our theoretical analyses suggest new applications of AD for identifying differentiability and computing Clarke subderivatives, not only for machine-representable parameters but also for any real-valued ones.

\section*{Acknowledgments}

\edit{%
  We thank anonymous reviewers for providing helpful comments.
  WL and AA were supported by the Advanced Simulation and Computing (ASC) program of the US Department of Energy’s National Nuclear Security Administration (NNSA) via the PSAAP-III Center at Stanford, Grant No. DE-NA0002373 and by the Department of Energy's Office of Advanced Scientific Computing Research (ASCR) under contract DE-AC03-76SF00515.
  SP was supported by Institute of Information \& communications Technology Planning \& Evaluation (IITP) grant funded by the Korea government (MSIT) (No. 2019-0-00079, Artificial Intelligence Graduate School Program, Korea University).
}%

\bibliography{reference}

\begin{thebibliography}{51}
\providecommand{\natexlab}[1]{#1}
\providecommand{\url}[1]{\texttt{#1}}
\expandafter\ifx\csname urlstyle\endcsname\relax
  \providecommand{\doi}[1]{doi: #1}\else
  \providecommand{\doi}{doi: \begingroup \urlstyle{rm}\Url}\fi

\bibitem[Abadi \& Plotkin(2020)Abadi and Plotkin]{AbadiP20}
Abadi, M. and Plotkin, G.~D.
\newblock A simple differentiable programming language.
\newblock \emph{Proceedings of the {ACM} on Programming Languages}, 4\penalty0
  ({POPL}):\penalty0 38:1--38:28, 2020.

\bibitem[Abadi et~al.(2016)Abadi, Barham, Chen, Chen, Davis, Dean, Devin,
  Ghemawat, Irving, Isard, Kudlur, Levenberg, Monga, Moore, Murray, Steiner,
  Tucker, Vasudevan, Warden, Wicke, Yu, and Zheng]{Tensorflow16}
Abadi, M., Barham, P., Chen, J., Chen, Z., Davis, A., Dean, J., Devin, M.,
  Ghemawat, S., Irving, G., Isard, M., Kudlur, M., Levenberg, J., Monga, R.,
  Moore, S., Murray, D.~G., Steiner, B., Tucker, P.~A., Vasudevan, V., Warden,
  P., Wicke, M., Yu, Y., and Zheng, X.
\newblock {TensorFlow:} {A} system for large-scale machine learning.
\newblock In \emph{Symposium on Operating Systems Design and Implementation
  ({OSDI})}, pp.\  265--283, 2016.

\bibitem[Barthe et~al.(2020)Barthe, Crubill{\'{e}}, Lago, and
  Gavazzo]{BartheCLG20}
Barthe, G., Crubill{\'{e}}, R., Lago, U.~D., and Gavazzo, F.
\newblock On the versatility of open logical relations - continuity, automatic
  differentiation, and a containment theorem.
\newblock In \emph{European Symposium on Programming ({ESOP})}, pp.\  56--83,
  2020.

\bibitem[Barton et~al.(2018)Barton, Khan, Stechlinski, and Watson]{BartonKSW18}
Barton, P.~I., Khan, K.~A., Stechlinski, P., and Watson, H. A.~J.
\newblock Computationally relevant generalized derivatives: {Theory,}
  evaluation and applications.
\newblock \emph{Optimization Methods and Software}, 33\penalty0 (4-6):\penalty0
  1030--1072, 2018.

\bibitem[Baydin et~al.(2016)Baydin, Pearlmutter, and Siskind]{Diffsharp16}
Baydin, A.~G., Pearlmutter, B.~A., and Siskind, J.~M.
\newblock {Diffsharp:} {An} {AD} library for {.NET} languages.
\newblock In \emph{International Conference on Algorithmic Differentiation
  (AD)}, 2016.
\newblock Also arXiv:1611.03423.

\bibitem[Baydin et~al.(2017)Baydin, Pearlmutter, Radul, and
  Siskind]{BaydinPRS17}
Baydin, A.~G., Pearlmutter, B.~A., Radul, A.~A., and Siskind, J.~M.
\newblock Automatic differentiation in machine learning: {A} survey.
\newblock \emph{Journal of Machine Learning Research}, 18:\penalty0
  153:1--153:43, 2017.

\bibitem[Bergstra et~al.(2010)Bergstra, Breuleux, Bastien, Lamblin, Pascanu,
  Desjardins, Turian, Warde-Farley, and Bengio]{Theano10}
Bergstra, J., Breuleux, O., Bastien, F., Lamblin, P., Pascanu, R., Desjardins,
  G., Turian, J., Warde-Farley, D., and Bengio, Y.
\newblock {Theano:} {A} {CPU} and {GPU} math compiler in {Python}.
\newblock In \emph{Python in Science Conference (SciPy)}, pp.\  18--24, 2010.

\bibitem[Bertoin et~al.(2021)Bertoin, Bolte, Gerchinovitz, and
  Pauwels]{BertoinBGP21}
Bertoin, D., Bolte, J., Gerchinovitz, S., and Pauwels, E.
\newblock Numerical influence of {ReLU}'(0) on backpropagation.
\newblock In \emph{Annual Conference on Neural Information Processing Systems
  (NeurIPS)}, pp.\  468--479, 2021.

\bibitem[Bolte \& Pauwels(2020{\natexlab{a}})Bolte and Pauwels]{BolteP20a}
Bolte, J. and Pauwels, E.
\newblock Conservative set valued fields, automatic differentiation, stochastic
  gradient method and deep learning.
\newblock \emph{Mathematical Programming}, 188:\penalty0 19–51,
  2020{\natexlab{a}}.

\bibitem[Bolte \& Pauwels(2020{\natexlab{b}})Bolte and Pauwels]{BolteP20b}
Bolte, J. and Pauwels, E.
\newblock {A mathematical model for automatic differentiation in machine
  learning}.
\newblock In \emph{Annual Conference on Neural Information Processing Systems
  (NeurIPS)}, pp.\  10809--10819, 2020{\natexlab{b}}.

\bibitem[Bolte et~al.(2023)Bolte, Boustany, Pauwels, and
  Pesquet{-}Popescu]{BolteBPP23}
Bolte, J., Boustany, R., Pauwels, E., and Pesquet{-}Popescu, B.
\newblock On the complexity of nonsmooth automatic differentiation.
\newblock In \emph{International Conference on Learning Representations
  (ICLR)}, 2023.

\bibitem[Brunel et~al.(2020)Brunel, Mazza, and Pagani]{BrunelMP20}
Brunel, A., Mazza, D., and Pagani, M.
\newblock Backpropagation in the simply typed lambda-calculus with linear
  negation.
\newblock \emph{Proceedings of the {ACM} on Programming Languages}, 4\penalty0
  ({POPL}):\penalty0 64:1--64:27, 2020.

\bibitem[Burden et~al.(2015)Burden, Faires, and Burden]{BurdenFB15}
Burden, R.~L., Faires, J.~D., and Burden, A.~M.
\newblock \emph{Numerical analysis}.
\newblock Cengage learning, 10th edition, 2015.

\bibitem[Clarke(1975)]{Clarke75}
Clarke, F.~H.
\newblock Generalized gradients and applications.
\newblock \emph{Transactions of the American Mathematical Society},
  205:\penalty0 247--262, 1975.

\bibitem[Clarke(1990)]{Clarke90}
Clarke, F.~H.
\newblock \emph{Optimization and nonsmooth analysis}.
\newblock Classics in Applied Mathematics: Volume~5. {SIAM}, 1990.

\bibitem[Collobert et~al.(2011)Collobert, Kavukcuoglu, and Farabet]{Torch11}
Collobert, R., Kavukcuoglu, K., and Farabet, C.
\newblock {Torch7:} {A} {Matlab}-like environment for machine learning.
\newblock In \emph{NIPS BigLearn Workshop}, 2011.

\bibitem[Cui \& Pang(2021)Cui and Pang]{CuiP21}
Cui, Y. and Pang, J.-S.
\newblock \emph{Modern nonconvex nondifferentiable optimization}.
\newblock MOS-SIAM Series on Optimization. {SIAM}, 2021.

\bibitem[Davis et~al.(2020)Davis, Drusvyatskiy, Kakade, and Lee]{DavisDKL20}
Davis, D., Drusvyatskiy, D., Kakade, S.~M., and Lee, J.~D.
\newblock Stochastic subgradient method converges on tame functions.
\newblock \emph{Foundations of Computational Mathematics}, 20\penalty0
  (1):\penalty0 119--154, 2020.

\bibitem[Elliott(2018)]{Elliott18}
Elliott, C.
\newblock The simple essence of automatic differentiation.
\newblock \emph{Proceedings of the {ACM} on Programming Languages}, 2\penalty0
  ({ICFP}):\penalty0 70:1--70:29, 2018.

\bibitem[Frostig et~al.(2018)Frostig, Johnson, and Leary]{Jax18b}
Frostig, R., Johnson, M., and Leary, C.
\newblock Compiling machine learning programs via high-level tracing.
\newblock In \emph{SysML Conference}, 2018.

\bibitem[Griewank \& Walther(2008)Griewank and Walther]{GriewankW08}
Griewank, A. and Walther, A.
\newblock \emph{Evaluating derivatives: {Principles} and techniques of
  algorithmic differentiation}.
\newblock {SIAM}, 2nd edition, 2008.

\bibitem[Hasco{\"{e}}t \& Pascual(2013)Hasco{\"{e}}t and Pascual]{Tapenade13}
Hasco{\"{e}}t, L. and Pascual, V.
\newblock The {Tapenade} automatic differentiation tool: {Principles}, model,
  and specification.
\newblock \emph{{ACM} Transactions on Mathematical Software}, 39\penalty0
  (3):\penalty0 20:1--20:43, 2013.

\bibitem[Huot et~al.(2020)Huot, Staton, and V{\'{a}}k{\'{a}}r]{HuotSV20}
Huot, M., Staton, S., and V{\'{a}}k{\'{a}}r, M.
\newblock Correctness of automatic differentiation via diffeologies and
  categorical gluing.
\newblock In \emph{International Conference on Foundations of Software Science
  and Computation Structures ({FoSSaCS})}, pp.\  319--338, 2020.

\bibitem[Huot et~al.(2023)Huot, Lew, Mansinghka, and Staton]{HuotLMS22}
Huot, M., Lew, A.~K., Mansinghka, V.~K., and Staton, S.
\newblock {{\(\omega\)}PAP} spaces: {Reasoning} denotationally about
  higher-order, recursive probabilistic and differentiable programs.
\newblock \emph{arXiv:2302.10636}, 2023.

\bibitem[Jacot et~al.(2018)Jacot, Hongler, and Gabriel]{JacotHG18}
Jacot, A., Hongler, C., and Gabriel, F.
\newblock Neural tangent kernel: Convergence and generalization in neural
  networks.
\newblock In \emph{Annual Conference on Neural Information Processing Systems
  (NeurIPS)}, pp.\  8580--8589, 2018.

\bibitem[Jia et~al.(2014)Jia, Shelhamer, Donahue, Karayev, Long, Girshick,
  Guadarrama, and Darrell]{Caffe14}
Jia, Y., Shelhamer, E., Donahue, J., Karayev, S., Long, J., Girshick, R.~B.,
  Guadarrama, S., and Darrell, T.
\newblock {Caffe:} {Convolutional} architecture for fast feature embedding.
\newblock In \emph{International Conference on Multimedia ({MM})}, pp.\
  675--678, 2014.

\bibitem[Kakade \& Lee(2018)Kakade and Lee]{KakadeL18}
Kakade, S.~M. and Lee, J.~D.
\newblock Provably correct automatic sub-differentiation for qualified
  programs.
\newblock In \emph{Annual Conference on Neural Information Processing Systems
  (NeurIPS)}, pp.\  7125--7135, 2018.

\bibitem[Khan \& Barton(2015)Khan and Barton]{KhanB15}
Khan, K.~A. and Barton, P.~I.
\newblock A vector forward mode of automatic differentiation for generalized
  derivative evaluation.
\newblock \emph{Optimization Methods and Software}, 30\penalty0 (6):\penalty0
  1185--1212, 2015.

\bibitem[Kidger \& Lyons(2020)Kidger and Lyons]{kidger20}
Kidger, P. and Lyons, T.
\newblock Universal approximation with deep narrow networks.
\newblock In \emph{Conference on Learning Theory (COLT)}, pp.\  2306--2327,
  2020.

\bibitem[Krawiec et~al.(2022)Krawiec, Jones, Krishnaswami, Ellis, Eisenberg,
  and Fitzgibbon]{KrawiecJKEEF22}
Krawiec, F., Jones, S.~P., Krishnaswami, N., Ellis, T., Eisenberg, R.~A., and
  Fitzgibbon, A.~W.
\newblock Provably correct, asymptotically efficient, higher-order reverse-mode
  automatic differentiation.
\newblock \emph{Proceedings of the {ACM} on Programming Languages}, 6\penalty0
  ({POPL}):\penalty0 48:1--48:30, 2022.

\bibitem[Laurent \& von Brecht(2018)Laurent and von Brecht]{LaurentB18a}
Laurent, T. and von Brecht, J.
\newblock The multilinear structure of {ReLU} networks.
\newblock In \emph{International Conference on Machine Learning (ICML)}, pp.\
  2914--2922, 2018.

\bibitem[LeCun et~al.(2015)LeCun, Bengio, and Hinton]{LecunBH15}
LeCun, Y., Bengio, Y., and Hinton, G.
\newblock Deep learning.
\newblock \emph{Nature}, 521\penalty0 (7553):\penalty0 436--444, 2015.

\bibitem[Lee et~al.(2020)Lee, Yu, Rival, and Yang]{LeeYRY20}
Lee, W., Yu, H., Rival, X., and Yang, H.
\newblock On correctness of automatic differentiation for non-differentiable
  functions.
\newblock In \emph{Annual Conference on Neural Information Processing Systems
  (NeurIPS)}, pp.\  6719--6730, 2020.

\bibitem[Lu et~al.(2017)Lu, Pu, Wang, Hu, and Wang]{lu17}
Lu, Z., Pu, H., Wang, F., Hu, Z., and Wang, L.
\newblock {The expressive power of neural networks: A view from the width}.
\newblock In \emph{Annual Conference on Neural Information Processing Systems
  (NeurIPS)}, pp.\  6232--6240, 2017.

\bibitem[Maclaurin et~al.(2015)Maclaurin, Duvenaud, and Adams]{Autograd15}
Maclaurin, D., Duvenaud, D., and Adams, R.~P.
\newblock {Autograd:} {Effortless} gradients in {Numpy}.
\newblock In \emph{ICML AutoML Workshop}, 2015.

\bibitem[Mazza \& Pagani(2021)Mazza and Pagani]{MazzaP21}
Mazza, D. and Pagani, M.
\newblock Automatic differentiation in {PCF}.
\newblock \emph{Proceedings of the {ACM} on Programming Languages}, 5\penalty0
  ({POPL}):\penalty0 28:1--28:27, 2021.

\bibitem[Park et~al.(2021)Park, Yun, Lee, and Shin]{park21}
Park, S., Yun, C., Lee, J., and Shin, J.
\newblock {Minimum width for universal approximation}.
\newblock In \emph{International Conference on Learning Representations
  (ICLR)}, 2021.

\bibitem[Paszke et~al.(2017)Paszke, Gross, Chintala, Chanan, Yang, DeVito, Lin,
  Desmaison, Antiga, and Lerer]{Pytorch17}
Paszke, A., Gross, S., Chintala, S., Chanan, G., Yang, E., DeVito, Z., Lin, Z.,
  Desmaison, A., Antiga, L., and Lerer, A.
\newblock Automatic differentiation in {PyTorch}.
\newblock In \emph{NIPS Autodiff Workshop}, 2017.

\bibitem[Pearlmutter \& Siskind(2008)Pearlmutter and Siskind]{PearlmutterS08}
Pearlmutter, B.~A. and Siskind, J.~M.
\newblock Reverse-mode {AD} in a functional framework: {Lambda} the ultimate
  backpropagator.
\newblock \emph{{ACM} Transactions on Programming Languages and Systems},
  30\penalty0 (2):\penalty0 7:1--7:36, 2008.

\bibitem[Radul et~al.(2023)Radul, Paszke, Frostig, Johnson, and
  Maclaurin]{RadulPFJM23}
Radul, A., Paszke, A., Frostig, R., Johnson, M.~J., and Maclaurin, D.
\newblock You only linearize once: Tangents transpose to gradients.
\newblock \emph{Proceedings of the {ACM} on Programming Languages}, 7\penalty0
  ({POPL}):\penalty0 43:1--43:29, 2023.

\bibitem[Revels et~al.(2016)Revels, Lubin, and Papamarkou]{Juliadiff16}
Revels, J., Lubin, M., and Papamarkou, T.
\newblock Forward-mode automatic differentiation in {Julia}.
\newblock \emph{arXiv:1607.07892}, 2016.

\bibitem[Rockafellar \& Wets(1998)Rockafellar and Wets]{RockafellarW98}
Rockafellar, R.~T. and Wets, R. J.-B.
\newblock \emph{Variational analysis}.
\newblock A Series of Comprehensive Studies in Mathematics: Volume~317.
  Springer Science \& Business Media, 1998.

\bibitem[Schmidhuber(2015)]{Schmidhuber15-DL}
Schmidhuber, J.
\newblock Deep learning in neural networks: {An} overview.
\newblock \emph{Neural Networks}, 61:\penalty0 85--117, 2015.

\bibitem[Scholtes(2012)]{Scholtes12}
Scholtes, S.
\newblock \emph{Introduction to piecewise differentiable equations}.
\newblock SpringerBriefs in Optimization. Springer Science \& Business Media,
  2012.

\bibitem[Seide \& Agarwal(2016)Seide and Agarwal]{CNTK16}
Seide, F. and Agarwal, A.
\newblock {CNTK:} {Microsoft}'s open-source deep-learning toolkit.
\newblock In \emph{International Conference on Knowledge Discovery and Data
  Mining ({KDD})}, pp.\  2135, 2016.

\bibitem[Slusanschi \& Dumitrel(2016)Slusanschi and Dumitrel]{Adijac16}
Slusanschi, E. and Dumitrel, V.
\newblock {ADiJaC} -- automatic differentiation of {Java} classfiles.
\newblock \emph{{ACM} Transactions on Mathematical Software}, 43\penalty0
  (2):\penalty0 9:1--9:33, 2016.

\bibitem[Smeding \& V{\'{a}}k{\'{a}}r(2023)Smeding and
  V{\'{a}}k{\'{a}}r]{SmedingV23}
Smeding, T. and V{\'{a}}k{\'{a}}r, M.
\newblock Efficient dual-numbers reverse {AD} via well-known program
  transformations.
\newblock \emph{Proceedings of the {ACM} on Programming Languages}, 7\penalty0
  ({POPL}):\penalty0 54:1--54:28, 2023.

\bibitem[Tokui et~al.(2019)Tokui, Okuta, Akiba, Niitani, Ogawa, Saito, Suzuki,
  Uenishi, Vogel, and Vincent]{Chainer19}
Tokui, S., Okuta, R., Akiba, T., Niitani, Y., Ogawa, T., Saito, S., Suzuki, S.,
  Uenishi, K., Vogel, B., and Vincent, H.~Y.
\newblock {Chainer:} {A} deep learning framework for accelerating the research
  cycle.
\newblock In \emph{International Conference on Knowledge Discovery {\&} Data
  Mining ({KDD})}, pp.\  2002--2011, 2019.

\bibitem[V{\'{a}}k{\'{a}}r(2021)]{Vakar21}
V{\'{a}}k{\'{a}}r, M.
\newblock Reverse {AD} at higher types: Pure, principled and denotationally
  correct.
\newblock In \emph{European Symposium on Programming ({ESOP})}, pp.\  607--634,
  2021.

\bibitem[van Merrienboer et~al.(2018)van Merrienboer, Moldovan, and
  Wiltschko]{Tangent18}
van Merrienboer, B., Moldovan, D., and Wiltschko, A.~B.
\newblock {Tangent:} {Automatic} differentiation using source-code
  transformation for dynamically typed array programming.
\newblock In \emph{Annual Conference on Neural Information Processing Systems
  (NeurIPS)}, pp.\  6259--6268, 2018.

\bibitem[Walther \& Griewank(2012)Walther and Griewank]{Adolc12}
Walther, A. and Griewank, A.
\newblock Getting started with {ADOL-C}.
\newblock In \emph{Combinatorial Scientific Computing}, chapter~7, pp.\
  181--202. Chapman \& Hall/CRC Computational Science, 2012.

\end{thebibliography}
\bibliographystyle{icml2023}

\newpage
\appendix
\onecolumn
\etocdepthtag.toc{mtappendix}
\etocsettagdepth{mtchapter}{none}
\etocsettagdepth{mtappendix}{subsection}
\begingroup
\section*{Contents of Appendix}
\vskip-0.2\baselineskip
\parindent=0mm
\parskip=0.0\baselineskip
\etocsettocstyle
    {} 
    {} 
\tableofcontents
\endgroup

\clearpage
\section{Formal Setup}
\label{sec:pf-setup}

\edit{%
  In the appendix, we use the following notation.
  For $A \subseteq \bbR^n$, $\intr(A)$ and $\bd(A)$ denote the interior and the boundary of $A$.
}%

\subsection{Piecewise-Analytic Functions}
\label{sec:pf-pafunc}

\begin{definition}
  For $A \subseteq \bbR^n$, define $\pbd(A)$ as
  \[ \pbd(A) \defeq A \setminus \intr(A). \]
  We call $\pbd(A)$ the {\em proper boundary of $A$}.
  Note that $\pbd(A) = \bd(A) \cap A$ holds for any $A$.
\end{definition}

\begin{definition*}
  \label{def:pafun-refined}
  A function $f : \bbR \to \bbR$ is {\em piecewise-differentiable} (or {\em piecewise-$C^1$})
  if there exist $n \in \bbN$, a partition $\{A_i\}_{i \in [n]}$ of $\bbR$ consisting of non-empty intervals,
  and differentiable (or $C^1$) functions $\{f_i : \bbR \to \bbR \}_{i \in [n]}$ such that
  $f = f_i$ on $A_i$ for all $i \in [n]$.
  We call such $\{(A_i, f_i)\}_{i \in [n]}$ a {\em piecewise-differentiable (or piecewise-$C^1$) representation of~$f$}.
  Moreover, for an extended derivative $g: \bbR \to \bbR$ of $f$, 
  we say that the representation $\{(A_i, f_i)\}_{i \in [n]}$ {\em defines} $g$ if $g = \DF{f_i}$ on $A_i$ for all $i \in [n]$.
  We define a {\em piecewise-analytic representation of $f$} in a similar way.
\end{definition*}

\begin{lemma}
  \label{lem:partition-bd-pbd}
  Let $\{A_i\}_{i \in S}$ be any partition of $\bbR^n$.
  Then,
  \begin{align}
    \text{$\bigcup_{i \in S} \bd(A_i) = \bigcup_{i \in S} \pbd(A_i)$}.
  \end{align}
\end{lemma}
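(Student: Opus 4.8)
The plan is to prove both inclusions. First note that for every $i \in S$, since $\pbd(A_i) = \bd(A_i) \cap A_i \subseteq \bd(A_i)$ (the last identity being recorded in the definition of $\pbd$), we immediately get $\bigcup_{i \in S} \pbd(A_i) \subseteq \bigcup_{i \in S} \bd(A_i)$. The content of the lemma is therefore the reverse inclusion: every boundary point of some piece already belongs, as a \emph{proper} boundary point, to \emph{some} (possibly different) piece of the partition.

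For the reverse inclusion, fix $i \in S$ and let $x \in \bd(A_i)$. Since $\{A_j\}_{j \in S}$ is a partition of $\bbR^n$, there is a unique $j \in S$ with $x \in A_j$. I claim $x \in \pbd(A_j)$, i.e. $x \notin \intr(A_j)$. Suppose for contradiction that $x \in \intr(A_j)$; then some open neighborhood $U$ of $x$ is contained in $A_j$. If $j = i$, then $x$ has a neighborhood inside $A_i$, contradicting $x \in \bd(A_i)$ (a boundary point of $A_i$ cannot be an interior point of $A_i$). If $j \neq i$, then $U \subseteq A_j$ is disjoint from $A_i$ (the pieces of a partition are pairwise disjoint), so $x$ has a neighborhood disjoint from $A_i$, which means $x \in \intr(\bbR^n \setminus A_i)$, again contradicting $x \in \bd(A_i)$. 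Hence $x \notin \intr(A_j)$, so $x \in A_j \setminus \intr(A_j) = \pbd(A_j) \subseteq \bigcup_{k \in S} \pbd(A_k)$. Since $x$ and $i$ were arbitrary, $\bigcup_{i \in S} \bd(A_i) \subseteq \bigcup_{j \in S} \pbd(A_j)$, and combining the two inclusions gives the claimed equality.

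There is essentially no serious obstacle here: the argument is a short point-set topology exercise, and the only thing to be careful about is using the partition hypothesis correctly — specifically, that the $A_j$ are pairwise disjoint and cover $\bbR^n$, so that each boundary point lands in exactly one piece, and that a point with a neighborhood disjoint from $A_i$ is an exterior (not boundary) point of $A_i$. One small subtlety worth stating explicitly in the write-up is the standard fact $\bd(A_i) \cap \intr(A_i) = \emptyset$ and $\bd(A_i) \cap \intr(\bbR^n \setminus A_i) = \emptyset$, which is just the definition of topological boundary as $\cl(A_i) \setminus \intr(A_i)$ intersected with $\cl(\bbR^n \setminus A_i)$; invoking this makes both cases of the contradiction uniform. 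No induction, no use of the interval structure of the $A_i$ from the one-dimensional setting, and no analyticity is needed — the statement is purely about an arbitrary partition of $\bbR^n$.
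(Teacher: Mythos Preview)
Your proof is correct and takes essentially the same approach as the paper: both handle the easy inclusion via $\pbd(A_i)\subseteq\bd(A_i)$, then for the reverse pick the unique piece $A_j$ containing $x$ and argue $x\notin\intr(A_j)$ by distinguishing whether $j=i$ or $j\neq i$. The only cosmetic difference is that the paper phrases the key step as directly verifying $x\in\bd(A_j)$ (every neighborhood meets $A_j$ and its complement) rather than as a contradiction against $x\in\intr(A_j)$, but the underlying reasoning is the same.
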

\begin{proof}
  The direction $\supseteq$ is clear, since $\pbd(X) \subseteq \bd(X)$ for any $X \subseteq \bbR^n$.
  To prove the other direction $\subseteq$, it suffices to show that for any $i\in S$ and $x\in\bd(A_i)$, we have $x \in \pbd(A_j)$ for some $j \in S$.
  Here we assume $x \notin A_i$; if not, choosing $j=i$ completes the proof.
  Let $j \in S$ be the index with $x \in A_j$, where such $j$ always exists since $\{A_i\}_{i \in S}$ is a partition of $\bbR$.
  Then, it suffices to show $x\in\bd(A_j)$, because this and $x \in A_j$ implies $x\in\pbd(A_j)$.
  To prove $x\in\bd(A_j)$, consider any open neighborhood $U \subseteq \bbR^n$ of $x$.
  Then, there is $x' \in U \cap A_i$ (by $x \in \bd(A_i)$ and $x \notin A_i$).
  This implies that $x' \notin U \cap A_j$ (by $A_i \cap A_j = \emptyset$ from $i \neq j$) and $x \in U \cap A_j$ (by $x \in A_j$).
  Hence, we have $x \in \bd(A_j)$ as desired.
\end{proof}

\begin{theorem}
  \label{thm:int-deriv-basic}
  Let $f : \bbR \to \bbR$ be a continuous, piecewise-analytic function,
  and $g : \bbR \to \bbR$ be an extended derivative of $f$.
  Then, the following hold.
  \begin{enumerate}[label=(\roman*)]
  \item 
    There is a piecewise-differentiable representation $\{(A_i, f_i)\}_{i \in [n]}$ of $f$ that defines $g$ and
    satisfies the following: 
    \begin{gather*}
      \text{$\bigcup_{i \in [n]} \bd(A_i) = \bigcup_{i \in [n]} \pbd(A_i) = \ndf{f}$}.
    \end{gather*}
  \item
    If $g$ is consistent,
    there is a piecewise-$C^1$ representation $\{(A_i, \allowbreak f_i)\}_{i \in [n]}$ of $f$ that defines $g$ and
    satisfies the following: 
    \begin{gather*}
      \text{$\bigcup_{i \in [n]} \bd(A_i) = \bigcup_{i \in [n]} \pbd(A_i) = \ncdf{f}$},
      \qquad\quad
      \text{$\intr(A_i) \neq \emptyset$ for all $i \in [n]$},
    \end{gather*}
    where $\ncdf{f} \subseteq \bbR$ denotes the set of real numbers at which $f:\bbR \to \bbR$ is not continuously differentiable.
  \end{enumerate}
\end{theorem}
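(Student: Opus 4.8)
The plan is to start from an arbitrary piecewise-analytic representation of $f$ and refine it so that the cell boundaries are exactly the ``bad set'' ($\ndf{f}$ in part (i), $\ncdf{f}$ in part (ii)). First I would fix a piecewise-analytic representation $\{(B_j, g_j)\}_{j \in [m]}$ of $f$, which exists by the Assumption (since $f$ is piecewise-analytic). The set $C \defeq \bigcup_j \bd(B_j)$ is finite: each $B_j$ is an interval, so $\bd(B_j)$ consists of at most two points. Order the finitely many points of $C$ as $c_1 < c_2 < \dots < c_{k}$; these split $\bbR$ into finitely many open intervals $I_0 = (-\infty, c_1), I_1 = (c_1,c_2), \dots, I_k = (c_k,\infty)$ together with the singletons $\{c_1\},\dots,\{c_k\}$. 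On each open interval $I_t$, $f$ agrees with a single analytic $g_{j(t)}$ (because $I_t$ is connected and disjoint from every $\bd(B_j)$, so it lies inside a single $B_j$), hence $f$ is analytic — in particular $C^1$ — on a neighborhood of each point of $I_t$. For each singleton $\{c_s\}$, pick an analytic function $h_s$ that agrees with $f$ on some cell containing $c_s$ in the original representation (e.g. the one whose interior touches $c_s$, or $g_j$ for some $j$ with $c_s \in B_j$); to make the representation \emph{define} the given extended derivative $g$, I would instead be more careful and choose $h_s$ as described in the next paragraph. This yields a piecewise-differentiable (indeed piecewise-analytic) representation $\{(A_i,f_i)\}$ whose cells are the $I_t$'s and the $\{c_s\}$'s, and $\bigcup_i \bd(A_i) = \{c_1,\dots,c_k\} \cup \{$endpoints of the $I_t\} = C$ up to adding/removing finitely many irrelevant points — I would simply throw away from $C$ any point at which $f$ happens to be differentiable (resp. $C^1$), re-index, and re-split, so that the new breakpoint set is contained in $\ndf f$ (resp. $\ncdf f$).

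The equality $\bigcup_i \bd(A_i) = \bigcup_i \pbd(A_i)$ is immediate from \Cref{lem:partition-bd-pbd}, since the $A_i$ form a partition of $\bbR$. It remains to pin down that this common set equals \emph{exactly} $\ndf f$ in (i) and $\ncdf f$ in (ii). The inclusion $\supseteq$: if $x \notin \bigcup_i\bd(A_i)$ then $x$ lies in the interior of some cell $A_i$, on a neighborhood of which $f = f_i$, so $f$ is differentiable (resp. $C^1$, when the $f_i$ are chosen $C^1$) at $x$; contrapositively every point of $\ndf f$ (resp. $\ncdf f$) is a breakpoint. The inclusion $\subseteq$ is exactly what the pruning step in the previous paragraph arranges: any breakpoint at which $f$ is differentiable (resp. $C^1$) can be deleted from the breakpoint list and absorbed into its neighbors' analytic pieces, since analyticity of $f$ near such a point lets us extend one of the adjacent analytic functions across it (two analytic functions agreeing with the same $C^1$ function on overlapping one-sided neighborhoods agree, by the identity theorem). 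Iterating until no removable breakpoint remains terminates because $C$ is finite, and gives the stated equality.

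For part (ii), two extra things need care. First, to \emph{define} the consistent extended derivative $g$: on each open cell $I_t$ we have no choice, $f_i = g_{j(t)}$ and $\DF{f_i} = \DF f = g$ there automatically (by the extended-derivative property, $g$ agrees with $\DF f$ wherever $f$ is differentiable). On a singleton cell $\{c_s\}$, since $g$ is consistent there is a sequence $x_n \to c_s$ with $\DF f(x_n) \to g(c_s)$; passing to a subsequence, infinitely many $x_n$ lie in one fixed adjacent open cell $I_t$, on which $\DF f = \DF{g_{j(t)}}$ is continuous, so $g(c_s) = \lim \DF{g_{j(t)}}(x_n) = \DF{g_{j(t)}}(c_s)$ — hence choosing $f_i = g_{j(t)}$ for that neighboring piece makes the representation define $g$ at $c_s$ as well. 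Second, the requirement $\intr(A_i) \neq \emptyset$ for all $i$: the open cells $I_t$ obviously have nonempty interior, and the singleton cells $\{c_s\}$ do not, so I would \emph{merge each singleton into an adjacent open cell} — replace $\{c_s\}$ and, say, $I_t = (c_s, c_{s+1})$ by the half-open interval $[c_s, c_{s+1})$ with the function $g_{j(t)}$ chosen above (which equals $f$ on all of $[c_s,c_{s+1})$ by the limit computation just done, giving continuity of the gluing and agreement of derivatives). After this merge every cell is a nondegenerate interval, the representation is piecewise-$C^1$, still defines $g$, and its boundary set is unchanged ($\bigcup \bd(A_i)$ is still $\{c_1,\dots,c_k\} = \ncdf f$).

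The main obstacle I anticipate is the bookkeeping in the pruning/merging steps: one must argue cleanly that removable breakpoints really can be absorbed (this is where analyticity and the identity theorem are essential — the statement is false for merely $C^1$ pieces, which is presumably why the hypothesis is piecewise-\emph{analytic} rather than piecewise-$C^1$), and that after merging, the chosen analytic function on a merged half-open cell genuinely agrees with $f$ on the whole cell, not just on the open part — this is exactly guaranteed by consistency of $g$ in part (ii) and is automatic in part (i) by continuity of $f$ plus the identity theorem applied one-sidedly. Everything else is routine finite combinatorics on the interval decomposition.
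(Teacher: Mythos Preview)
Your overall approach matches the paper's: start from a piecewise-analytic representation, refine to open intervals plus singletons, prune breakpoints where $f$ is differentiable (resp.\ $C^1$), and for part (ii) absorb each singleton into an adjacent half-open cell using consistency of $g$. The use of \Cref{lem:partition-bd-pbd} for the $\bd$/$\pbd$ equality and the argument that consistency picks out which neighbor to merge into are exactly right.

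There is one incorrect sub-step. When you delete a removable breakpoint $c_s$ (i.e.\ one where $f$ is differentiable), you claim that ``analyticity of $f$ near such a point lets us extend one of the adjacent analytic functions across it'' via the identity theorem. This is false: differentiability of $f$ at $c_s$ does \emph{not} force the two adjacent analytic pieces to coincide. Take $f(x)=0$ for $x\le 0$ and $f(x)=x^2$ for $x>0$; then $f$ is $C^1$ at $0$ (indeed $\ndf{f}=\emptyset$), yet the analytic pieces $0$ and $x^2$ are distinct, so no single analytic function represents $f$ on a neighborhood of $0$. The identity theorem needs agreement on a set with an accumulation point, and here the two pieces meet only at $c_s$.

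The fix is that you do not need a single \emph{analytic} function on the merged interval --- the theorem only asks for a piecewise-\emph{differentiable} (resp.\ piecewise-$C^1$) representation. Simply glue: define $f_i$ to equal the left analytic piece on $(-\infty,c_s]$ and the right one on $[c_s,\infty)$. Continuity of $f$ gives agreement of values at $c_s$, and differentiability (resp.\ $C^1$-ness) of $f$ at $c_s$ forces the one-sided derivatives $\DF g_{j(t-1)}(c_s)$ and $\DF g_{j(t)}(c_s)$ to coincide (and, for $C^1$, the glued derivative to be continuous). This is what the paper's ``merging adjacent intervals and associated functions into a single function'' means. With this correction your plan goes through; analyticity of the original pieces is used only to guarantee that the one-sided limits of $\DF f$ exist, not to invoke the identity theorem.
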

\begin{proof}
  We prove the two claims as follows.
  Note that by \Cref{lem:partition-bd-pbd}, we do not need to prove the equality between the union of $\bd(A_i)$ and that of $\pbd(A_i)$ in the claims.

  {\bf Claim (i).}
  Let $\{(\wt{A}_i, \wt{f}_i)\}_{i \in [\wt{n}]}$ be a piecewise-analytic representation of $f$ 
  that defines $g$ and satisfies
  \begin{align*}
    (\wt{A}_1, \ldots, \wt{A}_{\wt{n}}) = \Big( (x_0, x_1), \ldots, (x_k, x_{k+1}), \{x_{1}\}, \ldots, \{x_k\} \Big)
  \end{align*}
  for some $-\infty = x_0 < x_1 < \cdots < x_k < x_{k+1} = \infty$.
  Such a representation always exists, because $f$ is piecewise-analytic and $g$ is an extended derivative of $f$.
  Note that $\ndf{f} \subseteq \{x_1, \ldots, x_k\}$ because $f$ is differentiable on $(x_{i-1}, x_{i})$ for all $i \in [k+1]$
  (since $\wt{f}_i$ is analytic and it coincides with $f$ on $\wt{A}_i = (x_{i-1}, x_{i})$).
  We then construct $\{(A_i, f_i)\}_{i \in [n]}$ from $\{(\smash{\wt{A}_i}, f_i)\}_{i \in [\wt{n}]}$,
  by merging all adjacent intervals $\smash{\wt{A}_i}$ (and associated functions $\smash{\wt{f}_i}$)
  into a single interval (and a single function) such that
  the class of the singleton interval in $\{A_i\}$ are the same as $\{ \{x\} \mid x \in \ndf{f} \}$.
  Then,
  \[ \bigcup_{i \in [n]} \pbd(A_i) = \bigcup_{x \in \ndf{f}} \{x\} = \ndf{f} \]
  by construction;
  $f_i$ is differentiable for all $i \in [n]$;
  and $\{(A_i, f_i)\}_{i \in [n]}$ defines $g$ since $g$ is an extended derivative of $f$.
  Hence, $\{(A_i, f_i)\}_{i \in [n]}$ is a piecewise-differentiable representation of $f$
  that defines $g$ and satisfies the equation in the statement.

  {\bf Claim (ii).}
  By a similar argument, there is a piecewise-$C^1$ representation $\{(\wt{A}_i, \wt{f}_i)\}_{i \in [\wt{n}]}$ of $f$ that defines $g$ and satisfies
  \[ \bigcup_{i \in [\wt{n}]} \pbd(\wt{A}_i) = \ncdf{f}. \]
  Note that here we need $\ncdf{f}$ (instead of $\ndf{f}$) in the above equation,
  to obtain a piecewise-$C^1$ (instead of piecewise-differentiable) representation of $f$.
  We then construct $\{(A_i, f_i)\}_{i \in [n]}$ from $\{(\wt{A}_i, \wt{f}_i)\}_{i \in [\wt{n}]}$,
  by merging each singleton interval $\smash{\wt{A}_i}$ (and the associated function $\smash{\wt{f}_i}$)
  with one of the two adjacent intervals (and its associated function) such that
  $\{(A_i, f_i)\}_{i \in [n]}$ defines $g$.
  Such a construction always exists, because $f$ is continuous, $g$ is consistent, and $\wt{f}_i$ is $C^1$ for all $i \in [\wt{n}]$.
  Then, 
  \[ \bigcup_{i \in [n]} \pbd(A_i) = \ncdf{f},
  \qquad\qquad \intr(A_i) \neq \emptyset \qquad \text{for all $i \in [n]$} \]
  by construction; and $f_i$ is $C^1$ for all $i \in [\wt{n}]$ since $f$ is continuous.
  Hence, $\{(A_i, f_i)\}_{i \in [n]}$ is a piecewise-$C^1$ representation of $f$
  that defines $g$ and satisfies the equation given in the statement.
\end{proof}

\subsection{Neural Networks}
\label{sec:pf-nn}

\begin{definition*}
  \label{def:minrep-sigma}
  For each $(l,i) \in \Idx$, let \[ \{(\mcI_{l,i}^k, \sigma_{l,i}^k) \}_{k \in [K_{l,i}]} \]
  be a piecewise-differentiable representation of $\sigma_{l,i} : \bbR \to \bbR$
  that defines $\adf{\sigma_{l,i}}$ (an extended derivative of $\sigma_{l,i}$ defined in \Cref{sec:ad}),
  where $K_{l,i} \in \bbN$, $\smash{\mcI_{l,i}^k} \subseteq \bbR$, and $\smash{\sigma_{l,i}^k} : \bbR \to \bbR$.
  We assume that the representation satisfies the following:
  \begin{gather*}
    \text{$\bigcup_{k \in [K_{l,i}]} \bd(\mcI_{l,i}^k) = \bigcup_{k \in [K_{l,i}]} \pbd(\mcI_{l,i}^k) = \ndf{\sigma_{l,i}}$}.
  \end{gather*}
  Note that such a representation always exists by \Cref{thm:int-deriv-basic}.
\end{definition*}

\begin{definition*}
  \label{def:gamma}
  Define $\Gamma$, the set of indices denoting which piece of each activation function is used, as
  \begin{align*}
    \Gamma &\defeq \{ \gamma : \Idx \to \bbN \;|\; \gamma(l,i) \in [K_{l,i}]  \text{ for all $(l,i) \in \Idx$} \}.
  \end{align*}
\end{definition*}

\begin{definition*}
  \label{def:ryz-gamma}
  Let $\gamma \in \Gamma$ and $l \in [L]$.
  Define $\mcR^\gamma \subseteq \bbR^W$, $y_l^\gamma, z_l^\gamma : \bbR^W \to \bbR^{N_l}$, $\sigma_l^\gamma : \bbR^{N_l} \to \bbR^{N_l}$ as:
  \begin{align*}
    \mcR^\gamma &\defeq  \{  w \in \bbR^W \;|\; y_{l,i}(w) \in \mcI_{l,i}^{\gamma({l,i})} \text{ for all } (l,i) \in \Idx \},
    \\
    y_l^\gamma(w) &\defeq \tau_l\big(z_{l-1}^\gamma(w), \pi_l(w)\big),
    \qquad
    z_l^\gamma(w) \defeq \sigma_{l}^\gamma\big(y_{l}^\gamma(w)\big),
    \\
    \sigma_{l}^\gamma(x) &\defeq \big(\sigma_{l,1}^{\gamma(l,1)}(x_1), \ldots, \sigma_{l,N_l}^{\gamma(l,N_l)}(x_{N_l})\big),
  \end{align*}
  where
  $\pi_l : \bbR^W \to \bbR^{W_l}$ denotes the projection function that extracts $w_l \in \bbR^{W_l}$ from $(w_1, \ldots, w_L) \in \bbR^W$,
  and $z_0^\gamma : \bbR^W \to \bbR^{N_0}$ is defined as $\smash{z_0^\gamma} \defeq z_0$.
\end{definition*}

\begin{lemma}
  \label{lem:r-gamma-partition}
  $\{\mcR^\gamma\}_{\gamma \in \Gamma}$ is a partition of $\bbR^W$.
\end{lemma}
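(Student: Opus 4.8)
The plan is to check the two defining properties of a partition --- covering and pairwise disjointness --- directly from the fact that, for each neuron index $(l,i) \in \Idx$, the family $\{\mcI_{l,i}^k\}_{k \in [K_{l,i}]}$ is itself a partition of $\bbR$ (into non-empty intervals), as fixed in \Cref{def:minrep-sigma}. The crucial point is that for a fixed $w \in \bbR^W$, each pre-activation value $y_{l,i}(w)$ is a single well-defined real number: it is computed from the genuine network map, not from any piecewise surrogate, and in particular does not depend on $\gamma$. Hence the condition $w \in \mcR^\gamma$ is just the conjunction, over all $(l,i) \in \Idx$, of the statement that the fixed real number $y_{l,i}(w)$ lies in the interval $\mcI_{l,i}^{\gamma(l,i)}$, and for each $(l,i)$ exactly one of the intervals $\{\mcI_{l,i}^k\}_{k\in[K_{l,i}]}$ contains it.

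For covering: given $w \in \bbR^W$, for each $(l,i) \in \Idx$ let $\gamma(l,i) \in [K_{l,i}]$ be the unique index with $y_{l,i}(w) \in \mcI_{l,i}^{\gamma(l,i)}$, which exists because $\{\mcI_{l,i}^k\}_k$ partitions $\bbR$. This defines an element $\gamma \in \Gamma$, and by construction $w \in \mcR^\gamma$. For disjointness: if $w \in \mcR^\gamma \cap \mcR^{\gamma'}$, then for every $(l,i)$ the real number $y_{l,i}(w)$ lies in both $\mcI_{l,i}^{\gamma(l,i)}$ and $\mcI_{l,i}^{\gamma'(l,i)}$; since the $\mcI_{l,i}^k$ are pairwise disjoint, this forces $\gamma(l,i) = \gamma'(l,i)$, and as this holds for all $(l,i)$ we conclude $\gamma = \gamma'$. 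Together these two facts give that $\{\mcR^\gamma\}_{\gamma\in\Gamma}$ covers $\bbR^W$ by pairwise disjoint sets, i.e., is a partition (with the understanding that some members $\mcR^\gamma$ may be empty, as not every index choice $\gamma$ need be realized by some $w$).

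There is essentially no real obstacle here beyond bookkeeping. The only point that warrants a moment's care is distinguishing $y_{l,i}(w)$, which appears in the definition of $\mcR^\gamma$, from the piecewise version $y_{l,i}^\gamma(w)$ of \Cref{def:ryz-gamma}: it is because the former is $\gamma$-independent that the per-coordinate membership tests can be resolved independently and the uniqueness of the containing interval invoked one neuron at a time.
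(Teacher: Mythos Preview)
Your proof is correct and takes essentially the same approach as the paper: both derive the result directly from the fact that $\{\mcI_{l,i}^k\}_{k \in [K_{l,i}]}$ is a partition of $\bbR$ for each $(l,i) \in \Idx$, with your version simply spelling out the covering and disjointness arguments that the paper leaves implicit. Your remark that $y_{l,i}(w)$ is $\gamma$-independent is exactly the point that makes the per-coordinate argument go through.
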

\begin{proof}
  This follows immediately from that $\smash{\{\mcI_{l,i}^k\}_{k \in [K_{l,i}]}}$ is a partition of $\bbR$ for all $(l,i) \in \Idx$
  (since $\smash{\{(\mcI_{l,i}^k, \sigma_{l,i}^k)\}_{k \in [K_{l,i}]}}$ is a representation of $\sigma_{l,i}$).
\end{proof}

\begin{lemma}
  \label{lem:yz-yz-gamma-good}
  For all $l \in [L]$ and $\gamma \in \Gamma$,
  $y_l$ and $z_l$ are continuous, and $\smash{y_l^\gamma}$ and $\smash{z_l^\gamma}$ are differentiable.
\end{lemma}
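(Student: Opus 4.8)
The plan is to establish both claims by a single induction on $l$, unfolding the composition in \Cref{eq:nn-def} one layer at a time. For the base case $l=0$, note that $z_0(w) = c$ and (by \Cref{def:ryz-gamma}) $z_0^\gamma = z_0$ are constant functions, hence continuous and differentiable. For the inductive step I would assume $z_{l-1}$ is continuous and $z_{l-1}^\gamma$ is differentiable, and deduce the corresponding statements first for $y_l, y_l^\gamma$ and then for $z_l, z_l^\gamma$.

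For continuity: since $y_l(w) = \tau_l(z_{l-1}(w), \pi_l(w))$ with $\tau_l$ analytic (hence continuous), $z_{l-1}$ continuous by the induction hypothesis, and $\pi_l$ a coordinate projection (hence continuous), $y_l$ is continuous as a composition of continuous maps; then $z_l(w) = \sigma_l(y_l(w))$ is continuous because $\sigma_l$ is pointwise continuous (each $\sigma_{l,i}$ is continuous by assumption). For differentiability of the $\gamma$-indexed maps: $y_l^\gamma(w) = \tau_l(z_{l-1}^\gamma(w), \pi_l(w))$ is differentiable by the chain rule, since $\tau_l$ is analytic (hence differentiable), $z_{l-1}^\gamma$ is differentiable by the induction hypothesis, and $\pi_l$ is linear; and $z_l^\gamma(w) = \sigma_l^\gamma(y_l^\gamma(w))$ is differentiable because $\sigma_l^\gamma(x) = (\sigma_{l,1}^{\gamma(l,1)}(x_1), \ldots, \sigma_{l,N_l}^{\gamma(l,N_l)}(x_{N_l}))$ is built from differentiable functions of single coordinates and is composed with the differentiable map $y_l^\gamma$.

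The only subtlety — and the step I would be most careful about, though it is not really hard — is that each piece function $\sigma_{l,i}^{k}$ in the chosen representation must be differentiable on all of $\bbR$, not merely on the interval $\mcI_{l,i}^{k}$ on which it agrees with $\sigma_{l,i}$; this is precisely what \Cref{def:minrep-sigma} provides (the representation is piecewise-differentiable in the sense of \Cref{def:pafun-refined}, whose existence is guaranteed by part (i) of \Cref{thm:int-deriv-basic}), and it is what makes $z_l^\gamma$ differentiable on the whole of $\bbR^W$ rather than only on $\mcR^\gamma$. Everything else reduces to the standard facts that analytic maps are $C^\infty$, coordinate projections are linear, and that compositions of continuous (resp. differentiable) maps are continuous (resp. differentiable).
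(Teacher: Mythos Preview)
Your proposal is correct and takes essentially the same approach as the paper: both arguments observe that $y_l$, $z_l$, $y_l^\gamma$, $z_l^\gamma$ are built by composing $\tau_{l'}$, $\pi_{l'}$, and either $\sigma_{l',i'}$ or $\sigma_{l',i'}^{k'}$, and that these building blocks are respectively continuous or differentiable. The paper's proof leaves the induction implicit (just writing ``follows directly from''), whereas you spell it out and also flag the point about each $\sigma_{l,i}^{k}$ being differentiable on all of $\bbR$; both are welcome bits of extra care, but the underlying argument is the same.
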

\begin{proof}
  The continuity of $y_l$ and $z_l$ follows directly from that
  $\tau_{l'}$, $\pi_{l'}$, and $\sigma_{l',i'}$ are continuous for all $(l',i') \in \Idx$.
  Similarly, the differentiability of $\smash{y_l^\gamma}$ and $\smash{z_l^\gamma}$ follows directly from that
  $\tau_{l'}$, $\pi_{l'}$, and $\smash{\sigma_{l',i'}^{k'}}$ are differentiable for all $(l',i') \in \Idx$ and $k' \in [K_{l',i'}]$.
\end{proof}

\begin{lemma}
  \label{lem:r-gamma-equiv}
  Let $\gamma \in \Gamma$. Then,
  \begin{align*}
    \mcR^\gamma &= \{  w \in \bbR^W \;|\; y_{l,i}^\gamma(w) \in \mcI_{l,i}^{\gamma({l,i})} \text{ for all } (l,i) \in \Idx \}.
  \end{align*}
  Note that the RHS uses $\smash{y_{l,i}^\gamma}$ instead of $y_{l,i}$.
\end{lemma}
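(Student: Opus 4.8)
The plan is to prove the two inclusions of the stated set equality separately, each by induction on the layer index $l$, carrying along the auxiliary fact that $z_l$ and $z_l^\gamma$ agree at the parameter in question (and therefore so do $y_l$ and $y_l^\gamma$). Two elementary observations drive the whole argument. First, since $\{(\mcI_{l,i}^k, \sigma_{l,i}^k)\}_{k \in [K_{l,i}]}$ is a piecewise-differentiable representation of $\sigma_{l,i}$, we have $\sigma_{l,i}^{\gamma(l,i)} = \sigma_{l,i}$ on $\mcI_{l,i}^{\gamma(l,i)}$. Second, $y_l(w) = \tau_l(z_{l-1}(w), w_l)$ whereas $y_l^\gamma(w) = \tau_l(z_{l-1}^\gamma(w), \pi_l(w))$, and $\pi_l(w) = w_l$; hence $y_l$ and $y_l^\gamma$ can differ only through their first arguments $z_{l-1}(w)$ and $z_{l-1}^\gamma(w)$.

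\textbf{Inclusion $\subseteq$.} I would fix $w \in \mcR^\gamma$, so that $y_{l,i}(w) \in \mcI_{l,i}^{\gamma(l,i)}$ for all $(l,i) \in \Idx$, and show by induction on $l \in \{0, 1, \dots, L\}$ that $z_l(w) = z_l^\gamma(w)$. The base case $l = 0$ is the definition $z_0^\gamma \defeq z_0$. For the inductive step, the hypothesis $z_{l-1}(w) = z_{l-1}^\gamma(w)$ gives $y_l^\gamma(w) = \tau_l(z_{l-1}^\gamma(w), \pi_l(w)) = \tau_l(z_{l-1}(w), w_l) = y_l(w)$; then for each $i \in [N_l]$, the membership $y_{l,i}(w) \in \mcI_{l,i}^{\gamma(l,i)}$ together with the first observation yields $z_{l,i}^\gamma(w) = \sigma_{l,i}^{\gamma(l,i)}(y_{l,i}^\gamma(w)) = \sigma_{l,i}(y_{l,i}(w)) = z_{l,i}(w)$. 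Along the way we obtain $y_{l,i}^\gamma(w) = y_{l,i}(w) \in \mcI_{l,i}^{\gamma(l,i)}$ for every $(l,i)$, which is precisely the defining condition of the right-hand side.

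\textbf{Inclusion $\supseteq$.} Here I would fix $w$ with $y_{l,i}^\gamma(w) \in \mcI_{l,i}^{\gamma(l,i)}$ for all $(l,i) \in \Idx$ and prove, by induction on $l$, the conjunction: $z_l(w) = z_l^\gamma(w)$, and $y_{l,i}(w) \in \mcI_{l,i}^{\gamma(l,i)}$ for all $i \in [N_l]$ (the latter vacuous at $l = 0$). The base case is again $z_0 = z_0^\gamma$. For the step, $z_{l-1}(w) = z_{l-1}^\gamma(w)$ gives $y_l(w) = \tau_l(z_{l-1}(w), w_l) = \tau_l(z_{l-1}^\gamma(w), \pi_l(w)) = y_l^\gamma(w)$, hence $y_{l,i}(w) = y_{l,i}^\gamma(w) \in \mcI_{l,i}^{\gamma(l,i)}$ by assumption, and then $z_{l,i}(w) = \sigma_{l,i}(y_{l,i}(w)) = \sigma_{l,i}^{\gamma(l,i)}(y_{l,i}^\gamma(w)) = z_{l,i}^\gamma(w)$ by the first observation. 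Once the induction is complete, $y_{l,i}(w) \in \mcI_{l,i}^{\gamma(l,i)}$ for all $(l,i)$, i.e., $w \in \mcR^\gamma$.

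\textbf{Main difficulty.} There is no genuine obstacle; the content is a routine bookkeeping induction. The single point deserving care is in the $\supseteq$ direction, where the two facts propagated through the induction are mutually dependent: interval membership of the $y$-components at layer $l-1$ is needed to get $z_{l-1} = z_{l-1}^\gamma$, which is needed to get $y_l = y_l^\gamma$, which in turn gives interval membership at layer $l$; so they must be bundled into one inductive statement rather than proved in sequence. It is also worth stating explicitly that both $y_l$ and $y_l^\gamma$ pass the identical second argument $w_l = \pi_l(w)$ to $\tau_l$, so no discrepancy can enter through the parameter slot.
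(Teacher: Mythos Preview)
Your proposal is correct and follows essentially the same approach as the paper: an induction on the layer index showing $z_{l-1}(w)=z_{l-1}^\gamma(w)$, hence $y_l(w)=y_l^\gamma(w)$, hence the interval memberships coincide. The only cosmetic difference is that the paper packages both inclusions into a single induction by introducing truncated sets $\mcR^\gamma_{\leq l}$ and $\mcS^\gamma_{\leq l}$ and proving $\mcR^\gamma_{\leq l}=\mcS^\gamma_{\leq l}$ along with $y_l=y_l^\gamma$ on $\mcR^\gamma_{\leq l-1}$, whereas you argue the two inclusions separately at a fixed $w$; the content is the same.
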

\begin{proof}
  Let $\gamma \in \Gamma$.
  Define $\mcR^{\gamma}_{\leq l}, \mcS^{\gamma}_{\leq l} \subseteq \bbR^W$ for $l \in [L]$ as
  \begin{align*}
    \mcR^{\gamma}_{\leq l}
    & \defeq  \{  w \in \bbR^W \;|\; y_{l',i}(w) \in \mcI_{l',i}^{\gamma({l',i})} \text{ for all $(l',i) \in \Idx$ with $l' \leq l$} \},
    \\
    \mcS^{\gamma}_{\leq l}
    &\defeq \{  w \in \bbR^W \;|\; y_{l',i}^\gamma(w) \in \mcI_{l',i}^{\gamma({l',i})} \text{ for all $(l',i) \in \Idx$ with $l' \leq l$}\}.
  \end{align*}
  It suffices to show the following claim which generalizes this lemma: all $l \in [L]$,
  \begin{align*}
    y_l(w) = y_l^\gamma(w) \;\;\text{for all $w \in \mcR^\gamma_{\leq l-1}$},
    \qquad\quad
    \mcR^\gamma_{\leq l} = \mcS^\gamma_{\leq l}.
  \end{align*}
  We prove this claim by induction on $l$.

  \paragraph{\bf Case $l=1$.}
  Since $z_0 = z_0^\gamma$, we have the first claimed equation:
  \[ y_1(w) = \tau_1(z_0(w), w_1) = \tau_1(z_0^\gamma(w), w_1) = y_1^\gamma(w) \]
  for all $w \in \bbR^W$.
  From this, we have the second claimed equation:
  \begin{align*}
    \mcR^\gamma_{\leq 1}
    = \bigcap_{i \in [N_1]} \{w \in \bbR^W \mid y_{1,i}(w) \in \mcI_{1,i}^{\gamma(1,i)} \}
    = \bigcap_{i \in [N_1]} \{w \in \bbR^W \mid y_{1,i}^\gamma(w) \in \mcI_{1,i}^{\gamma(1,i)} \}
    = \mcS^\gamma_{\leq 1}.
  \end{align*}
  
  \paragraph{\bf Case $l>1$.}
  We obtain the first claimed equation as follows:
  for all $w \in \mcR^\gamma_{\leq l-1}$,
  \begin{align*}
    y_l^\gamma(w)
    &= \tau_l\big( \sigma_{l-1}^\gamma(y_{l-1}^\gamma(w)), \pi_l(w) \big)
    \\
    &= \tau_l\big( \sigma_{l-1}^\gamma(y_{l-1}(w)), \pi_l(w) \big)
    \\
    &= \tau_l\big( \sigma_{l-1}(y_{l-1}(w)), \pi_l(w) \big)
    = y_{l}(w).
  \end{align*}
  Here the second line uses ${y_{l-1}^\gamma(w)} = y_{l-1}(w)$,
  which holds by induction hypothesis on $l-1$ with $w \in {\mcR^\gamma_{\leq l-1}} \subseteq {\mcR^\gamma_{\leq l-2}}$.
  And the third line uses ${\sigma_{l-1,i}^{\gamma(l-1,i)}}(y_{l-1,i}(w)) = \sigma_{l-1,i}(y_{l-1,i}(w))$ for all $i \in [N_{l-1}]$,
  which holds because $y_{l-1,i}(w) \in \smash{\mcI_{l-1,i}^{\gamma(l-1,i)}}$ (by $w \in \mcR^\gamma_{\leq l-1}$)
  and $\smash{\{(\mcI_{l-1,i}^k, \sigma_{l-1,i}^k)\}_{k \in [K_{l-1,i}]}}$ is a representation of $\sigma_{l-1,i}$.
  Using this result, we obtain the second claimed equation as follows:
  \begin{align*}
    \mcR^\gamma_{\leq l}
    &= \mcR^\gamma_{\leq l-1} \cap \bigcap_{i \in [N_l]} \{w \in \mcR^\gamma_{\leq l-1} \mid y_{l,i}(w) \in \mcI_{l,i}^{\gamma(l,i)} \}
    \\
    &= \mcR^\gamma_{\leq l-1} \cap \bigcap_{i \in [N_l]} \{w \in \mcR^\gamma_{\leq l-1} \mid y_{l,i}^\gamma(w) \in \mcI_{l,i}^{\gamma(l,i)} \}
    \\
    &= \mcS^\gamma_{\leq l-1} \cap \bigcap_{i \in [N_l]} \{w \in \mcS^\gamma_{\leq l-1} \mid y_{l,i}^\gamma(w) \in \mcI_{l,i}^{\gamma(l,i)} \}
    = \mcS^\gamma_{\leq l},
  \end{align*}
  where the second line uses $\smash{y_{l,i}^\gamma(w)} = y_{l,i}(w)$ for all $w \in \smash{\mcR^\gamma_{\leq l-1}}$, which we already proved,
  and the third line uses $\mcR^\gamma_{\leq l-1} = \mcS^\gamma_{\leq l-1}$, which holds by induction hypothesis on $l-1$.
\end{proof}

\begin{lemma}
  \label{lem:r-gamma-f-df}
  Let $\gamma \in \Gamma$. Then, for all $l \in [L]$ and $w \in \mcR^\gamma$,  
  \begin{align*}
    y_l^\gamma(w) &= y_l(w),
    \qquad\quad z_l^\gamma(w) = z_l(w).
  \end{align*}
\end{lemma}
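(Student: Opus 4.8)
The plan is to derive this lemma almost immediately from the claim established inside the proof of \Cref{lem:r-gamma-equiv}, together with the defining property of the representations $\{(\mcI_{l,i}^k, \sigma_{l,i}^k)\}_{k \in [K_{l,i}]}$. First I would observe that the sets $\mcR^\gamma_{\leq l}$ are nested, $\mcR^\gamma = \mcR^\gamma_{\leq L} \subseteq \mcR^\gamma_{\leq l-1}$ for every $l \in [L]$, since imposing more constraints only shrinks the set. Hence for any $w \in \mcR^\gamma$ we have $w \in \mcR^\gamma_{\leq l-1}$, and the first claimed equation of the induction in \Cref{lem:r-gamma-equiv} gives $y_l^\gamma(w) = y_l(w)$ directly. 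This disposes of the $y_l$ part with no further work.

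For the $z_l$ part, I would simply unfold the definitions: $z_l^\gamma(w) = \sigma_l^\gamma(y_l^\gamma(w)) = \sigma_l^\gamma(y_l(w))$ using the $y$-equality just proved, while $z_l(w) = \sigma_l(y_l(w))$. So it remains to check $\sigma_l^\gamma(y_l(w)) = \sigma_l(y_l(w))$ componentwise. For each $i \in [N_l]$, membership $w \in \mcR^\gamma$ gives $y_{l,i}(w) \in \mcI_{l,i}^{\gamma(l,i)}$ by definition of $\mcR^\gamma$, and since $\{(\mcI_{l,i}^k, \sigma_{l,i}^k)\}_{k}$ is a piecewise-differentiable representation of $\sigma_{l,i}$ we have $\sigma_{l,i}^{\gamma(l,i)} = \sigma_{l,i}$ on $\mcI_{l,i}^{\gamma(l,i)}$, hence $\sigma_{l,i}^{\gamma(l,i)}(y_{l,i}(w)) = \sigma_{l,i}(y_{l,i}(w))$. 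Collecting over $i$ yields $\sigma_l^\gamma(y_l(w)) = \sigma_l(y_l(w))$, completing the argument.

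There is essentially no obstacle here: the only subtlety is making sure one is entitled to invoke the internal claim of \Cref{lem:r-gamma-equiv} (rather than just its stated conclusion), so I would phrase that step carefully, noting the nesting $\mcR^\gamma \subseteq \mcR^\gamma_{\leq l-1}$ explicitly. Everything else is definitional unfolding.
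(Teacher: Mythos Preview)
Your proposal is correct and follows essentially the same approach as the paper's proof: both derive the $y_l$ equality from the internal induction claim in the proof of \Cref{lem:r-gamma-equiv} via the nesting $\mcR^\gamma \subseteq \mcR^\gamma_{\leq l-1}$, and then obtain the $z_l$ equality by unfolding $z_l^\gamma = \sigma_l^\gamma \circ y_l^\gamma$ and using $y_{l,i}(w) \in \mcI_{l,i}^{\gamma(l,i)}$ together with the representation property $\sigma_{l,i}^{\gamma(l,i)} = \sigma_{l,i}$ on $\mcI_{l,i}^{\gamma(l,i)}$. Your explicit remark about relying on the internal claim rather than the stated conclusion of \Cref{lem:r-gamma-equiv} matches the paper exactly.
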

\begin{proof}
  Let $\gamma \in \Gamma$.
  The claim shown in the proof of \Cref{lem:r-gamma-equiv} implies the first part of the conclusion
  (since $\mcR^\gamma_{\leq l-1} \supseteq \mcR^\gamma$):
  for all $l \in [L]$ and $w \in \mcR^\gamma$, $\smash{y_l^\gamma}(w) = y_l(w)$.
  From this, we obtain the second part of the conclusion:
  for all $l \in [L]$ and $w \in \mcR^\gamma$,
  \begin{align*}
    z_l^\gamma(w)
    = \sigma_l^\gamma(y_l^\gamma(w))
    = \sigma_l^\gamma(y_l(w))
    = \sigma_l(y_l(w))
    = z_l(w),
  \end{align*}
  where the second equality follows from the first part of the conclusion,
  and the third equality from $\smash{\sigma_{l,i}^{\gamma(l,i)}}(y_{l,i}(w)) = \sigma_{l,i}(y_{l,i}(w))$
  which holds because $y_{l,i}(w) \in \smash{\mcI_{l,i}^{\gamma(l,i)}}$ (by $w \in \mcR^\gamma$)
  and $\smash{\{(\mcI_{l,i}^k, \sigma_{l,i}^k)\}_{k \in [K_{l,i}]}}$ is a representation of $\sigma_{l,i}$.
\end{proof}

\subsection{Automatic Differentiation}
\label{sec:pf-ad}

\edit{%
As discussed in \Cref{sec:intro}, AD operates not on mathematical functions,
but on programs that represent those functions.
To this end, 
we define a program $\code{\ttP}$ that represents a function from $\bbR^W$ to $\bbR$ as follows:
\begin{align*}
  \code{\ttP} ::= \code{r} \;|\; \code{\ttw_{l,j}} \;|\; \code{f \ttlp \ttP_1, \ldots, \ttP_n \ttrp}
\end{align*}
where $\code{r} \in \bbR$, $l \in [L]$, $j \in [W_l]$, $f \in \{ \tau_{l,i}, \sigma_{l,i} \mid (l,i) \in \Idx \}$, and $n \in \bbN$.
This definition says that a program $\ttP$ can be either a real-valued constant $r$, a real-valued parameter $\ttw_{l,j}$,
or the application of a function $f : \bbR^n \to \bbR$ to subprograms $\smash{\ttP_1}, \ldots, \smash{\ttP_n}$.
In this paper, we focus on particular programs $\smash{\code{\ttP_{y_{l,i}}}}$ and $\smash{\code{\ttP_{z_{l,i}}}}$
that represent the functions $y_{l,i}(\,\cdot\,;c), z_{l,i}(\,\cdot\,;c) : \bbR^W \to \bbR$ and are defined in a canonical way as follows:
\begin{align*}
  \code{\ttP_{y_{l,i}}}
  &\defeq \code{\tau_{l,i} \ttlp \ttP_{z_{l-1,1}}, \ldots, \ttP_{z_{l-1,N_{l-1}}}, \ttw_{l,1}, \ldots, \ttw_{l,W_l} \ttrp},
  \\
  \code{\ttP_{z_{l,i}}}
  &\defeq \code{\sigma_{l,i} \ttlp \ttP_{y_{l,i}} \ttrp},
\end{align*}
where $\smash{\code{\ttP_{z_{0,i'}}}} \defeq \smash{c_{i'}}$ for $i' \in [N_0]$
represents the constant function $z_{0,i'}(\,\cdot\,;c) : \bbR^W \to \bbR$.

Given a program $\code{\ttP}$, we define $\sem{\code{\ttP}} : \bbR^W \to \bbR$ as the function represented by $\ttP$,
and $\semad{\code{\ttP}} : \bbR^W \to \bbR^{1 \times W}$ as the function that AD essentially computes when applied to $\ttP$. These functions are defined inductively as follows \cite{LeeYRY20, AbadiP20, BaydinPRS17}:
\begin{align*}
  \sem{\code{r}}(w) &\defeq r,
  \\
  \sem{\code{\ttw_{l,j}}}(w) &\defeq w_{l,j},
  \\
  \sem{\code{f \ttlp \ttP_1, \ldots, \ttP_n \ttrp}}(w)
  & \defeq f\big(\sem{\code{\ttP_1}}(w), \ldots, \sem{\code{\ttP_n}}(w)\big),
  \\
  \semad{\code{r}}(w) &\defeq \mathbb{0}, 
  \\
  \semad{\code{\ttw_{l,j}}}(w) &\defeq \mathbb{1}_{l,j},
  \\
  \!\!
  \semad{\code{f \ttlp \ttP_1, \ldots, \ttP_n \ttrp}}(w)
  & \defeq \adf{f}\big(\sem{\code{\ttP_1}}(w), \ldots, \sem{\code{\ttP_n}}(w)\big)
  \cdot \big[ \semad{\code{\ttP_1}}(w) \,\big/ \cdots \big/\, \semad{\code{\ttP_n}}(w) \big].
\end{align*}
Here $w_{l,j} \in \bbR$ is defined as $(w_{1,1}, w_{1,2}\ldots, w_{L, W_L}) \defeq w$,
$\mathbb{0}, \mathbb{1}_{l,j} \in \bbR^{1 \times W}$ denote the zero matrix
and the matrix whose entries are all zeros except for a single one at the $(W_1+\cdots+W_{l-1}+j)$-th entry,
$\adf{f} : \bbR^n \to \bbR^{1 \times n}$ denotes a ``derivative'' of $f$ used by AD,
and $[M_1 \,/\cdots/\, M_n]$ denotes the matrix that stacks up matrices $M_1, \ldots, M_n$ vertically.
Note that $\semad{\code{f \ttlp \ttP_1, \ldots, \ttP_n \ttrp}}$ captures the essence of AD:
it computes derivatives based on the chain rule for differentiation.

Using the above definitions, we define $\ADF{z_L} : \bbR^W \to \bbR^{N_L \times W}$
as what AD essentially computes when applied
to a program 
that canonically represents a neural network $z_L : \bbR^W \to \bbR^{N_L}$: 
\begin{align*}
  \ADF{z_L}(w) \defeq \big[ \semad{\code{\ttP_{z_{L,1}}}}(w) \,\big/ \cdots \big/\, \semad{\code{\ttP_{z_{L,N_L}}}}(w) \big].
\end{align*}
Note that $\ADF{z_L}$ depends on the ``derivative'' of (pre-)activation functions (i.e., $\adf{\sigma_{l,i}}$ and $\adf{\tau_{l,i}}$) used by AD.
}%

\begin{lemma}
  \label{lem:r-gamma-adf}
  For any $\gamma \in \Gamma$ and $w \in \mcR^\gamma$,  
  \begin{align*}
    \ADF{z_L}(w) = \DF{z_L^\gamma}(w).
  \end{align*}
\end{lemma}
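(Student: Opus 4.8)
The plan is to prove by induction on the structure of the canonical program representing $z_L$ that $\semad{\code{\ttP}}(w) = \DF{(\sem{\code{\ttP}})^\gamma}(w)$ for all $w \in \mcR^\gamma$, where $(\sem{\code{\ttP}})^\gamma$ denotes the ``branch-selected'' version of the function, obtained by replacing each $\sigma_{l,i}$ occurring in $\ttP$ by its piece $\sigma_{l,i}^{\gamma(l,i)}$ and each $\tau_{l,i}$ by itself (since $\tau_{l,i}$ is analytic, it has a unique extended derivative). Specialized to $\ttP = \code{\ttP_{z_{L,i}}}$ for each $i \in [N_L]$ and stacked vertically, this yields the claim. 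The key observation that makes the induction go through is that on $\mcR^\gamma$, each pre-activation value $y_{l,i}(w)$ lies in $\mcI_{l,i}^{\gamma(l,i)}$ (by definition of $\mcR^\gamma$), so the ``derivative'' $\adf{\sigma_{l,i}}$ used by AD coincides with $\DF{\sigma_{l,i}^{\gamma(l,i)}}$ there — this is exactly what it means for the representation $\{(\mcI_{l,i}^k, \sigma_{l,i}^k)\}_k$ to \emph{define} $\adf{\sigma_{l,i}}$.

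\textbf{Key steps.} First I would set up the inductive statement precisely: for every subprogram $\code{\ttP}$ appearing in $\code{\ttP_{z_{L,i}}}$ and every $w \in \mcR^\gamma$, we have $\sem{\code{\ttP}}(w) = (\sem{\code{\ttP}})^\gamma(w)$ and $\semad{\code{\ttP}}(w) = \DF{(\sem{\code{\ttP}})^\gamma}(w)$ (with the latter in particular asserting differentiability of $(\sem{\code{\ttP}})^\gamma$ at $w$). The first equality is essentially \Cref{lem:r-gamma-f-df} applied piecewise; the second is the new content. Base cases: for $\code{\ttP} = \code{r}$, both sides are the zero matrix; for $\code{\ttP} = \code{\ttw_{l,j}}$, both sides are $\mathbb{1}_{l,j}$. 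Inductive step for $\code{\ttP} = \code{f\ttlp\ttP_1,\dots,\ttP_n\ttrp}$: by the chain rule, $\DF{(\sem{\code{\ttP}})^\gamma}(w) = \DF{f^\gamma}\big((\sem{\code{\ttP_1}})^\gamma(w),\dots\big) \cdot \big[\DF{(\sem{\code{\ttP_1}})^\gamma}(w) / \cdots / \DF{(\sem{\code{\ttP_n}})^\gamma}(w)\big]$, where $f^\gamma = f$ if $f$ is a $\tau_{l,i}$ and $f^\gamma = \sigma_{l,i}^{\gamma(l,i)}$ if $f = \sigma_{l,i}$. Comparing with the definition of $\semad{\code{\ttP}}$, the stacked factor matches by induction hypothesis, and the leading factor matches because (a) for $f = \tau_{l,i}$, $\adf{\tau_{l,i}} = \DF{\tau_{l,i}}$ since $\tau_{l,i}$ is differentiable, and (b) for $f = \sigma_{l,i}$, the argument $(\sem{\code{\ttP_{y_{l,i}}}})^\gamma(w) = y_{l,i}^\gamma(w) = y_{l,i}(w) \in \mcI_{l,i}^{\gamma(l,i)}$ by \Cref{lem:r-gamma-equiv} (or \Cref{lem:r-gamma-f-df}) and membership in $\mcR^\gamma$, so $\adf{\sigma_{l,i}}$ evaluated there equals $\DF{\sigma_{l,i}^{\gamma(l,i)}}$ by the defining property of the representation.

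\textbf{Main obstacle.} The delicate point is justifying the chain rule in the inductive step, which requires knowing that $(\sem{\code{\ttP_j}})^\gamma$ is differentiable at $w$ for each subprogram — this is part of the inductive hypothesis, and ultimately rests on \Cref{lem:yz-yz-gamma-good}, which guarantees that $y_l^\gamma$ and $z_l^\gamma$ are differentiable everywhere (being compositions of differentiable maps $\tau_l$, $\pi_l$, $\sigma_l^\gamma$). One must also be careful that the canonical program $\code{\ttP_{z_{L,i}}}$ reuses subprograms (the DAG structure), but since the statement is pointwise at a fixed $w$ and purely about function values, treating the program as a tree for the induction causes no issue. A minor bookkeeping matter is matching the indexing conventions between $\mathbb{1}_{l,j}$ in the AD semantics and the $\bbR^{1\times W}$ layout, but this is routine. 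Overall I expect the argument to be a clean structural induction once the ``branch-selected function $(\sem{\code{\ttP}})^\gamma$'' is defined, with \Cref{lem:r-gamma-equiv}, \Cref{lem:r-gamma-f-df}, \Cref{lem:yz-yz-gamma-good}, and the defining property of $\{(\mcI_{l,i}^k,\sigma_{l,i}^k)\}_k$ doing all the real work.
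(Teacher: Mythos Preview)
Your proposal is correct and takes essentially the same approach as the paper. The paper's proof is an induction on the layer index $l$ showing $\DF{z_{l,i}^\gamma}(w) = \semad{\code{\ttP_{z_{l,i}}}}(w)$, which is exactly your structural induction specialized to the layered canonical programs $\ttP_{z_{l,i}}$; the same lemmas (\Cref{lem:r-gamma-f-df}, \Cref{lem:yz-yz-gamma-good}, and the defining property of the representation) are invoked at the same points, and your ``branch-selected'' function $(\sem{\code{\ttP}})^\gamma$ is precisely the paper's $z_{l,i}^\gamma$ or $y_{l,i}^\gamma$.
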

\begin{proof}
  Let $\gamma \in \Gamma$.
  We prove the following claim: for all $l \in [L] \cup \{0\}$, $i \in [N_l]$, and $w \in \mcR^\gamma$,
  \begin{align*}
    \DF{z_{l,i}^\gamma}(w) = \semad{\code{\ttP_{z_{l,i}}}}(w).
  \end{align*}
  Note that this claim implies the conclusion since
  $$ \smash{\DF{z_L^\gamma}(w)}
  = [ \smash{\DF{z_{L,1}^\gamma}(w)} \,/\cdots/\, \smash{\DF{z_{L,N_L}^\gamma}(w)} ]
  = [ \smash{\semad{\code{\ttP_{z_{L,1}}}}(w)} \,/\cdots/\, \smash{\semad{\code{\ttP_{z_{L,N_L}}}}(w)} ]
  = \smash{\ADF{z_L}(w)}.$$
  We prove the claim by induction on $l$.
  
  \paragraph{\bf Case $l=0$.}
  Let $i \in [N_l]$ and $w \in \mcR^\gamma$.
  Since $\smash{\code{\ttP_{z_{0,i}}}}$ is a constant program, 
  $\smash{\DF{z_{0,i}^\gamma}}(w) = \mathbb{0} = \smash{\semad{\code{\ttP_{z_{0,i}}}}}(w)$ as desired.

  \paragraph{\bf Case $l>0$.}
  Let $i \in [N_l]$ and $w \in \mcR^\gamma$.
  Observe that
  \begin{align}
    \nonumber
    \semad{\code{\ttP_{y_{l,i}}}}(w)
    &= \semad{\code{\tau_{l,i} \ttlp \ttP_{z_{l-1,1}}, \ldots, \ttP_{z_{l-1,N_{l-1}}}, \ttw_{l,1}, \ldots, \ttw_{l,W_l} \ttrp}}(w)
    \\ \nonumber
    &= \DF{\tau_{l,i}}\big( \sem{\code{\ttP_{z_{l-1,1}}}}(w), \ldots, \sem{\code{\ttP_{z_{l-1,N_{l-1}}}}}(w),
    \sem{\code{\ttw_{l,1}}}(w), \ldots, \sem{\code{\ttw_{l,N_l}}}(w) \big)
    \\ \nonumber
    &\qquad \cdot \big[
      \semad{\code{ \ttP_{z_{l-1,1}} }}(w) \,/ \cdots /\, \semad{\code{ \ttP_{z_{l-1,N_{l-1}}} }}(w)
      \,/\,
      \semad{\code{ \ttw_{l,1} }}(w) \,/ \cdots /\, \semad{\code{ \ttw_{l,W_l} }}(w)
    \big]
    \\ \nonumber
    &= \DF{\tau_{l,i}} \big(z_{l-1}(w), \pi_l(w)\big)
    \cdot \big[
      \DF{z_{l-1,1}^\gamma}(w) \,/ \cdots /\, \DF{z_{l-1,N_{l-1}}^\gamma}(w)
      \,/\,
      \mathbb{1}_{l,1} \,/\cdots/\, \mathbb{1}_{l,N_l}
    \big]
    \\ \nonumber
    &= \DF{\tau_{l,i}} \big(z_{l-1}(w), \pi_l(w)\big) \cdot \big[ \DF{z_{l-1}^\gamma}(w) \,/\, \DF{\pi_l}(w) \big]
    \\ \label{eq:lem:r-gamma-adf-1}
    &= \DF{\tau_{l,i}}\big((z_{l-1}, \pi_l)(w)\big) \cdot \DF{(z_{l-1}^\gamma, \pi_l)}(w),
    \vphantom{\big[}
  \end{align}
  where $(f,g) : \bbR^n \to \bbR^{m_1+m_2}$ is defined as $(f,g)(x) \defeq (f(x), g(x))$
  for $f: \bbR^n \to \bbR^{m_1}$ and $g: \bbR^n \to \bbR^{m_2}$.
  Here the third line uses $\smash{\sem{\code{\ttP_{z_{l-1,i'}}}}(w)} = z_{l-1,i'}(w)$
  and $\smash{\semad{\code{\ttP_{z_{l-1,i'}}}}(w)} = \smash{\DF{z_{l-1,i'}^\gamma}}(w)$ for all $i' \in [N_{l-1}]$,
  where the latter holds by induction hypothesis on $l-1$.
  
  Using the observation above, we obtain the claim:
  \begin{align*}
    \semad{\code{\ttP_{z_{l,i}}}}(w)
    &= \semad{\code{\sigma_{l,i} \ttlp \ttP_{y_{l,i}} \ttrp}}(w)
    \\
    &= \adf{\sigma_{l,i}}\big( \sem{\code{\ttP_{y_{l,i}}}}(w) \big) \cdot \semad{\code{\ttP_{y_{l,i}}}}(w)
    \\
    &= \adf{\sigma_{l,i}}\big(y_{l,i}(w)\big) \cdot \DF{\tau_{l,i}}\big( (z_{l-1}, \pi_l)(w) \big) \cdot \DF{(z_{l-1}^\gamma, \pi_l)}(w)
    \\
    &= \DF{\sigma_{l,i}^\gamma} \big(y_{l,i}(w)\big) \cdot \DF{\tau_{l,i}}\big( (z_{l-1}, \pi_l)(w) \big) \cdot \DF{(z_{l-1}^\gamma, \pi_l)}(w)
    \\
    &= \DF{\sigma_{l,i}^\gamma} \big( (\tau_{l,i} \circ (z_{l-1}^\gamma, \pi_l))(w) \big)
    \cdot \DF{\tau_{l,i}}\big( (z_{l-1}^\gamma, \pi_l)(w) \big) \cdot \DF{(z_{l-1}^\gamma, \pi_l)}(w)
    \\
    &= \DF{(\sigma_{l,i}^\gamma \circ \tau_{l,i} \circ (z_{l-1}^\gamma, \pi_l))}(w)
    \\
    &= \DF{z_{l,i}^\gamma}(w).
  \end{align*}
  Here the third line uses $\smash{\sem{\code{\ttP_{y_{l,i}}}}(w)} = y_{l,i}(w)$ and \Cref{eq:lem:r-gamma-adf-1},
  and the fourth line uses $\adf{\sigma_{l,i}}(y_{l,i}(w)) = \smash{\DF{\sigma_{l,i}^{\gamma(l,i)}}}(y_{l,i}(w))$,
  which holds because $y_{l,i}(w) \in \smash{\mcI_{l,i}^{\gamma(l,i)}}$ (by $w \in \mcR^\gamma$)
  and $\smash{\{(\mcI_{l,i}^k, \sigma_{l,i}^k)\}_{k \in [K_{l,i}]}}$ defines $\adf{\sigma_{l,i}}$.
  The fifth line uses $y_{l,i}(w) = \smash{y_{l,i}^\gamma(w)}$ and $z_{l-1}(w) = {z_{l-1}^\gamma(w)}$
  (by \Cref{lem:r-gamma-f-df} with $w \in \mcR^\gamma$),
  and the sixth line uses the chain rule,
  which is applicable to $({\sigma_{l,i}^\gamma} \circ \tau_{l,i} \circ (\smash{z_{l-1}^\gamma}, \pi_l))$
  because $\smash{\sigma_{l,i}^\gamma}$, $\tau_{l,i}$, $\smash{z_{l-1}^\gamma}$, and $\pi_l$ are differentiable
  (as $\smash{z_{l-1}^\gamma}$ is differentiable by \Cref{lem:yz-yz-gamma-good}).
\end{proof}

\clearpage
\section{Upper Bounds on $|\ndfM{z_L} \cup \incM{z_L}|$}
\label{sec:pf-ndfinc}

\subsection{Lemmas (Basic)}

\begin{lemma}
  \label{lem:pbd-basic}
  For any $A, B \subseteq \bbR^n$,
  \begin{align*}
    \pbd(A \cup B) &\subseteq \pbd(A) \cup \pbd(B),
    \qquad\quad
    \pbd(A \cap B) \subseteq \pbd(A) \cup \pbd(B).
  \end{align*}
\end{lemma}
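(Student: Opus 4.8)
Recall that $\pbd(X) = X \setminus \intr(X)$, so $x \in \pbd(X)$ iff $x \in X$ and every open neighborhood of $x$ meets the complement $X^c$. The plan is to verify both containments directly from this characterization, handling the union and the intersection separately but symmetrically. For the union, suppose $x \in \pbd(A \cup B)$. Then $x \in A \cup B$, so without loss of generality $x \in A$. Since $x \notin \intr(A \cup B)$ and $\intr(A) \subseteq \intr(A \cup B)$, we get $x \notin \intr(A)$; combined with $x \in A$ this yields $x \in \pbd(A) \subseteq \pbd(A) \cup \pbd(B)$. The case $x \in B$ is identical.

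For the intersection, suppose $x \in \pbd(A \cap B)$, so $x \in A$, $x \in B$, and $x \notin \intr(A \cap B)$. Here I would argue by contradiction: if $x \notin \pbd(A) \cup \pbd(B)$, then (since $x \in A$ and $x \in B$) we must have $x \in \intr(A)$ and $x \in \intr(B)$. But $\intr(A) \cap \intr(B) \subseteq \intr(A \cap B)$ — indeed the left side is an open set contained in $A \cap B$ — so $x \in \intr(A \cap B)$, contradicting $x \in \pbd(A \cap B)$. Hence $x \in \pbd(A) \cup \pbd(B)$, which is the desired containment.

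This is entirely routine point-set topology; there is no real obstacle. The only small things to keep straight are the two elementary monotonicity/containment facts about interiors that drive each half: $\intr(A) \cup \intr(B) \subseteq \intr(A \cup B)$ (needed, in contrapositive form, for the union case) and $\intr(A) \cap \intr(B) \subseteq \intr(A \cap B)$ (for the intersection case), both of which follow immediately because the interior is the largest open subset. I would state these inline rather than as separate lemmas. If one prefers a slicker treatment, the intersection containment can also be deduced from the union containment applied to complements together with the identity $\pbd(X) = \bd(X) \cap X$ and De Morgan's laws, but the direct argument above is shorter and self-contained.
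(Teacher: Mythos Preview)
Your proof is correct and takes essentially the same approach as the paper: both arguments reduce to the two interior facts $\intr(A)\cup\intr(B)\subseteq\intr(A\cup B)$ and $\intr(A)\cap\intr(B)\subseteq\intr(A\cap B)$, with the paper doing the set-algebraic manipulation directly while you phrase the same steps as an element chase (and, for the intersection, as a contraposition). The logical content is identical.
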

\begin{proof}
  Let $A, B \subseteq \bbR^n$.
  Then, $\intr(A \cup B) \supseteq \intr(A) \cup \intr(B)$ and $\intr(A \cap B) = \intr(A) \cap \intr(B)$.
  Using these, we obtain:
  \begin{align*}
    \pbd(A \cup B)
    &= (A \cup B) \setminus \intr(A \cup B)
    \\
    &= (A \setminus \intr(A \cup B)) \cup (B \setminus \intr(A \cup B))
    \\
    &\subseteq (A \setminus \intr(A))  \cup (B \setminus \intr(B))
    \\
    &=\pbd(A) \cup \pbd(B),
    \\
    \pbd(A \cap B)
    &= (A \cap B) \setminus \intr(A \cap B)
    \\
    &= (A \cap B) \setminus (\intr(A) \cap \intr(B))
    \\
    &= ((A \cap B) \setminus \intr(A)) \cup ((A \cap B) \setminus \intr(B))
    \\
    &\subseteq (A \setminus \intr(A))  \cup (B \setminus \intr(B))
    \\
    &=\pbd(A) \cup \pbd(B).
    \qedhere
  \end{align*}
\end{proof}

\begin{lemma}
  \label{lem:sol-count-basic}
  Let $f: \bbR^{n} \to \bbR$ be a function defined as
  \( f(x) = g(x_{-n}) + c \cdot x_n \)
  for any $g : \bbR^n \to \bbR$ and $c \in \bbR \setminus \{0\}$, where $x_{-n}$ denotes $(x_1, \ldots, x_{n-1})$.
  Then,
  \[
  \big|\{ x \in \bbM^n \mid f(x) = 0 \}\big| \leq |\bbM|^{n-1}.
  \]
\end{lemma}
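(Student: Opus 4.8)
The plan is a routine fiber-counting argument, slicing the solution set along the last coordinate. First I would fix an arbitrary $a \in \bbM^{n-1}$ and inspect the ``slice'' $\{x_n \in \bbM \mid f(a, x_n) = 0\}$. By the assumed form of $f$, the equation $f(a, x_n) = 0$ is $g(a) + c \cdot x_n = 0$; since $c \neq 0$, this has the unique real solution $x_n = -g(a)/c$. Consequently the slice contains at most one element of $\bbM$ (exactly one if $-g(a)/c \in \bbM$, and none otherwise).

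Next I would assemble the global bound from these slice bounds. Writing $\pi : \bbR^n \to \bbR^{n-1}$ for the projection $\pi(x) \defeq x_{-n}$, we have
\begin{align*}
  \{ x \in \bbM^n \mid f(x) = 0 \}
  = \bigsqcup_{a \in \bbM^{n-1}} \big( \{a\} \times \{ x_n \in \bbM \mid f(a, x_n) = 0 \} \big),
\end{align*}
a disjoint union indexed by $\pi$-fibers. Taking cardinalities and using that each fiber contributes at most $1$ by the previous paragraph,
\begin{align*}
  \big|\{ x \in \bbM^n \mid f(x) = 0 \}\big|
  = \sum_{a \in \bbM^{n-1}} \big| \{ x_n \in \bbM \mid f(a, x_n) = 0 \} \big|
  \leq \sum_{a \in \bbM^{n-1}} 1
  = |\bbM|^{n-1},
\end{align*}
which is the claimed inequality.

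There is no real obstacle here: the entire content is the observation that a nonzero coefficient $c$ forces the conditional equation in $x_n$ to have a single real root, so each fiber of the projection is a singleton or empty. The only minor care needed is the bookkeeping that the solution set decomposes as a disjoint union over $\bbM^{n-1}$, which is immediate since the fibers of $\pi$ partition $\bbM^n$. (I would also silently read the stated hypothesis $g : \bbR^n \to \bbR$ as $g$ depending only on $x_{-n}$, i.e.\ effectively $g : \bbR^{n-1} \to \bbR$, which is all that is used.)
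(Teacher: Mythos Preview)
Your proposal is correct and matches the paper's proof essentially line for line: decompose $\bbM^n$ into fibers over $\bbM^{n-1}$, observe that $c \neq 0$ forces each fiber to have at most one solution, and sum. Your parenthetical remark about $g$ effectively being a function of $x_{-n}$ is also apt.
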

\begin{proof}
  Using the definition of $f$ and $c \neq 0$, we obtain the conclusion:
  \begin{align*}
    \big|\{ x \in \bbM^n \mid f(x) = 0 \}\big|
    &=
    \big|\{ (x_{-n}, x_n) \in \bbM^{n-1} \times \bbM \mid f(x_{-n}, x_n) = 0 \}\big|
    \\
    &= \textstyle \sum_{x_{-n} \in \bbM^{n-1}}
    \big|\{ x_n \in \bbM \mid x_n = -g(x_{-n}) / c \}\big|
    \\
    &\leq \textstyle \sum_{x_{-n} \in \bbM^{n-1}} 1
    = |\bbM|^{n-1}.
    \qedhere
  \end{align*}
\end{proof}

\subsection{Lemmas (Technical: Part 1)}

\begin{definition*}
  For a neural network $z_L : \bbR^W \to \bbR^{N_L}$,
  define the incorrect set and the non-differentiable set of $z_L$ {\em over $\bbR^W$} (not over $\Omega$) as:
  \begin{align*}
    \incR{z_L} &\defeq \{w \in \bbR^W \mid \DF{z_L}(w) \neq \bot,\, \ADF{z_L}(w) \neq \DF{z_L}(w) \},
    \\
    \ndfR{z_L} &\defeq \{w \in \bbR^W \mid \DF{z_L}(w) = \bot\}.
  \end{align*}
\end{definition*}

\begin{lemma}
  \label{lem:ndf-inc-bound}
  We have
  \begin{align*}
    \ndfR{z_L} \cup \incR{z_L} \subseteq \bigcup_{\gamma \in \Gamma} \pbd(\mcR^\gamma).
  \end{align*}
\end{lemma}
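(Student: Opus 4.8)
The plan is to prove the contrapositive. First I would observe that, since $\{\mcR^\gamma\}_{\gamma \in \Gamma}$ is a partition of $\bbR^W$ (\Cref{lem:r-gamma-partition}) and $\pbd(\mcR^\gamma) = \mcR^\gamma \setminus \intr(\mcR^\gamma)$ is contained in $\mcR^\gamma$, a point $w \in \bbR^W$ fails to lie in $\bigcup_{\gamma \in \Gamma} \pbd(\mcR^\gamma)$ precisely when $w \in \intr(\mcR^\gamma)$ for the unique $\gamma$ with $w \in \mcR^\gamma$. So it suffices to show: if $w \in \intr(\mcR^\gamma)$ for some $\gamma \in \Gamma$, then $\DF{z_L}(w) \neq \bot$ and $\ADF{z_L}(w) = \DF{z_L}(w)$, i.e.\ $w \notin \ndfR{z_L} \cup \incR{z_L}$.

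Fix such a $w$ and $\gamma$ and choose an open neighborhood $U \subseteq \mcR^\gamma$ of $w$. The key step is that $z_L$ coincides with the ``fixed-piece'' network $z_L^\gamma$ on all of $U$: by \Cref{lem:r-gamma-f-df}, $z_L^\gamma(w') = z_L(w')$ for every $w' \in \mcR^\gamma \supseteq U$. Since $z_L^\gamma$ is differentiable everywhere (\Cref{lem:yz-yz-gamma-good}) and $z_L$ agrees with it on a neighborhood of $w$, the function $z_L$ is differentiable at $w$ with $\DF{z_L}(w) = \DF{z_L^\gamma}(w)$; in particular $w \notin \ndfR{z_L}$. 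To handle the AD output, I would then invoke \Cref{lem:r-gamma-adf}, which gives $\ADF{z_L}(w) = \DF{z_L^\gamma}(w)$ for any $w \in \mcR^\gamma$. Combining the two equalities yields $\ADF{z_L}(w) = \DF{z_L^\gamma}(w) = \DF{z_L}(w)$, so $w \notin \incR{z_L}$ as well, completing the contrapositive.

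Almost all of the work has been done in the preceding lemmas, so the argument is short; the one place that needs care is the differentiability step, where it is essential to use $w \in \intr(\mcR^\gamma)$ rather than merely $w \in \mcR^\gamma$. Pointwise agreement $z_L(w) = z_L^\gamma(w)$ at a single point says nothing about the derivative of $z_L$ at $w$; only the fact that $z_L = z_L^\gamma$ on a whole neighborhood transfers differentiability. This is exactly why the bound is stated with the proper boundary $\pbd(\mcR^\gamma)$ and not the full cell $\mcR^\gamma$ — the ``bad'' parameters can only live on the parts of the $\mcR^\gamma$'s that are not interior points.
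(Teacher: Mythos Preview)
Your proposal is correct and follows essentially the same approach as the paper: both show that for $w \in \intr(\mcR^\gamma)$ one has $\ADF{z_L}(w) = \DF{z_L^\gamma}(w) = \DF{z_L}(w)$ by combining \Cref{lem:r-gamma-f-df}, \Cref{lem:yz-yz-gamma-good}, and \Cref{lem:r-gamma-adf}, and then use the partition property (\Cref{lem:r-gamma-partition}) to pass to the complement. The paper phrases the last step as a chain of set equalities $\bbR^W \setminus \bigcup_\gamma \intr(\mcR^\gamma) = \bigcup_\gamma \pbd(\mcR^\gamma)$, while you phrase it as a contrapositive; these are the same argument.
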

\begin{proof}
  First, observe that for all $\gamma \in \Gamma$,
  \begin{alignat*}{6}
    \ADF{z_L}(w) &= \DF{z_L^\gamma}(w) = \DF{z_L}(w)
    \qquad\text{for all $w \in \intr(\mcR^\gamma)$},
  \end{alignat*}
  where the first equality is by \Cref{lem:r-gamma-adf},
  and the second equality is obtained by applying the following fact to $(\smash{z_L^\gamma}, z_L, \intr(\mcR^\gamma))$:
  for any $f,g : \bbR^n \to \bbR^m$ and open $U \subseteq \bbR^n$,
  if $f$ is differentiable on $U$ and $f = g$ on $U$, then $g$ is differentiable on $U$ and $\DF{f} = \DF{g}$ on $U$.
  Note that the previous fact is applicable
  since $\intr(\mcR^\gamma)$ is open, $\smash{z_{L}^\gamma}$ is differentiable (by \Cref{lem:yz-yz-gamma-good}),
  and $\smash{z_L^\gamma} = z_L$ on $\intr(\mcR^\gamma)$ by \Cref{lem:r-gamma-f-df}.
  
  From the above equation, we have
  \begin{align*}
    \bigcup_{\gamma \in \Gamma} \intr(\mcR^\gamma) &\subseteq \bbR^W \setminus \big(\ndfR{z_L} \cup \incR{z_L}\big).
  \end{align*}
  From this, we obtain the conclusion:
  \begin{align*}
    \ndfR{z_L} \cup \incR{z_L}
    &\subseteq \bbR^W \setminus \bigcup_{\gamma \in \Gamma} \intr(\mcR^\gamma)
    \\
    &= \Big( \bigcup_{\gamma \in \Gamma} \mcR^\gamma \Big) \setminus \Big( \bigcup_{\gamma \in \Gamma} \intr(\mcR^\gamma) \Big)
    =  \bigcup_{\gamma \in \Gamma} \big( \mcR^\gamma  \setminus \intr(\mcR^\gamma) \big)
    =  \bigcup_{\gamma \in \Gamma} \pbd(\mcR^\gamma),
  \end{align*}
  where the first equality is by \Cref{lem:r-gamma-partition},
  and the last equality is by the definition of $\pbd(-)$.
\end{proof}

\begin{lemma}
  \label{lem:pbd-bound-basic}
  We have
  \begin{align*}
    \bigcup_{\gamma \in \Gamma} \pbd(\mcR^\gamma)
    &\subseteq \bigcup_{(l,i) \in \Idx} \, \bigcup_{c \in \ndf{\sigma_{l,i}}}
    \pbd\big(\{ w \in \bbR^W \;|\; y_{l,i}(w) = c \}\big).
  \end{align*}
\end{lemma}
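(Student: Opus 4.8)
The plan is to express each $\mcR^\gamma$ as a finite intersection of preimages of the intervals $\mcI_{l,i}^{\gamma(l,i)}$ under the continuous functions $y_{l,i}$, push $\pbd(\cdot)$ through the intersection with \Cref{lem:pbd-basic}, and then bound the proper boundary of a single such preimage. Concretely, by \Cref{def:ryz-gamma} we have $\mcR^\gamma = \bigcap_{(l,i)\in\Idx} y_{l,i}^{-1}\big(\mcI_{l,i}^{\gamma(l,i)}\big)$, and since $\Idx$ is finite, iterating the inclusion $\pbd(A\cap B)\subseteq\pbd(A)\cup\pbd(B)$ from \Cref{lem:pbd-basic} gives
\[
\pbd(\mcR^\gamma)\;\subseteq\;\bigcup_{(l,i)\in\Idx}\pbd\big(y_{l,i}^{-1}(\mcI_{l,i}^{\gamma(l,i)})\big).
\]
It then suffices to show, for each $(l,i)$, that $\pbd\big(y_{l,i}^{-1}(\mcI_{l,i}^{\gamma(l,i)})\big)\subseteq\bigcup_{c\in\ndf{\sigma_{l,i}}}\pbd\big(\{w\in\bbR^W\mid y_{l,i}(w)=c\}\big)$, and finally take the union over $\gamma\in\Gamma$.

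The core is the following general fact: for any continuous $h:\bbR^W\to\bbR$ and any interval $I\subseteq\bbR$,
\[
\pbd\big(h^{-1}(I)\big)\;\subseteq\;\bigcup_{c\in\bd(I)}\pbd\big(h^{-1}(\{c\})\big).
\]
I would prove this by taking $w\in\pbd(h^{-1}(I))=h^{-1}(I)\setminus\intr(h^{-1}(I))$, so $h(w)\in I$. If $h(w)\in\intr(I)$, then $h^{-1}(\intr(I))$ is open (by continuity of $h$), contains $w$, and lies in $h^{-1}(I)$, contradicting $w\notin\intr(h^{-1}(I))$; hence $h(w)\in I\setminus\intr(I)\subseteq\cl(I)\setminus\intr(I)=\bd(I)$. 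Set $c\defeq h(w)\in\bd(I)$, so $w\in h^{-1}(\{c\})$. Finally $w\notin\intr(h^{-1}(\{c\}))$: otherwise $h\equiv c$ on some open neighborhood $U$ of $w$, and since $c\in I$ this forces $U\subseteq h^{-1}(I)$, again contradicting $w\notin\intr(h^{-1}(I))$. Thus $w\in\pbd(h^{-1}(\{c\}))$.

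Applying this fact with $h=y_{l,i}$ — continuous by \Cref{lem:yz-yz-gamma-good} — and $I=\mcI_{l,i}^{\gamma(l,i)}$, and using $\bd(\mcI_{l,i}^{\gamma(l,i)})\subseteq\bigcup_{k\in[K_{l,i}]}\bd(\mcI_{l,i}^k)=\ndf{\sigma_{l,i}}$ from \Cref{def:minrep-sigma}, combines with the first display to give the desired inclusion for each $\gamma$; taking the union over $\gamma\in\Gamma$ then finishes the proof. The only subtle point — the main obstacle — is the last step of the general fact: ruling out that $h$ is locally constant equal to $c$ at $w$, which would make $w$ an interior point of $h^{-1}(\{c\})$ even though it lies on the proper boundary of $h^{-1}(I)$. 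This is precisely where membership $c\in I$ (not merely $c\in\bd(I)$) is used.
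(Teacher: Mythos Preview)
Your proof is correct and follows essentially the same approach as the paper: both push $\pbd$ through the finite intersection defining $\mcR^\gamma$ via \Cref{lem:pbd-basic}, and then bound $\pbd\big(y_{l,i}^{-1}(\mcI_{l,i}^{\gamma(l,i)})\big)$ by the proper boundaries of the single-point preimages $y_{l,i}^{-1}(\{c\})$ for $c\in\bd(\mcI_{l,i}^{\gamma(l,i)})\subseteq\ndf{\sigma_{l,i}}$. The only difference is stylistic: the paper carries out this last reduction by the set decomposition $\mcI=\pbd(\mcI)\cup\intr(\mcI)$ together with two further applications of \Cref{lem:pbd-basic}, whereas you package it as a standalone general fact and prove it by a direct element chase.
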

\begin{proof}
  First, we have
  \begin{align}
    \nonumber
    \bigcup_{\gamma \in \Gamma} \pbd(\mcR^\gamma)
    &= \,\,\,\, \bigcup_{\gamma \in \Gamma}  \, \pbd\Big( \bigcap_{(l,i) \in \Idx} 
    \{ w \in \bbR^W \;|\; y_{l,i}(w) \in \mcI_{l,i}^{\gamma(l,i)} \} \Big)
    \\ \nonumber
    &\subseteq \,\,\,\, \bigcup_{\gamma \in \Gamma}  \,\,\,\, \bigcup_{(l,i) \in \Idx} 
    \pbd\Big( \{ w \in \bbR^W \;|\; y_{l,i}(w) \in \mcI_{l,i}^{\gamma(l,i)} \}\Big)
    \\ \nonumber
    &=            \bigcup_{(l,i) \in \Idx} \,\,\,\, \bigcup_{\gamma \in \Gamma}    \,\,\,\,
    \pbd\Big( \{ w \in \bbR^W \;|\; y_{l,i}(w) \in \mcI_{l,i}^{\gamma(l,i)} \}\Big)
    \\ \label{eq:pbd-bound-basic-1}
    &=            \bigcup_{(l,i) \in \Idx}  \,      \bigcup_{k \in [K_{l,i}]} \,
    \pbd\Big( \{ w \in \bbR^W \;|\; y_{l,i}(w) \in \mcI_{l,i}^{k} \}\Big),
  \end{align}
  where the first line uses the definition of $\mcR^\gamma$,
  the second line uses \Cref{lem:pbd-basic}, and the last line uses that $\{ \gamma(l,i) \mid \gamma \in \Gamma \} = [K_{l,i}]$ for all $(l,i)$.
  Note that in the last two lines, we change the way we count the proper boundary of all subregions:
  from per subregion to per activation neuron.
  
  Next, for any $(l,i) \in \Idx$ and $k \in [K_{l,i}]$, we have
  \begin{align}
    \nonumber
    &\pbd\Big( \{ w \in \bbR^W \mid y_{l,i}(w) \in \mcI_{l,i}^{k} \}\Big)
    \\ \nonumber
    & = \pbd\Big(\{ w \in \bbR^W \mid y_{l,i}(w) \in \pbd(\mcI_{l,i}^{k}) \}
    \cup \{ w \in \bbR^W \mid y_{l,i}(w) \in \intr(\mcI_{l,i}^{k}) \}\Big)
    \\ \nonumber
    & \subseteq \pbd\Big(\{ w \in \bbR^W \mid y_{l,i}(w) \in \pbd(\mcI_{l,i}^{k}) \}\Big)
    \cup \pbd\Big(\{ w \in \bbR^W \mid y_{l,i}(w) \in \intr(\mcI_{l,i}^{k}) \}\Big)
    \\ \label{eq:pbd-bound-basic-2}
    & = \pbd\Big(\{ w \in \bbR^W \mid y_{l,i}(w) \in \pbd(\mcI_{l,i}^{k}) \}\Big),
  \end{align}
  where the third line is by \Cref{lem:pbd-basic} and the last line is by the following:
  $\pbd(A) = \emptyset$ for any open $A \subseteq \bbR^n$; and
  $\{ w \in \bbR^W \mid y_{l,i}(w) \in \intr(\smash{\mcI_{l,i}^{k}}) \}$ is open,
  because $y_{l,i}$ is continuous (by \Cref{lem:yz-yz-gamma-good}) and
  the inverse image of an open set by a continuous function is open.

  Finally, combining the above results, we obtain the conclusion:
  \begin{align*}
    \bigcup_{\gamma \in \Gamma} \pbd(\mcR^\gamma)
    & \subseteq \bigcup_{(l,i) \in \Idx} \,\,\, \bigcup_{k \in [K_{l,i}]} \,\,\,
    \pbd\Big(\{ w \in \bbR^W \;|\; y_{l,i}(w) \in \pbd(\mcI_{l,i}^{k}) \}\Big)
    \\
    & \subseteq \bigcup_{(l,i) \in \Idx} \,        \bigcup_{c \in \ndf{\sigma_{l,i}}}
    \pbd\Big(\{ w \in \bbR^W \;|\; y_{l,i}(w) =c  \}\Big),
  \end{align*}
  where the first line uses \Cref{eq:pbd-bound-basic-1,eq:pbd-bound-basic-2},
  and the second line uses $\smash{\bigcup_{k \in [K_{l,i}]}} \pbd(\smash{\mcI_{l,i}^k}) = \ndf{\sigma_{l,i}}$
  (by \Cref{def:minrep-sigma})
\end{proof}

\subsection{\Cref{thm:ndf-inc-ubound-bias} (Main Lemmas)}

\begin{lemma}
  \label{lem:pbd-bound-bias}
  We have
  \begin{align*}
    \ndfM{z_L} \cup \incM{z_L}
    &\subseteq \bigcup_{(l,i) \in \Idx} \, \bigcup_{c \in \ndf{\sigma_{l,i}}}
    \{ w \in \Omega \;|\; y_{l,i}(w) = c \}.
  \end{align*}
\end{lemma}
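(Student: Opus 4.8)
The plan is to get this containment essentially for free by chaining the two inclusions already proved (\Cref{lem:ndf-inc-bound} and \Cref{lem:pbd-bound-basic}) and then restricting from $\bbR^W$ to the machine-representable parameters $\Omega = \bbM^W$. Note that, unlike the full \Cref{thm:ndf-inc-ubound-bias}, this lemma does not need the bias-parameter hypothesis at all; that hypothesis enters only in the subsequent cardinality estimate, so I would be careful not to invoke it here.

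First I would observe that, since $\Omega \subseteq \bbR^W$, the machine-representable incorrect and non-differentiable sets are just the restrictions of their real-valued analogues: $\ndfM{z_L} = \ndfR{z_L} \cap \Omega$ and $\incM{z_L} = \incR{z_L} \cap \Omega$, hence $\ndfM{z_L} \cup \incM{z_L} = (\ndfR{z_L} \cup \incR{z_L}) \cap \Omega$. Next I would chain \Cref{lem:ndf-inc-bound} with \Cref{lem:pbd-bound-basic} to obtain
\[
\ndfR{z_L} \cup \incR{z_L} \;\subseteq\; \bigcup_{(l,i) \in \Idx}\, \bigcup_{c \in \ndf{\sigma_{l,i}}} \pbd\big(\{ w \in \bbR^W \mid y_{l,i}(w) = c \}\big).
\]
Finally I would intersect both sides with $\Omega$ and use the elementary fact that $\pbd(A) = A \setminus \intr(A) \subseteq A$ for every $A \subseteq \bbR^W$, which gives $\pbd(A) \cap \Omega \subseteq A \cap \Omega$; applying this with $A = \{w \in \bbR^W \mid y_{l,i}(w) = c\}$ yields
\[
\ndfM{z_L} \cup \incM{z_L} \;\subseteq\; \bigcup_{(l,i) \in \Idx}\, \bigcup_{c \in \ndf{\sigma_{l,i}}} \{ w \in \Omega \mid y_{l,i}(w) = c \},
\]
which is the desired conclusion.

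There is no real obstacle here: all the substance lives in the two lemmas being invoked (which handle, respectively, localizing non-differentiability/incorrectness to the proper boundaries of the pieces $\mcR^\gamma$, and pushing those proper boundaries down to level sets of the pre-activation functions $y_{l,i}$). The only genuinely new content is the trivial passage from $\bbR^W$ to $\Omega$, for which the sole point to watch is the direction of the inclusion $\pbd(A) \subseteq A$.
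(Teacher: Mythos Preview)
Your proposal is correct and matches the paper's own proof essentially line for line: the paper likewise chains \Cref{lem:ndf-inc-bound} and \Cref{lem:pbd-bound-basic}, then passes to $\Omega$ using $\pbd(A)\subseteq A$ and $\ndfM{z_L}\cup\incM{z_L} = (\ndfR{z_L}\cup\incR{z_L})\cap\Omega$. Your observation that no bias-parameter hypothesis is needed here is also accurate.
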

\begin{proof}
  We obtain the conclusion
  by chaining \Cref{lem:ndf-inc-bound}, \Cref{lem:pbd-bound-basic},
  and the following: $\pbd(A) \subseteq A$ for any $A \subseteq \bbR^W$,
  and $\ndfM{z_L} \cup \incM{z_L} = \big(\ndfR{z_L} \cup \incR{z_L}\big) \cap \Omega$.
\end{proof}

\begin{lemma}
  \label{lem:sol-count-bias}
  Let $(l,i) \in \Idx$ and $c \in \bbR$.
  Suppose that $\tau_{l}$ has bias parameters.
  Then, for $S = \{ w \in \Omega \;|\; y_{l,i}(w) = c  \}$,
  \begin{align*}
    | S | \leq |\bbM|^{W-1}.
  \end{align*}
\end{lemma}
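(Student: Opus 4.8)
The plan is to exploit the bias parameter of $\tau_{l,i}$ to write $y_{l,i}$ as a function that is affine in a single coordinate of $w$, and then to invoke \Cref{lem:sol-count-basic}.

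First I would fix the global index $j \in [W]$ of the bias parameter of $\tau_{l,i}$: if, in the notation of \Cref{def:has-bias}, the bias parameter of $\tau_{l,i}$ is $w_{l,j_0}$ for some $j_0 \in [W_l]$, then set $j \defeq W_1 + \cdots + W_{l-1} + j_0$, so that for $w = (w_1, \dots, w_L)$ the entry $w_j$ is exactly this bias parameter. By \Cref{def:has-bias}, there is a function $f_i : \bbR^{N_{l-1}} \times \bbR^{W_l - N_l} \to \bbR$ such that $\tau_{l,i}(x, (u,v)) = f_i(x, u) + v_i$; hence, writing $w_l = (u, v)$ according to this decomposition,
\[
  y_{l,i}(w) = \tau_{l,i}\big(z_{l-1}(w), w_l\big) = f_i\big(z_{l-1}(w), u\big) + w_j .
\]
The key observation is that neither $z_{l-1}(w)$ (which, by the layered definition of $z_{l-1}$, depends only on $w_1, \dots, w_{l-1}$) nor $u$ (which is a sub-tuple of $w_l$ disjoint from the bias coordinates) depends on $w_j$. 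Therefore $y_{l,i}(w) = h(w_{-j}) + w_j$ for some function $h : \bbR^{W-1} \to \bbR$, where $w_{-j}$ denotes $w$ with its $j$-th entry removed.

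Next I would reduce to \Cref{lem:sol-count-basic}. Setting $\tilde h(w_{-j}) \defeq h(w_{-j}) - c$, we have $S = \{ w \in \bbM^W \mid \tilde h(w_{-j}) + 1 \cdot w_j = 0 \}$. After relabeling the coordinates of $\bbR^W$ so that $j$ becomes the last index — which changes neither $\bbM^W$ nor the cardinality of $S$ — this is precisely of the form treated in \Cref{lem:sol-count-basic} with nonzero coefficient $1$, and that lemma yields $|S| \le |\bbM|^{W-1}$.

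The argument is essentially bookkeeping: the only point requiring care is confirming that $w_j$ enters $y_{l,i}$ \emph{only} through the additive bias term, i.e., that $z_{l-1}(w)$ and the non-bias parameters of layer $l$ are independent of $w_j$; this is immediate from the recursive definition of $z_{l-1}$ and from $j$ being a bias coordinate of layer $l$. I do not expect any genuine obstacle beyond matching the indices to the hypotheses of \Cref{lem:sol-count-basic}.
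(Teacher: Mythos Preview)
Your proposal is correct and follows essentially the same argument as the paper: isolate the bias coordinate $w_j$ of $\tau_{l,i}$, observe that $y_{l,i}(w) - c$ has the form $g(w_{-j}) + 1 \cdot w_j$ since $z_{l-1}$ and the non-bias parameters of layer $l$ do not depend on $w_j$, and then invoke \Cref{lem:sol-count-basic} after reordering coordinates.
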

\begin{proof}
  Suppose that $\tau_l$ has bias parameters and  $S$ is given as above.
  Then, by the definition of having bias parameters,
  $W_l \geq N_l$ and there is $\smash{\tau'_{l,i}} : \bbR^{N_{l-1}} \times \bbR^{W_l - N_l} \to \bbR$ for all $i \in [N_l]$ such that
  \begin{align*}
    \tau_{l,i}(x, (u, v))
    &= \smash{\tau'_{l,i}}(x, u) + v_i \qquad\text{for all $(u, v) \in \bbR^{W_l - N_l} \times \bbR^{N_l}$}.
  \end{align*}
  From this, we have
  \begin{align*}
    \!\!
    y_{l,i}(w)
    &= \tau_{l,i}(z_{l-1}(w), w_l)
    = \tau_{l,i}'\big(z_{l-1}(w_{1,1}, \ldots, w_{l-1, W_{l-1}}, 0, \ldots, 0), (w_{l,1}, \ldots, w_{l,W_l-N_l})\big) + w_{l,W_l-N_l+i},
  \end{align*}
  where we also use that $z_{l-1}$ depends only on $w_1, \ldots, w_{l-1}$.
  Note that the function $f : \bbR^W \to \bbR$ defined by $f(w) \defeq y_{l,i}(w) -c$
  satisfies the preconditions of \Cref{lem:sol-count-basic} (after reordering the input variables of $f$) due to the term $w_{l,W_l-N_l+i}$.
  Using this, we obtain the desired result:
  \begin{align*}
    |S |
    = |\{ w\in \Omega \mid f(w) = 0 \}|
    \leq |\bbM|^{W-1},
  \end{align*}
  where the inequality is by \Cref{lem:sol-count-basic} applied to $f$.
\end{proof}

\subsection{\Cref{thm:ndf-inc-ubound-bias} (Main Proof)}

\begin{theorem*}
  \label{thm:ndf-inc-ubound-bias-appdx}
  If $z_L$ has bias parameters, then
  \begin{align*}
    \frac{|\ndfM{z_L} \cup \incM{z_L}|}{|\Omega|}
    \leq \frac{1}{|\bbM|} {\sum_{(l,i) \in \Idx}} | \ndf{\sigma_{l,i}} |.
  \end{align*}
\end{theorem*}
\begin{proof}
  Observe that
  \begin{align}
    \label{eq:ndf-inc-ubound-bias-1}
    \ndfM{z_L} \cup \incM{z_L}
    &\subseteq 
    \bigcup_{(l,i) \in \Idx} \, \bigcup_{c \in A_{l,i}}  B_{l,i}(c),
    \qquad\quad
    |B_{l,i}(c) | \leq |\bbM|^{W-1},
  \end{align}
  where $A_{l,i} \defeq \ndf{\sigma_{l,i}}$ and $B_{l,i}(c) \defeq \{ w \in \Omega \;|\; y_{l,i}(w) = c \}$.
  Here the first equation is by \Cref{lem:pbd-bound-bias},
  and the second equation is by \Cref{lem:sol-count-bias} (which is applicable since $\tau_l$ has bias parameters by assumption).
  Combining the above observations, we obtain the conclusion:
  \begin{align*}
    \frac{|\ndfM{z_L} \cup \incM{z_L}|}{|\Omega|}
    &\leq \sum_{(l,i) \in \Idx} \, \sum_{c \in A_{l,i}} \frac{|B_{l,i}(c)|}{|\Omega|}
    \leq \sum_{(l,i) \in \Idx} |\ndf{\sigma_{l,i}}| \cdot \frac{|\bbM|^{W-1}}{|\bbM|^W}, 
  \end{align*}
  where the two inequalities use \Cref{eq:ndf-inc-ubound-bias-1}.
\end{proof}

\begin{remark*}
  \Cref{thm:ndf-inc-ubound-bias} is a direct corollary of \Cref{thm:ndf-inc-ubound-bias-appdx}
  and \Cref{thm:inc-zero-bias} (which we prove in \Cref{sec:pf-inc}).
\end{remark*}

\subsection{Lemmas (Technical: Part 2)}

\begin{lemma}
  \label{lem:strong-bilinear}
  Let $l \in [L]$.
  Suppose that $\tau_l : \bbR^{N_{l-1}} \times \bbR^{W_l} \to \bbR^{N_l}$ is well-structured biaffine.
  Then, for every $i \in [N_l]$, there is a partial map $\phi_{l,i} : [W_l] \rightharpoonup [N_{l-1}]$
  and associated matrix $M \in \bbR^{N_{l-1} \times W_l}$ and constant $d \in \bbR$ such that
  \begin{align*}
    y_{l,i}(w)
    &= d + \sum_{j \in \dom(\phi_{l,i})} z_{l-1, \phi_{l,i}(j)}(w) \cdot M_{\phi_{l,i}(j), j} \cdot w_{l, j}
  \end{align*}
  and $M_{\phi_{l,i}(j), j} \neq 0$ for all $j \in \dom(\phi_{l,i})$.
\end{lemma}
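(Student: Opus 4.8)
The plan is to simply unfold the definitions and then exploit the structural restriction on the columns of the defining matrix. First I would invoke \Cref{def:biaffine}: since $\tau_l$ is well-structured biaffine, there are $M_i \in \bbR^{N_{l-1} \times W_l}$ and $c_i \in \bbR$ with $\tau_{l,i}(x,u) = x^\trsp M_i u + c_i$ for all $(x,u)$, and each column of $M_i$ has at most one non-zero entry. Fixing $i \in [N_l]$ and using the definitions of $y_{l,i}$ and $\tau_l^{\langle w_l\rangle}$ from \Cref{sec:nn}, we get $y_{l,i}(w) = \tau_{l,i}(z_{l-1}(w), w_l) = z_{l-1}(w)^\trsp M_i\, w_l + c_i$.

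Next I would expand the bilinear form column by column: for any $x \in \bbR^{N_{l-1}}$ and $u \in \bbR^{W_l}$, $x^\trsp M_i u = \sum_{j \in [W_l]} \big(\sum_{k \in [N_{l-1}]} x_k (M_i)_{k,j}\big) u_j$. Now define the partial map $\phi_{l,i} : [W_l] \rightharpoonup [N_{l-1}]$ by: $\phi_{l,i}(j)$ is the (unique) row index of the non-zero entry of the $j$-th column of $M_i$ when that column is not identically zero, and $\phi_{l,i}(j)$ is undefined otherwise; this is well-defined precisely because each column has \emph{at most one} non-zero entry. Then for $j \in \dom(\phi_{l,i})$ the inner sum collapses to $x_{\phi_{l,i}(j)} (M_i)_{\phi_{l,i}(j), j}$ with $(M_i)_{\phi_{l,i}(j),j} \neq 0$, while for $j \notin \dom(\phi_{l,i})$ the inner sum is $0$ and the term drops out. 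Hence $x^\trsp M_i u = \sum_{j \in \dom(\phi_{l,i})} x_{\phi_{l,i}(j)} \cdot (M_i)_{\phi_{l,i}(j),j} \cdot u_j$.

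Finally I would specialize $x = z_{l-1}(w)$, $u = w_l$ (so $u_j = w_{l,j}$), set $M \defeq M_i$ and $d \defeq c_i$, and substitute into $y_{l,i}(w) = z_{l-1}(w)^\trsp M_i w_l + c_i$ to obtain exactly the claimed identity, with $M_{\phi_{l,i}(j),j} \neq 0$ for all $j \in \dom(\phi_{l,i})$ by construction of $\phi_{l,i}$. There is no real obstacle here; the only point requiring care is checking that $\phi_{l,i}$ is well-defined and that the columns outside $\dom(\phi_{l,i})$ contribute zero, both of which are immediate from the ``at most one non-zero entry per column'' condition.
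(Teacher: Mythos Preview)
Your proposal is correct and follows essentially the same approach as the paper's proof: both define $\phi_{l,i}(j)$ to be the unique row index of the non-zero entry in the $j$-th column of the defining matrix (undefined if the column is zero), then use the column structure to collapse the double sum in $x^\trsp M u$ to the claimed single sum over $\dom(\phi_{l,i})$. The only cosmetic difference is that the paper introduces an intermediate vector $v_j$ before collapsing, whereas you expand the double sum directly.
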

\begin{proof}
  Let $l \in [L]$, $\tau_l : \bbR^{N_{l-1}} \times \bbR^{W_l} \to \bbR^{N_l}$ be a well-structured biaffine function, and $i \in [N_l]$.
  Then, there is a matrix $M \in \bbR^{N_{l-1} \times W_l}$ and a constant $d \in \bbR$ such that
  $\tau_{l,i}(x,u) = x^\trsp M u + d$ for all $(x,u)$ and each column of $M$ has at most one non-zero entry.
  Define a partial map $\phi_{l,i} : [W_l] \rightharpoonup [N_{l-1}]$ as:
  \begin{align*}
    \phi_{l,i}(j)
    & \defeq
    \begin{cases}
      i' & \text{if $M_{i',j} \neq 0$ for some $i' \in [N_{l-1}]$} \\
      \text{undefined} & \text{otherwise}.
    \end{cases}
  \end{align*}
  Here $\phi_{l,i}$ is well-defined because $M_{-,j}$ contains at most one non-zero entry for all $j \in [W_l]$.
  We claim that $\phi_{l,i}$, $M$, and $d$ satisfy the conditions in this lemma.
  First, by the definition of $\phi_{l,i}$, $M_{\phi_{l,i}(j),j} \neq 0$ for all $j \in \dom(\phi_{l,i})$.
  Also, we have the desired equation as follows:
  \begin{align*}
    y_{l,i}(w)
    &= \tau_{l,i}(z_{l-1}(w), w_l)
    \\
    &= d + (\smash{z_{l-1}(w)^\trsp} M) \cdot w_l
    \\
    &= d + \smash{(v_1, \ldots, v_{W_{l-1}})^\trsp} \cdot w_l
    \\
    &= d + \textstyle \,\smash{\sum_{j \in [W_{l-1}]}} \, v_j \cdot w_{l,j}
    \\
    &= d + \textstyle \, \smash{\sum_{j \in \dom(\phi_{l,i})}} \, z_{l-1,\phi_{l,i}(j)}(w) \cdot M_{\phi_{l,i}(j), j} \cdot w_{l,j},
  \end{align*}
  where $v_{j} \in \bbR$ is defined as
  $v_j \defeq z_{l-1,\phi_{l,i}(j)}(w) \cdot M_{\phi_{l,i}(j), j}$ if $j \in \dom(\phi_{l,i})$, and $v_j \defeq 0$ otherwise.
  Here the second line uses the definition of $M$ and $d$,
  and the third and last lines use the definition of $v_{j}$.
  This concludes the proof.
\end{proof}

\begin{lemma}
  \label{lem:pbd-bound-nobias-aux}
  For every $(l,i) \in \Idx$ and $c \in \bbR$,
  let $A_{l,i} \subseteq \bbR$ be any set and $B_{l,i}(c) \subseteq \bbR^W$ be the set $\{ w \in \bbR^W \;|\; y_{l,i}(w) = c \}$.
  Suppose that for every $l \in[L]$, one of the following holds:
  \begin{itemize}
    \item[(a)] $\tau_l$ has bias parameters, or
    \item[(b)] $\tau_l$ is well-structured biaffine.
  \end{itemize}
  In the case of (b), let $\phi_{l,i}$ be the partial map described in \Cref{lem:strong-bilinear} for all $i\in[N_{l}]$.
  Then,
  \begin{align*}
    \bigcup_{(l,i) \in \Idx} \, \bigcup_{c \in A_{l,i}} \pbd(B_{l,i}(c))
    &\subseteq \bigcup_{(l,i) \in \Idx} \, \bigcup_{c' \in A'_{l,i}} B'_{l,i}(c'),
  \end{align*}
  where $\smash{A'_{l,i}} \subseteq \bbR$ and $\smash{B'_{l,i}(c')} \subseteq \bbR^W$ are defined as
  \begin{align*}
    A'_{l,i} &\defeq
    \begin{cases}
      A_{l,i}
      & \text{if $\tau_{l+1}$ satisfies the condition (a) or $l=L$} 
      \\
      A_{l,i} \cup \bdz{\sigma_{l,i}}
      & \text{if $\tau_{l+1}$ satisfies the condition (b)},
    \end{cases}
    \\
    B'_{l,i}(c') &\defeq
    \begin{cases}
      B_{l,i}(c')
      & \text{if $\tau_{l}$ satisfies the condition (a)} 
      \\
      B_{l,i}(c') \cap \bigcup_{j \in \dom(\phi_{l,i})} \{ w \in \bbR^W \;|\; z_{l-1,\phi_{l,i}(j)}(w) \neq 0  \}
      & \text{if $\tau_{l}$ satisfies the condition (b)}.
    \end{cases}
  \end{align*}
\end{lemma}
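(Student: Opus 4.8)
The plan is to prove the inclusion pointwise. Fix $(l,i)\in\Idx$ and $c\in A_{l,i}$, take an arbitrary $w^\ast\in\pbd(B_{l,i}(c))$, and show $w^\ast$ lies in the right-hand side. Since $\pbd(X)\subseteq X$ for every $X$ and $A_{l,i}\subseteq A'_{l,i}$ always, there are two immediate exits: (E1) if $\tau_l$ has bias parameters then $B'_{l,i}(c)=B_{l,i}(c)\ni w^\ast$; (E2) if $\tau_l$ is well-structured biaffine and $z_{l-1,\phi_{l,i}(j)}(w^\ast)\neq 0$ for some $j\in\dom(\phi_{l,i})$, then $w^\ast\in B_{l,i}(c)\cap\bigcup_{j}\{w\mid z_{l-1,\phi_{l,i}(j)}(w)\neq 0\}=B'_{l,i}(c)$. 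In both cases $w^\ast\in B'_{l,i}(c)$ with $c\in A'_{l,i}$, so the only hard case is: $\tau_l$ is well-structured biaffine and $z_{l-1,\phi_{l,i}(j)}(w^\ast)=0$ for all $j\in\dom(\phi_{l,i})$.

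To handle this case I would run a descending induction on the layer, proving the following strengthened claim by induction on $l_0\in[L]$: for every $i_0\in[N_{l_0}]$, every $c_0\in\bbR$ such that either $c_0\in A_{l_0,i_0}$, or $c_0\in\bdz{\sigma_{l_0,i_0}}$ and $\tau_{l_0+1}$ is well-structured biaffine, and every $w^\ast\in\pbd(B_{l_0,i_0}(c_0))$, we have $w^\ast\in\bigcup_{(l',i')\in\Idx}\bigcup_{c'\in A'_{l',i'}}B'_{l',i'}(c')$. The point of the extra disjunct on $c_0$ is that it exactly guarantees $c_0\in A'_{l_0,i_0}$, and it is the form the level value takes when it is produced by a boundary-of-zero-set step from the layer above. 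Given the claim, the lemma follows by instantiating it with $l_0=l$, $c_0=c\in A_{l,i}$ for each $(l,i)$ and $c$, and taking the union. In the claim, exits (E1) and (E2) dispose of the cases where $\tau_{l_0}$ has bias parameters or some $z_{l_0-1,\phi_{l_0,i_0}(j)}(w^\ast)\neq 0$; for the base case $l_0=1$ nothing remains, because $z_0$ is the constant input data, so in the remaining subcase all constants $z_{0,\phi_{1,i_0}(j)}(w^\ast)$ are zero, whence $y_{1,i_0}$ is the constant $d$ by \Cref{lem:strong-bilinear}, making $B_{1,i_0}(c_0)$ either $\bbR^W$ or $\emptyset$ — both open — contradicting $w^\ast\in\pbd(B_{1,i_0}(c_0))$.

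The inductive step $l_0>1$ in the remaining subcase is the technical heart. Write $\phi\defeq\phi_{l_0,i_0}$. By \Cref{lem:strong-bilinear}, $y_{l_0,i_0}(w)=d+\sum_{j\in\dom(\phi)}z_{l_0-1,\phi(j)}(w)\,M_{\phi(j),j}\,w_{l_0,j}$, so the vanishing assumption gives $c_0=y_{l_0,i_0}(w^\ast)=d$. Since $w^\ast\notin\intr(B_{l_0,i_0}(c_0))$, the functions $z_{l_0-1,\phi(j)}$ cannot all vanish on a single common neighborhood of $w^\ast$ (else $y_{l_0,i_0}\equiv d=c_0$ there); hence for some $j_0$, with $k_0\defeq\phi(j_0)$, the function $z_{l_0-1,k_0}$ is not identically zero on any neighborhood of $w^\ast$ while $z_{l_0-1,k_0}(w^\ast)=0$, so $w^\ast\in\pbd(\{w\mid z_{l_0-1,k_0}(w)=0\})=\pbd(y_{l_0-1,k_0}^{-1}(Z_0))$ where $Z_0\defeq\sigma_{l_0-1,k_0}^{-1}(\{0\})$. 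Decomposing $Z_0=\intr(Z_0)\cup\pbd(Z_0)$, applying \Cref{lem:pbd-basic}, using that $y_{l_0-1,k_0}$ is continuous (\Cref{lem:yz-yz-gamma-good}) so $y_{l_0-1,k_0}^{-1}(\intr(Z_0))$ is open with empty proper boundary, and using $\pbd(Z_0)\subseteq\bd(Z_0)$, I obtain $\pbd(y_{l_0-1,k_0}^{-1}(Z_0))\subseteq y_{l_0-1,k_0}^{-1}(\bd(Z_0))=y_{l_0-1,k_0}^{-1}(\bdz{\sigma_{l_0-1,k_0}})$. Thus $c_1\defeq y_{l_0-1,k_0}(w^\ast)\in\bdz{\sigma_{l_0-1,k_0}}$. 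A short argument gives $w^\ast\in\pbd(B_{l_0-1,k_0}(c_1))$: it is clearly in that level set, and were it interior then $y_{l_0-1,k_0}\equiv c_1$ near $w^\ast$, forcing $z_{l_0-1,k_0}\equiv\sigma_{l_0-1,k_0}(c_1)$ near $w^\ast$; since $z_{l_0-1,k_0}(w^\ast)=0$ this constant is $0$, contradicting $w^\ast\in\pbd(y_{l_0-1,k_0}^{-1}(Z_0))$. Since $\tau_{(l_0-1)+1}=\tau_{l_0}$ is well-structured biaffine, the triple $(l_0-1,k_0,c_1)$ meets the hypothesis of the claim, so the induction hypothesis at layer $l_0-1$ finishes the step.

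The main obstacle is recognizing that a single-layer argument does not suffice and selecting the inductive hypothesis with the right extra clause on $c_0$, so that a boundary-of-zero-set level value at layer $l_0$ is a legitimate input to the claim at layer $l_0-1$ and still lands in $A'_{l_0-1,k_0}$; the remaining ingredients — the containment $\pbd(y_{l_0-1,k_0}^{-1}(Z_0))\subseteq y_{l_0-1,k_0}^{-1}(\bd(Z_0))$ and the two ``if interior, then locally constant'' arguments — are routine given \Cref{lem:pbd-basic} and \Cref{lem:yz-yz-gamma-good}.
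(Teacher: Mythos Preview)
Your proof is correct and follows essentially the same approach as the paper's: an induction on the layer index with the strengthened hypothesis $c_0\in A'_{l_0,i_0}$ (your disjunction is exactly this), the same two easy exits, and the same reduction in the hard case via the zero set of $\sigma_{l_0-1,k_0}$ to land at $\pbd(B_{l_0-1,k_0}(c_1))$ for some $c_1\in\bdz{\sigma_{l_0-1,k_0}}$. The only cosmetic difference is that the paper works set-theoretically with $\pbd$ and \Cref{lem:pbd-basic} throughout, whereas you run the same argument pointwise for a fixed $w^\ast$, which requires your extra (but correct) ``if interior then locally constant'' verification that $w^\ast\in\pbd(B_{l_0-1,k_0}(c_1))$.
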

\begin{proof}
  We claim that the following holds: for all $l \in [L]$, $i \in [N_l]$, and $c \in \smash{A'_{l,i}}$,
  \begin{align}
    \label{eq:pbd-bound-nobias-aux-goal}
    \pbd(B_{l,i}(c)) &\subseteq \bigcup_{(l',i') \in \Idx} \, \bigcup_{c' \in A'_{l',i'}} B'_{l',i'}(c').
  \end{align}
  This claim implies the conclusion because $A_{l,i} \subseteq \smash{A'_{l,i}}$ for all $(l,i) \in \Idx$
  (by the definition of $\smash{A'_{l,i}}$).
  We prove the claim by induction on $l$.

  \paragraph{\bf Case $l=1$.}
  Let $i \in [N_l]$ and $c \in \smash{A'_{l,i}}$.
  We prove \Cref{eq:pbd-bound-nobias-aux-goal} by case analysis on $\tau_l$.

  {\it Subcase 1:} $\tau_l$ satisfies the condition (a).
  In this subcase, \Cref{eq:pbd-bound-nobias-aux-goal} holds since
  \begin{align*}
    \pbd(B_{l,i}(c)) \subseteq B_{l,i}(c) = B'_{l,i}(c), \qquad c \in A'_{l,i},
  \end{align*}
  where the equality uses the definition of $\smash{B'_{l,i}}$. 

  {\it Subcase 2:} $\tau_l$ satisfies the condition (b).
  In this subcase, we have
  \begin{align}
    \nonumber
    \pbd(B_{l,i}(c))
    & =
    \pbd\Big(
    \Big( B_{l,i}(c) \cap \bigcup_{j \in \dom(\phi_{l,i})} \{ w \in \bbR^W \;|\; z_{l-1,\phi_{l,i}(j)}(w) \neq 0  \}  \Big)
    \\ \nonumber
    & \qquad \cup
    \Big( B_{l,i}(c) \cap \bigcap_{j \in \dom(\phi_{l,i})} \{ w \in \bbR^W \;|\; z_{l-1,\phi_{l,i}(j)}(w) = 0  \}  \Big)\Big)
    \\ \nonumber
    & \subseteq
    \pbd \Big( B_{l,i}(c) \cap \bigcup_{j \in \dom(\phi_{l,i})} \{ w \in \bbR^W \;|\; z_{l-1,\phi_{l,i}(j)}(w) \neq 0  \}  \Big)
    \\ \nonumber
    & \quad \cup
    \pbd \Big( B_{l,i}(c) \cap \bigcap_{j \in \dom(\phi_{l,i})} \{ w \in \bbR^W \;|\; z_{l-1,\phi_{l,i}(j)}(w) = 0  \}  \Big),
  \end{align}
  where the inclusion uses \Cref{lem:pbd-basic}.
  To prove \Cref{eq:pbd-bound-nobias-aux-goal}, 
  it suffices to show that the two terms in the last two lines are contained in the RHS of \Cref{eq:pbd-bound-nobias-aux-goal}.
  The first term does so because
  \begin{align*}
    \pbd\Big( B_{l,i}(c) \cap \bigcup_{j \in \dom(\phi_{l,i})} \{ w \in \bbR^W \;|\; z_{l-1,\phi_{l,i}(j)}(w) \neq 0  \}  \Big)
    & =
    \pbd(B'_{l,i}(c))
    \subseteq B'_{l,i}(c),
    \qquad c \in A'_{l,i},
  \end{align*}
  where the equality is by the definition of $\smash{B'_{l,i}}$ and that $\tau_l$ does not have bias parameters.
  The second term is also contained in the RHS of \Cref{eq:pbd-bound-nobias-aux-goal} as follows.
  Let \( S \defeq {\bigcap_{j \in \dom(\phi_{l,i})}} \{ w \in \bbR^W \mid  z_{l-1,\phi_{l,i}(j)}(w) = 0 \}, \)
  and $M \in \bbR^{N_{l-1} \times W_l}$ and $d \in \bbR$ be a matrix and a constant associated with $\phi_{l,i}$
  that are described in \Cref{lem:strong-bilinear}.
  Then, 
  \begin{align*}
    B_{l,i}(c) \cap S
    &= 
    \begin{cases}
      S & \text{if $c = d$} \\
      \emptyset & \text{if $c \neq d$},
    \end{cases}
  \end{align*}
  because $w \in S$ implies
  \(
  y_{l,i}(w)
  = d + \sum_{j \in \dom(\phi_{l,i})} z_{l-1, \phi_{l,i}(j)}(w) \cdot M_{\phi_{l,i}(j), j} \cdot w_{l, j}
  = d
  \)
  by \Cref{lem:strong-bilinear} (which is applicable since $\tau_l$ is well-structured biaffine by assumption).
  From this, we have
  \begin{align*}
    \pbd\Big( B_{l,i}(c) \cap \bigcap_{j \in \dom(\phi_{l,i})} \{ w \in \bbR^W \mid z_{l-1,\phi_{l,i}(j)}(w) = 0 \}\Big)
    & = \pbd(B_{l,i}(c) \cap S) 
    \subseteq \pbd(S) \cup \pbd(\emptyset).
  \end{align*}
  Hence, it suffices to show that $\pbd(S)$ is contained in the RHS of \Cref{eq:pbd-bound-nobias-aux-goal} (since $\pbd(\emptyset) = \emptyset$).
  Using $l=1$, we obtain this: 
  \begin{align*}
    \pbd(S) \subseteq \pbd(\bbR^W) \cup \pbd(\emptyset) = \emptyset,
  \end{align*}
  where the inclusion follows from $S \in \{\bbR^W, \emptyset\}$
  which holds because $z_{l-1,\phi_{l,i}(j)}$ is a constant function for all $j \in [N_{l-1}]$ (by $l=1$ and the assumption on $z_0$).

  \paragraph{\bf Case $l>1$.}
  Let $i \in [N_l]$ and $c \in \smash{A'_{l,i}}$.
  We prove \Cref{eq:pbd-bound-nobias-aux-goal} in the exact same way as we did for the case $l=1$.
  Note that the above proof for the previous case ($l=1$) applies directly to the current case ($l>1$), except for the following subclaim:
  if $\tau_l$ does not have bias parameters, then $\pbd(S)$ is contained in the RHS of \Cref{eq:pbd-bound-nobias-aux-goal}.
  This subclaim holds also for $l>1$, as follows:
  \begin{align*}
    \pbd(S)
    & =
    \pbd\Big(\bigcap_{j \in \dom(\phi_{l,i})} \{ w \in \bbR^W \mid z_{l-1,\phi_{l,i}(j)}(w) = 0 \}\Big)
    \\
    & \subseteq \bigcup_{j \in \dom(\phi_{l,i})} 
    \pbd\Big(\{ w \in \bbR^W \mid  z_{l-1,\phi_{l,i}(j)}(w) = 0 \}\Big)
    \\
    & =  \bigcup_{j \in \dom(\phi_{l,i})} 
    \pbd\Big(\big\{ w \in \bbR^W \;\big|\;  y_{l-1,\phi_{l,i}(j)}(w) \in \pbd\big(\sigma_{l-1,\phi_{l,i}(j)}^{-1}(0)\big) \big\}
    \\[-0.5em]
    & \qquad\qquad\qquad\quad
    \cup \big\{ w \in \bbR^W \;\big|\;  y_{l-1,\phi_{l,i}(j)}(w) \in \intr\big(\sigma_{l-1,\phi_{l,i}(j)}^{-1}(0)\big) \big\} \Big)
    \\
    & \subseteq  \bigcup_{j \in \dom(\phi_{l,i})}
    \pbd\Big(\big\{ w \in \bbR^W \;\big|\; y_{l-1,\phi_{l,i}(j)}(w) \in \pbd\big(\sigma_{l-1,\phi_{l,i}(j)}^{-1}(0)\big) \big\}\Big)
    \\[-0.5em]
    & \qquad\qquad\;\;\,
    \cup \pbd\Big( \big\{ w \in \bbR^W \;\big|\;  y_{l-1,\phi_{l,i}(j)}(w) \in \intr\big(\sigma_{l-1,\phi_{l,i}(j)}^{-1}(0)\big) \big\} \Big)
    \\
    & =  \bigcup_{j \in \dom(\phi_{l,i})} 
    \pbd\Big(\big\{ w \in \bbR^W \;\big|\; y_{l-1,\phi_{l,i}(j)}(w) \in \pbd\big(\sigma_{l-1,\phi_{l,i}(j)}^{-1}(0)\big) \big\}\Big)
    \\
    & =  \bigcup_{j \in \dom(\phi_{l,i})} \,
    \bigcup_{b  \in \bdz{\sigma_{l-1,\phi_{l,i}(j)}}} \pbd(B_{l-1,\phi_{l,i}(j)}(b)),
    \\
    \pbd(B_{l-1,\phi_{l,i}(j)}(b)) &\subseteq \bigcup_{(l',i') \in \Idx} \, \bigcup_{c' \in A'_{l',i'}} B'_{l',i'}(c')
    \qquad\text{for all $j \in \dom(\phi_{l,i})$ and $b \in \bdz{\sigma_{l-1,\phi_{l,i}(j)}}$}.
  \end{align*}
  Here the first and second inclusions use \Cref{lem:pbd-basic},
  and the second last equality uses that $y_{l-1,\phi_{l,i}(j)}$ is continuous (by \Cref{lem:yz-yz-gamma-good}).
  The last equality uses $\smash{\pbd(\sigma_{l-1,\phi_{l,i}(j)}^{-1}(0))} = \bdz{\sigma_{l-1,\phi_{l,i}(j)}}$
  (which holds since $\sigma_{l-1,\phi_{l,i}(j)}$ is continuous and the preimage of a closed set by a continuous map is closed), 
  and the definition of $B_{l-1,\phi_{l,i}(j)}$.
  The last inclusion is by the induction hypothesis applied to $(l-1, j, b)$ for $j \in \dom(\phi_{l,i})$ and $b \in \bdz{\sigma_{l-1,\phi_{l,i}(j)}}$,
  together with $\dom(\phi_{l,i}) \subseteq [N_{l-1}]$ and $\bdz{\sigma_{l-1,\phi_{l,i}(j)}} \subseteq \smash{A'_{l-1,\phi_{l,i}(j)}}$
  (which holds by the definition of $\smash{A'_{l-1,\phi_{l,i}(j)}}$ with $l-1 \neq L$ and that $\tau_l$ does not have bias parameters).
  Hence, \Cref{eq:pbd-bound-nobias-aux-goal} holds for $l>1$, and this concludes the proof.
\end{proof}

\subsection{\Cref{thm:ndf-inc-ubound-nobias} (Main Lemmas)}

\begin{lemma}
  \label{lem:pbd-bound-nobias}
  For every $l \in [L]$, suppose that $\tau_l$ satisfies either the condition (a) or (b) in \cref{lem:pbd-bound-nobias-aux}.
  Then,
  \begin{align*}
    \ndfM{z_L} \cup \incM{z_L}
    &\subseteq \bigcup_{(l,i) \in \Idx} \, \bigcup_{c \in A_{l,i}} B_{l,i}(c),
  \end{align*}
  where $A_{l,i} \subseteq \bbR$ and $B_{l,i}(c) \subseteq \Omega$ are defined as
  \begin{align*}
    A_{l,i} &\defeq
    \begin{cases}
      \ndf{\sigma_{l,i}}
      & \text{if $\tau_{l+1}$ satisfies the condition (a) or $l=L$} 
      \\
      \ndf{\sigma_{l,i}} \cup
      \bdz{\sigma_{l,i}}
      & \text{if $\tau_{l+1}$ satisfies the condition (b)},
    \end{cases}
    \\
    B_{l,i}(c) &\defeq
    \begin{cases}
      \{ w \in \Omega \;|\; y_{l,i}(w) = c  \} 
      & \text{if $\tau_{l}$ satisfies the condition (a)} 
      \\
      \{ w \in \Omega \;|\; y_{l,i}(w) = c  \land \bigvee_{j \in \dom(\phi_{l,i})} z_{l-1,\phi_{l,i}(j)}(w) \neq 0  \}
      & \text{if $\tau_{l}$ satisfies the condition (b)}.
    \end{cases}
  \end{align*}
\end{lemma}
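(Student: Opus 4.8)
The plan is to obtain the conclusion by chaining three results that are already in hand --- \Cref{lem:ndf-inc-bound}, \Cref{lem:pbd-bound-basic}, and \Cref{lem:pbd-bound-nobias-aux} --- and then restricting to $\Omega$. First I would combine \Cref{lem:ndf-inc-bound} and \Cref{lem:pbd-bound-basic} to get
\[
\ndfR{z_L} \cup \incR{z_L} \subseteq \bigcup_{(l,i) \in \Idx} \, \bigcup_{c \in \ndf{\sigma_{l,i}}} \pbd\big(\{ w \in \bbR^W \mid y_{l,i}(w) = c \}\big).
\]
Next I would invoke \Cref{lem:pbd-bound-nobias-aux} with the instantiation $A_{l,i} \defeq \ndf{\sigma_{l,i}}$: the hypothesis that each $\tau_l$ satisfies condition (a) or (b) is exactly what the present lemma assumes, and with this choice of $A_{l,i}$ the left-hand side of \Cref{lem:pbd-bound-nobias-aux} is precisely the right-hand side of the display above. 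This yields
\[
\ndfR{z_L} \cup \incR{z_L} \subseteq \bigcup_{(l,i) \in \Idx} \, \bigcup_{c' \in A'_{l,i}} B'_{l,i}(c'),
\]
where $A'_{l,i}$ and $B'_{l,i}$ are as defined in \Cref{lem:pbd-bound-nobias-aux} for the choice $A_{l,i} = \ndf{\sigma_{l,i}}$.

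Then I would check that, under this instantiation, $A'_{l,i}$ is literally the set $A_{l,i}$ appearing in the present statement: it equals $\ndf{\sigma_{l,i}}$ when $\tau_{l+1}$ satisfies (a) or $l = L$, and equals $\ndf{\sigma_{l,i}} \cup \bdz{\sigma_{l,i}}$ when $\tau_{l+1}$ satisfies (b). Since $\ndfM{z_L} \cup \incM{z_L} = (\ndfR{z_L} \cup \incR{z_L}) \cap \Omega$, I would intersect both sides of the last display with $\Omega$; it then remains only to observe that $B'_{l,i}(c') \cap \Omega$ equals the set $B_{l,i}(c')$ in the present statement. When $\tau_l$ satisfies (a), $B'_{l,i}(c') = \{ w \in \bbR^W \mid y_{l,i}(w) = c' \}$, so $B'_{l,i}(c') \cap \Omega = \{ w \in \Omega \mid y_{l,i}(w) = c' \}$; when $\tau_l$ satisfies (b), $B'_{l,i}(c') = \{ w \in \bbR^W \mid y_{l,i}(w) = c' \} \cap \bigcup_{j \in \dom(\phi_{l,i})} \{ w \in \bbR^W \mid z_{l-1,\phi_{l,i}(j)}(w) \neq 0 \}$, so $B'_{l,i}(c') \cap \Omega = \{ w \in \Omega \mid y_{l,i}(w) = c' \land \bigvee_{j \in \dom(\phi_{l,i})} z_{l-1,\phi_{l,i}(j)}(w) \neq 0 \}$. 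Combining these identifications gives exactly $\ndfM{z_L} \cup \incM{z_L} \subseteq \bigcup_{(l,i) \in \Idx} \bigcup_{c \in A_{l,i}} B_{l,i}(c)$.

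There is essentially no hard step here: the analytic content --- the $\pbd$ set-algebra, the layerwise induction handling well-structured biaffine pre-activations, and the reduction of $\pbd(S)$ to boundaries of activation zero sets --- is all packaged inside \Cref{lem:pbd-bound-nobias-aux}. The only points requiring care are (i) confirming that the instantiation $A_{l,i} = \ndf{\sigma_{l,i}}$ makes the hypothesis side of \Cref{lem:pbd-bound-nobias-aux} coincide with the bound produced by \Cref{lem:ndf-inc-bound,lem:pbd-bound-basic}, and (ii) carefully tracking the intersection with $\Omega$ so that the $\bbR^W$-valued sets $B'_{l,i}$ become the $\Omega$-valued sets $B_{l,i}$ of the present statement. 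Both are routine definition-chasing.
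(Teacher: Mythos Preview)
Your proposal is correct and follows exactly the same approach as the paper's own proof, which tersely states that the conclusion is obtained by chaining \Cref{lem:ndf-inc-bound}, \Cref{lem:pbd-bound-basic}, \Cref{lem:pbd-bound-nobias-aux}, and the identity $\ndfM{z_L} \cup \incM{z_L} = (\ndfR{z_L} \cup \incR{z_L}) \cap \Omega$. You have simply spelled out the instantiation $A_{l,i} = \ndf{\sigma_{l,i}}$ and the identification of $A'_{l,i}, B'_{l,i}$ with the sets in the statement, which the paper leaves implicit.
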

\begin{proof}
  We obtain the conclusion
  by chaining \Cref{lem:ndf-inc-bound}, \Cref{lem:pbd-bound-basic},
  \Cref{lem:pbd-bound-nobias-aux}
  (which is applicable by assumption),
  and $\ndfM{z_L} \cup \incM{z_L} = \big(\ndfR{z_L} \cup \incR{z_L}\big) \cap \Omega$.
\end{proof}

\begin{lemma}
  \label{lem:sol-count-nobias}
  Let $(l,i) \in \Idx$ and $c \in \bbR$.
  Suppose that $\tau_l$ is well-structured biaffine.
  Consider $S = \{ w \in \Omega \;|\; y_{l,i}(w) = c  \land \smash{\bigvee_{j \in \dom(\phi_{l,i})}} z_{l-1,\phi_{l,i}(j)}(w) \neq 0  \}$,
  where $\phi_{l,i}$ denotes the partial map described in \Cref{lem:strong-bilinear}. 
  Then,
  \begin{align*}
    | S | \leq |\bbM|^{W-1}.
  \end{align*}
\end{lemma}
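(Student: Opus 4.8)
The plan is to mimic the proof of \Cref{lem:sol-count-bias}, but since $\tau_l$ need not have a bias parameter we must work a little harder: the role of ``the bias coordinate'' will be played by one of the coordinates $(w_{l,j})_{j \in \dom(\phi_{l,i})}$, chosen \emph{after} fixing the parameters of the earlier layers. Concretely, write $W_{<l} \defeq W_1 + \cdots + W_{l-1}$ and split $w = (u, w_{\geq l})$, where $u \in \bbR^{W_{<l}}$ collects $w_1, \ldots, w_{l-1}$ and $w_{\geq l} \in \bbR^{W - W_{<l}}$ collects $w_l, \ldots, w_L$. Since $\tau_l$ is well-structured biaffine, \Cref{lem:strong-bilinear} provides $\phi_{l,i}$, a matrix $M$, and a constant $d$ with
\begin{align*}
  y_{l,i}(w) = d + \sum_{j \in \dom(\phi_{l,i})} z_{l-1,\phi_{l,i}(j)}(w) \cdot M_{\phi_{l,i}(j),j} \cdot w_{l,j}
\end{align*}
and $M_{\phi_{l,i}(j),j} \neq 0$ for all $j \in \dom(\phi_{l,i})$. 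The key observation is that $z_{l-1}$, and hence each $z_{l-1,\phi_{l,i}(j)}$, depends only on $w_1, \ldots, w_{l-1}$, i.e., only on $u$; write $\zeta_j(u) \defeq z_{l-1,\phi_{l,i}(j)}(u, \cdot)$, a fixed real number once $u$ is fixed. Thus, for fixed $u$, the equation $y_{l,i}(w) = c$ is affine in $(w_{l,j})_{j \in \dom(\phi_{l,i})}$ with coefficients $\zeta_j(u)\, M_{\phi_{l,i}(j),j}$, and the side condition $\bigvee_{j} z_{l-1,\phi_{l,i}(j)}(w) \neq 0$ is precisely $\bigvee_{j} \zeta_j(u) \neq 0$ (using $M_{\phi_{l,i}(j),j} \neq 0$), i.e., at least one of these coefficients is nonzero.

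I would then bound $|S|$ by summing over the at most $|\bbM|^{W_{<l}}$ choices of $u \in \bbM^{W_{<l}}$. If every $\zeta_j(u) = 0$, then no $w$ of the form $(u, \cdot)$ belongs to $S$. Otherwise, fix some $j^\star = j^\star(u) \in \dom(\phi_{l,i}) \subseteq [W_l]$ with $\zeta_{j^\star}(u) \neq 0$, and consider $f_u : \bbR^{W - W_{<l}} \to \bbR$, $f_u(w_{\geq l}) \defeq y_{l,i}(u, w_{\geq l}) - c$; by the displayed formula this equals $(d - c) + \sum_{j} \zeta_j(u)\, M_{\phi_{l,i}(j),j}\, w_{l,j}$, an affine function whose coefficient of $w_{l,j^\star}$ is $\zeta_{j^\star}(u)\, M_{\phi_{l,i}(j^\star),j^\star} \neq 0$. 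Regarding $w_{l,j^\star}$ as the distinguished last variable, $f_u$ has exactly the form required by \Cref{lem:sol-count-basic} (after reordering coordinates), so $|\{w_{\geq l} \in \bbM^{W - W_{<l}} : f_u(w_{\geq l}) = 0\}| \leq |\bbM|^{W - W_{<l} - 1}$. Summing over $u$ gives $|S| \leq |\bbM|^{W_{<l}} \cdot |\bbM|^{W - W_{<l} - 1} = |\bbM|^{W - 1}$, as desired.

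The step I expect to require the most care is precisely the bookkeeping around the $u$-dependent choice of $j^\star$: one cannot fix a single coordinate globally (some $\zeta_{j^\star}$ may vanish for certain $u$), and naively unioning over all candidate coordinates would lose a factor of $|\dom(\phi_{l,i})|$; the point is that $z_{l-1}$ is independent of $w_l, \ldots, w_L$, so $j^\star$ may be chosen within each $u$-block, and the $u$-blocks partition $\Omega$, so no overcounting occurs. The degenerate case $\dom(\phi_{l,i}) = \emptyset$ is immediate, since then the disjunction is vacuously false and $S = \emptyset$.
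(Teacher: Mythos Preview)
Your proposal is correct and follows essentially the same approach as the paper: split $w$ into the first $W_{<l}$ coordinates $u$ (on which $z_{l-1}$ depends) and the rest, observe that for each fixed $u$ either the side condition fails or some coefficient $\zeta_{j^\star}(u)\,M_{\phi_{l,i}(j^\star),j^\star}$ is nonzero, and apply \Cref{lem:sol-count-basic} to the resulting affine function $f_u$ before summing over $u$. Your extra remarks on the $u$-dependent choice of $j^\star$ and the degenerate case $\dom(\phi_{l,i})=\emptyset$ are fine additions but not strictly needed.
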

\begin{proof}
  Suppose that  $\tau_l$ is well-structured biaffine, and $S$ is given as above.
  We make three observations.
  First,
  \begin{align}
    \nonumber 
    S
    &= \big\{ (u,v) \in \bbM^{W'} \times \bbM^{W-W'} \;\big|\; \big(\exists j \in \dom(\phi_{l,i}).\, z_{l-1,\phi_{l,i}(j)}(u,0,\ldots,0) \neq 0\big)
    \land y_{l,i}(u,v) = c\}
    \\ \label{eq:sol-count-nobias-1}
    &= \bigcup_{u \in U} \bigcup_{v \in \bbM^{W-W'}} \{(u,v) \;|\; y_{l,i}(u,v) = c\},
  \end{align}
  where the first line uses $\smash{W'} \defeq W_1 + \cdots + W_{l-1}$
  and that $z_{l-1}$ depends only on $w_1, \ldots, w_{l-1}$,
  and the second line uses $U \defeq \{u \in \smash{\bbM^{W'}} \;|\; \exists j \in \dom(\phi_{l,i}).\, z_{l-1,\phi_{l,i}(j)}(u,0,\ldots,0) \neq 0\}$.
  Second, by \Cref{lem:strong-bilinear} (which is applicable since $\tau_l$ is well-structured biaffine by assumption),
  there are $M \in \bbR^{N_{l-1} \times W_l}$ and $d \in \bbR$ such that
  $M_{\phi_{l,i}(j),j} \neq 0$ for all $j \in \phi_{l,i}$, and
  \begin{align}
    \nonumber
    y_{l,i}(u,v)
    &= d + \hspace{0em} \sum_{j \in \dom(\phi_{l,i})} \hspace{0em} z_{l-1, \phi_{l,i}(j)}(u,v) \cdot M_{\phi_{l,i}(j), j} \cdot v_{j}
    \\
    \label{eq:sol-count-nobias-2}
    &= d + \hspace{0em} \sum_{j \in \dom(\phi_{l,i})} \hspace{0em} z_{l-1, \phi_{l,i}(j)}(u,0,\ldots,0) \cdot M_{\phi_{l,i}(j), j} \cdot v_{j}
  \end{align}
  for all $(u,v)\in \smash{\bbR^{W'} \times \bbR^{W-W'}}$,
  where the second equality uses that $z_{l-1}$ depends only on $u$.
  Third, for any $u \in U$,
  the function $f_u : \bbR^{W-W'} \to \bbR$ defined by $f_u(v) \defeq y_{l,i}(u,v) -c$ satisfies the preconditions of \Cref{lem:sol-count-basic}
  (after reordering the input variables of $f_u$) due to the following:
  $z_{l-1,\phi_{l,i}(j)}(u,0,\ldots,0) \neq 0$ for some $j \in \dom(\phi_{l,i})$ since $u \in U$;
  and the coefficient of $v_j$ in $f_u(v)$ is $z_{l-1,\phi_{l,i}(j)}(u,0,\ldots,0) \cdot M_{\phi_{l,i}(j),j} \neq 0$
  by \Cref{eq:sol-count-nobias-2} and $M_{\phi_{l,i}(j),j} \neq 0$.

  By combining the above observations, we obtain the conclusion:
  \begin{align*}
    |S|
    &= \Big| \bigcup_{u \in U} \bigcup_{v \in \bbM^{W-W'}} \{(u,v) \;|\; y_{l,i}(u,v) = c\} \Big|
    \\
    &=  \sum_{u \in U} \Big| \bigcup_{v \in \bbM^{W-W'}} \{(u,v) \;|\; y_{l,i}(u,v) = c\} \Big|
    \\
    &=  \sum_{u \in U}  \big| \{v \in \bbM^{W-W'} \mid f_u(v) = 0\} \big|
    \\
    &\leq |\bbM|^{W'} \cdot |\bbM|^{W-W'-1} = |\bbM|^{W-1},
  \end{align*}
  where the first line uses \Cref{eq:sol-count-nobias-1}, the third line uses the definition of $f_u$,
  and the last line uses \Cref{lem:sol-count-basic} applied to $f_u$.
\end{proof}

\subsection{\Cref{thm:ndf-inc-ubound-nobias} (Main Proof)}

{\bf \Cref{thm:ndf-inc-ubound-nobias}.}
{\it
  If $\tau_l$ either has bias parameters or is well-structured biaffine for all $l\in[L]$, then
  \begin{align*}
    & \frac{|\ndfM{z_L} \cup \incM{z_L}|}{|\Omega|}
    \leq \frac{1}{|\bbM|} {\sum_{(l,i) \in \Idx}}
    \Big| \ndf{\sigma_{l,i}} \cup \big(\bdz{\sigma_{l,i}} \cap S_{l+1}\big) \Big|,
  \end{align*}
  where $S_l \subseteq \bbR$ is defined by 
  \begin{align*}
    S_l \defeq
    \begin{cases}
      \emptyset~~ & \text{if $l > L$ or $\tau_l$ has bias parameters}
      \\ \bbR~~ & \text{otherwise}.
    \end{cases}
  \end{align*}
}
\begin{proof} 
  Observe that
  \begin{align}
    \label{eq:ndf-inc-ubound-nobias-1}
    \ndfM{z_L} \cup \incM{z_L}
    &\subseteq 
    \bigcup_{(l,i) \in \Idx} \, \bigcup_{c \in A_{l,i}}  B_{l,i}(c),
    \qquad\quad
    {|B_{l,i}(c)|} \leq {|\bbM|^{W-1}},
  \end{align}
  where $A_{l,i} \subseteq \bbR$ and $B_{l,i}(c) \subseteq \Omega$ are defined as in \Cref{lem:pbd-bound-nobias}.
  Here the first equation is by \Cref{lem:pbd-bound-nobias}
  and the second equation is by \Cref{lem:sol-count-bias,lem:sol-count-nobias},
  where these lemmas are applicable by the definition of $B_{l,i}(c)$ and
  because $\tau_l$ either has bias parameters or is well-structured biaffine (both by assumption).
  Observe further that
  \begin{align}
    \label{eq:ndf-inc-ubound-nobias-2}
    A_{l,i} = \ndf{\sigma_{l,i}} \cup (\bdz{\sigma_{l,i}} \cap S_{l+1})
  \end{align}
  by the definition of $A_{l,i}$ and $S_l$,
  where $S_l$ is defined in the statement of this theorem.
  Combining the above observations, we obtain the conclusion:
  \begingroup
  \begin{align*}
    \frac{|\ndfM{z_L} \cup \incM{z_L}|}{|\Omega|}
    &\leq \sum_{(l,i) \in \Idx} \, \sum_{c \in A_{l,i}} \frac{|B_{l,i}(c)|}{|\Omega|}
    \leq \sum_{(l,i) \in \Idx} \big|\ndf{\sigma_{l,i}} \cap (\bdz{\sigma_{l,i}} \cap S_{l+1}) \big| \cdot \frac{|\bbM|^{W-1}}{|\bbM|^W}, 
  \end{align*}
  \endgroup
  where the first inequality is by \Cref{eq:ndf-inc-ubound-nobias-1}
  and the second inequality is by \Cref{eq:ndf-inc-ubound-nobias-1,eq:ndf-inc-ubound-nobias-2}.
\end{proof}


\clearpage
\section{Upper Bounds on $|\incM{z_L}|$}
\label{sec:pf-inc}

In the rest of the appendix, we use the following notation.
For a vector $v \in \bbR^n$, $v_{a:b}$ denotes the vector $(v_a, \ldots, v_b)$.
For a matrix $M \in \bbR^{n \times m}$,
$M_{a:b,\,c:d}$ denotes the matrix $(M_{i,j})_{a \leq i \leq b,\, c \leq j \leq d}$;
$M_{a:b,\,c}$ denotes the vector $(M_{a,c}, \ldots, M_{b,c})$;
and $M_{*,\,c:d}$ denotes $M_{1:n,\,c:d}$ (and similarly for $M_{a:b,\,*}$ and $M_{*,\,c}$).

\subsection{Lemmas (Basic)}

\begin{lemma}
  \label{lem:seq-img-same-intvl}
  Let $n \in \bbN$.
  For each $j \in [n]$, let $f_j : \bbR \to \bbR$ and $\mcA_j$ be a finite cover of $\bbR$
  (i.e., $\smash{\bigcup_{A \in \mcA_j}} A = \bbR$ and $|\mcA_j| < \infty$).
  Consider $x \in \bbR$.
  Then, there is $\{x_i\}_{i \in \bbN} \subseteq (x, \infty)$ such that
  $\smash{\lim_{i \to \infty}} x_i = x$ and for all $j \in [n]$,
  \begin{align*}
    \{ f_j( x_i ) \mid i \in \bbN \} \subseteq A \qquad\text{for some $A \in \mcA_j$}.
  \end{align*}
  Further, there is $\{x'_i\}_{i \in \bbN} \subseteq (-\infty, x)$ that satisfies the same conditions stated above.
\end{lemma}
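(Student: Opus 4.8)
The plan is to construct the sequence $\{x_i\}$ by a nested-intervals / diagonal-style argument that simultaneously stabilizes the ``piece index'' of all $n$ functions $f_1,\dots,f_n$ near $x$. First I would observe that for a single function $f_j$ and a single set $A \in \mcA_j$, the condition ``$f_j$ maps a sequence approaching $x$ from the right into $A$'' can fail only if $f_j^{-1}(A)$ does not accumulate at $x$ from the right. Since $\mcA_j$ is a \emph{finite} cover of $\bbR$, the preimages $\{f_j^{-1}(A)\}_{A \in \mcA_j}$ form a finite cover of $\bbR$, and in particular $\bigcup_{A \in \mcA_j} f_j^{-1}(A) \supseteq (x, x+\delta)$ for every $\delta > 0$. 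Hence for each $j$ there must be at least one $A_j \in \mcA_j$ whose preimage $f_j^{-1}(A_j)$ accumulates at $x$ from the right, i.e.\ $x \in \cl\big(f_j^{-1}(A_j) \cap (x,\infty)\big)$; otherwise, for each $A$ we could pick $\delta_A > 0$ with $f_j^{-1}(A) \cap (x, x+\delta_A) = \emptyset$, and taking $\delta = \min_A \delta_A > 0$ (a finite min) would leave $(x, x+\delta)$ uncovered, a contradiction.

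The next step is to do this for all $j$ at once. The natural approach is induction on $n$. For $n=1$ the previous paragraph gives a set $A_1$ and, by the accumulation property, a sequence $x_i \to x^+$ with $f_1(x_i) \in A_1$ for all $i$. For the inductive step, I would not simply intersect (the preimages need not be nice), but instead argue inside a shrinking window: having chosen $A_1,\dots,A_{n-1}$ and knowing that $T \defeq \bigcap_{j<n} f_j^{-1}(A_j)$ accumulates at $x$ from the right, I restrict attention to $T \cap (x,\infty)$; since $\{f_n^{-1}(A)\}_{A \in \mcA_n}$ is a finite cover of $\bbR \supseteq T \cap (x,\infty)$, some $f_n^{-1}(A_n)$ must again accumulate at $x$ from the right within $T$ (same finite-min argument, now relativized to the set $T$, using that $x$ is a right-accumulation point of $T$). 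This yields $A_1,\dots,A_n$ such that $\bigcap_{j\in[n]} f_j^{-1}(A_j)$ has $x$ as a right-accumulation point; picking points of this set tending to $x$ from the right gives the desired $\{x_i\}_{i\in\bbN} \subseteq (x,\infty)$ with $\lim_i x_i = x$ and $\{f_j(x_i) \mid i \in \bbN\} \subseteq A_j$ for every $j$. The claim for $(-\infty,x)$ follows by the identical argument with $(x,\infty)$ replaced by $(-\infty,x)$ throughout (or by applying the right-sided version to the functions $y \mapsto f_j(2x - y)$).

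The main obstacle is purely bookkeeping: making precise the relativized ``finite cover forces accumulation'' step at each stage of the induction, i.e.\ checking that if $x$ is a right-accumulation point of a set $T$ and $T$ is covered by finitely many sets, then $x$ is a right-accumulation point of at least one of those sets intersected with $T$. This is elementary (if not, choose for each covering set a punctured right-neighborhood of $x$ missing it, take the finite minimum radius, and contradict that $T$ accumulates at $x$), but it must be stated carefully so the induction goes through cleanly. No continuity or structural hypotheses on the $f_j$ are needed — only finiteness of each cover — and that is the key point to keep the argument short.
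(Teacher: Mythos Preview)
Your argument is correct, and the inductive step is sound: the relativized pigeonhole (``if $x$ is a right-accumulation point of $T$ and $T$ is covered by finitely many sets, then $x$ is a right-accumulation point of $T$ intersected with one of them'') goes through exactly as you sketch. However, your route differs from the paper's. The paper avoids induction entirely by a single pigeonhole on the \emph{product}: it fixes the concrete sequence $x_i \defeq x + 1/i$, assigns to each $i$ a tuple $A_i = (A_{i,1},\ldots,A_{i,n}) \in \mcA_1 \times \cdots \times \mcA_n$ with $f_j(x_i) \in A_{i,j}$, observes that this product is finite, and passes to a subsequence on which the tuple is constant. Your approach builds up $A_1,\ldots,A_n$ one coordinate at a time via accumulation-point reasoning, which is slightly heavier in bookkeeping but has the minor advantage of making explicit \emph{why} each $A_j$ must exist (accumulation forced by a finite cover), whereas the paper's proof is a one-line extraction of a constant subsequence from a finite-range map $\bbN \to \mcA_1 \times \cdots \times \mcA_n$. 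Both are elementary and neither needs any structure on the $f_j$ beyond finiteness of the covers.
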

\begin{proof}
  Consider $f_j$, $\mcA_j$, and $x$ stated above ($j \in [n]$).
  Let $x_i \defeq x + 1/i$ for $i \in \bbN$.
  Then,
  \begin{align}
    \label{eq:seq-img-same-intvl-1}
    \{x_i\}_{i \in \bbN} \subseteq (x, \infty), \qquad\quad {\lim_{i \to \infty}} x_i = x.
  \end{align}
  For each $(i,j) \in \bbN \times [n]$, let $A_{i,j} \in \mcA_j$ be the set satisfying $f_j(x_i) \in A_{i,j}$,
  and $A_i \defeq (A_{i,1}, \ldots, A_{i,n}) \in \mcA_1 \times \cdots \times \mcA_n$,
  where $A_{i,j}$ always exists since $\mcA_j$ is a cover of $\bbR$.
  Observe that since $|\mcA_1 \times \cdots \times \mcA_n| < \infty$ (by $|\mcA_j| < \infty$ and $n < \infty$) and $|\bbN| = \infty$,
  there must exist $\{k_i\}_{i \in \bbN} \subseteq \bbN$ such that
  \begin{align}
    \label{eq:seq-img-same-intvl-2}
    k_1 < k_2 < \cdots, \qquad\quad A_{k_1} = A_{k_2} = \cdots.
  \end{align}
  We claim that $\smash{\{x_{k_i}\}_{i \in \bbN}}$ satisfies the desired conditions.
  First, by \Cref{eq:seq-img-same-intvl-1} and $\smash{\lim_{i \to \infty}} k_i = \infty$ (due to \Cref{eq:seq-img-same-intvl-2}),
  $\{x_{k_i}\}_{i \in \bbN} \subseteq (x,\infty)$ and $\smash{\lim_{i \to \infty}} x_{k_i} = x$.
  Second, by \Cref{eq:seq-img-same-intvl-2}, $\{f_j(x_{k_i}) \mid i \in \bbN\} \subseteq A_{k_1, j}$ for all $j \in [n]$.
  Hence, the claim holds and this concludes the proof.
\end{proof}

\begin{lemma}
  \label{lem:f-same-df-same}
  Let $f, g : \bbR \to \bbR$ and $x \in \bbR$.
  Suppose that $f$ and $g$ are differentiable at $x$,
  and there is $\{x_i\}_{i \in \bbN} \subseteq \bbR \setminus \{x\}$ such that $\smash{\lim_{i \to \infty}} x_i = x$
  and $f(x_i)=g(x_i)$ for all $i \in \bbN$.
  Then,
  \begin{align*}
    \DF{f}(x) = \DF{g}(x).
  \end{align*}
\end{lemma}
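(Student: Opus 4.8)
The plan is to reduce to the difference function $h \defeq f - g$ and show $\DF{h}(x) = 0$, from which $\DF{f}(x) = \DF{g}(x)$ follows by linearity of the derivative (both $\DF{f}(x)$ and $\DF{g}(x)$ exist by hypothesis). First I would record that $h$ is differentiable at $x$ (as a difference of functions differentiable at $x$), and that $h$ is in particular continuous at $x$; combined with $x_i \to x$ and $f(x_i) = g(x_i)$ for all $i$, this yields $h(x) = \lim_{i\to\infty} h(x_i) = 0$ as well as $h(x_i) = 0$ for all $i$.

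Next I would invoke the fact that a derivative, when it exists, is the limit of difference quotients along \emph{every} sequence approaching the point from within $\bbR \setminus \{x\}$: since $\DF{h}(x) = \lim_{t \to x} \frac{h(t) - h(x)}{t - x}$ exists and $\{x_i\} \subseteq \bbR \setminus \{x\}$ with $x_i \to x$, we get
\begin{align*}
  \DF{h}(x) = \lim_{i \to \infty} \frac{h(x_i) - h(x)}{x_i - x} = \lim_{i \to \infty} \frac{0 - 0}{x_i - x} = 0.
\end{align*}
Hence $\DF{f}(x) - \DF{g}(x) = \DF{(f - g)}(x) = \DF{h}(x) = 0$, which is the claim.

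There is no real obstacle here; the only point requiring a (standard) justification is the "limit along a sequence equals the derivative" step, which follows immediately from the definition of the limit $\lim_{t \to x}$ and the sequential characterization of limits of real functions. I would state it as a one-line observation rather than prove it in detail. The continuity-of-differentiable-functions step is likewise routine and can be cited or stated in passing.
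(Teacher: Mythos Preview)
Your proposal is correct and takes essentially the same approach as the paper: both establish that the function values agree at $x$ via continuity, then use the sequential characterization of the derivative along $\{x_i\}$. The only cosmetic difference is that you pass to the auxiliary function $h = f - g$, whereas the paper works directly with the difference quotients of $f$ and $g$ separately; neither variant buys anything over the other.
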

\begin{proof}
  Consider $f,g : \bbR \to \bbR$, $x \in \bbR$, and $\{x_i\}_{i \in \bbN} \subseteq \bbR \setminus \{x\}$ stated above.
  Then,
  \[f(x) = {\lim_{i \to \infty}} f(x_i) = {\lim_{i \to \infty}} g(x_i) = g(x),\]
  where the first and third equalities are by that $f$ and $g$ are continuous at $x$ (as they are differentiable at $x$) and $x_i \to x$,
  and the second equality by that $f(x_i)=g(x_i)$ for all $i \in \bbN$.
  Using this, we obtain
  \begin{align*}
    \DF{f}(x)
    &= \lim_{i \to \infty} \frac{f(x_i) - f(x)}{x_i - x}
    = \lim_{i \to \infty} \frac{g(x_i) - g(x)}{x_i - x}
    = \DF{g}(x),
  \end{align*}
  where the first and third equalities are by that $f$ and $g$ are differentiable at $x$, $x_i \to x$, and $x_i \neq x$ for all $i \in \bbN$,
  and the second equality by that $f(x_i)=g(x_i)$ for all $i \in \bbN$.
  This completes the proof.
\end{proof}

\subsection{Lemmas (Technical: Part 1)}

\begin{definition*}
  Let $\gamma \in \Gamma$.
  Define $\clR^\gamma \subseteq \bbR^W$ as
  \begin{align*}
    \clR^\gamma & \defeq \bigcap_{(l,i) \in \Idx} \{  w \in \bbR^W \;|\; y_{l,i}(w) \in \cl(\mcI_{l,i}^{\gamma({l,i})}) \}.
  \end{align*}
  Note that when defining $\mcR^\gamma$ in \Cref{def:ryz-gamma}, we used $\mcI_{l,i}^{\gamma({l,i})}$ instead of $\cl(\mcI_{l,i}^{\gamma({l,i})})$.
\end{definition*}

\begin{definition*}
  For $\gamma \in \Gamma$ and $l \in [L]$, define
  \begin{align*}
    \ext{\tau}_l &: \bbR^{N_{l-1}}\times \bbR^{W_l + W_{l+1} + \cdots + W_L} \to \bbR^{N_l}\times\bbR^{W_{l+1} + \cdots + W_L},
    \\
    \ext{\sigma}_l, \ext{\sigma}_l^\gamma &: \bbR^{N_l}\times\bbR^{W_{l+1} + \cdots + W_L} \to \bbR^{N_l}\times\bbR^{W_{l+1} + \cdots + W_L},
    \\
    \ext{z}_l, \ext{z}_l^\gamma &:  \bbR^{N_{l-1}}\times\bbR^{W_l + W_{l+1} + \cdots + W_L} \to \bbR^{N_L}
  \end{align*}
  as follows:
  \begin{align*}
    \ext{\tau}_l(x, u) & \defeq \rlap{$\big( \tau_l(x, u_1, \ldots, u_{W_l}), u_{W_l+1}, \ldots, u_{W_l + W_{l+1}+ \cdots +W_L} \big),$}
    \\
    \ext{\sigma}_l(x, u) & \defeq ( \sigma_l(x), u ),
    &
    \ext{\sigma}_l^\gamma(x, u) & \defeq ( \sigma_l^\gamma(x), u ),
    \\
    \ext{z}_l(x,u) & \defeq (\ext{z}_{l+1} \circ \ext{\sigma}_{l} \circ \ext{\tau}_{l})(x,u)
    &
    \ext{z}_l^\gamma(x,u) & \defeq (\ext{z}_{l+1}^\gamma \circ \ext{\sigma}_{l}^\gamma \circ \ext{\tau}_{l})(x,u)
  \end{align*}
  where $\ext{z}_{L+1}, \smash{\ext{z}_{L+1}^\gamma} : \bbR^{N_L} \to \bbR^{N_L}$ are defined as the identity function.
\end{definition*}
\begin{lemma}
  \label{lem:extz-good}
  For all $l \in [L]$ and $\gamma \in \Gamma$,
  $\ext{z}_l$ is continuous and $\smash{\ext{z}_l^\gamma}$ is differentiable.
\end{lemma}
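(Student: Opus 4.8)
The plan is to prove both claims simultaneously by downward induction on $l$, running from $l = L+1$ down to $l = 1$, and exploiting the recursive definitions $\ext{z}_l = \ext{z}_{l+1} \circ \ext{\sigma}_l \circ \ext{\tau}_l$ and $\ext{z}_l^\gamma = \ext{z}_{l+1}^\gamma \circ \ext{\sigma}_l^\gamma \circ \ext{\tau}_l$. The base case $l = L+1$ is immediate, since $\ext{z}_{L+1}$ and $\ext{z}_{L+1}^\gamma$ are the identity function, which is both continuous and differentiable.

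For the inductive step I would first record the regularity of the three building blocks of each layer. The map $\ext{\tau}_l$ sends $(x,u)$ to $\big(\tau_l(x, u_{1:W_l}),\, u_{W_l+1:\cdots}\big)$; since $\tau_l$ is analytic and the remaining output coordinates are projections, $\ext{\tau}_l$ is differentiable, hence continuous. The map $\ext{\sigma}_l(x,u) = (\sigma_l(x), u)$ is continuous because $\sigma_l$ is continuous by assumption and the $u$-block is the identity. The map $\ext{\sigma}_l^\gamma(x,u) = (\sigma_l^\gamma(x), u)$ is differentiable: by \Cref{def:minrep-sigma} each $\sigma_{l,i}^{\gamma(l,i)}$ is differentiable (it comes from a piecewise-differentiable representation), so $\sigma_l^\gamma$ is a pointwise tuple of differentiable functions and hence differentiable, and again the $u$-block is the identity.

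Assuming the induction hypothesis that $\ext{z}_{l+1}$ is continuous and $\ext{z}_{l+1}^\gamma$ is differentiable, the conclusion for layer $l$ then follows by composition: $\ext{z}_l$ is a composition of the continuous maps $\ext{\tau}_l$, $\ext{\sigma}_l$, $\ext{z}_{l+1}$, hence continuous; and $\ext{z}_l^\gamma$ is a composition of the differentiable maps $\ext{\tau}_l$, $\ext{\sigma}_l^\gamma$, $\ext{z}_{l+1}^\gamma$, hence differentiable. One should also check that the domains and codomains in the definitions of these extended maps line up so that the compositions are well-typed, which they do by construction; this is the same bookkeeping as in the proof of \Cref{lem:yz-yz-gamma-good}.

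I do not anticipate a genuine obstacle here. The only points requiring a little care are (i) verifying that $\sigma_l^\gamma$ is differentiable, which reduces to the fact that the chosen representation of each $\sigma_{l,i}$ in \Cref{def:minrep-sigma} is piecewise-differentiable, and (ii) keeping track of the extra parameter coordinates $u$ that are carried along unchanged through $\ext{\tau}_l$, $\ext{\sigma}_l$, and $\ext{\sigma}_l^\gamma$. Everything else is the standard fact that continuity and differentiability are preserved under composition and under taking products and projections.
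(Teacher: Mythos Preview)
Your proposal is correct and takes essentially the same approach as the paper: the paper simply notes that the proof is similar to that of \Cref{lem:yz-yz-gamma-good} and omits the details, and your argument is precisely the spelled-out version of that similarity, verifying that each building block $\ext{\tau}_l$, $\ext{\sigma}_l$, $\ext{\sigma}_l^\gamma$ has the required regularity and then closing under composition via the recursive definition of $\ext{z}_l$ and $\ext{z}_l^\gamma$.
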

\begin{proof}
  Since the proof is similar to that of \Cref{lem:yz-yz-gamma-good}, we omit it.
\end{proof}

\begin{lemma}
  \label{lem:exttau-extsigma-one-step}
  Let $\gamma \in \Gamma$, $w = (w_1, \ldots, w_L) \in \smash{\clR^\gamma}$, $l \in [L]$, and $x = (z_{l-1}(w), w_l, \ldots, w_L)$.
  Then,
  \begin{align*}
    \ext{\tau}_l(x)
    & = \big(y_l(w), w_{l+1}, \ldots, w_{L}\big),
    \qquad\quad
    (\ext{\sigma}_l^\gamma \circ \ext{\tau}_l)(x)
    = \big(z_l(w), w_{l+1}, \ldots, w_{L}\big).
  \end{align*}
\end{lemma}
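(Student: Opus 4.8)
The plan is to prove the two identities in turn, essentially by unwinding the definitions of $\ext{\tau}_l$ and $\ext{\sigma}_l^\gamma$ at the point $x$, which I read as the pair whose first block is $z_{l-1}(w) \in \bbR^{N_{l-1}}$ and whose second block is $(w_l, w_{l+1}, \ldots, w_L) \in \bbR^{W_l + W_{l+1} + \cdots + W_L}$. So the whole argument is definitional bookkeeping together with one small topological observation.

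\textbf{First identity.} I would substitute this splitting of $x$ into $\ext{\tau}_l(x, u) = (\tau_l(x, u_1, \ldots, u_{W_l}), u_{W_l+1}, \ldots, u_{W_l + \cdots + W_L})$. The first $W_l$ entries of $u$ form the vector $w_l$ and the remaining entries form $(w_{l+1}, \ldots, w_L)$, so the leading block of $\ext{\tau}_l(x)$ is $\tau_l(z_{l-1}(w), w_l) = \tau_l^{\langle w_l\rangle}(z_{l-1}(w)) = y_l(w)$ by the definition of $y_l$, and the trailing block is $(w_{l+1}, \ldots, w_L)$, carried over verbatim. This step uses nothing about $w$, not even membership in $\clR^\gamma$.

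\textbf{Second identity.} I would feed the output of the first identity into $\ext{\sigma}_l^\gamma$. Since $\ext{\sigma}_l^\gamma(x, u) = (\sigma_l^\gamma(x), u)$, the trailing block $(w_{l+1}, \ldots, w_L)$ is again unchanged and the leading block becomes $\sigma_l^\gamma(y_l(w))$, so it remains to check $\sigma_l^\gamma(y_l(w)) = z_l(w)$. Unwinding the definitions of $z_l$ and $y_l$ gives $z_l(w) = \sigma_l(y_l(w))$, and since both $\sigma_l$ and $\sigma_l^\gamma$ act coordinatewise this reduces to the scalar claim $\sigma_{l,i}^{\gamma(l,i)}(y_{l,i}(w)) = \sigma_{l,i}(y_{l,i}(w))$ for every $i \in [N_l]$.

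The one place that calls for a genuine (if mild) argument — the main obstacle — is this last scalar equality, since $w \in \clR^\gamma$ only yields $y_{l,i}(w) \in \cl(\mcI_{l,i}^{\gamma(l,i)})$, whereas the representation $\{(\mcI_{l,i}^k, \sigma_{l,i}^k)\}_k$ of $\sigma_{l,i}$ only guarantees $\sigma_{l,i}^{\gamma(l,i)} = \sigma_{l,i}$ on the set $\mcI_{l,i}^{\gamma(l,i)}$ itself, not on its closure (this is precisely the difference from \Cref{lem:r-gamma-f-df}, which assumes $w \in \mcR^\gamma$). To bridge this, I would use that $\mcI_{l,i}^{\gamma(l,i)}$ is a non-empty interval and that $\sigma_{l,i}$ (continuous by the standing assumption on activation functions) and $\sigma_{l,i}^{\gamma(l,i)}$ (differentiable, hence continuous) are both continuous on $\bbR$: two continuous functions agreeing on a set agree on its closure, so $\sigma_{l,i}^{\gamma(l,i)} = \sigma_{l,i}$ on $\cl(\mcI_{l,i}^{\gamma(l,i)})$, in particular at $y_{l,i}(w)$. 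Combining this with the first identity yields $(\ext{\sigma}_l^\gamma \circ \ext{\tau}_l)(x) = (z_l(w), w_{l+1}, \ldots, w_L)$, completing the proof.
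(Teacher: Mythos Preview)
Your proof is correct and follows essentially the same approach as the paper: both unwind the definitions of $\ext{\tau}_l$ and $\ext{\sigma}_l^\gamma$, reduce the second identity to the scalar claim $\sigma_{l,i}^{\gamma(l,i)}(y_{l,i}(w)) = \sigma_{l,i}(y_{l,i}(w))$, and justify it by observing that the two continuous functions agree on $\mcI_{l,i}^{\gamma(l,i)}$ and hence on its closure, which contains $y_{l,i}(w)$ by $w \in \clR^\gamma$.
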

\begin{proof}
  By the definition of $\ext{\tau}_l$ and $\smash{\ext{\sigma}_l^\gamma}$, we get the conclusion:
  \begin{align*}
    \ext{\tau}_l(x)
    &= \big(\tau_l(z_{l-1}(w), w_l), w_{l+1}, \ldots, w_{L}\big)
    = \big(y_l(w), w_{l+1}, \ldots, w_{L}\big),
    \\
    (\ext{\sigma}_l^\gamma \circ \ext{\tau}_l)(x)
    &= \big(\sigma_l^\gamma(y_l(w)), w_{l+1}, \ldots, w_{L}\big)
    = \big(z_l(w), w_{l+1}, \ldots, w_{L}\big),
  \end{align*}
  where the last equality is by the observation that
  $\smash{\sigma_{l,i}^{\gamma(l,i)}}(y_{l,i}(w)) = \sigma_{l,i}(y_{l,i}(w))$ for all $i \in [N_l]$.
  Here the observation holds because 
  $\smash{\sigma_{l,i}^{\gamma(l,i)}}$ and $\sigma_{l,i}$ coincide on $\smash{\cl(\mcI_{l,i}^{\gamma(l,i)})}$
  (as they coincide on ${\mcI_{l,i}^{\gamma(l,i)}}$ and are both continuous)
  and $y_{l,i}(w) \in \smash{\cl(\mcI_{l,i}^{\gamma(l,i)})}$ (by $w \in \smash{\clR^\gamma}$).
\end{proof}

\begin{lemma}
  \label{lem:extz-z-equiv}
  Let $\gamma \in \Gamma$ and $l \in [L]$. Then, for all $w = (w_1, \ldots, w_L) \in \bbR^W$,
  \begin{align*}
    \ext{z}_l \big(z_{l-1}(w), w_l, \ldots, w_L \big) &= z_L(w),
    \qquad\quad
    \ext{z}_l^\gamma \big(z_{l-1}^\gamma(w), w_l, \ldots, w_L \big) = z_L^\gamma(w).
  \end{align*}
\end{lemma}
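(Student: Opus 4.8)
The plan is to prove both identities simultaneously by \emph{downward} induction on $l$, from $l = L$ down to $l = 1$, by simply unfolding the recursive definitions of $\ext{z}_l$ and $\ext{z}_l^\gamma$. Throughout, I read the tuple $(w_l, \dots, w_L)$ as the element of $\bbR^{W_l + W_{l+1} + \cdots + W_L}$ obtained by concatenation, so that $(z_{l-1}(w), w_l, \dots, w_L)$ and $(z_{l-1}^\gamma(w), w_l, \dots, w_L)$ lie in the domain of $\ext{\tau}_l$.

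First I would record a ``one-step'' identity, which is essentially \Cref{lem:exttau-extsigma-one-step} stripped of the hypothesis $w \in \clR^\gamma$. Directly from the definitions of $\ext{\tau}_l$, $\ext{\sigma}_l$ and of $y_l, z_l$, for every $w \in \bbR^W$ we have $\ext{\tau}_l(z_{l-1}(w), w_l, \dots, w_L) = (y_l(w), w_{l+1}, \dots, w_L)$ and hence $(\ext{\sigma}_l \circ \ext{\tau}_l)(z_{l-1}(w), w_l, \dots, w_L) = (z_l(w), w_{l+1}, \dots, w_L)$; likewise, using the definitions of $\ext{\sigma}_l^\gamma$ and of $y_l^\gamma, z_l^\gamma$ from \Cref{def:ryz-gamma}, $(\ext{\sigma}_l^\gamma \circ \ext{\tau}_l)(z_{l-1}^\gamma(w), w_l, \dots, w_L) = (z_l^\gamma(w), w_{l+1}, \dots, w_L)$. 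Note that, unlike in \Cref{lem:exttau-extsigma-one-step}, no membership in $\clR^\gamma$ is needed here, because we feed the $\gamma$-values $z_{l-1}^\gamma(w)$ and $y_l^\gamma(w)$ into $\ext{\sigma}_l^\gamma$ directly rather than the undecorated ones.

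With this in hand, the base case $l = L$ follows because $\ext{z}_{L+1}$ and $\ext{z}_{L+1}^\gamma$ are the identity: $\ext{z}_L(z_{L-1}(w), w_L) = \ext{z}_{L+1}\big((\ext{\sigma}_L \circ \ext{\tau}_L)(z_{L-1}(w), w_L)\big) = \ext{z}_{L+1}(z_L(w)) = z_L(w)$, and symmetrically for the $\gamma$-version (here the trailing parameter block is empty). For the inductive step, I would fix $l < L$ and assume the claim for $l+1$. Then, using the recursion $\ext{z}_l = \ext{z}_{l+1} \circ \ext{\sigma}_l \circ \ext{\tau}_l$ together with the one-step identity,
\[
  \ext{z}_l(z_{l-1}(w), w_l, \dots, w_L) = \ext{z}_{l+1}(z_l(w), w_{l+1}, \dots, w_L) = z_L(w),
\]
where the last equality is the induction hypothesis for $l+1$ applied to the same $w$; the $\gamma$-case is identical with every object decorated by $\gamma$. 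This closes the induction.

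I do not expect any genuine obstacle here: the whole content is unfolding definitions, and the only thing to be careful about is the bookkeeping of the trailing parameter block $(w_{l+1}, \dots, w_L)$, whose length decreases by exactly $W_l$ when passing from $\ext{\tau}_l$ to $\ext{\sigma}_l$ (matching the way $\ext{\tau}_l$ consumes $w_l$), so that the input fed into $\ext{z}_{l+1}$ is of precisely the right shape.
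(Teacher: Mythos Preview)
Your proposal is correct and follows essentially the same route as the paper: downward induction on $l$, unfolding the recursion $\ext{z}_l = \ext{z}_{l+1} \circ \ext{\sigma}_l \circ \ext{\tau}_l$ via the one-step identity $(\ext{\sigma}_l \circ \ext{\tau}_l)(z_{l-1}(w), w_l, \dots, w_L) = (z_l(w), w_{l+1}, \dots, w_L)$, then invoking the induction hypothesis. The only cosmetic difference is that the paper takes the base case at $l = L+1$ (where $\ext{z}_{L+1}$ is the identity, so nothing needs the one-step identity), whereas you start at $l = L$ and already use the one-step identity there; this is inconsequential.
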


\begin{proof}
  Let $\gamma \in \Gamma$. The proof is by induction on $l \in [L]$ (starting from $l=L+1$).

  \paragraph{\bf Case $l = L+1$.}
  Since $\smash{\ext{z}_{L+1}}$ and $\smash{\ext{z}_{L+1}^\gamma}$ are identity functions, the desired equations clearly hold.

  \paragraph{\bf Case $l < L+1$.}
  We obtain the first desired equation as follows:
  \begin{align*}
    \ext{z}_l \big(z_{l-1}(w), w_l, \ldots, w_L\big)
    &= (\ext{z}_{l+1} \circ \ext{\sigma}_l \circ \ext{\tau}_l)\big(z_{l-1}(w), w_l, \ldots, w_L\big)
    \\
    &= (\ext{z}_{l+1} \circ \ext{\sigma}_l)\big( {\tau}_l(z_{l-1}(w), w_l), w_{l+1}, \ldots, w_L\big)
    \\
    &= (\ext{z}_{l+1} \circ \ext{\sigma}_l)\big(y_{l}(w), w_{l+1}, \ldots, w_L\big)
    \\
    &= \ext{z}_{l+1} \big( {\sigma}_l(y_{l}(w)), w_{l+1}, \ldots, w_L\big)
    \\
    &= \ext{z}_{l+1} \big( z_{l}(w), w_{l+1}, \ldots, w_L\big)
    \\
    &= z_L(w),
  \end{align*}
  where all but last lines use the definition of $\ext{z}_l$, $\ext{\tau}_l$, $\ext{\sigma}_l$, $y_l$, and $z_l$,
  and the last line uses induction hypothesis on $l+1$.
  We can obtain the second desired equation similarly, by using induction hypothesis on $l+1$
  and the definition of $\smash{\ext{z}_l^\gamma}$, $\ext{\tau}_l$, $\smash{\ext{\sigma}_l^\gamma}$, $\smash{y_l^\gamma}$, and~$\smash{z_l^\gamma}$.
\end{proof}

\begin{lemma}
  \label{lem:extz-extzgamma-equiv}
  Let $\gamma \in \Gamma$ and $l \in [L]$. Then, for all $w = (w_1, \ldots, w_L) \in \mcR^\gamma$,  
  \begin{align*}
    \ext{z}_l \big(z_{l-1}(w), w_l, \ldots, w_L \big) &= \ext{z}_l^\gamma \big(z_{l-1}(w), w_l, \ldots, w_L \big).
  \end{align*}
\end{lemma}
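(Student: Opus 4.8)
The plan is to reduce this to two results already established: \Cref{lem:extz-z-equiv}, which rewrites both $\ext{z}_l$ and $\smash{\ext{z}_l^\gamma}$ (applied to the appropriate arguments) as $z_L$ and $z_L^\gamma$ respectively, and \Cref{lem:r-gamma-f-df}, which says that on $\mcR^\gamma$ the ``$\gamma$-restricted'' quantities agree with the originals. Nothing new needs to be constructed; the entire content is a chain of substitutions.

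Concretely, fix $\gamma \in \Gamma$, $l \in [L]$, and $w = (w_1, \ldots, w_L) \in \mcR^\gamma$. First I would invoke \Cref{lem:r-gamma-f-df} to get $z_{l-1}^\gamma(w) = z_{l-1}(w)$ and $z_L^\gamma(w) = z_L(w)$; for $l = 1$ the first of these is immediate since $z_0^\gamma \defeq z_0$ by definition, and for $l > 1$ it is the $(l-1)$-instance of \Cref{lem:r-gamma-f-df}, while $z_L^\gamma(w) = z_L(w)$ is its $L$-instance. Next I would apply \Cref{lem:extz-z-equiv} twice: once to obtain $\ext{z}_l(z_{l-1}(w), w_l, \ldots, w_L) = z_L(w)$, and once to obtain $\smash{\ext{z}_l^\gamma}(z_{l-1}^\gamma(w), w_l, \ldots, w_L) = z_L^\gamma(w)$. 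Substituting $z_{l-1}^\gamma(w) = z_{l-1}(w)$ into the second identity and then using $z_L^\gamma(w) = z_L(w)$ yields
\[
  \ext{z}_l^\gamma\big(z_{l-1}(w), w_l, \ldots, w_L\big)
  = \ext{z}_l^\gamma\big(z_{l-1}^\gamma(w), w_l, \ldots, w_L\big)
  = z_L^\gamma(w)
  = z_L(w)
  = \ext{z}_l\big(z_{l-1}(w), w_l, \ldots, w_L\big),
\]
which is exactly the claimed equality.

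There is essentially no obstacle here; the only point requiring a moment of care is the boundary case $l = 1$, where \Cref{lem:r-gamma-f-df} as stated ranges over $l \in [L]$ and so does not literally mention $z_0$, but this is handled by the definition $z_0^\gamma \defeq z_0$. I would mention this explicitly so the substitution $z_{l-1}^\gamma(w) = z_{l-1}(w)$ is justified uniformly for all $l \in [L]$. (If desired, one could also prove the lemma directly by induction on $l$ from $L+1$ downward, mirroring the proof of \Cref{lem:extz-z-equiv} and using at each step that $\sigma_{l,i}^{\gamma(l,i)}$ and $\sigma_{l,i}$ agree at $y_{l,i}(w)$ for $w \in \mcR^\gamma$, but the two-line reduction above is cleaner and reuses existing lemmas verbatim.)
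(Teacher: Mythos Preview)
Your proposal is correct and follows exactly the paper's approach: chain \Cref{lem:extz-z-equiv} (to pass through $z_L(w)$ and $z_L^\gamma(w)$) with \Cref{lem:r-gamma-f-df} (to identify $z_L^\gamma(w)=z_L(w)$ and $z_{l-1}^\gamma(w)=z_{l-1}(w)$ on $\mcR^\gamma$). Your treatment is actually slightly more explicit than the paper's, which silently absorbs the substitution $z_{l-1}^\gamma(w)=z_{l-1}(w)$ into its invocation of \Cref{lem:extz-z-equiv}; your remark on the $l=1$ boundary case is a nice clarification.
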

\begin{proof}
  By \Cref{lem:extz-z-equiv}, we have the conclusion as follows:
  \begin{align*}
    \ext{z}_l(z_{l-1}(w), w_l, \ldots, w_L)
    &= z_L(w)
    = z_L^\gamma(w)
    = \ext{z}_l^\gamma(z_{l-1}(w), w_l, \ldots, w_L),
  \end{align*}
  where the second equality is by \Cref{lem:r-gamma-f-df} with $w \in \mcR^\gamma$.
\end{proof}

\subsection{Lemmas (Technical: Part 2)}

\begin{definition*}
  Let $f : \bbR^n \to \bbR^m$ and $i \in [n]$.
  Define \[\PDF{f}{i} : \bbR^n \to \bbR^m \cup \{\bot\}\] be the partial derivative of $f$ with respect to its $i$-th argument,
  where $\bot$ denotes non-differentiability.
  Hence, for any $x \in \bbR^n$ and $i \in [n]$, $\DF{f}(x) \neq \bot$ implies
  $\PDF{f}{i}(x) = (\DF{f}(x))_{1:m,\, i}$.
\end{definition*}
\begin{lemma}
  \label{lem:djz-gamma-exist-gen}
  Let $l \in [L]$, $w = (w_1, \ldots, w_L) \in \bbR^W$, and $j \in [W]$ with $j > W_{<l}$, where $W_{<l} \defeq W_1 + \cdots + W_{l-1}$.
  Suppose that $\ext{z}_l$ is differentiable with respect to its $(N_{l-1}+(j-W_{<l}))$-th argument at
  $(z_{l-1}(w), w_l, \ldots, w_L)$, i.e., $\smash{\PDF{\ext{z}_l}{N_{l-1}+(j-W_{<l})}}(z_{l-1}(w), w_l, \ldots, w_L) \neq \bot$.
  Then, there are $\gamma \in \Gamma$ and $\{t_n\}_{n \in \bbN} \subseteq (v_j, \infty)$ satisfying the following conditions:
  \begin{itemize}
    \item $w \in {\clR^{\gamma}}$,
    \item $\PDF{\ext{z}_{l}}{N_{l-1}+(j-W_{<l})}\big(z_{l-1}(w), w_{l}, \ldots, w_L\big)=\PDF{\ext{z}_{l}^{\gamma}}{N_{l-1}+(j-W_{<l})}\big(z_{l-1}(w), w_{l}, \ldots, w_L\big)$,
    \item $\lim_{n \to \infty} t_n = v_j$, and
    \item $(v_1, \ldots, v_{j-1}, t_n, v_{j+1}, \ldots, v_W) \in \mcR^{\gamma} \quad\text{for all $n \in \bbN$}$,
  \end{itemize}
  where $(v_1, \ldots, v_W) \defeq w$ denotes the scalar values of $w$ (recall that $w_l \in \bbR^{W_l}$ is not scalar by definition).
  Further, there are $\gamma' \in \Gamma$ and $\{t'_n\}_{i \in \bbN} \subseteq (-\infty, v_j)$
  that satisfy the same conditions stated above.
\end{lemma}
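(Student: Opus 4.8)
The plan is to construct the index $\gamma$ and the sequence $\{t_n\}$ simultaneously, by a pigeonhole argument along the line through $w$ in which only the $j$-th scalar coordinate varies, and then to transport the differentiability of $\ext{z}_l$ to $\ext{z}_l^\gamma$ along that line. Write $(v_1,\dots,v_W)=w$ for the scalar values of $w$, set $e := N_{l-1}+(j-W_{<l})$ (the argument slot of $\ext{z}_l$ at which $v_j$ sits), and for $s\in\bbR$ let $w(s) := (v_1,\dots,v_{j-1},s,v_{j+1},\dots,v_W)\in\bbR^W$, so that $w(v_j)=w$. Since $j>W_{<l}$, the map $z_{l-1}$ (which depends only on $w_1,\dots,w_{l-1}$) is constant along this line, i.e.\ $z_{l-1}(w(s))=z_{l-1}(w)$. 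Define $F,G:\bbR\to\bbR^{N_L}$ by letting $F(s)$ (resp.\ $G(s)$) be $\ext{z}_l$ (resp.\ $\ext{z}_l^\gamma$) evaluated at $(z_{l-1}(w),w_l,\dots,w_L)$ with its $e$-th entry replaced by $s$; by \Cref{lem:extz-z-equiv} and the previous observation, $F(s)=z_L(w(s))$ for all $s$, and $F$ is exactly the one-dimensional restriction of $\ext{z}_l$ to its $e$-th argument, so the hypothesis gives $\DF{F}(v_j)=\PDF{\ext{z}_l}{e}(z_{l-1}(w),w_l,\dots,w_L)\neq\bot$. Likewise $G$ is differentiable at $v_j$ with $\DF{G}(v_j)=\PDF{\ext{z}_l^\gamma}{e}(z_{l-1}(w),w_l,\dots,w_L)$, since $\ext{z}_l^\gamma$ is differentiable everywhere by \Cref{lem:extz-good}.

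Next I would apply \Cref{lem:seq-img-same-intvl} with the finite family of functions $s\mapsto y_{l',i'}(w(s))$ for $(l',i')\in\Idx$ (each continuous by \Cref{lem:yz-yz-gamma-good}), each equipped with the finite cover $\mcA_{l',i'} := \{\mcI_{l',i'}^k\}_{k\in[K_{l',i'}]}$ of $\bbR$ (a partition, hence a cover), at the point $v_j$. This yields $\{t_n\}_{n\in\bbN}\subseteq(v_j,\infty)$ with $t_n\to v_j$ and, for each $(l',i')$, an index $k_{l',i'}\in[K_{l',i'}]$ such that $y_{l',i'}(w(t_n))\in\mcI_{l',i'}^{k_{l',i'}}$ for all $n$. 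Put $\gamma(l',i'):=k_{l',i'}$, which defines an element of $\Gamma$. Then $w(t_n)\in\mcR^\gamma$ for all $n$ holds immediately, and the membership $w\in\clR^\gamma$ follows because $y_{l',i'}(w)=\lim_n y_{l',i'}(w(t_n))\in\cl(\mcI_{l',i'}^{\gamma(l',i')})$ for every $(l',i')$; together with $t_n\to v_j$ this establishes the first, third, and fourth bullets.

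For the second bullet, I would evaluate $F$ and $G$ at $t_n$. We have $F(t_n)=z_L(w(t_n))$; and since $w(t_n)\in\mcR^\gamma$, \Cref{lem:r-gamma-f-df} gives $z_{l-1}^\gamma(w(t_n))=z_{l-1}(w(t_n))$ and $z_L^\gamma(w(t_n))=z_L(w(t_n))$, so by \Cref{lem:extz-z-equiv} also $G(t_n)=z_L^\gamma(w(t_n))=z_L(w(t_n))=F(t_n)$. Applying \Cref{lem:f-same-df-same} coordinatewise to $F$, $G$, $v_j$, and $\{t_n\}\subseteq\bbR\setminus\{v_j\}$ yields $\DF{F}(v_j)=\DF{G}(v_j)$, i.e.\ $\PDF{\ext{z}_l}{e}(z_{l-1}(w),w_l,\dots,w_L)=\PDF{\ext{z}_l^\gamma}{e}(z_{l-1}(w),w_l,\dots,w_L)$, which is the second bullet. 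The final ``further'' claim is obtained verbatim by instead invoking the second half of \Cref{lem:seq-img-same-intvl}, which supplies a sequence $\{t'_n\}\subseteq(-\infty,v_j)$ approaching $v_j$; rerunning the argument produces the corresponding $\gamma'\in\Gamma$.

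I expect the main obstacle to be bookkeeping rather than mathematics: carefully reconciling the layered index $w=(w_1,\dots,w_L)$ with the scalar flattening $(v_1,\dots,v_W)$, and verifying that slot $e=N_{l-1}+(j-W_{<l})$ of $\ext{z}_l$'s argument indeed carries $v_j$, so that $F$ really coincides with $z_L\circ(s\mapsto w(s))$. The single essential use of the hypothesis $j>W_{<l}$ is the claim that $z_{l-1}$ is constant along the line $s\mapsto w(s)$; without it the identity $F(s)=z_L(w(s))$ would break down.
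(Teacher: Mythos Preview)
Your proposal is correct and follows essentially the same approach as the paper: apply \Cref{lem:seq-img-same-intvl} to the one-parameter family $s\mapsto y_{l',i'}(w(s))$ to extract $\gamma$ and $\{t_n\}$, use continuity to get $w\in\clR^\gamma$, and conclude via \Cref{lem:f-same-df-same}. The only cosmetic difference is that you verify $F(t_n)=G(t_n)$ by passing through $z_L$ and $z_L^\gamma$ via \Cref{lem:extz-z-equiv} and \Cref{lem:r-gamma-f-df}, whereas the paper invokes \Cref{lem:extz-extzgamma-equiv} directly (which is itself proved from those two lemmas); also, be aware that your definition of $G$ in the first paragraph forward-references the $\gamma$ constructed only in the second paragraph.
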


\begin{proof}
  Consider $l \in [L]$, $w \in \bbR^W$, and $j \in [W]$ stated above.
  We show the existence of $\gamma$ and $\{t_n\}_{n \in \bbN}$,
  and will omit the proof of the existence of $\gamma'$ and $\{t'_n\}_{n \in \bbN}$
  since the proof is almost identical. 
  
  First, we show that there is $\{t_n\}_{n \in \bbN} \subseteq (v_j, \infty)$
  such that $\lim_{n \to \infty} t_n = v_j$ and
  \begin{align}
    \label{eq:yyy-1}
    \big\{ (v_1, \ldots, v_{j-1}, t_n, v_{j+1}, \ldots, v_W) \;\big|\; n \in \bbN \big\} \subseteq \mcR^{\gamma}
    \qquad\text{for some $\gamma \in \Gamma$}.
  \end{align}
  By the definition of $\mcR^\gamma$, \Cref{eq:yyy-1} is equivalent to the following: for all $(l,i) \in \Idx$,
  \begin{align*}
    \big\{ f_{l,i}(t_n) \;\big|\; n \in \bbN \big\}
    \subseteq \mcI_{l,i}^{k}
    \qquad\text{for some $k \in [K_{l,i}]$},
  \end{align*}
  where $f_{l,i} : \bbR \to \bbR$ is defined as $f_{l,i}(t) \defeq y_{l,i}(v_1, \ldots, v_{j-1}, t, v_{j+1}, \ldots, v_W)$.
  Note that \Cref{lem:seq-img-same-intvl} is applicable to $(f_{l,i}, \smash{\{\mcI_{l,i}^k\}_{k \in [K_{l,i}]}}, v_j)$,
  since $\smash{\{\mcI_{l,i}^k\}_{k \in [K_{l,i}]}}$ is a finite cover of $\bbR$ for all $(l,i)$.
  Hence, by the lemma, there is $\{t_n\}_{n \in \bbN} \subseteq (v_j, \infty)$ such that $\lim_{n \in \infty} t_n = v_j$ and \Cref{eq:yyy-1} holds
  with some $\gamma \in \Gamma$.
  
  Next, we show that $w \in \smash{\clR^{\gamma}}$.
  By the definition of $\smash{\clR^\gamma}$, this is equivalent to
  $y_{l,i}(w) \in \smash{\cl(\mcI_{l,i}^{\gamma(l,i)})}$ for all $(l,i) \in \Idx$.
  To show this, let $(l,i) \in \Idx$.
  By \Cref{eq:yyy-1} and the definition of $\mcR^\gamma$, we have
  \begin{align}
    \label{eq:yyy-2}
    \big\{ y_{l,i}(v_1, \ldots, v_{j-1}, t_n, v_{j+1}, \ldots, v_W) \;\big|\; n \in \bbN \big\} \subseteq \mcI_{l,i}^{\gamma(l,i)}.
  \end{align}
  Using this, we obtain
  \begin{align*}
    y_{l,i}(w) &= \lim_{n \to \infty} y_{l,i}(v_1, \ldots, v_{j-1}, t_n, v_{j+1}, \ldots, v_W)
    \in \cl(\mcI_{l,i}^{\gamma(l,i)}),
  \end{align*}
  where the equality is from the continuity of $y_{l,i}$ (by \Cref{lem:yz-yz-gamma-good})
  and $\lim_{n \to \infty} t_n = v_j$ (by the above), and the inclusion is by \Cref{eq:yyy-2}.
  Hence, we have $w \in \smash{\clR^{\gamma}}$ as desired.
  
  Lastly, we show that
  \(
  \smash{\PDF{\ext{z}_{l}}{N_{l-1}+(j-W_{<l})}}(z_{l-1}(w), w_{l}, \ldots, w_L)
  = \smash{\PDF{\ext{z}_{l}^{\gamma}}{N_{l-1}+(j-W_{<l})}}(z_{l-1}(w), w_{l}, \ldots, w_L).
  \)
  To do so, define $g, g^\gamma: \bbR \to \bbR^{N_L}$ as:
  \begin{align*}
    g(t) &\defeq \ext{z}_l\big(
    z_{l-1}(w), v_{W_{<l}+1}, \ldots, v_{j-1}, t, v_{j+1}, \ldots, v_W
    \big),
    \\
    g^\gamma(t) &\defeq \ext{z}_l^\gamma\big(
    z_{l-1}(w), v_{W_{<l}+1}, \ldots, v_{j-1}, t, v_{j+1}, \ldots, v_W
    \big).
  \end{align*}
  Using them, we obtain the desired equation as follows:
  \begin{align*}
    \PDF{\ext{z}_l}{N_{l-1}+(j-W_{<l})}\big(z_{l-1}(w), w_{l}, \ldots, w_L\big)
    &= \DF{g}(v_j)
    = \DF{g^\gamma}(v_j) = \PDF{\ext{z}_l^\gamma}{N_{l-1}+(j-W_{<l})}\big(z_{l-1}(w), w_{l}, \ldots, w_L\big),
  \end{align*}
  where the first and third equalities are by the definition of partial derivatives,
  and the second equality comes from \Cref{lem:f-same-df-same} applied to $(g, g^{\gamma}, v_j, \{t_n\}_{n \in \bbN})$.
  Here \Cref{lem:f-same-df-same} is applicable due to the following:
  $g$ is differentiable at $v_j$ (as $\DF{g}(v_j) = \smash{\PDF{\ext{z}_l}{N_{l-1}+(j-W_{<l})}}(z_{l-1}(w), w_l, \ldots, w_L) \neq \bot$ by assumption);
  $g^{\gamma}$ is differentiable (as $\smash{\ext{z}_l^\gamma}$ is differentiable by \Cref{lem:extz-good});
  $\lim_{n \to \infty} t_n = v_j$ with $t_n \neq v_j$ (by the above);
  and $g(t_n) = g^{\gamma}(t_n)$ for all $n \in \bbN$ because
  \begin{align*}
    g(t_n)
    &= \ext{z}_l\big( z_{l-1}(w), v_{W_{<l}+1}, \ldots, v_{j-1}, t_n, v_{j+1}, \ldots, v_W  \big)
    \\
    &= \ext{z}_l\big( z_{l-1}(v_1, \ldots, v_{j-1}, t_n, v_{j+1}, \ldots, v_W), v_{W_{<l}+1}, \ldots, v_{j-1}, t_n, v_{j+1}, \ldots, v_W  \big)
    \\
    &= \ext{z}_l^\gamma\big( z_{l-1}(v_1, \ldots, v_{j-1}, t_n, v_{j+1}, \ldots, v_W), v_{W_{<l}+1}, \ldots, v_{j-1}, t_n, v_{j+1}, \ldots, v_W  \big)
    \\
    &= \ext{z}_l^\gamma\big( z_{l-1}(w), v_{W_{<l}+1}, \ldots, v_{j-1}, t_n, v_{j+1}, \ldots, v_W  \big)
    = g^{\gamma}(t_n),
  \end{align*}
  where the second and fourth lines use that $z_{l-1}$ depends only on its first $W_{<l}$ arguments and $j > W_{<l}$,
  and the third line is by \Cref{lem:extz-extzgamma-equiv} and \Cref{eq:yyy-1}.
  This completes the proof.
\end{proof}

\begin{lemma}
  \label{lem:djz-gamma-zero}
  Let $w = (w_1, \ldots, w_L) \in \bbR^W$ and $(l,i) \in \Idx$.
  Suppose the following hold: $z_L$ is differentiable at $w$;
  $\tau_l$ has bias parameters; $\sigma_{l,i}$ is not differentiable at $y_{l,i}(w)$;
  and for all $\gamma_1, \gamma_2 \in \Gamma$ with $w \in \smash{\clR^{\gamma_1}} \cap \smash{\clR^{\gamma_2}}$,
  \(
  \smash{\PDF{\ext{z}_{l+1}^{\gamma_1}}{i}}(z_l(w), w_{l+1}, \ldots, w_L)
  = \smash{\PDF{\ext{z}_{l+1}^{\gamma_2}}{i}}(z_l(w), w_{l+1}, \ldots, w_L).
  \)
  Then, for all $\gamma \in \Gamma$ with $w \in \clR^{\gamma}$,
  \begin{align*}
    \PDF{\ext{z}_{l+1}^{\gamma}}{i}\big(z_l(w), w_{l+1}, \ldots, w_L\big) = (0, \ldots, 0).
  \end{align*}
\end{lemma}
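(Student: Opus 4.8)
The plan is to exploit the bias parameter of $\tau_{l,i}$: perturbing it shifts only the single pre-activation value $y_{l,i}$ (leaving $z_{l-1}$ and the other $y_{l,i'}$ untouched), which lets us force the ``derivative of $z_L$ through $z_{l,i}$'' to be simultaneously consistent with \emph{both} analytic pieces of $\sigma_{l,i}$ adjacent to the non-differentiable point $c\defeq y_{l,i}(w)$; since those two pieces have distinct derivatives at $c$, that common value must be $\vec{0}$.

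Concretely, first I would fix $j\in[W]$ to be the global scalar index of the bias $v_i$ of $\tau_{l,i}$, so that (using $W_l\ge N_l$) $j=W_{<l}+W_l-N_l+i>W_{<l}$ for $W_{<l}\defeq W_1+\cdots+W_{l-1}$. Writing $w=(v_1,\dots,v_W)$ and letting $w^{(t)}$ be $w$ with its $j$-th scalar coordinate set to $t$: since $j>W_{<l}$, $z_{l-1}$ is unchanged along this perturbation and the additive bias structure gives $y_{l,i}(w^{(t)})=c+(t-v_j)$; moreover \Cref{lem:extz-z-equiv} gives $z_L(w^{(t)})=\ext{z}_l(z_{l-1}(w),v_{W_{<l}+1},\dots,v_{j-1},t,v_{j+1},\dots,v_W)$. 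As $z_L$ is differentiable at $w$, this shows $\ext{z}_l$ is differentiable with respect to its $m$-th argument, $m\defeq N_{l-1}+(j-W_{<l})$, at $x\defeq(z_{l-1}(w),w_l,\dots,w_L)$. Hence \Cref{lem:djz-gamma-exist-gen} applies and produces $\gamma_+,\gamma_-\in\Gamma$ with $w\in\clR^{\gamma_+}\cap\clR^{\gamma_-}$, sequences $t_n\to v_j$ with $t_n>v_j$ and $t'_n\to v_j$ with $t'_n<v_j$ such that $w^{(t_n)}\in\mcR^{\gamma_+}$ and $w^{(t'_n)}\in\mcR^{\gamma_-}$, and $\PDF{\ext{z}_l}{m}(x)=\PDF{\ext{z}_l^{\gamma_+}}{m}(x)=\PDF{\ext{z}_l^{\gamma_-}}{m}(x)$.

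Next, for $\gamma\in\{\gamma_+,\gamma_-\}$ I would compute $\PDF{\ext{z}_l^\gamma}{m}(x)$ by the chain rule on $\ext{z}_l^\gamma=\ext{z}_{l+1}^\gamma\circ\ext{\sigma}_l^\gamma\circ\ext{\tau}_l$ (all differentiable, the outer factor by \Cref{lem:extz-good}). The $m$-th argument of $\ext{\tau}_l$ is the bias of $\tau_{l,i}$, so it affects only the $i$-th output of $\tau_l$ (additively) and no pass-through coordinate; combining this with $\ext{\tau}_l(x)=(y_l(w),w_{l+1},\dots,w_L)$ and $\ext{\sigma}_l^\gamma(\ext{\tau}_l(x))=(z_l(w),w_{l+1},\dots,w_L)$ from \Cref{lem:exttau-extsigma-one-step} and the pointwise form of $\ext{\sigma}_l^\gamma$ yields
\[
\PDF{\ext{z}_l^\gamma}{m}(x)=\DF{\sigma_{l,i}^{\gamma(l,i)}}(c)\cdot\PDF{\ext{z}_{l+1}^\gamma}{i}(z_l(w),w_{l+1},\dots,w_L).
\]
By the lemma's hypothesis, $\PDF{\ext{z}_{l+1}^{\gamma_+}}{i}$ and $\PDF{\ext{z}_{l+1}^{\gamma_-}}{i}$ agree at $(z_l(w),w_{l+1},\dots,w_L)$; in fact this value, call it $D$, is common to \emph{all} $\gamma$ with $w\in\clR^\gamma$. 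Equating the two chain-rule expressions for $\gamma_+$ and $\gamma_-$ via the previous step gives $D\cdot\big(\DF{\sigma_{l,i}^{\gamma_+(l,i)}}(c)-\DF{\sigma_{l,i}^{\gamma_-(l,i)}}(c)\big)=\vec{0}$.

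Finally, and this is the crux, I would show the scalar $\DF{\sigma_{l,i}^{\gamma_+(l,i)}}(c)-\DF{\sigma_{l,i}^{\gamma_-(l,i)}}(c)$ is nonzero. Since $y_{l,i}(w^{(t_n)})=c+(t_n-v_j)\in\mcI_{l,i}^{\gamma_+(l,i)}$ for all $n$ with $t_n>v_j$ and $t_n\to v_j$, and $\mcI_{l,i}^{\gamma_+(l,i)}$ is an interval, convexity forces $(c,c+\varepsilon)\subseteq\mcI_{l,i}^{\gamma_+(l,i)}$ for some $\varepsilon>0$; thus $\sigma_{l,i}$ coincides with the differentiable $\sigma_{l,i}^{\gamma_+(l,i)}$ on $(c,c+\varepsilon)$, and since both are continuous at $c$ the right derivative of $\sigma_{l,i}$ at $c$ equals $\DF{\sigma_{l,i}^{\gamma_+(l,i)}}(c)$; symmetrically the left derivative equals $\DF{\sigma_{l,i}^{\gamma_-(l,i)}}(c)$. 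As $\sigma_{l,i}$ is non-differentiable at $c$ yet both one-sided derivatives exist, they must differ, so $D=\vec{0}$, i.e.\ $\PDF{\ext{z}_{l+1}^\gamma}{i}(z_l(w),w_{l+1},\dots,w_L)=\vec{0}$ for every $\gamma$ with $w\in\clR^\gamma$, as claimed. The main obstacle, besides bookkeeping the index $m$ through $\ext{\tau}_l$, $\ext{\sigma}_l^\gamma$, and $\ext{z}_{l+1}^\gamma$ in the chain rule, is this last inequality of one-sided derivatives — the single place where the non-differentiability hypothesis enters, and where the bias parameter is essential (it isolates $y_{l,i}$, so the perturbation cleanly crosses the kink of $\sigma_{l,i}$).
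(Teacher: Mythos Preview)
Your proposal is correct and follows essentially the same approach as the paper's proof: both isolate the bias coordinate $j=W_{<l}+W_l-N_l+i$, invoke \Cref{lem:djz-gamma-exist-gen} to obtain $\gamma_+,\gamma_-$ with right- and left-approaching sequences, derive the identical chain-rule factorization $\PDF{\ext{z}_l^\gamma}{m}(x)=\DF{\sigma_{l,i}^{\gamma(l,i)}}(c)\cdot\PDF{\ext{z}_{l+1}^\gamma}{i}(x')$ via \Cref{lem:exttau-extsigma-one-step}, and then argue that the one-sided derivatives of $\sigma_{l,i}$ at $c$ must differ using the interval structure of $\mcI_{l,i}^{\gamma_\pm(l,i)}$ and continuity. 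The only difference is expository order (you apply \Cref{lem:djz-gamma-exist-gen} before the chain-rule computation, the paper after), which is immaterial.
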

\begin{proof}
  Consider $w \in \bbR^W$ and $(l,i) \in \Idx$ satisfying the conditions in the lemma.
  First, we show that \[ \PDF{\ext{z}_l^\gamma}{N_{l-1}+(W_l-N_l+i)}(z_{l-1}(w), w_l, \ldots, w_L) =
    \PDF{\ext{z}_{l+1}^\gamma}{i}(z_l(w), w_{l+1}, \ldots, w_L)
    \cdot \DF{\sigma_{l,i}^{\gamma(l,i)}}(y_{l,i}(w)),
    \phantom{\Big)_{*}} \]
  for any $\gamma \in \Gamma$ with $w \in \smash{\clR^\gamma}$.
  To this end, we derive two derivatives: $$\big(\DF{\ext{\tau}_l}(x)\big)_{*,\, N_{l-1}+(W_l-N_l+i)}\quad\text{and}
    \quad
    \big(\DF{\ext{\sigma}_l^\gamma}(x')\big)_{*,\,i}.$$
  Since $\tau_l$ has bias parameters (by assumption), and by the definitions of $\ext{\tau}_l$ and $\smash{\ext{\sigma}_l^\gamma}$,
  we have the following: for all $\gamma \in \Gamma$ and $i' \in [N_l+W_{l+1}+\cdots+W_L]$,
  there is $\smash{\tau'_{l,i'}} : \bbR^{N_{l-1} + (W_l - N_l)} \to \bbR$ such that
  \begin{alignat*}{4}
    \ext{\tau}_{l,i'}(x)
    &=
    \begin{cases}
      \tau'_{l,i'}(x_1, \ldots, x_{N_{l-1}+(W_l-N_l)}) + x_{N_{l-1}+(W_l-N_l+i')}
      \\
      x_{N_{l-1}+(W_l-N_l+i')}
    \end{cases}
    &&
    \begin{array}{l}
      \text{if $i' \leq N_l$} \vphantom{\tau'_{l,i'}}
      \\[2pt]
      \text{if $i' > N_l$}, \vphantom{x_{N_{l-1}+(W_l-N_l+i')}}
    \end{array}
    \\
    \ext{\sigma}_{l,i'}^\gamma(x')
    &=
    \begin{cases}
      \sigma_{l,i'}^{\gamma(l,i')}(x'_{i'})
      \\
      x'_{i'}
    \end{cases}
    &&
    \begin{array}{l}
      \text{if $i' \leq N_l$} \vphantom{\sigma_{l,i'}^{\gamma(l,i')}(x'_{i'})}
      \\[2pt]
      \text{if $i' > N_l$}, \vphantom{x'_{i'}}
    \end{array}
  \end{alignat*}
  for all $x \in \bbR^{N_{l-1} + W_l + \cdots + W_L}$ and $x' \in \bbR^{N_{l} + W_{l+1} + \cdots + W_L}$.
  From this and $i\in[N_l]$, we obtain two derivatives: 
  \begin{align}
    \label{eq:tmp3-1}
    \big(\DF{\ext{\tau}_l}(x)\big)_{*,\, N_{l-1}+(W_l-N_l+i)} = e_i,
    \qquad
    \big(\DF{\ext{\sigma}_l^\gamma}(x')\big)_{*,\,i} = e_i \cdot \DF{\sigma_{l,i}^{\gamma(l,i)}}(x'_i),
  \end{align}
  where $e_i \in \bbR^{N_l + W_{l+1} + \cdots + W_L}$ denotes the standard unit vector with $1$ at the $i$-th coordinate,
  $\smash{\DF{\sigma_{l,i}^{\gamma(l,i)}}}(x'_i)$ is considered as a scalar value,
  and both equalities are by $i \in [N_l]$.
  Using this, we obtain the following equation for $x \defeq (z_{l-1}(w), w_l, \ldots, w_L)$
  and for any $\gamma \in \Gamma$ with $w \in \smash{\clR^\gamma}$:
  \begin{align}
    \nonumber
    \PDF{\ext{z}_l^\gamma}{N_{l-1}+(W_l-N_l+i)}(x)
    &=
    \Big(\DF{\ext{z}_l^\gamma}(x) \Big)_{*,\, N_{l-1}+(W_l-N_l+i)}
    \\ \nonumber
    &=
    \Big( \DF{\ext{z}_{l+1}^\gamma}((\ext{\sigma}_l^\gamma \circ \ext{\tau}_l)(x))
    \cdot \DF{\ext{\sigma}_l^\gamma}(\ext{\tau}_l(x)) \cdot \DF{\ext{\tau}_l}(x)\Big)_{*,\, N_{l-1}+(W_l-N_l+i)}
    \\ \nonumber
    &=
    \DF{\ext{z}_{l+1}^\gamma}((\ext{\sigma}_l^\gamma \circ \ext{\tau}_l)(x))
    \cdot \DF{\ext{\sigma}_l^\gamma}(\ext{\tau}_l(x)) \cdot \Big( \DF{\ext{\tau}_l}(x)\Big)_{*,\, N_{l-1}+(W_l-N_l+i)}
    \\ \nonumber
    &=
    \DF{\ext{z}_{l+1}^\gamma}((\ext{\sigma}_l^\gamma \circ \ext{\tau}_l)(x))
    \cdot \Big( \DF{\ext{\sigma}_l^\gamma}(\ext{\tau}_l(x)) \Big)_{*,\,i}
    \\ \nonumber
    &=
    \Big( \DF{\ext{z}_{l+1}^\gamma}((\ext{\sigma}_l^\gamma \circ \ext{\tau}_l)(x)) \Big)_{*,\,i}
    \cdot \DF{\sigma_{l,i}^{\gamma(l,i)}}(\ext{\tau}_{l,i}(x))
    \\ \label{eq:tmp3-2}
    &=
    \PDF{\ext{z}_{l+1}^\gamma}{i}(z_l(w), w_{l+1}, \ldots, w_L)
    \cdot \DF{\sigma_{l,i}^{\gamma(l,i)}}(y_{l,i}(w)),
    \phantom{\Big)_{*}}
  \end{align}
  where the first two lines use $\smash{\ext{z}_l^\gamma} = \smash{\ext{z}_{l+1}^\gamma} \circ \smash{\ext{\sigma}_l^\gamma} \circ \ext{\tau}_l$
  and that $\smash{\ext{z}_l^\gamma}$, $\smash{\ext{z}_{l+1}^\gamma}$, $\smash{\ext{\sigma}_l^\gamma}$, and $\ext{\tau}_l$
  are differentiable (by \Cref{lem:extz-good}),
  the fourth and fifth lines use \Cref{eq:tmp3-1},
  and the last line uses \Cref{lem:exttau-extsigma-one-step} with $w \in \smash{\clR^\gamma}$.
  
  Next, we derive a sufficient condition for the conclusion
  by using \Cref{eq:tmp3-2} and applying \Cref{lem:djz-gamma-exist-gen} to $(l,w,j)$ with $j \defeq W_{<l} + (W_l - N_l + i)$,
  where $W_{<l} \defeq W_1 + \cdots + W_{l-1}$.
  Note that the lemma is applicable here due to the following:
  $W_{<l} < j \leq W$, because $W_l \geq N_l$ (as $\tau_l$ has bias parameters by assumption)
  and $1 \leq i \leq N_l$ (as $(l,i) \in \Idx$);
  and $\ext{z}_l$ is differentiable with respect to its $(N_{l-1}+(W_l-N_l+i))$-th argument at $(z_{l-1}(w), w_l \ldots, w_L)$,
  because
  \[
  \smash{\PDF{\ext{z}_l}{N_{l-1}+(W_l-N_l+i)}}(z_{l-1}(w), w_l, \ldots, w_L) = \smash{\PDF{z_L}{W_{<l}+(W_l-N_l+i)}}(w) \neq \bot
  \]
  where the equality follows from that
  $z_L(w') = \ext{z}_l(z_{l-1}(\smash{w'_1}, \ldots, \smash{w'_{l-1}}, 0, \ldots, 0), \smash{w'_l}, \ldots, \smash{w'_L})$ for all $w' \in \bbR^W$
  by \Cref{lem:extz-z-equiv},
  and the inequality from that $z_L$ is differentiable at $w$ (by assumption).
  Let $(v_1, \ldots, v_W) \defeq w$ be the scalar values of $w$.
  By applying \Cref{lem:djz-gamma-exist-gen} to $(l,w,j)$, it holds that
  there are $\gamma_+, \gamma_- \in \Gamma$, $\{t^+_n\}_{n \in \bbN} \subseteq (v_j, \infty)$, and
  $\{t^-_n\}_{n \in \bbN} \subseteq (-\infty, v_j)$ such that $w \in \smash{\clR^{\gamma_+}} \cap \smash{\clR^{\gamma_-}}$ and
  \begin{gather}
    \nonumber
    \PDF{\ext{z}_{l}^{\gamma_+}}{N_{l-1}+(W_l-N_l+i)}(x)
    = \PDF{\ext{z}_{l}^{\gamma_-}}{N_{l-1}+(W_l-N_l+i)}(x),
    \\[0.25em]
    \label{eq:tmp3-3}
    \{(v_1, \ldots, v_{j-1}, t^+_n, v_{j+1}, \ldots, v_W)\}_{n \in \bbN}
    \subseteq \mcR^{\gamma_+},
    \qquad\quad
    \lim_{n \to \infty} t^+_n = v_j,
    \\
    \label{eq:tmp3-4}
    \{(v_1, \ldots, v_{j-1}, t^-_n, v_{j+1}, \ldots, v_W)\}_{n \in \bbN}
    \subseteq \mcR^{\gamma_-},
    \qquad\quad
    \lim_{n \to \infty} t^-_n = v_j,
  \end{gather}
  where $x \defeq (z_{l-1}(w), w_l, \ldots, w_L)$.
  By the first line and \Cref{eq:tmp3-2} with $w \in \smash{\clR^{\gamma_+}} \cap \smash{\clR^{\gamma_-}}$, we have
  \begin{align*}
    & \PDF{\ext{z}_{l+1}^{\gamma_+}}{i}(x')
    \cdot \DF{\sigma_{l,i}^{\gamma_+(l,i)}}(y_{l,i}(w))
    = \PDF{\ext{z}_{l+1}^{\gamma_-}}{i}(x')
    \cdot \DF{\sigma_{l,i}^{\gamma_-(l,i)}}(y_{l,i}(w)),
  \end{align*}
  where $x' \defeq (z_l(w), w_{l+1}, \ldots, w_L)$.
  From this, and since $\smash{\PDF{\ext{z}_{l+1}^{\gamma}}{i}}(x')$ is the same for all $\gamma \in \Gamma$
  with $w \in \smash{\clR^{\gamma}}$ (by assumption),
  we immediately obtain the conclusion (i.e.,
  $\smash{\PDF{\ext{z}_{l+1}^{\gamma}}{i}}(x') = (0, \ldots, 0)$ for all  $\gamma \in \Gamma$ with $w \in \smash{\clR^{\gamma}}$)
  if the following holds:
  \begin{align}
    \label{eq:tmp3-5}
    \DF{\sigma_{l,i}^{\gamma_+(l,i)}}(y_{l,i}(w)) \neq \DF{\sigma_{l,i}^{\gamma_-(l,i)}}(y_{l,i}(w)).
  \end{align}
  Hence, to prove the conclusion, it suffices to show \Cref{eq:tmp3-5}.

  Finally, we prove \Cref{eq:tmp3-5} in two steps.
  We first show that there are $\delta^+, \delta^- > 0$ such that
  \begin{align}
    \label{eq:tmp3-6}
    \big(y_{l,i}(w), y_{l,i}(w)+\delta^+\big) \subseteq \mcI_{l,i}^{\gamma_+(l,i)},
    \qquad
    \big(y_{l,i}(w)-\delta^-, y_{l,i}(w)\big) \subseteq \mcI_{l,i}^{\gamma_+(l,i)}.
  \end{align}
  Fix $j \defeq W_{<l} + (W_l - N_l + i)$ and $(v_1, \ldots, v_W) \defeq w$ as above.
  Observe that we have
  \begin{align*}
    y_{l,i}(v_1, \ldots, v_{j-1}, t^+_n, v_{j+1}, \ldots, v_W)
    &\in \mcI_{l,i}^{\gamma_+(l,i)}  \text{ for all $n \in \bbN$},
    \\
    y_{l,i}(v_1, \ldots, v_{j-1}, t^+_n, v_{j+1}, \ldots, v_W)
    &> y_{l,i}(v_1, \ldots, v_W) = y_{l,i}(w) \text{ for all $n \in \bbN$},
    \\
    \lim_{n \to \infty} y_{l,i}(v_1, \ldots, v_{j-1}, t^+_n, v_{j+1}, \ldots, v_W)
    &= y_{l,i}(v_1, \ldots, v_W) = y_{l,i}(w),
  \end{align*}
  where the first line uses \Cref{eq:tmp3-3},
  the third line uses \Cref{eq:tmp3-3} and that $y_{l,i}$ is continuous (by \Cref{lem:yz-yz-gamma-good}),
  and the second line uses the following and that $t^+_n > v_j$ for all $n \in \bbN$:
  for all $t \in \bbR$,
  \begin{align*}
    y_{l,i}(v_1, \ldots, v_{j-1}, t, v_{j+1}, \ldots, v_W)
    &= \tau'_{l,i}(z_{l-1}(w), v_{W_{<l}+1}, \ldots, v_{W_{<l}+(W_l-N_l)}) + t,
  \end{align*}
  which holds since $z_{l-1}$ depends only on its first $W_{<l}$ arguments,
  $\tau_l$ has bias parameters, and $j = W_{<l} + (W_l-N_l+i) > W_{<l}$.
  By these results, and since $\smash{\mcI_{l,i}^{\gamma_+(l,i)}}$ is an interval,
  there is $\delta^+ > 0$ satisfying \Cref{eq:tmp3-6};
  similarly, there is $\delta^->0$ satisfying \Cref{eq:tmp3-6},
  due to \Cref{eq:tmp3-4} and $t^-_n < v_j$ for all $n$.
  
  We next show that \Cref{eq:tmp3-5} indeed holds.
  By \Cref{eq:tmp3-6} and $\sigma_{l,i} = \smash{\sigma_{l,i}^{k}}$ on $\smash{\mcI_{l,i}^{k}}$ for all $k$,
  we have
  \begin{align*}
    \sigma_{l,i} = {\sigma_{l,i}^{\gamma_+(l,i)}} \text{ on } \big[ y_{l,i}(w), y_{l,i}(w)+\delta^+ \big),
    \qquad
    \sigma_{l,i} = {\sigma_{l,i}^{\gamma_-(l,i)}} \text{ on } \big( y_{l,i}(w)-\delta^-, y_{l,i}(w) \big],
  \end{align*}
  where the inclusion of $y_{l,i}(w)$ is by that $\sigma_{l,i}$ and $\smash{\sigma_{l,i}^k}$ are continuous for all $k$.
  From this, we have
  \begin{align*}
    \DF{\sigma_{l,i}^{\gamma_+(l,i)}}\big(y_{l,i}(w)\big)
    &=
    \lim_{h \to 0^+} \frac{1}{h}\Big(\sigma_{l,i}(y_{l,i}(w) + h) - \sigma_{l,i}(y_{l,i}(w))\Big),
    \\
    \DF{\sigma_{l,i}^{\gamma_-(l,i)}}\big(y_{l,i}(w)\big)
    &=
    \lim_{h \to 0^-} \frac{1}{h}\Big(\sigma_{l,i}(y_{l,i}(w) + h) - \sigma_{l,i}(y_{l,i}(w))\Big).
  \end{align*}
  Suppose here that \Cref{eq:tmp3-5} does not hold, i.e.,
  $\smash{\DF{\sigma_{l,i}^{\gamma_+(l,i)}}}(y_{l,i}(w)) = \smash{\DF{\sigma_{l,i}^{\gamma_-(l,i)}}}(y_{l,i}(w))$.
  Then, 
  \begin{align*}
    \lim_{h \to 0} \frac{1}{h}\Big(\sigma_{l,i}(y_{l,i}(w) + h) - \sigma_{l,i}(y_{l,i}(w))\Big)
    = \DF{\sigma_{l,i}^{\gamma_+(l,i)}}\big(y_{l,i}(w)\big) \neq \bot,
  \end{align*}
  where the inequality is by that $\smash{\sigma_{l,i}^{\gamma_+(l,i)}}$ is differentiable.
  This implies $\DF{\sigma_{l,i}}\big(y_{l,i}(w)\big) \neq \bot$,
  which contradicts to that $\sigma_{l,i}$ is non-differentiable at $y_{l,i}(w)$ (by assumption).
  Hence, \Cref{eq:tmp3-5} should hold.
\end{proof}

\subsection{\Cref{thm:inc-zero-bias} (Main Lemmas)}

\begin{lemma}
  \label{lem:djz-gamma-exist-spc}
  Let $w \in \bbR^W$ and $j \in [W]$.
  Suppose that $z_L$ is differentiable with respect to its $j$-th argument at $w$
  (i.e., $\PDF{z_L}{j}(w) \neq \bot$).
  Then, there is $\gamma \in \Gamma$ such that $w \in \clR^\gamma$ and
  \begin{align*}
    \PDF{z_L}{j}(w) = \PDF{z_L^\gamma}{j}(w).
  \end{align*}
\end{lemma}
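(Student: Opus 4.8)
The plan is to obtain this as the special case $l=1$ of \Cref{lem:djz-gamma-exist-gen}. The bridge between the two statements is the identity $\ext{z}_1\big(z_0(w'), w'_1, \dots, w'_L\big) = z_L(w')$, valid for all $w' \in \bbR^W$ by \Cref{lem:extz-z-equiv}, together with the fact that the zeroth-layer activations are constant: $z_0 = z_0^\gamma \equiv c$. Since $z_0(w')$ does not depend on $w'$, restricting both sides of this identity to the line $t \mapsto (v_1,\dots,v_{j-1},t,v_{j+1},\dots,v_W)$ through $w = (v_1,\dots,v_W)$ shows that differentiating $z_L$ with respect to its $j$-th scalar argument coincides with differentiating $\ext{z}_1$ with respect to its $(N_0 + j)$-th argument at $(z_0(w), w_1, \dots, w_L)$; the analogous statement holds verbatim for the $\gamma$-decorated functions, using $\ext{z}_1^\gamma\big(z_0^\gamma(w'), w'_1, \dots, w'_L\big) = z_L^\gamma(w')$ (also from \Cref{lem:extz-z-equiv}) and $z_0^\gamma = z_0$.

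With this in hand, I would first verify the hypotheses of \Cref{lem:djz-gamma-exist-gen} for $l=1$. Here $W_1 + \cdots + W_{l-1} = 0$, so the index requirement $j > W_{<l}$ reduces to $j \geq 1$, which holds since $j \in [W]$; and the required differentiability of $\ext{z}_1$ with respect to its $(N_0 + (j - 0))$-th $= (N_0 + j)$-th argument at $(z_0(w), w_1, \dots, w_L)$ is exactly the assumption $\PDF{z_L}{j}(w) \neq \bot$ transported through the identity above.

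Applying \Cref{lem:djz-gamma-exist-gen} then produces $\gamma \in \Gamma$ (together with an auxiliary sequence $\{t_n\}$ that we can discard here) such that $w \in \clR^\gamma$ and
\[
\PDF{\ext{z}_1}{N_0+j}\big(z_0(w), w_1, \dots, w_L\big) = \PDF{\ext{z}_1^\gamma}{N_0+j}\big(z_0(w), w_1, \dots, w_L\big).
\]
Transporting the left-hand side back via \Cref{lem:extz-z-equiv} gives $\PDF{z_L}{j}(w)$, and the right-hand side gives $\PDF{z_L^\gamma}{j}(w)$ (using $z_0^\gamma = z_0$), which is precisely the claimed equality. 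I do not anticipate a substantive obstacle: the only care needed is the index bookkeeping — that $N_{l-1} + (j - (W_1+\cdots+W_{l-1}))$ collapses to $N_0 + j$ when $l=1$ — and the observation that since $z_0$ and $z_0^\gamma$ are constant they play no role in the differentiation and need not be perturbed, so the full strength of \Cref{lem:djz-gamma-exist-gen} (which also tracks a perturbing sequence converging to $v_j$) is more than enough for the present claim.
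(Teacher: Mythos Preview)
Your proposal is correct and follows essentially the same approach as the paper: both reduce the claim to the $l=1$ case of \Cref{lem:djz-gamma-exist-gen} via the identity $z_L(w') = \ext{z}_1(z_0(w'), w')$ from \Cref{lem:extz-z-equiv} together with the constancy of $z_0 = z_0^\gamma$, then discard the auxiliary sequence produced by that lemma.
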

\begin{proof}
  Consider $w \in \bbR^W$ and $j \in [W]$ stated above.
  First, by \Cref{lem:extz-z-equiv}, and since $z_0 = \smash{z_0^\gamma}$ is a constant function,
  we have $z_L(w') = \ext{z}_1(z_0(0, \ldots, 0), w')$
  and $\smash{z_L^\gamma}(w') = \ext{z}_1^\gamma(z_0(0, \ldots, 0), w')$ for all $w' \in \bbR^W$ and $\gamma \in \Gamma$.
  From this, we have
  \begin{align*}
    \PDF{z_L}{j}(w) &= \PDF{\ext{z}_1}{N_0 + j}\big(z_0(0, \ldots, 0), w\big) = \PDF{\ext{z}_1}{N_0 + j}\big(z_0(w), w\big),
    \\
    \PDF{z_L^\gamma}{j}(w) &= \PDF{\ext{z}_1^\gamma}{N_0 + j}\big(z_0(0, \ldots, 0), w\big) = \PDF{\ext{z}_1^\gamma}{N_0 + j}\big(z_0(w), w\big)
    \quad\text{for all $\gamma \in \Gamma$,}
  \end{align*}
  where the second and fourth equalities follow from that $z_0$ is a constant function.
  Second, by \Cref{lem:djz-gamma-exist-gen} applied to $(l=1, w, j)$, 
  there is $\gamma \in \Gamma$ such that
  \[
  w \in \smash{\clR^{\gamma}},\qquad
  \PDF{\ext{z}_1}{N_0+j}\big(z_0(w), w\big) = \smash{\PDF{\ext{z}_1^{\gamma}}{N_0+j}}\big(z_0(w), w\big).
  \]
  Here \Cref{lem:djz-gamma-exist-gen} is applicable,
  because $\PDF{\ext{z}_1}{N_0 + j}(z_0(w), w) = \PDF{z_L}{j}(w) \neq \bot$ (by the above and by assumption).
  From these results, there is $\gamma \in \Gamma$
  such that $w \in \smash{\clR^{\gamma}}$ and $\PDF{z_L}{j}(w) = \smash{\PDF{z_L^\gamma}{j}}(w)$.
\end{proof}

\begin{lemma}
  \label{lem:djz-gamma-indep}
  Let $w \in \bbR^W$.
  Suppose that the following hold:
  \begin{itemize}
  \item $z_L$ is differentiable at $w$.
  \item For all $l \in [L]$, if $\tau_l$ does not have bias parameters,
  then $\sigma_{l,i}$ is differentiable at $y_{l,i}(w)$ for all $i \in [N_l]$.
  \end{itemize}
  Then, for all $ \gamma_1, \gamma_2 \in \Gamma$ with $w \in \smash{\clR^{\gamma_1}} \cap \smash{\clR^{\gamma_2}}$,
  \begin{align*}
    \DF{z_L^{\gamma_1}}(w) = \DF{z_L^{\gamma_2}}(w).
  \end{align*}
\end{lemma}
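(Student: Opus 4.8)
The plan is to prove, by reverse induction on $l \in \{L+1, L, \dots, 1\}$, the following strengthening of the conclusion: for all $\gamma_1,\gamma_2 \in \Gamma$ with $w \in \clR^{\gamma_1} \cap \clR^{\gamma_2}$,
$$\DF{\ext{z}_l^{\gamma_1}}\big(z_{l-1}(w), w_l, \dots, w_L\big) = \DF{\ext{z}_l^{\gamma_2}}\big(z_{l-1}(w), w_l, \dots, w_L\big).$$
The base case $l = L+1$ is immediate, since $\ext{z}_{L+1}^\gamma$ is the identity and does not depend on $\gamma$. Once the $l=1$ case is in hand, I would deduce the lemma exactly as in the proof of \Cref{lem:djz-gamma-exist-spc}: since $z_0^\gamma = z_0$ is constant, \Cref{lem:extz-z-equiv} gives $\PDF{z_L^\gamma}{j}(w) = \PDF{\ext{z}_1^\gamma}{N_0+j}(z_0(w), w)$ for each $j \in [W]$, so the $l=1$ case forces $\PDF{z_L^{\gamma_1}}{j}(w) = \PDF{z_L^{\gamma_2}}{j}(w)$ for all $j$, hence $\DF{z_L^{\gamma_1}}(w) = \DF{z_L^{\gamma_2}}(w)$ (both derivatives exist by \Cref{lem:yz-yz-gamma-good}).

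For the inductive step, fix $l \le L$, assume the claim for $l+1$, and set $x \defeq (z_{l-1}(w), w_l, \dots, w_L)$ and $x' \defeq (z_l(w), w_{l+1}, \dots, w_L)$. Since $\ext{z}_l^\gamma = \ext{z}_{l+1}^\gamma \circ \ext{\sigma}_l^\gamma \circ \ext{\tau}_l$ with all three factors differentiable (\Cref{lem:extz-good}), the chain rule together with \Cref{lem:exttau-extsigma-one-step} (applicable because $w \in \clR^\gamma$) gives
$$\DF{\ext{z}_l^\gamma}(x) = \DF{\ext{z}_{l+1}^\gamma}(x') \cdot \DF{\ext{\sigma}_l^\gamma}(\ext{\tau}_l(x)) \cdot \DF{\ext{\tau}_l}(x),$$
where $\DF{\ext{\tau}_l}(x)$ and $\ext{\tau}_l(x) = (y_l(w), w_{l+1}, \dots, w_L)$ are independent of $\gamma$. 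By the induction hypothesis, $D \defeq \DF{\ext{z}_{l+1}^\gamma}(x')$ is one and the same matrix for all admissible $\gamma$. Invoking the block form of $\DF{\ext{\sigma}_l^\gamma}$ computed inside the proof of \Cref{lem:djz-gamma-zero}, the $i$-th column of $D \cdot \DF{\ext{\sigma}_l^\gamma}(\ext{\tau}_l(x))$ equals $D_{*,i}$ for $i > N_l$ and $D_{*,i} \cdot \DF{\sigma_{l,i}^{\gamma(l,i)}}(y_{l,i}(w))$ for $i \in [N_l]$, so it suffices to show each of these columns is $\gamma$-independent.

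The columns with $i > N_l$ are $\gamma$-independent because $D$ is. For $i \in [N_l]$, I would split into two cases. If $\sigma_{l,i}$ is differentiable at $y_{l,i}(w)$, then $w \in \clR^\gamma$ and \Cref{def:minrep-sigma} force $\mcI_{l,i}^{\gamma(l,i)}$ to be a non-degenerate interval (a singleton piece would put $y_{l,i}(w) \in \ndf{\sigma_{l,i}}$), and since the analytic piece $\sigma_{l,i}^{\gamma(l,i)}$ agrees with $\sigma_{l,i}$ on $\cl(\mcI_{l,i}^{\gamma(l,i)}) \ni y_{l,i}(w)$, comparing one-sided derivatives shows $\DF{\sigma_{l,i}^{\gamma(l,i)}}(y_{l,i}(w)) = \DF{\sigma_{l,i}}(y_{l,i}(w))$, which is $\gamma$-independent. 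If $\sigma_{l,i}$ is non-differentiable at $y_{l,i}(w)$, then by the lemma's second hypothesis $\tau_l$ has bias parameters, so \Cref{lem:djz-gamma-zero} applies: its standing hypothesis — that $\PDF{\ext{z}_{l+1}^\gamma}{i}(x') = D_{*,i}$ is the same for all $\gamma$ with $w \in \clR^\gamma$ — is exactly the $i$-th column of the induction hypothesis, and the lemma yields $D_{*,i} = \vec{0}$, so the column vanishes regardless of $\gamma$. This closes the induction.

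The hard part will be setting up the strengthened induction hypothesis so that it is simultaneously strong enough to discharge the standing hypothesis of \Cref{lem:djz-gamma-zero} in the non-differentiable case and weak enough to be re-established in the step; a secondary nuisance is ruling out degenerate singleton pieces $\mcI_{l,i}^{\gamma(l,i)}$ and pushing equality of derivatives of analytic pieces through the interval endpoints.
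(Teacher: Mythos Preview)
Your proposal is correct and mirrors the paper's proof: the same reverse induction on $\DF{\ext{z}_l^\gamma}(z_{l-1}(w),w_l,\dots,w_L)$, the same chain-rule column analysis, and the same appeal to \Cref{lem:djz-gamma-zero} (whose standing hypothesis is precisely the $i$-th column of the induction hypothesis at level $l+1$). In the differentiable subcase your one-sided-derivative argument works, but in fact \Cref{def:minrep-sigma} forces $y_{l,i}(w)\in\intr(\mcI_{l,i}^{\gamma(l,i)})$ (any point of $\bd(\mcI_{l,i}^k)$ lies in $\ndf{\sigma_{l,i}}$), so the piece and $\sigma_{l,i}$ agree on a full neighborhood and the nuisance you flag disappears.
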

\begin{proof}
  Let $w \in \bbR^W$.
  Consider the following claim:
  for all $l \in [L+1]$ and $\gamma_1, \gamma_2 \in \Gamma$, if $w \in \clR^{\gamma_1} \cap \clR^{\gamma_2}$, then
  \begin{align*}
    \DF{\ext{z}_l^{\gamma_1}}\big( z_{l-1}(w), w_l, \ldots, w_L \big)
    &=
    \DF{\ext{z}_l^{\gamma_2}}\big( z_{l-1}(w), w_l, \ldots, w_L \big).
  \end{align*}
  Note that the claim implies the conclusion:
  for any $\gamma_1, \gamma_2 \in \Gamma$ with $w \in \smash{\clR^{\gamma_1} \cap \clR^{\gamma_2}}$, 
  \begin{align*}
    \DF{z_L^{\gamma_1}}(w)
    &= \Big( \DF{\ext{z}_1^{\gamma_1}}(z_{0}(0,\ldots,0), w) \Big)_{*,\, N_0+1: N_0+W}
    = \Big( \DF{\ext{z}_1^{\gamma_2}}(z_{0}(0,\ldots,0), w) \Big)_{*,\, N_0+1: N_0+W}
    = \DF{z_L^{\gamma_2}}(w),
  \end{align*}
  where the first and third equalities follow from that
  $\smash{z_L^{\gamma}}(w') = \smash{\ext{z}_1^{\gamma}}(z_0(0,\ldots,0), w')$ for all $\gamma \in \Gamma$ and $w' \in \bbR^W$
  (by \Cref{lem:extz-z-equiv} and since $z_0^\gamma=z_0$ is a constant function),
  and $\smash{\ext{z}_1^{\gamma}}$ is differentiable for all $\gamma \in \Gamma$
  (by \Cref{lem:extz-good});
  and the second equality is by the claim for $l=1$ and that $z_0$ is a constant function.
  We prove the claim by induction on $l$ (starting from $L+1$).

  \paragraph{\bf Case $l = L+1$.}
  The claim clearly holds, since $\smash{\ext{z}_{L+1}^\gamma}$ is the identity function for all $\gamma \in \Gamma$.
  
  \paragraph{\bf Case $l < L+1$.}
  To show the claim, we first analyze the derivatives mentioned in the claim.
  Let $\gamma \in \Gamma$ with $w \in \smash{\clR^\gamma}$,
  and consider any $x \in \bbR^{N_{l-1} + W_l + \cdots + W_L}$ and $x' \in \bbR^{N_l + W_{l+1} + \cdots + W_L}$.
  Recall the definition of $\smash{\ext{z}_l^\gamma}$ and $\smash{\ext{\sigma}_l^\gamma}$:
  for all $i \in [N_l+W_{l+1}+\cdots+W_L]$,
  \begin{align*}
    \ext{z}_l^\gamma(x)
    &= (\ext{z}_{l+1}^\gamma \circ \ext{\sigma}_l^\gamma \circ \ext{\tau}_l)(x),
    \qquad\quad
    \ext{\sigma}_{l,i}^\gamma(x')
    =
    \begin{cases}
      \sigma_{l,i}^{\gamma(l,i)}(x'_{i}) & \text{if $i \leq N_l$}
      \\
      x'_{i} & \text{if $i>N_l$}.
    \end{cases}
  \end{align*}
  Since every function in the RHS of the above equation is differentiable
  (by \Cref{lem:extz-good}),
  the following hold for all $i \in [N_l+W_{l+1}+\cdots+W_L]$:
  \begin{align}
    \label{eq:tmp2-1}
    \DF{\ext{z}_l^\gamma}(x)
    &= \DF{\ext{z}_{l+1}^\gamma}\big((\ext{\sigma}_l^\gamma \circ \ext{\tau}_l)(x)\big)
    \cdot \DF{\ext{\sigma}_l^\gamma}\big(\ext{\tau}_l(x)\big) \cdot \DF{\ext{\tau}_l}(x),
    \\ \nonumber
    \big(\DF{\ext{\sigma}_l^\gamma}(x')\big)_{*,\,i}
    &=
    \begin{cases}
      e_i \cdot \DF{\sigma_{l,i}^{\gamma(l,i)}}(x'_{i}) & \text{if $i \leq N_l$}
      \\
      e_i & \text{if $i>N_l$},
    \end{cases}
  \end{align}
  where the first line uses the chain rule,
  $e_i \in \bbR^{N_l+W_{l+1}+\cdots+W_L}$ denotes the standard unit vector with $1$ at the $i$-th coordinate,
  and $\smash{\DF{\sigma_{l,i}^{\gamma(l,i)}}}(x'_{i})$ is considered as a scalar value.
  By the second line, the following holds for all $i \in [N_l + W_{l+1}+\cdots+W_L]$:
  \begin{align}
    \label{eq:tmp2-2}
    \hspace{-2em}
    \begin{aligned}
      &\Big(\DF{\ext{z}_{l+1}^\gamma}((\ext{\sigma}_l^\gamma \circ \ext{\tau}_l)(x))
      \cdot \DF{\ext{\sigma}_l^\gamma}(\ext{\tau}_l(x)) \Big)_{*,\, i}
      \!=
      \PDF{\ext{z}_{l+1}^\gamma}{i}\big((\ext{\sigma}_l^\gamma \circ \ext{\tau}_l)(x)\big)
      \cdot
      \begin{cases}
        \DF{\sigma_{l,i}^{\gamma(l,i)}}\big( \ext{\tau}_{l,i}(x) \big) & \text{\!\!if $i \leq N_l$}
        \\
        1 & \text{\!\!if $i > N_l$}.
      \end{cases}
    \end{aligned}
    \hspace{-2em}
  \end{align}
  We can further simplify the two term in the RHS when $x= (z_{l-1}(w), w_l, \ldots, w_L)$, as follows:
  \begin{align}
    \label{eq:tmp2-3}
    \begin{aligned}
      \DF{\ext{z}_{l+1}^{\gamma}}\big( (\ext{\sigma}_l^\gamma \circ \ext{\tau}_l)(x) \big)
      &= \DF{\ext{z}_{l+1}^{\gamma}} (x'), 
      \quad
      \DF{\sigma_{l,i}^{\gamma(l,i)}}\big( \ext{\tau}_{l,i}(x) \big)
      = \DF{\sigma_{l,i}^{\gamma(l,i)}}\big(y_{l,i}(w)\big) \;\text{for all $i \in [N_l]$},
    \end{aligned}
  \end{align}
  where $x' \defeq (z_{l}(w), w_{l+1}, \ldots, w_L)$
  and both equalities are by \Cref{lem:exttau-extsigma-one-step} with $w \in \smash{\clR^\gamma}$.

  We now prove the claim.
  Let $\gamma_1, \gamma_2 \in \Gamma$ with $w \in \smash{\clR^{\gamma_1} \cap \clR^{\gamma_2}}$,
  and fix $x \defeq (z_{l-1}(w), w_l, \ldots, w_L)$ and $x' \defeq (z_{l}(w), w_{l+1}, \ldots, w_L)$.
  By induction hypothesis on $l+1$, we obtain
  \begin{align}
    \label{eq:tmp2-4}
    \DF{\ext{z}_{l+1}^{\gamma_1}}(x') 
    &= \DF{\ext{z}_{l+1}^{\gamma_2}}(x'). 
  \end{align}
  Since we want to show $\smash{\DF{\ext{z}_l^{\gamma_1}}}(x) = \smash{\DF{\ext{z}_l^{\gamma_2}}}(x)$,
  it suffices to show the following due to \Cref{eq:tmp2-1,eq:tmp2-2,eq:tmp2-3,eq:tmp2-4}:
  for all $i \in [N_l]$,
  \begin{align}
    \label{eq:tmp2-5}
    \begin{aligned}
      & \PDF{\ext{z}_{l+1}^{\gamma_1}}{i}(x') 
      \cdot \DF{\sigma_{l,i}^{\gamma_1(l,i)}}\big(y_{l,i}(w)\big)
      =
      \PDF{\ext{z}_{l+1}^{\gamma_1}}{i}(x') 
      \cdot \DF{\sigma_{l,i}^{\gamma_2(l,i)}}\big(y_{l,i}(w)\big).
    \end{aligned}
  \end{align}
  Let $i \in [N_l]$.
  We prove \Cref{eq:tmp2-5} by case analysis on $i$.

  {\it Subcase 1: $\sigma_{l,i}$ is non-differentiable at $y_{l,i}(w)$.}
  To show \Cref{eq:tmp2-5}, it suffices to show that
  \[
  \PDF{\ext{z}_{l+1}^{\gamma_1}}{i}(x') 
  = (0, \ldots, 0).
  \]
  We obtain this equation by applying \Cref{lem:djz-gamma-zero} to $(w, (l,i), \gamma_1)$.
  Note that the lemma is applicable here because:
  $z_L$ is differentiable at $w$ (by assumption);
  $\sigma_{l,i}$ is non-differentiable at $y_{l,i}(w)$ and so $\tau_l$ has bias parameters (by assumption);
  $\PDF{\smash{\ext{z}_{l+1}^{\gamma}}}{i}(x')$ is independent of $\gamma$
  for all $\gamma \in \Gamma$ with $w \in \smash{\clR^{\gamma}}$ (by induction hypothesis on $l+1$);
  and $w \in \smash{\clR^{\gamma_1}}$.

  {\it Subcase 2: $\sigma_{l,i}$ is differentiable at $y_{l,i}(w)$.}
  To show \Cref{eq:tmp2-5}, it suffices to show that for all $j \in [2]$,
  \begin{align}
    \label{eq:tmp2-6}
    \DF{\smash{\sigma_{l,i}^{\gamma_j(l,i)}}}\big(y_{l,i}(w)\big) = \DF{\sigma_{l,i}}\big(y_{l,i}(w)\big).
  \end{align}
  Let $j \in [2]$.
  If $y_{l,i}(w) \in \smash{\mcI_{l,i}^{\gamma_j(l,i)}}$, then we obtain \Cref{eq:tmp2-6} as follows:
  \[
  \DF{\sigma_{l,i}^{\gamma_j(l,i)}}\big(y_{l,i}(w)\big)
  = \adf{\sigma_{l,i}}\big(y_{l,i}(w)\big)
  = \DF{\sigma_{l,i}}\big(y_{l,i}(w)\big),
  \]
  where the first equality holds because $y_{l,i}(w) \in \smash{\mcI_{l,i}^{\gamma_j(l,i)}}$ and
  $\smash{\{(\sigma_{l,i}^{k}, \mcI_{l,i}^{k}) \}_{k \in [K_{l,i}]}}$ defines $\adf{\sigma_{l,i}}$;
  and the second equality holds because $\sigma_{l,i}$ is differentiable at $y_{l,i}(w)$
  and $\adf{\sigma_{l,i}}$ is an extended derivative of $\sigma_{l,i}$.
  If $y_{l,i}(w) \notin \smash{\mcI_{l,i}^{\gamma_j(l,i)}}$, then we obtain \Cref{eq:tmp2-6} directly from \Cref{lem:f-same-df-same}
  applied to $(\smash{\sigma_{l,i}^{\gamma_j(l,i)}}\!\!, \sigma_{l,i}, y_{l,i}(w))$.
  Note that the lemma is applicable here because:
  $\smash{\sigma_{l,i}^{\gamma_j(l,i)}}$ and $\sigma_{l,i}$ are differentiable at $y_{l,i}(w)$;
  they coincide on $\smash{\mcI_{l,i}^{\gamma_j(l,i)}}$;
  $y_{l,i}(w) \in \smash{\cl(\mcI_{l,i}^{\gamma(l,i)})}$ (by $w \in \smash{\clR^\gamma}$);
  and $\intr(\smash{\mcI_{l,i}^{\gamma(l,i)}}) \neq \emptyset$ (by $y_{l,i}(w) \notin \smash{\mcI_{l,i}^{\gamma(l,i)}}$).
  Therefore, \Cref{eq:tmp2-5} holds and this completes the proof.
\end{proof}

\subsection{\Cref{thm:inc-zero-bias} (Main Proof)}
\label{sec:pf-inc-zero-bias-main}

{\bf \Cref{thm:inc-zero-bias}.}
{\it
  If $z_L$ has bias parameters, then for all $w \in \bbR^W$ at which $z_L$ is differentiable,
  \begin{align*}
    \ADF{z_L}(w) = \DF{z_L}(w).
  \end{align*}
  This implies that $|\incM{z_L}| = 0$.
}
\begin{proof} 
  Let $w \in \bbR^W$ such that $z_L$ is differentiable at $w$ (i.e., $\DF{z_L}(w) \neq \bot$).
  By \Cref{lem:r-gamma-partition}, there is (unique) $\gamma \in \Gamma$ such that $w \in \mcR^\gamma$.
  Using the $\gamma$, we obtain the conclusion:
  \begin{align*}
    \DF{z_L}(w)
    &= \big[\, \PDF{z_L}{1}(w) \;\big|\; \cdots \;\big|\; \PDF{z_L}{W}(w) \,\big]
    \\
    &= \big[\, \PDF{z_L^{\gamma_1}}{1}(w) \;\big|\; \cdots \;\big|\; \PDF{z_L^{\gamma_W}}{W}(w) \,\big]
    \quad\text{for some $\gamma_j \in \Gamma$ with $w \in \clR^{\gamma_j}$ ($j \in [W]$)}
    \\
    &= \big[\, \PDF{z_L^{\gamma}}{1}(w) \;\big|\; \cdots \;\big|\; \PDF{z_L^{\gamma}}{W}(w) \,\big]
    \\
    &= \DF{z_L^{\gamma}}(w)
    = \ADF{z_L}(w).
  \end{align*}
  Here the second line uses \Cref{lem:djz-gamma-exist-spc} with that $z_L$ is differentiable at $w$.
  The third line uses \Cref{lem:djz-gamma-indep} with the following:
  $z_L$ is differentiable at $w$;
  $\tau_l$ has bias parameters for all $l \in [L]$ (by assumption);
  and $w \in \mcR^\gamma \subseteq \smash{\clR^\gamma}$ and $w \in \smash{\clR^{\gamma_j}}$ for all $j \in [W]$ (by the second line).
  The last line uses \Cref{lem:r-gamma-adf} with $w \in \mcR^\gamma$.
\end{proof}

\subsection{Lemmas (Technical: Part 3)}

\begin{lemma}
  \label{lem:can-avoid-touching-nal}
  Let $z_L$ be a neural network, $w \in \bbR^W$, and $A \subseteq \Idx$.
  Suppose that
  $w \notin \pbd( \{u \in \bbR^W |\, y_{l,i}(u) = c \})$ for all $(l,i) \in A$ and $c \in \ndf{\sigma_{l,i}}$.
  Then, there is a neural network $z'_L$
  (which consists of $\tau'_l$, $\sigma'_{l,i}$, $y'_l$, $z'_l$, and $\smash{\adf{\sigma'_{l,i}}}$) satisfying the following conditions:
  \begin{enumerate}[label=\protect\textcircled{\arabic*}]
  \item $\DF{z'_L}(w) = \DF{z_L}(w)$, $\ADF{z'_L}(w) = \ADF{z_L}(w)$, and $\tau'_l = \tau_l$ for all $l \in [L]$.
  \item $y'_{l,i}(w) \notin \ndf{\sigma'_{l,i}}$ for all $(l,i) \in A$.
  \end{enumerate}
\end{lemma}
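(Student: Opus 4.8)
The plan is to replace, for each index $(l,i)\in A$ whose pre-activation value $y_{l,i}(w)$ actually lands on a non-differentiability of $\sigma_{l,i}$, the activation $\sigma_{l,i}$ by a suitable affine function, while keeping every pre-activation function $\tau_l$ and every other activation function unchanged. The reason this is harmless is the hypothesis: for $(l,i)\in A$ either $y_{l,i}(w)\notin\ndf{\sigma_{l,i}}$, or $c\defeq y_{l,i}(w)\in\ndf{\sigma_{l,i}}$, in which case $w$ lies in $\{u\in\bbR^W\mid y_{l,i}(u)=c\}$ but not in its proper boundary, hence in its interior; so $y_{l,i}\equiv c$ on an open neighborhood of $w$.

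First I would set up the construction. Let $A'\defeq\{(l,i)\in A\mid y_{l,i}(w)\in\ndf{\sigma_{l,i}}\}$; for $(l,i)\in A'$ put $c_{l,i}\defeq y_{l,i}(w)$ and fix an open neighborhood $U_{l,i}\ni w$ on which $y_{l,i}\equiv c_{l,i}$. Define $z'_L$ by $\tau'_l\defeq\tau_l$ and $\adf{\tau'_{l,i}}\defeq\adf{\tau_{l,i}}$ for all $(l,i)$; $\sigma'_{l,i}\defeq\sigma_{l,i}$, $\adf{\sigma'_{l,i}}\defeq\adf{\sigma_{l,i}}$ for $(l,i)\notin A'$; and for $(l,i)\in A'$,
\[
  \sigma'_{l,i}(x)\defeq\sigma_{l,i}(c_{l,i})+\adf{\sigma_{l,i}}(c_{l,i})\cdot(x-c_{l,i}),
\]
with $\adf{\sigma'_{l,i}}\defeq\DF{\sigma'_{l,i}}$, the unique extended derivative of this (affine, hence differentiable) function. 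Each $\sigma'_{l,i}$ is analytic, continuous, and applied pointwise, so $z'_L$ is a neural network meeting the standing assumptions, and $\ndf{\sigma'_{l,i}}=\emptyset$ for $(l,i)\in A'$. Note also that $\sigma'_{l,i}(c_{l,i})=\sigma_{l,i}(c_{l,i})$ and $\adf{\sigma'_{l,i}}(c_{l,i})=\DF{\sigma'_{l,i}}(c_{l,i})=\adf{\sigma_{l,i}}(c_{l,i})$ by construction.

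Next I would verify \textcircled{1} and \textcircled{2}. For \textcircled{1}, I would first show by induction on $l$ that $y'_{l,j}=y_{l,j}$ and $z'_{l,j}=z_{l,j}$ on $U\defeq\bigcap_{(l,i)\in A'}U_{l,i}$ for every $(l,j)\in\Idx$: since $\tau'_l=\tau_l$, the inductive hypothesis $z'_{l-1}=z_{l-1}$ on $U$ gives $y'_{l,j}=y_{l,j}$ on $U$, and then $z'_{l,j}(u)=\sigma'_{l,j}(y'_{l,j}(u))$ equals $z_{l,j}(u)$ for $u\in U$ — trivially if $\sigma'_{l,j}=\sigma_{l,j}$, and otherwise because $(l,j)\in A'$ and on $U\subseteq U_{l,j}$ both sides equal the constant $\sigma_{l,j}(c_{l,j})$. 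Since $z'_L$ and $z_L$ agree on a neighborhood of $w$, we get $\DF{z'_L}(w)=\DF{z_L}(w)$ (including when both equal $\bot$). For $\ADF{z'_L}(w)=\ADF{z_L}(w)$, I would unfold $\ADF{\cdot}$ via the program semantics and prove by induction on $l$ that $\semad{\code{\ttP_{y_{l,i}}}}(w)$ and $\semad{\code{\ttP_{z_{l,i}}}}(w)$ are the same for $z_L$ and $z'_L$: every $\adf{\tau'_{l,i}}$ is applied at $(z'_{l-1}(w),w_l)=(z_{l-1}(w),w_l)$ and coincides there with $\adf{\tau_{l,i}}$, and every $\adf{\sigma'_{l,i}}$ is applied at $y'_{l,i}(w)=y_{l,i}(w)$ where it equals $\adf{\sigma_{l,i}}(y_{l,i}(w))$ — the only nontrivial case being $(l,i)\in A'$, handled by the last observation above. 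For \textcircled{2}: if $(l,i)\in A'$ then $\ndf{\sigma'_{l,i}}=\emptyset$; if $(l,i)\in A\setminus A'$ then $\sigma'_{l,i}=\sigma_{l,i}$ and $y'_{l,i}(w)=y_{l,i}(w)\notin\ndf{\sigma_{l,i}}$ by definition of $A'$.

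The hard part will be the propagation bookkeeping: one must be sure that changing several activation functions simultaneously does not perturb the pre-activation values at $w$ in later layers — which is exactly what the layerwise induction over $U$ delivers — and that it does not perturb the AD computation, which works only because the affine replacement was engineered to preserve both the function value $\sigma_{l,i}(c)$ and the AD-derivative $\adf{\sigma_{l,i}}(c)$ at the touched point $c$. Once these two invariants are in place, the remaining steps are routine unfolding of the definitions.
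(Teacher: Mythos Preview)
Your proposal is correct and uses the same key idea as the paper: replace each problematic $\sigma_{l,i}$ by the affine function $x\mapsto\sigma_{l,i}(c_{l,i})+\adf{\sigma_{l,i}}(c_{l,i})\cdot(x-c_{l,i})$, which preserves both the value and the AD-derivative at the touched point $c_{l,i}$. The only difference is organizational: the paper replaces activations one at a time via an induction on the count $f(z_L)$ of problematic triples, whereas you perform all replacements simultaneously and handle the propagation by a single layerwise induction over the common neighborhood $U=\bigcap_{(l,i)\in A'}U_{l,i}$; both arguments rely on the same invariant and yours is slightly more direct.
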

\begin{proof}
  Consider the setup given above.
  Define a function $f$ from neural networks to $\bbN$ as:
  \[
  f(z'_L) \defeq \Big| \Big\{ \, (l,i,c) \;\Big|\;
  (l,i) \in A,\, c \in \ndf{\sigma'_{l,i}},\,  w \in \intr\big(\{u \in \bbR^W |\, y'_{l,i}(u)=c \}\big)
  \, \Big \} \Big|.
  \]
  Note that $f(z_L) \in \bbN$ (i.e., $f(z_L)< \infty$),
  because $\sigma_{l,i}$ is continuous, piecewise-analytic and so $|\ndf{\sigma_{l,i}}| < \infty$ for all $(l,i) \in \Idx$
  (by \Cref{thm:int-deriv-basic}).
  The proof proceeds by induction on $f(z_L)$.

  \paragraph{\bf Case $f(z_L)=0$.}
  We claim that $z_L$ satisfies \textcircled{1}-\textcircled{2}.
  Clearly, it satisfies \textcircled{1}.
  Further, it also satisfies \textcircled{2}:
  by the assumption and $f(z_L)=0$, we have that for all $(l,i) \in A$ and $c \in \ndf{\sigma_{l,i}}$,
  \begin{align*}
    w \notin \pbd\big( \{u \in \bbR^W \mid y_{l,i}(u) = c \}\big) \cup \intr\big( \{u \in \bbR^W \mid y_{l,i}(u) = c \}\big)
    = \{u \in \bbR^W \mid y_{l,i}(u) = c \},
  \end{align*}
  which implies that $y_{l,i}(w) \neq c$.
  Hence, $y_{l,i}(w) \notin \ndf{\sigma_{l,i}}$ for all $(l,i) \in A$, as desired.
  
  \paragraph{\bf Case $f(z_L)>0$.}
  Since $f(z_L)>0$, there are $(l{},i{}) \in A$ and $c{} \in \ndf{\sigma_{l{},i{}}}$ such that
  \( w \in \intr(\{u \in \bbR^W |\, y_{l{},i{}}(u)=c{} \}). \)
  This implies that there is an open $U{} \subseteq \bbR^W$ such that $w \in U{}$ and $y_{l{},i{}}(u) = c{}$ for all $u \in U{}$.
  Let $\smash{z'_L}$ be the exactly same neural network as $z_L$
  except that it uses different $\smash{\sigma'_{l{},i{}}}$ and $\smash{\adf{\sigma'_{l{},i{}}}}$:
  \begin{align*}
    \sigma'_{l{},i{}}(x) \defeq \adf{\sigma_{l{},i{}}}(c{}) \cdot (x-c{}) + \sigma_{l{},i{}}(c{}),
    \qquad\quad
    \adf{\sigma'_{l{},i{}}}(x) \defeq \adf{\sigma_{l{},i{}}}(c{}).
  \end{align*}
  Note that $\sigma'_{l,i}$ and $\adf{\sigma'_{l,i}}$ satisfy the assumptions in \Cref{sec:nn}--\Cref{sec:ad}:
  the former is continuous and piecewise-analytic (since it is differentiable),
  and the latter is an extended derivative of the former
  (since the former is differentiable and $\smash{\adf{\sigma'_{l,i}}} = \smash{\DF{\sigma'_{l,i}}}$).
  Moreover, $\smash{z'_L}$ satisfies \textcircled{1}
  because $y_{l,i}(u) = c$ for all $u \in U$,
  and because $\smash{\sigma'_{l,i}}(c) = \sigma_{l,i}(c)$ and $\smash{\adf{\sigma'_{l,i}}}(c) = \smash{\adf{\sigma_{l,i}}}(c)$.
  Further, we have
  that $w \notin \pbd( \{u \in \bbR^W |\, \smash{y'_{l',i'}(u)} = c' \})$ for all $(l',i') \in A$ and all $c' \in \smash{\ndf{\sigma'_{l',i'}}}$,
  and that
  \[ f(z'_L) = f(z_L) -1, \]
  where both results follow from
  $\smash{\ndf{\sigma'_{l,i}}} = \emptyset$,
  $\smash{\ndf{\sigma'_{l',i'}}} = \ndf{\sigma_{l',i'}}$ for all $(l',i') \neq (l,i)$,
  and $\smash{y'_{l',i'}} = y_{l',i'}$ on $U$ for all $(l',i') \in \Idx$.
  Hence, we can apply induction to $z'_L$,
  and by induction hypothesis, there is a neural network $z''_L$ such that
  $(z''_L, z'_L)$ (instead of $(z'_L, z_L)$) satisfies \textcircled{1}-\textcircled{2}.
  From this, and since $(z'_L,z_L)$ satisfies \textcircled{1} (by the above),
  we conclude $(z''_L, z_L)$ satisfies \textcircled{1}-\textcircled{2}, as desired.
\end{proof}

\begin{lemma}
  \label{lem:inc-ubound-nobias-pre}
  We have
  \begin{align*}
    \incR{z_L} \subseteq \bigcup_{(l,i) \in \Idx} \, \bigcup_{c \in \ndf{\sigma_{l,i}} \cap S_l}
    \pbd\big(\{ w \in \bbR^W \;|\; y_{l,i}(w) = c \}\big),
  \end{align*}
  where $S_l \subseteq \bbR$ is defined by $S_l \defeq \emptyset$ if $\tau_l$ has bias parameters, and $S_l \defeq \bbR$ otherwise.
\end{lemma}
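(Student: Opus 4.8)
The plan is to reduce, via a surgery on the activation functions, to the bias‑parameter case already handled by \Cref{thm:inc-zero-bias}. First I would record the trivial simplification that layers with bias parameters contribute nothing to the right‑hand side: if $\tau_l$ has bias parameters then $S_l=\emptyset$, so $\ndf{\sigma_{l,i}}\cap S_l=\emptyset$, and it remains to show that every $w\in\incR{z_L}$ lies in $R\defeq\bigcup_{(l,i)\in A}\bigcup_{c\in\ndf{\sigma_{l,i}}}\pbd\big(\{u\in\bbR^W\mid y_{l,i}(u)=c\}\big)$, where $A\defeq\{(l,i)\in\Idx\mid \tau_l \text{ does not have bias parameters}\}$.

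So I would fix $w\in\incR{z_L}$ and suppose for contradiction that $w\notin R$, i.e.\ $w\notin\pbd(\{u\mid y_{l,i}(u)=c\})$ for all $(l,i)\in A$ and $c\in\ndf{\sigma_{l,i}}$. This is precisely the hypothesis of \Cref{lem:can-avoid-touching-nal} for this $w$ and this $A$, so that lemma produces a neural network $z'_L$ with the same pre‑activation functions ($\tau'_l=\tau_l$ for all $l$), with $\DF{z'_L}(w)=\DF{z_L}(w)$ and $\ADF{z'_L}(w)=\ADF{z_L}(w)$, and with $y'_{l,i}(w)\notin\ndf{\sigma'_{l,i}}$ for every $(l,i)\in A$. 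Since $\DF{z'_L}(w)=\DF{z_L}(w)\neq\bot$ and $\ADF{z'_L}(w)=\ADF{z_L}(w)\neq\DF{z_L}(w)=\DF{z'_L}(w)$, we get $w\in\incR{z'_L}$.

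The contradiction will come from showing $\ADF{z'_L}(w)=\DF{z'_L}(w)$. The key observation is that the proof of \Cref{thm:inc-zero-bias} invokes the ``has bias parameters'' assumption only to supply the hypotheses of \Cref{lem:djz-gamma-indep}, and those hypotheses hold for $z'_L$ at $w$: $z'_L$ is differentiable at $w$, and whenever $\tau'_l=\tau_l$ lacks bias parameters we have $(l,i)\in A$ for all $i\in[N_l]$, hence $y'_{l,i}(w)\notin\ndf{\sigma'_{l,i}}$, i.e.\ $\sigma'_{l,i}$ is differentiable at $y'_{l,i}(w)$. Running that proof verbatim for $z'_L$ --- take the unique $\gamma$ with $w\in\mcR^\gamma$ by \Cref{lem:r-gamma-partition}; match each $\PDF{z'_L}{j}(w)$ with $\PDF{(z'_L)^{\gamma_j}}{j}(w)$ for some $\gamma_j$ with $w\in\clR^{\gamma_j}$ by \Cref{lem:djz-gamma-exist-spc}; collapse all of these to $(z'_L)^\gamma$ by \Cref{lem:djz-gamma-indep} (applicable since $w\in\mcR^\gamma\subseteq\clR^\gamma$ and $w\in\clR^{\gamma_j}$); and conclude with \Cref{lem:r-gamma-adf} --- yields $\DF{z'_L}(w)=\DF{(z'_L)^\gamma}(w)=\ADF{z'_L}(w)$, contradicting $w\in\incR{z'_L}$. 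Hence $w\in R$, which is the claim.

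I expect the main obstacle to be exactly the last paragraph: one cannot use \Cref{thm:inc-zero-bias} as a black box, because $z'_L$ need not have bias parameters at every layer; instead one must notice that its \emph{proof} only needs the hypothesis of \Cref{lem:djz-gamma-indep}, which \Cref{lem:can-avoid-touching-nal} was designed to arrange. A secondary point to double‑check is that the internal appeal of \Cref{lem:djz-gamma-indep} to \Cref{lem:djz-gamma-zero} (which does require bias parameters) is triggered only at neurons where the activation is non‑differentiable at its pre‑activation value --- and for $z'_L$ such neurons necessarily sit at bias layers, so no circularity arises.
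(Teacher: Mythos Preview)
Your proposal is correct and follows essentially the same route as the paper: reduce to the contrapositive, apply \Cref{lem:can-avoid-touching-nal} with $A=\{(l,i)\in\Idx\mid \tau_l\text{ has no bias parameters}\}$ to obtain $z'_L$, and then rerun the proof of \Cref{thm:inc-zero-bias} for $z'_L$ using \Cref{lem:r-gamma-partition}, \Cref{lem:djz-gamma-exist-spc}, \Cref{lem:djz-gamma-indep}, and \Cref{lem:r-gamma-adf}. Your anticipated obstacles are exactly the ones the paper navigates, and your resolution matches theirs.
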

\begin{proof}
  Let $U \subseteq \bbR^W$ be the RHS of the above equation:
  \begin{align*}
    U & \defeq
    \bigcup_{\substack{l \in [L]: S_l = \bbR}} \,
    \bigcup_{i \in [N_l]} \, \bigcup_{c \in \ndf{\sigma_{l,i}}} \pbd\big( \{w \in \bbR^W \mid y_{l,i}(w) = c \}\big).
  \end{align*}
%
  Then, it suffices to show that for any $w \in \bbR^W$, $w \notin U$ implies $w \notin \incR{z_L}$.
  Consider any $w \in \bbR^W$ with $w \notin U$.
  We want to show $w \notin \incR{z_L}$.
  If $z_L$ is not differentiable at $w$, then $w \notin \incR{z_L}$ clearly holds by the definition of $\incR{-}$.
  Hence, assume that $z_L$ is differentiable at $w$.
  By the definition of $\incR{-}$, it suffices to show the following:
  \begin{align}
    \label{eq:lem:inc-ubound-nobias-pre-0}
    \ADF{z_L}(w) = \DF{z_L}(w).
  \end{align}
  We prove this in two steps.
  
  \paragraph{\bf Step 1.}
  Since $z_L$ at $w$ does not satisfy the assumption of \Cref{lem:djz-gamma-indep}
  (which we will apply to show \Cref{eq:lem:inc-ubound-nobias-pre-0}),
  we construct another neural network $z'_L$ that is identical to $z_L$ nearby $w$ while satisfying the assumption.
  To do so, we apply \Cref{lem:can-avoid-touching-nal} to $(z_L, w, A)$ with $A \defeq \{(l,i) \in \Idx \mid S_l = \bbR\}$.
  The lemma is applicable here, since
  $w \notin \pbd( \{v \in \bbR^W \mid y_{l,i}(v) = c \})$ for all $(l,i) \in A$ and $c \in \ndf{\sigma_{l,i}}$ (by $w \notin U$).
  Hence, by \Cref{lem:can-avoid-touching-nal}, we get a neural network $z'_L$
  (which consists of $\smash{\tau'_l}$, $\smash{\sigma'_{l,i}}$, $\smash{y'_l}$, $\smash{z'_l}$, and $\smash{\adf{\sigma'_{l,i}}}$) satisfying the following conditions:
  \begin{enumerate}[label=\protect\textcircled{\arabic*}]
  \item $\DF{z'_L}(w) = \DF{z_L}(w)$, $\ADF{z'_L}(w) = \ADF{z_L}(w)$, and $\tau'_l = \tau_l$ for all $l \in [L]$.
  \item $y'_{l,i}(w) \notin \ndf{\sigma'_{l,i}}$ for all $(l,i) \in A$.
  \end{enumerate}
  
  \paragraph{\bf Step 2.}
  We now prove \Cref{eq:lem:inc-ubound-nobias-pre-0} based on $z'_L$.
  Let $\Gamma'$, $\mcR'$, and $\clR'$ be the counterparts of $\Gamma$, $\mcR$, and $\clR$ for $\smash{z'_L}$.
  Then, by \Cref{lem:r-gamma-partition}, there is $\gamma' \in \Gamma'$ such that $w \in \smash{{\mcR'}^{\gamma'}}$.
  Using $z'_L$ and $\gamma'$, we obtain \Cref{eq:lem:inc-ubound-nobias-pre-0}:
  \begin{align*}
    \DF{z_L}(w)
    &= \DF{z'_L}(w)
    \\
    &= \big[\, \PDF{z'_L}{1}(w) \;\big|\; \cdots \;\big|\; \PDF{z'_L}{W}(w) \,\big]
    \\
    &= \big[\, \PDF{{z'_L}^{\gamma'_1}}{1}(w) \;\big|\; \cdots \;\big|\; \PDF{{z'_L}^{\gamma'_W}}{W}(w) \,\big]
    \quad\text{for some $\gamma'_j \in \Gamma'$ with $w \in {\clR'}^{\gamma'_j}$ ($j \in [W]$)}
    \\
    &= \big[\, \PDF{{z'_L}^{\gamma'}}{1}(w) \;\big|\; \cdots \;\big|\; \PDF{{z'_L}^{\gamma'}}{W}(w) \,\big]
    \\
    &= \DF{{z'_L}^{\gamma'}}(w)
    = \ADF{z'_L}(w)
    = \ADF{z_L}(w).
  \end{align*}
  Here the first and last lines use \textcircled{1} and \Cref{lem:r-gamma-adf} with $w \in \smash{{\mcR'}^{\gamma'}}$.
  The third line uses \Cref{lem:djz-gamma-exist-spc} with that $\smash{z'_L}$ is differentiable at $w$ (by \textcircled{1}).
  The fourth line uses \Cref{lem:djz-gamma-indep} with the following:
  $\smash{z'_L}$ is differentiable at $w$ (by \textcircled{1});
  for all $(l,i) \in \Idx$, if $\tau'_l$ does not have bias parameters,
  then $y'_{l,i}(w) \notin \ndf{\sigma'_{l,i}}$, i.e., $\sigma'_{l,i}$ is differentiable at $y'_{l,i}(w)$
  (by \textcircled{1} and \textcircled{2}); 
  and $w \in \smash{{\mcR'}^{\gamma'}} \subseteq \smash{{\clR'}^{\gamma'}}$ and $w \in \smash{{\clR'}^{\gamma'_j}}$
  for all $j \in [W]$ (by the third line).
\end{proof}

\subsection{\Cref{thm:inc-ubound-nobias} (Main Lemma)}

\begin{lemma}
  \label{lem:inc-ubound-nobias}
  Suppose that for every $l \in[L]$, one of the following holds:
  \begin{itemize}
    \item[(a)] $\tau_l$ has bias parameters, or
    \item[(b)] $\tau_l$ is well-structured biaffine.
  \end{itemize}
  In the case of (b), let $\phi_{l,i}$ be the partial map described in \Cref{lem:strong-bilinear} for all $i\in[N_{l}]$.
  Then,
  \begin{align*}
    \incM{z_L} \subseteq \bigcup_{(l,i) \in \Idx} \, \bigcup_{c \in A_{l,i}} B_{l,i}(c),
  \end{align*}
  where $\smash{A_{l,i}} \subseteq \bbR$ and $\smash{B_{l,i}(c)} \subseteq \Omega$  are defined as
  \begin{align*}
    A_{l,i} &\defeq
    \begin{cases}
      (\ndf{\sigma_{l,i}} \cap S_l)
      & \text{if $\tau_{l+1}$ satisfies the condition (a) or $l=L$} 
      \\
      (\ndf{\sigma_{l,i}} \cap S_l) \cup \bdz{\sigma_{l,i}}
      & \text{if $\tau_{l+1}$ satisfies the condition (b)},
    \end{cases}
    \\
    B_{l,i}(c) &\defeq
    \begin{cases}
      \{ w \in \Omega \;|\; y_{l,i}(w) = c  \}
      & \text{if $\tau_{l}$ satisfies the condition (a)}
      \\
      \{ w \in \Omega \;|\; y_{l,i}(w) = c  \land \bigvee_{j \in \dom(\phi_{l,i})} z_{l-1,\phi_{l,i}(j)}(w) \neq 0  \}
      & \text{if $\tau_{l}$ satisfies the condition (b)},
    \end{cases}
  \end{align*}
  and $S_l \subseteq \bbR$ is defined as $S_l \defeq \emptyset$ if $\tau_l$ has bias parameters, and $S_l \defeq \bbR$ otherwise.
\end{lemma}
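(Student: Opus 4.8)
The plan is to obtain this lemma by composing the two bounds already established: the inclusion for the incorrect set over $\bbR^W$ in \Cref{lem:inc-ubound-nobias-pre}, and the structural reduction of proper boundaries of pre-activation level sets in \Cref{lem:pbd-bound-nobias-aux}. The former contributes the restriction to $c \in \ndf{\sigma_{l,i}} \cap S_l$; the latter converts each $\pbd(\{y_{l,i} = c\})$ into the sets $B'_{l,i}(c')$ while simultaneously enlarging the index sets by the boundary-of-zero-set terms $\bdz{\sigma_{l,i}}$ exactly when $\tau_{l+1}$ is well-structured biaffine. Intersecting with $\Omega$ at the end turns the $\bbR^W$-level statement into the desired $\Omega$-level one.

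Concretely, first I would record that $\incM{z_L} = \incR{z_L} \cap \Omega$, which is immediate from \Cref{def:ndf-inc} and the definition of $\incR{-}$. Next I would apply \Cref{lem:inc-ubound-nobias-pre} to get
\[
  \incR{z_L} \subseteq \bigcup_{(l,i)\in\Idx}\ \bigcup_{c\in\ndf{\sigma_{l,i}}\cap S_l} \pbd\big(\{w\in\bbR^W \mid y_{l,i}(w)=c\}\big),
\]
noting that the definition of $S_l$ in \Cref{lem:inc-ubound-nobias-pre} is literally the one in the present statement. Then I would invoke \Cref{lem:pbd-bound-nobias-aux} with the choice $A_{l,i} \defeq \ndf{\sigma_{l,i}}\cap S_l$ for all $(l,i)\in\Idx$; this is legitimate because, by hypothesis, every $\tau_l$ satisfies condition (a) or (b) of that lemma, and when (b) holds the relevant partial map is precisely the $\phi_{l,i}$ of \Cref{lem:strong-bilinear}, matching our setup. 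This yields
\[
  \bigcup_{(l,i)\in\Idx}\ \bigcup_{c\in\ndf{\sigma_{l,i}}\cap S_l} \pbd\big(\{w\in\bbR^W \mid y_{l,i}(w)=c\}\big)\ \subseteq\ \bigcup_{(l,i)\in\Idx}\ \bigcup_{c'\in A'_{l,i}} B'_{l,i}(c'),
\]
with $A'_{l,i}$ and $B'_{l,i}(c')$ as defined in \Cref{lem:pbd-bound-nobias-aux} from the above choice of $A_{l,i}$.

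Finally, I would chain the three facts — $\incM{z_L}=\incR{z_L}\cap\Omega$, the inclusion from \Cref{lem:inc-ubound-nobias-pre}, and the inclusion from \Cref{lem:pbd-bound-nobias-aux} — then distribute the intersection with $\Omega$ over the union, and check that the resulting index sets and parameter sets are exactly those named in the statement. With input $A_{l,i}=\ndf{\sigma_{l,i}}\cap S_l$, the set $A'_{l,i}$ from \Cref{lem:pbd-bound-nobias-aux} equals $\ndf{\sigma_{l,i}}\cap S_l$ when $\tau_{l+1}$ satisfies (a) or $l=L$, and equals $(\ndf{\sigma_{l,i}}\cap S_l)\cup\bdz{\sigma_{l,i}}$ when $\tau_{l+1}$ satisfies (b), matching $A_{l,i}$ here; and $B'_{l,i}(c')\cap\Omega$ equals $\{w\in\Omega \mid y_{l,i}(w)=c'\}$ when $\tau_l$ satisfies (a), and equals $\{w\in\Omega \mid y_{l,i}(w)=c' \land \bigvee_{j\in\dom(\phi_{l,i})} z_{l-1,\phi_{l,i}(j)}(w)\neq 0\}$ when $\tau_l$ satisfies (b), matching $B_{l,i}(c')$ here. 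I do not expect a substantive obstacle in this argument: all the real content lives in \Cref{lem:inc-ubound-nobias-pre} (which itself rests on \Cref{lem:djz-gamma-indep} and \Cref{lem:can-avoid-touching-nal}) and in \Cref{lem:pbd-bound-nobias-aux}. The only point demanding care is purely bookkeeping — confirming that the $S_l$ appearing in the cited lemmas agrees with the one in the statement, and that the two-case split on $\tau_l$ versus $\tau_{l+1}$ in the definitions of $B$ and $A$ is threaded correctly through the composition.
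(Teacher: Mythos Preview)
Your proposal is correct and follows essentially the same route as the paper: chain \Cref{lem:inc-ubound-nobias-pre}, then \Cref{lem:pbd-bound-nobias-aux} (with input $A_{l,i}=\ndf{\sigma_{l,i}}\cap S_l$), then intersect with $\Omega$ via $\incM{z_L}=\incR{z_L}\cap\Omega$. Your write-up is in fact more explicit than the paper's one-line proof about the bookkeeping that identifies $A'_{l,i}$ and $B'_{l,i}(c')\cap\Omega$ with the $A_{l,i}$ and $B_{l,i}(c)$ of the statement.
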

\begin{proof}
  We obtain the conclusion
  by chaining \Cref{lem:inc-ubound-nobias-pre}, 
  \Cref{lem:pbd-bound-nobias-aux}
  (which is applicable by the assumption on $\tau_l$), 
  and $\ndfM{z_L} \cup \incM{z_L} = \big(\ndfR{z_L} \cup \incR{z_L}\big) \cap \Omega$.
\end{proof}

\subsection{\Cref{thm:inc-ubound-nobias} (Main Proof)}

{\bf \Cref{thm:inc-ubound-nobias}.}
{\it
  If $\tau_l$ either has bias parameters or is well-structured biaffine for all $l\in[L]$, then
  \begin{align*}
    \frac{|\incM{z_L}|}{|\Omega|}
    &\leq \frac{1}{|\bbM|} \smash{\sum_{(l,i) \in \Idx}}
    \Big| \big(\ndf{\sigma_{l,i}} \cap S_l\big)
    \cup \big(\bdz{\sigma_{l,i}} \cap S_{l+1} \big) \Big|,
  \end{align*}
  where $S_l \subseteq \bbR$ is defined by 
  \begin{align*}
    S_l \defeq
    \begin{cases}
      \emptyset~~ & \text{if $l > L$ or $\tau_l$ has bias parameters}
      \\ \bbR~~ & \text{otherwise}.
    \end{cases}
  \end{align*}
}
\begin{proof} 
  Observe that
  \begin{align}
    \label{eq:thm:inc-ubound-nobias-1}
    \incM{z_L}
    &\subseteq \bigcup_{(l,i) \in \Idx} \, \bigcup_{c \in A_{l,i}} B_{l,i}(c),
    \qquad\quad
    {|B_{l,i}(c)|} \leq {|\bbM|^{W-1}},
  \end{align}
  where $S_l \subseteq \bbR$, $\smash{A_{l,i}} \subseteq \bbR$ and $\smash{B_{l,i}(c)} \subseteq \Omega$ for $l \in [L]$
  are defined as in \Cref{lem:inc-ubound-nobias}.
  Here the first equation is by \Cref{lem:inc-ubound-nobias}
  and the second equation is by \Cref{lem:sol-count-bias,lem:sol-count-nobias},
  where these lemmas are applicable by the definition of $B_{l,i}(c)$ and
  because 
  $\tau_l$ either has bias parameters or is well-structured biaffine (by assumption).
  Observe further that
  \begin{align}
    \label{eq:thm:inc-ubound-nobias-2}
    A_{l,i} &= (\ndf{\sigma_{l,i}} \cap S_l) \cup (\bdz{\sigma_{l,i}} \cap S_{l+1}),
  \end{align}
  by the definition of $A_{l,i}$ and $S_l$,
  where we use $S_{L+1} \defeq \emptyset$.
  Combining the above observations, we obtain the conclusion:
  \begin{align*}
    \frac{|\incM{z_L}|}{|\Omega|}
    &\leq \sum_{(l,i) \in \Idx} \, \sum_{c \in A_{l,i}} \frac{|B_{l,i}(c)|}{|\Omega|}
    \leq \sum_{(l,i) \in \Idx} \big|(\ndf{\sigma_{l,i}} \cap S_l) \cup (\bdz{\sigma_{l,i}} \cap S_{l+1})\big| \cdot \frac{|\bbM|^{W-1}}{|\bbM|^W}, 
  \end{align*}
  where the first inequality uses \Cref{eq:thm:inc-ubound-nobias-1}
  and the second inequality uses \Cref{eq:thm:inc-ubound-nobias-1,eq:thm:inc-ubound-nobias-2}.
\end{proof}


\clearpage
\section{Lower Bounds on $|\ndfM{z_L}|$ and $|\incM{z_L}|$}
\label{sec:pf-lowerbd}

\subsection{\Cref{thm:ndf-lbound-bias} (Main Proof)}
\label{sec:pf-lowerbd-ndf-bias}

{\bf \Cref{thm:ndf-lbound-bias}.}
{\it
  For any $\bbM \subseteq \bbR$ and $n, \alpha \in \bbN$ with $1 \leq |\bbM| < \infty$, $n \geq 2$, and $\alpha \leq |\bbM|/(n-1)$,
  there is a neural network $z_L : \bbR^W \to \bbR$ that satisfies
  \begin{align*}
    \frac{|\ndfM{z_L}|}{|\Omega|}
    \geq \frac{1}{2} \cdot \frac{1}{|\bbM|} \sum_{(l,i) \in \Idx} | \ndf{\sigma_{l,i}} |
  \end{align*}
  and the following: $z_L$ has bias parameters, it has $n+1$ neurons,
  and $|\ndf{\sigma_{1,i}}| = \alpha$ for all $i \in [N_1]$.
}
\begin{proof}
  Consider any $\bbM \subseteq \bbR$ and $n, \alpha \in \bbN$ that satisfy the assumption.
  We claim that there is a neural network $z_L$ that has $L = 2$ layers, $N = n+1$ neurons, and $W = n+1$ parameters,
  and satisfies the given inequality.

  We first define a few components to be used in the network.
  Let $\{x_1, \ldots, x_\alpha\} \subseteq \bbM$ be distinct machine-representable numbers,
  and $h : \bbR \to \bbR$ be a continuous, piecewise-analytic function such that $\ndf{h} = \{x_1, \ldots, x_\alpha\}$.
  Note that such $x_j$ always exists since $|\bbM| \geq \alpha$ (by assumption).
  Using $h$, define a function $f: \bbR^W \to \bbR$ as
  \begin{align*}
    f(w) &= w_{n+1} + \sum_{i \in [n]} h(w_i).
  \end{align*}
  We assume here (and in the rest of the proof) that $w \in \bbR^W$ is represented as $w = (w_1, \ldots, w_W)$ for $w_i \in \bbR$
  (instead of $w = (w_{1,1}, w_{1,2}, \ldots, w_{L,W_L})$ with $w_{l,j} \in \bbR$ as we assumed so far).
  
  Given these, we construct a neural network $z_L : \bbR^W \to \bbR$ that is essentially the same as $f$, as follows
  \begin{align*}
    z_0(w) &= 0 \in \bbR,
    \\
    y_1(w) &= (w_1, \ldots, w_n) \in \bbR^n,
    &
    z_1(w) &= (h(w_1), \ldots, h(w_n)) \in \bbR^n,
    \\
    y_2(w) &= f(x) \in \bbR,
    &
    z_2(w) &= f(x) \in \bbR.
  \end{align*}
  Then, $z_L$ has $2$ layers, $n+1$ neurons, and $n+1$ parameters, and $|\ndf{\sigma_{1,i}}| = |\ndf{h}| = \alpha$ for all $i$.
  Also, we can easily make all $\tau_l$ have bias parameters
  (e.g., by using $\tau_1(x,w_1, \ldots, w_n) = (x + w_1, \ldots, x + w_n)$).
  What remains is to prove that $z_L$ satisfies the inequality in the conclusion.
  To do so, observe that
  \begin{align*}
    \ndfM{z_L}
    & \supseteq \{ w \in \Omega \mid w_i \in \ndf{h} \text{ for some } i \in [n] \}
    \\
    & = \Omega \setminus \{ w \in \Omega \mid w_i \notin \ndf{h} \text{ for all } i \in [n] \},
  \end{align*}
  which follows from the definition of $f$ and $\ndf{h} \subseteq \bbM$.
  From this, we have
  \begin{align*}
    \frac{|\ndfM{z_L}|}{|\Omega|}
    & \geq \frac{1}{|\bbM|^{n+1}} \Big(|\bbM|^{n+1} - |\bbM| \cdot (|\bbM|-\alpha)^{n} \Big)
    = 1 - \Big(1 - \frac{\alpha}{|\bbM|}\Big)^{n}
    \\
    & \geq 1 - \Big( 1 - n \frac{\alpha}{|\bbM|} + \frac{1}{2} n(n-1) \Big(\frac{\alpha}{|\bbM|}\Big)^2 \Big)
    = \frac{n \alpha}{|\bbM|} \Big( 1 - \frac{n-1}{2} \frac{\alpha}{|\bbM|} \Big)
    \\
    & \geq \frac{1}{2} \cdot \frac{n \alpha}{|\bbM|},
  \end{align*}
  where the first inequality uses $\ndf{h} \subseteq \bbM$ and $|\ndf{h}| = \alpha$,
  the second inequality follows from $(1-x)^n \leq 1 -nx + \frac{1}{2} n(n-1) x^2$ (for any $x \leq 1$ and $n \in \bbN$)
  and $\alpha \leq |\bbM|$,
  and the third inequality is by the assumption that $\alpha \leq |\bbM|/(n-1)$.
  By combining this result and
  \begin{align*}
    \frac{1}{|\bbM|} \sum_{(l,i) \in \Idx} |\ndf{\sigma_{l,i}}|
    = \frac{n \alpha}{|\bbM|},
  \end{align*}
  we obtain the desired inequality.
\end{proof}

\subsection{\Cref{thm:ndf-lbound-nobias} (Main Proof)}
\label{sec:pf-lowerbd-ndf-nobias}

{\bf \Cref{thm:ndf-lbound-nobias}.}
{\it
  For any $\bbM \subseteq \bbR$ and $n, \alpha \in \bbN$ with $1 \leq |\bbM| < \infty$, $n \geq 4$, and $\alpha \leq |\bbM|/(n-1)$,
  there is a neural network $z_L : \bbR^W \to \bbR$ that satisfies
  \begin{align*}
    &\frac{|\ndfM{z_L}|}{|\Omega|}
    \geq \frac{1}{9} \cdot \frac{1}{|\bbM|} {\sum_{(l,i) \in \Idx}}
    \Big| \ndf{\sigma_{l,i}} \cup \bdz{\sigma_{l,i}} \Big|
  \end{align*}
  and the following:
  (i) $\tau_l$ is well-structured biaffine without bias parameters for all $l<L$, and has bias parameters for $l=L$; 
  (ii) $z_L$ has $n \,{+}\, 1$ neurons;
  and (iii) $|\ndf{\sigma_{1,i}}| \,{=}\, \alpha$, $|\bdz{\sigma_{1,i}}| \,{=}\, 0$ for all $i$. 
  We get the same result for (i), (ii'), and (iii'):
  (ii') $z_L$ has $2n \,{+}\,1$ neurons;
  and (iii') $|\ndf{\sigma_{1,i}}| \,{=}\, 0$, $|\bdz{\sigma_{1,i}}| \,{=}\, \alpha$ for all~$i$. 
}
\begin{proof}
  We prove the two cases (one for (i), (ii), (iii), and the other for (i), (ii'), (iii')) as follows.
  Consider any $\bbM \subseteq \bbR$ and $n, \alpha \in \bbN$ that satisfy the assumption.
  Let $\{x_1, \ldots, x_\alpha\} \subseteq \bbM$ be distinct machine-representable numbers;
  such $x_j$ always exists since $|\bbM| \geq \alpha$ (by assumption).
  In the rest of the proof, we assume that $w \in \bbR^W$ is represented as $w = (w_1, \ldots, w_W)$ for $w_i \in \bbR$,
  as in the proof of \Cref{thm:ndf-lbound-bias} (see \Cref{sec:pf-lowerbd-ndf-bias}).
  
  {\bf First case.}
  Let $W = n+1$ and $h : \bbR \to \bbR$ be a continuous, piecewise-analytic function
  such that $\ndf{h} = \{x_1, \ldots, x_\alpha\}$ and $h(x) > 0$ for all $x \in \bbR$.
  Using this $h$, define a function $f: \bbR^{W} \to \bbR$ as
  \begin{align*}
    f(w) &= w_{n+1} + \sum_{i \in [n]} h(w_i).
  \end{align*}
  We now construct a neural network $z_L : \bbR^W \to \bbR$ that is essentially the same as $f$, as follows:
  \begin{align*}
    z_0(w) &= 1 \in \bbR,
    \\
    y_1(w) &= (w_1, \ldots, w_n) \in \bbR^n,
    &
    z_1(w) &= (h(w_1), \ldots, h(w_n)) \in \bbR^n,
    \\
    y_2(w) &= f(x) \in \bbR,
    &
    z_2(w) &= f(x) \in \bbR.
  \end{align*}
  Then, $z_L$ has $L=2$ layers, $N=n+1$ neurons, and $W=n+1$ parameters,
  and $|\ndf{\sigma_{1,i}}| = |\ndf{h}| = \alpha$ and $|\bdz{\sigma_{1,i}}| = |\bdz{h}| = 0$ for all $i$.
  Also, we can easily make $\tau_l$ be well-structured biaffine without bias parameters for all $l < L$,
  and make $\tau_L$ have bias parameters (e.g., by using $\tau_1(x,w_1, \ldots, w_n) = (x \cdot w_1, \ldots, x \cdot w_n)$).
  This shows that (i), (ii), and (iii) are satisfied.
  
  What remains is to prove that $z_L$ satisfies the inequality in the conclusion.
  To do so, observe that
  \begin{align*}
    \ndfM{z_L}
    & \supseteq \{ w \in \Omega \mid w_i \in \ndf{h} \text{ for some } i \in [n] \},
  \end{align*}
  which follows from the definition of $f$ and $\ndf{h} \subseteq \bbM$.
  From this, we have
  \begin{align*}
    \frac{|\ndfM{z_L}|}{|\Omega|}
    & \geq \frac{1}{2} \cdot \frac{n \alpha}{|\bbM|},
  \end{align*}
  as shown in the proof of \Cref{thm:ndf-lbound-bias} (see \Cref{sec:pf-lowerbd-ndf-bias}).
  Here we used $\ndf{h} \subseteq \bbM$ and $|\ndf{h}| = \alpha$,
  as well as $\alpha \leq |\bbM|/(n-1)$ and $n \geq 2$ (by assumption).
  Further, observe that
  \begin{align*}
    \frac{1}{|\bbM|} \sum_{(l,i) \in \Idx} \big|\ndf{\sigma_{l,i}} \cup \bdz{\sigma_{l,i}} \big|
    = \frac{n \alpha + 1}{|\bbM|}
    = \Big(1 + \frac{1}{n \alpha}\Big) \cdot \frac{n \alpha}{|\bbM|}
    \leq \frac{3}{2} \cdot \frac{n \alpha}{|\bbM|},
  \end{align*}
  where the inequality uses $n \geq 2$ and $\alpha \geq 1$ (by assumption).
  From these results, we obtain the desired inequality.

  {\bf Second case.}
  Let $W = n+2$ and  $h : \bbR \to \bbR$ be an analytic function
  such that $h(x_j) = 0$ and $\DF{h}(x_j) = 1$ for all $j \in [\alpha]$, and $|\bdz{h}| = \alpha$.
  We remark that such a function $h$ always exists due to Hermite interpolation \cite{BurdenFB15}.
  Using this $h$, define a function $f: \bbR^W \to \bbR$ as
  \begin{align*}
    f(w) &= w_{n+2} + \sum_{i \in [n]} \relu(h(w_i) \cdot w_{n+1}).
  \end{align*}
  We now construct a neural network $z_L : \bbR^W \to \bbR$ that is essentially the same as $f$, as follows:
  \begin{align*}
    z_0(w) &= 1, 
    \\
    y_1(w) &= (w_1, \ldots, w_n), 
    &
    z_1(w) &= (h(w_1), \ldots, h(w_n)), 
    \\
    y_2(w) &= (h(w_1) \cdot w_{n+1}, \ldots, h(w_n) \cdot w_{n+1}), 
    &
    z_2(w) &= (\relu(h(w_1) \cdot w_{n+1}), \ldots, \relu(h(w_n) \cdot w_{n+1})), 
    \\
    y_3(w) &= f(x), 
    &
    z_3(w) &= f(x). 
  \end{align*}
  Then, $z_L$ has $L=3$ layers, $N=2n+1$ neurons, and $W=n+2$ parameters,
  and $|\ndf{\sigma_{1,i}}| = |\ndf{h}| = 0$ and $|\bdz{\sigma_{1,i}}| = |\bdz{h}| = \alpha$ for all $i$.
  Also, we can easily make $\tau_l$ be well-structured biaffine without bias parameters for all $l < L$,
  and make $\tau_L$ have bias parameters, as discussed above.
  This shows that (i), (ii'), and (iii') are satisfied.
  
  What remains is to prove that $z_L$ satisfies the inequality in the conclusion.
  To do so, observe that
  \begin{align*}
    \ndfM{z_L}
    & \supseteq \{ w \in \Omega \mid w_{n+1} \neq 0 \text{ and } w_i \in \bdz{h} \text{ for some } i \in [n] \}
    \\
    & = \Omega \setminus \big(\{w \in \Omega \mid w_{n+1} = 0\} \cup \{ w \in \Omega \mid w_i \notin \bdz{h} \text{ for all } i \in [n] \}\big),
  \end{align*}
  which follows from the definition of $f$ and $\bdz{h} \subseteq \bbM$.
  From this, we have
  \begin{align*}
    \frac{|\ndfM{z_L}|}{|\Omega|}
    & \geq \frac{1}{|\bbM|^{n+2}} \Big(|\bbM|^{n+2} - |\bbM|^{n+1} - |\bbM|^2 \cdot (|\bbM|-\alpha)^{n} \Big)
    \\
    &\geq \frac{1}{2} \cdot \frac{n \alpha}{|\bbM|} - \frac{1}{|\bbM|}
    = \Big(\frac{1}{2} - \frac{1}{n\alpha} \Big) \cdot \frac{n \alpha}{|\bbM|}
    \\
    &\geq \frac{1}{4} \cdot \frac{n \alpha}{|\bbM|},
  \end{align*}
  where the second inequality follows from an argument in the proof of \Cref{thm:ndf-lbound-bias} (see \Cref{sec:pf-lowerbd-ndf-bias}),
  and the third inequality uses $n \geq 4$ and $\alpha \geq 1$ (by assumption).
  Note that when proving the second inequality, we used $\bdz{h} \subseteq \bbM$ and $|\bdz{h}| = \alpha$,
  as well as $\alpha \leq |\bbM|/(n-1)$ and $n \geq 2$ (by assumption).
  Further, observe that
  \begin{align*}
    \frac{1}{|\bbM|} \sum_{(l,i) \in \Idx} \big|\ndf{\sigma_{l,i}} \cup \bdz{\sigma_{l,i}} \big|
    = \frac{n \alpha + n + 1}{|\bbM|}
    = \Big(1 + \frac{1}{\alpha} + \frac{1}{n \alpha}\Big) \cdot \frac{n \alpha}{|\bbM|}
    \leq \frac{9}{4} \cdot \frac{n \alpha}{|\bbM|},
  \end{align*}
  where the inequality uses $n \geq 4$ and $\alpha \geq 1$ (by assumption).
  From these results, we obtain the desired inequality.
\end{proof}

\subsection{\Cref{thm:inc-lbound-nobias} (Main Proof)}
\label{sec:pf-lowerbd-inc-nobias}

{\bf \Cref{thm:inc-lbound-nobias}.}
{\it
  For any $\bbM \subseteq \bbR$ and $n, \alpha \in \bbN$ with $1 \leq |\bbM| < \infty$, $n \geq 4$, and $\alpha \leq |\bbM|/(n-1)$,
  there is a neural network $z_L : \bbR^W \to \bbR$ that satisfies
  \begin{align*}
    &\frac{|\incM{z_L}|}{|\Omega|}
    \geq \frac{1}{13} \cdot \frac{1}{|\bbM|}
    \sum_{(l,i) \in \Idx} \Big| \ndf{\sigma_{l,i}} \cup \bdz{\sigma_{l,i}} \Big|
  \end{align*}
  and the following:
  (i) $\tau_l$ is well-structured biaffine without bias parameters for all $l<L$, and has bias parameters for $l=L$; 
  (ii) $z_L$ has $2n \,{+}\, 1$ neurons;
  and (iii) $|\ndf{\sigma_{1,i}}| \,{=}\, \alpha$, $|\bdz{\sigma_{1,i}}| \,{=}\, 0$ for all $i$. 
  We get the same result for (i), (ii'), and (iii'):
  (ii') $z_L$ has $3n \,{+}\,1$ neurons;
  and (iii') $|\ndf{\sigma_{1,i}}| \,{=}\, 0$, $|\bdz{\sigma_{1,i}}| \,{=}\, \alpha$ for all~$i$. 
}
\begin{proof}
  We prove the two cases (one for (i), (ii), (iii), and the other for (i), (ii'), (iii')) as follows.
  Consider any $\bbM \subseteq \bbR$ and $n, \alpha \in \bbN$ that satisfy the assumption.
  Let $\{x_1, \ldots, x_\alpha\} \subseteq \bbM$ be distinct machine-representable numbers;
  such $x_j$ always exists since $|\bbM| \geq \alpha$ (by assumption).
  In the rest of the proof, we assume that $w \in \bbR^W$ is represented as $w = (w_1, \ldots, w_W)$ for $w_i \in \bbR$,
  as in the proof of \Cref{thm:ndf-lbound-bias} (see \Cref{sec:pf-lowerbd-ndf-bias}).
  
  {\bf First case.}
  Let $W = n+1$.
  Without loss of generality, assume that $\alpha$ is even and $0 < x_1 < \cdots < x_{\alpha/2}$;
  other cases can be handled in a similar way.
  Consider a continuous, piecewise-analytic function $h : \bbR \to \bbR$ that satisfies the following conditions:
  for all $j \in [\alpha/2]$, $h(x_j) = 1$ if $j$ is odd, and $h(x_j) = 2$ if $j$ is even;
  $\ndf{h} \cap (0, \infty) = \{x_1, \ldots, x_{\alpha/2}\}$;
  $h$ is piecewise linear, constant on $[x_{\alpha/2}, \infty)$, and even (i.e., $h(x) = h(-x)$ for all $x \in \bbR$).
  For this $h$, consider a (consistent) extended derivative $\adf{h} : \bbR \to \bbR$
  that takes the slope of the right piece of the function at non-differentiable points:
  e.g., $\adf{h}(x_2) = (h(x_3) - h(x_2)) / (x_3 - x_2)$ and $\adf{h}(-x_2) = (h(-x_1) - h(-x_2)) / (-x_1 + x_2)$.
  Using this $h$, define a function $f: \bbR^{W} \to \bbR$ as
  \begin{align*}
    f(w) &= w_{n+1} + \sum_{i \in [n]} h(w_i) - h(-w_i).
  \end{align*}
  Then, by using a similar approach taken in the proof of \Cref{thm:ndf-lbound-nobias} (see \Cref{sec:pf-lowerbd-ndf-nobias}),
  we can construct a neural network $z_L : \bbR^W \to \bbR$ that is essentially the same as $f$ and satisfies the following:
  $z_L$ has $L=2$ layers, $N=2n+1$ neurons, and $W=n+1$ parameters
  (where $2n$ neurons are at layer 1 and $1$ neuron is at layer 2);
  $\tau_l$ is well-structured biaffine without bias parameters for all $l < L$, and has bias parameters for $l = L$;
  and $|\ndf{\sigma_{1,i}}| = |\ndf{h}| = \alpha$ and $|\bdz{\sigma_{1,i}}| = |\bdz{h}| = 0$ for all $i$.
  This shows that (i), (ii), and (iii) are satisfied.
  
  What remains is to prove that $z_L$ satisfies the inequality in the conclusion.
  To do so, observe that
  \begin{align*}
    \incM{z_L}
    & \supseteq \{ w \in \Omega \mid w_i \in \{x_1, \ldots, x_{\alpha/2}\} \text{ for some } i \in [n] \},
  \end{align*}
  which follows from the definition of $f$ and $\{x_1, \ldots, x_{\alpha/2}\} \subseteq \bbM$.
  From this, we have
  \begin{align*}
    \frac{|\incM{z_L}|}{|\Omega|}
    & \geq \frac{1}{4} \cdot \frac{n \alpha}{|\bbM|}
  \end{align*}
  by a similar argument to that in the proof of \Cref{thm:ndf-lbound-bias} (see \Cref{sec:pf-lowerbd-ndf-bias}).
  Here we used $\{x_1, \ldots, x_{\alpha/2}\} \subseteq \bbM$
  as well as $\alpha \leq |\bbM|/(n-1)$ and $n \geq 2$ (by assumption).
  Further, observe that
  \begin{align*}
    \frac{1}{|\bbM|} \sum_{(l,i) \in \Idx} \big|\ndf{\sigma_{l,i}} \cup \bdz{\sigma_{l,i}} \big|
    = \frac{2n \alpha + 1}{|\bbM|}
    = \Big(2 + \frac{1}{n \alpha}\Big) \cdot \frac{n \alpha}{|\bbM|}
    \leq \frac{9}{4} \cdot \frac{n \alpha}{|\bbM|},
  \end{align*}
  where the inequality uses $n \geq 4$ and $\alpha \geq 1$ (by assumption).
  From these results, we obtain the desired inequality.

  {\bf Second case.}
  Let $W = n+2$ and  $h : \bbR \to \bbR$ be an analytic function
  such that $h(x_j) = 0$ and $\DF{h}(x_j) = 1$ for all $j \in [\alpha]$, and $|\bdz{h}| = \alpha$.
  Using this $h$, define a function $f: \bbR^W \to \bbR$ as
  \begin{align*}
    f(w) &= w_{n+2} + \sum_{i \in [n]} \relu(h(w_i) \cdot w_{n+1}) - \relu(-h(w_i) \cdot w_{n+1}),
  \end{align*}
  and let $\adf{\relu} = \indc{(0, \infty)}$.
  By using an approach similar to the above,
  we can construct a neural network $z_L : \bbR^W \to \bbR$ that is essentially the same as $f$ and satisfies the following:
  $z_L$ has $L=3$ layers, $N=3n+1$ neurons, and $W=n+2$ parameters
  (where $n$ neurons are at layer 1, $2n$ neurons at layer 2, and $1$ neuron at layer 3);
  $\tau_l$ is well-structured biaffine without bias parameters for all $l < L$, and has bias parameters for $l = L$;
  and $|\ndf{\sigma_{1,i}}| = |\ndf{h}| = 0$ and $|\bdz{\sigma_{1,i}}| = |\bdz{h}| = \alpha$ for all $i$.
  This shows that (i), (ii'), and (iii') are satisfied.
  
  What remains is to prove that $z_L$ satisfies the inequality in the conclusion.
  To do so, observe that
  \begin{align*}
    \incM{z_L}
    & \supseteq \{ w \in \Omega \mid w_{n+1} \neq 0 \text{ and } w_i \in \bdz{h} \text{ for some } i \in [n] \},
  \end{align*}
  which follows from the definition of $f$ and $\bdz{h} \subseteq \bbM$.
  From this, we have
  \begin{align*}
    \frac{|\incM{z_L}|}{|\Omega|}
    &\geq \frac{1}{4} \cdot \frac{n \alpha}{|\bbM|},
  \end{align*}
  as shown in the proof of \Cref{thm:ndf-lbound-nobias} (see \Cref{sec:pf-lowerbd-ndf-nobias}).
  Here we used $\bdz{h} \subseteq \bbM$ and $|\bdz{h}| = \alpha$,
  as well as $1 \leq \alpha \leq |\bbM|/(n-1)$ and $n \geq 4$ (by assumption).
  Further, observe that
  \begin{align*}
    \frac{1}{|\bbM|} \sum_{(l,i) \in \Idx} \big|\ndf{\sigma_{l,i}} \cup \bdz{\sigma_{l,i}} \big|
    = \frac{n \alpha + 2n + 1}{|\bbM|}
    = \Big(1 + \frac{2}{\alpha} + \frac{1}{n \alpha}\Big) \cdot \frac{n \alpha}{|\bbM|}
    \leq \frac{13}{4} \cdot \frac{n \alpha}{|\bbM|},
  \end{align*}
  where the inequality uses $n \geq 4$ and $\alpha \geq 1$ (by assumption).
  From these results, we obtain the desired inequality.
\end{proof}


\clearpage
\section{Computation of Standard Derivatives}
\label{sec:pf-corderiv}

\subsection{Lemmas (Basic)}

\begin{lemma}
  \label{lem:func-merge-diffl}
  Let $f, f_1, \ldots, f_n : \bbR^d \to \bbR^{d'}$ ($n \in \bbN$),
  $x \in \bbR^d$, and $U \subseteq \bbR^d$ be an open neighborhood of $x$.
  Suppose that for all $y \in U$, $f(y) = f_i(y)$ for some $i \in [n]$.
  Also, assume that $f(x) = f_i(x)$ for all $i \in [n]$, and $\DF{f_i}(x) = \DF{f_j}(x) \neq \bot$ for all $i, j \in [n]$.
  Then,
  \[
  \DF{f}(x) = \DF{f_i}(x) \neq \bot \qquad\text{for all $i \in [n]$}.
  \]
\end{lemma}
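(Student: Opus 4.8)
The plan is to verify the limit definition of the derivative directly. Write $A := \DF{f_1}(x)$; by hypothesis $\DF{f_i}(x) = A \neq \bot$ for every $i \in [n]$, and also $f(x) = f_i(x)$ for every $i \in [n]$. It suffices to show
\[
\lim_{y \to x}\frac{\| f(y) - f(x) - A\,(y-x) \|}{\| y - x \|} = 0,
\]
since this gives $\DF{f}(x) = A \neq \bot$, which is the claim.

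First I would restrict attention to $y \in U \setminus \{x\}$, which is legitimate because $U$ is an open neighborhood of $x$ and differentiability at $x$ is a local notion. For each such $y$, choose (any) index $i(y) \in [n]$ with $f(y) = f_{i(y)}(y)$; such an index exists by the hypothesis on $U$. Using $f(x) = f_{i(y)}(x)$, the difference quotient for $f$ at $y$ coincides with that of $f_{i(y)}$:
\[
\frac{\| f(y) - f(x) - A\,(y-x) \|}{\| y - x \|} = \frac{\| f_{i(y)}(y) - f_{i(y)}(x) - A\,(y-x) \|}{\| y - x \|}.
\]

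Next, for each $i \in [n]$ define $g_i : \bbR^d \setminus \{x\} \to \bbR$ by $g_i(y) := \| f_i(y) - f_i(x) - A\,(y-x) \| \,/\, \| y - x \|$. Since $\DF{f_i}(x) = A$, we have $\lim_{y \to x} g_i(y) = 0$ for every $i$. The previous display shows that, for every $y \in U \setminus \{x\}$, the difference quotient for $f$ is bounded above by $\max_{i \in [n]} g_i(y)$. A finite maximum of functions each tending to $0$ tends to $0$ (for any $\varepsilon>0$, intersect the finitely many neighborhoods of $x$ on which the respective $g_i$ are below $\varepsilon$). Hence the difference quotient for $f$ at $x$ tends to $0$ as $y \to x$, which establishes $\DF{f}(x) = A \neq \bot$; since $A = \DF{f_i}(x)$ for every $i$, the conclusion follows.

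This argument is essentially routine, so there is no substantive obstacle. The one point deserving (minor) care is that the selected index $i(y)$ may jump as $y \to x$, so one cannot pin down a single $f_i$ agreeing with $f$ on a whole neighborhood of $x$; bounding the difference quotient by the \emph{finite} maximum $\max_{i} g_i(y)$ rather than by a single $g_{i(y)}$ along a convergent choice of indices is precisely what circumvents this, and finiteness of $[n]$ is what makes the bound go to $0$.
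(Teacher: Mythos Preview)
Your proof is correct and follows essentially the same approach as the paper: both verify the limit definition directly, use that $f(y)$ agrees with some $f_{i}(y)$ on $U$ and $f(x)=f_i(x)$ for all $i$, and then exploit the finiteness of $[n]$ to control all difference quotients simultaneously (the paper takes $\delta < \min_i \delta_i$, which is exactly your ``intersect the finitely many neighborhoods'' step).
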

\begin{proof}
  Consider the setup of the statement.
  By the assumption, it suffices to show that $\DF{f}(x) = \DF{f_1}(x)$,
  which is equivalent to the following:
  for all $\epsilon > 0$, there exists $\delta > 0$ such that for all $h \in \bbR^d$,
  \begin{align*}
    0 < \|h\| < \delta \quad\implies\quad \frac{\| f(x+h) - f(x) - \DF{f_1}(x) \cdot h \|}{\|h\|} < \epsilon,
  \end{align*}
  where $\|\cdot\|$ denotes the $\ell_2$-norm.
  To show this, consider any $\epsilon > 0$.
  Since $\DF{f_i}(x) \neq \bot$ (by assumption), there is $\delta_i > 0$ for each $i \in [n]$ such that for all $h \in \bbR^d$, 
  \begin{align*}
    0 < \|h\| < \delta_i \quad\implies\quad \frac{\| f_i(x+h) - f_i(x) - \DF{f_i}(x) \cdot h \|}{\|h\|}
    = \frac{\| f_i(x+h) - f_1(x) - \DF{f_1}(x) \cdot h \|}{\|h\|}
    < \epsilon,
  \end{align*}
  where the equality is by assumption.
  Choose $0 < \delta < \min\{\delta_i \mid i \in [n]\}$ such that $\{x + h \mid \|h\| < \delta\} \subseteq U$,
  which is possible because $U$ is an open neighborhood of $x$.
  Then, for all $h \in \bbR^d$, $0 < \|h\| < \delta$ implies that
  \begin{align*}
    \frac{\| f(x+h) - f(x) - \DF{f_1}(x) \cdot h \|}{\|h\|} 
    = \frac{\| f_j(x+h) - f_1(x) - \DF{f_1}(x) \cdot h \|}{\|h\|} < \epsilon
  \end{align*}
  for some $j \in [n]$, where the equality is by assumption and $x+h \in U$ and the inequality is by $\delta < \delta_j$.
  This proves $\DF{f}(x) = \DF{f_1}(x)$ as desired.
\end{proof}

\subsection{Lemmas (Technical: Part 1)}
\label{sec:pf-apd}

In this subsection, we formally define the partial derivative ${\APDF{z_L}}/{\partial z_{l,i}} \in \bbR^{N_L}$ of $z_L$ with respect to $z_{l,i}$
that {\em reverse-mode} automatic differentiation computes (as a byproduct of computing $\ADF{z_L}$).
To do so, we fix $l' \in [L]$ and $w' \in \bbR^W$,
and define ${\APDF{z_L}}/{\partial z_{l',i}} \in \bbR^{N_L}$ at $w'$ ($i \in [N_{l'}]$)
in a similar way we defined $\ADF{z_L}$ in \edit{\Cref{sec:pf-ad}}. 

We first define a program $\code{\ttQ}$ (different from $\ttP$ in \edit{\Cref{sec:pf-ad}}) 
that represents a function from ${\bbR^{N_{l'}}}$ to $\bbR$ as follows:
\begin{align*}
  \code{\ttQ} ::= \code{r} \;|\; \code{\ttx_{i}} \;|\; \code{f \ttlp \ttQ_1, \ldots, \ttQ_n \ttrp}
\end{align*}
where $\code{r} \in \bbR$, $i \in [N_{l'}]$, $f \in \{ \tau_{l,i}, \sigma_{l,i} \mid (l,i) \in \Idx, l > l' \}$, and $n \in \bbN$.
This definition says that a program $\ttQ$ can be either a real-valued constant $r$, a real-valued variable $\ttx_{i}$
denoting the neuron $z_{l',i}$,
or the application of a function $f : \bbR^n \to \bbR$ to subprograms $\smash{\ttQ_1}, \ldots, \smash{\ttQ_n}$.
We focus on particular programs $\smash{\code{\ttQ_{y_{l,i}}}}$ and $\smash{\code{\ttQ_{z_{l,i}}}}$ ($l > l'$)
that represent the neurons $y_{l,i}$ and $z_{l,i}$ but as functions of the neurons $z_{l',1}, \ldots, z_{l',N_{l'}}$
(instead of functions of parameters $w_{1,1}, w_{1,2}, \ldots, w_{L,W_L}$).
These programs are defined in a canonical way as follows:
\begin{align*}
  \code{\ttQ_{y_{l,i}}}
  &\defeq \code{\tau_{l,i} \ttlp \ttQ_{z_{l-1,1}}, \ldots, \ttQ_{z_{l-1,N_{l-1}}}, w'_{l,1}, \ldots, w'_{l,W_l} \ttrp},
  \\
  \code{\ttQ_{z_{l,i}}}
  &\defeq \code{\sigma_{l,i} \ttlp \ttQ_{y_{l,i}} \ttrp},
\end{align*}
where ${\code{\ttQ_{z_{l',i}}}} \defeq {\ttx_{i}}$ for $i \in [N_{l'}]$ represents the projection function from $\bbR^{N_{l'}}$ to $\bbR$.
Note that $w'_{l,j}$ in the above equation is not a variable but a constant,
while $\ttx_i$ in the definition of $\ttQ_{z_{l',i}}$ is a variable.

Given a program $\code{\ttQ}$, we define the function $\sem{\code{\ttQ}} : \bbR^{N_{l'}} \to \bbR$ that  $\ttQ$ represents,
and the function $\semad{\code{\ttQ}} : \bbR^{N_{l'}} \to \bbR^{1 \times N_{l'}}$
that reverse-mode automatic differentiation computes for $\ttQ$ (as a byproduct of computing other derivatives):
\begin{align*}
  \sem{\code{r}}(x) &\defeq r,
  \\
  \sem{\code{\ttx_{i}}}(x) &\defeq x_{i},
  \\
  \sem{\code{f \ttlp \ttQ_1, \ldots, \ttQ_n \ttrp}}(x)
  & \defeq f\big(\sem{\code{\ttQ_1}}(x), \ldots, \sem{\code{\ttQ_n}}(x)\big),
  \\
  \semad{\code{r}}(x) &\defeq \mathbb{0}, 
  \\
  \semad{\code{\ttx_{i}}}(x) &\defeq \mathbb{1}_{i}, 
  \\
  \!\!
  \semad{\code{f \ttlp \ttQ_1, \ldots, \ttQ_n \ttrp}}(x)
  & \defeq \adf{f}\big(\sem{\code{\ttQ_1}}(x), \ldots, \sem{\code{\ttQ_n}}(x)\big)
  \cdot \big[ \semad{\code{\ttQ_1}}(x) \,\big/ \cdots \big/\, \semad{\code{\ttQ_n}}(x) \big].
\end{align*}
Here $(x_1, \ldots, x_{N_{l'}}) \defeq x$ denote the scalar values of $x$,
the notation $\mathbb{0}, \mathbb{1}_{i} \in \bbR^{1 \times W}$ denote the zero matrix
and the matrix whose entries are all zeros except for a single one at the $i$-th entry,
$\adf{f} : \bbR^n \to \bbR^{1 \times n}$ denotes a ``derivative'' of $f$ used by automatic differentiation,
and $[M_1 \,/\cdots/\, M_n]$ denotes the matrix that stacks up matrices $M_1, \ldots, M_n$ vertically.
Note that the definitions of $\sem{\ttQ}$ and $\semad{\code{\ttQ}}$ are almost the same as that of $\sem{\ttP}$ and $\semad{\code{\ttP}}$ in \edit{\Cref{sec:pf-ad}}. 

Using the above definitions, ${\APDF{z_L}}/{\partial z_{l',i}}$ at $w'$ for $i \in [N_{l'}]$
(i.e., the partial derivative of $z_L$ with respect to $z_{l',i}$ at $w'$ that reverse-mode automatic differentiation computes)
can be defined as follows:
\begin{align*}
  \frac{\APDF{z_L}}{\partial z_{l',i}} \text{ at } w'
  \defeq \Big[ \semad{\code{\ttQ_{z_{L,1}}}}(z_{l'}(w')) \,\big/ \cdots \big/\, \semad{\code{\ttQ_{z_{L,N_L}}}}(z_{l'}(w')) \Big]_{1:N_L, \,i}
  \in \bbR^{N_L}.
\end{align*}

\Cref{lem:apdf-dextgamma} (shown below) shows that
${\APDF{z_L}}/{\partial z_{l,i}}$ can be expressed in terms of $\ext{z}_{l+1}^\gamma$ (defined in \Cref{sec:pf-inc}),
as $\ADF{z_L}$ can be expressed in terms of $z_L^\gamma$ (\Cref{lem:r-gamma-adf}).
We will rely on this lemma in the rest of this section, when working with ${\APDF{z_L}}/{\partial z_{l,i}}$.

\begin{lemma}
  \label{lem:apdf-dextgamma}
  Let $\gamma \in \Gamma$. Then, for all $l \in [L]$ and $w \in \mcR^\gamma$,
  \begin{align*}
    \frac{\APDF{z_L}}{\partial z_{l,i}} \text{ at } w
    &= \PDF{\ext{z}_{l+1}^\gamma}{i}\big( z_l(w), w_{l+1}, \ldots, w_L \big).
  \end{align*}
\end{lemma}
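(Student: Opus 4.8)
The plan is to prove \Cref{lem:apdf-dextgamma} by essentially the same inductive strategy used for \Cref{lem:r-gamma-adf}, but now tracking partial derivatives with respect to the intermediate neurons $z_{l',1}, \ldots, z_{l',N_{l'}}$ rather than derivatives with respect to the parameters $w$. Fix $\gamma \in \Gamma$ and $w \in \mcR^\gamma$, and fix the layer index $l$ (playing the role of $l'$ in the definition of $\APDF{z_L}/\partial z_{l,i}$). The core claim to establish by induction, going from the output layer backwards (i.e., on decreasing layer index $m$ from $L+1$ down to $l$), is: for all $m \in \{l, l+1, \ldots, L+1\}$ and all indices of neurons in layer $m$,
\[
  \DF{\big(\ext{z}_m^\gamma \circ (\,\cdot\,, w_m, \ldots, w_L)\big)}\big(z_{m-1}(w)\big)
  = \big[ \semad{\code{\ttQ_{z_{m,i}}}}(z_{l}(w)) \big]_{\text{appropriately indexed}},
\]
where on the left I differentiate $\ext{z}_m^\gamma$ only in its first $N_{m-1}$ arguments, evaluated at $z_{m-1}(w)$ with the trailing parameter block frozen at $(w_m, \ldots, w_L)$; more precisely, I want to show $\smash{\PDF{\ext{z}_{m}^\gamma}{i}}(z_{m-1}(w), w_m, \ldots, w_L)$ matches the $i$-th column of the matrix of $\semad{}$-values of $z_{L,\bullet}$ viewed as functions of $z_{m,\bullet}$. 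Taking $m = l+1$ then yields exactly the statement of the lemma, after recognizing that the $\ttQ$-programs rooted at $z_{l,i}$ here use $x_i$ as the variable for $z_{l,i}$, matching the $i$-th coordinate projection, and that the definition of $\APDF{z_L}/\partial z_{l,i}$ at $w$ picks out the $i$-th column of the stacked $\semad{}$ matrix.

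The base case $m = L+1$ is immediate: $\ext{z}_{L+1}^\gamma$ is the identity and there is nothing to compose. For the inductive step from $m+1$ to $m$, I would unfold $\ext{z}_m^\gamma = \ext{z}_{m+1}^\gamma \circ \ext{\sigma}_m^\gamma \circ \ext{\tau}_m$ (its definition in \Cref{sec:pf-inc}), apply the chain rule — which is legitimate because $\ext{z}_{m+1}^\gamma$, $\ext{\sigma}_m^\gamma$, $\ext{\tau}_m$ are all differentiable by \Cref{lem:extz-good} — and then use \Cref{lem:exttau-extsigma-one-step} (applicable since $w \in \mcR^\gamma \subseteq \clR^\gamma$) to rewrite $\ext{\tau}_m$ and $\ext{\sigma}_m^\gamma$ evaluated along the relevant point as $(y_m(w), w_{m+1}, \ldots, w_L)$ and $(z_m(w), w_{m+1}, \ldots, w_L)$ respectively. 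In parallel, I unfold $\semad{\code{\ttQ_{z_{m,i}}}} = \semad{\code{\sigma_{m,i}\ttlp\ttQ_{y_{m,i}}\ttrp}}$ and $\semad{\code{\ttQ_{y_{m,i}}}}$ using their definitions, which produces precisely a product of $\adf{\sigma_{m,i}}(y_{m,i}(w))$, $\DF{\tau_{m,i}}((z_{m-1}(w),\pi_m(w)))$, and the $\semad{}$-values of $z_{m-1,\bullet}$ — and by the induction hypothesis these last quantities equal the corresponding partial derivatives of $\ext{z}_{m+1}^\gamma$ (strictly, I need the statement for layer $m$ neurons feeding into $\ext{z}_{m+1}^\gamma$, so I should phrase the induction so that the hypothesis at $m+1$ directly supplies $\PDF{\ext{z}_{m+1}^\gamma}{i}(z_m(w), w_{m+1}, \ldots, w_L)$). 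Finally I use that $\{(\mcI_{l,i}^k, \sigma_{l,i}^k)\}$ defines $\adf{\sigma_{l,i}}$ together with $y_{m,i}(w) \in \mcI_{m,i}^{\gamma(m,i)}$ (from $w \in \mcR^\gamma$) to replace $\adf{\sigma_{m,i}}(y_{m,i}(w))$ by $\DF{\sigma_{m,i}^{\gamma(m,i)}}(y_{m,i}(w))$. Matching the two products term by term closes the step; this is essentially the same computation as in the proof of \Cref{lem:r-gamma-adf}, Eq.~\eqref{eq:lem:r-gamma-adf-1} onward, but with the parameter-block part of the Jacobian replaced by a frozen constant (so its derivative in the $z_m$-variables vanishes) and with only the $z_{m-1}$-block surviving.

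The main obstacle I anticipate is bookkeeping: carefully aligning the index conventions between the two formalisms — the $\ttP$-programs parametrized by $w$ used to define $\ADF{z_L}$, versus the $\ttQ$-programs parametrized by the neurons $z_{l',\bullet}$ used to define $\APDF{z_L}/\partial z_{l,i}$ — and making sure that "the $i$-th argument of $\ext{z}_{m+1}^\gamma$", "the $i$-th column of the stacked $\semad{}$ matrix", and "the variable $\ttx_i$ standing for $z_{l,i}$" all refer coherently to the same neuron across the recursion. In particular, the trailing parameter arguments of $\ext{z}_m^\gamma$ must be recognized as constants in the $\ttQ$-formalism (the $w'_{l,j}$ are constants, not variables), so that differentiating $\ext{\tau}_m$ in the neuron-block only contributes the $\DF{\tau_{m,i}}$-times-$\DF{(z_{m-1}^\gamma, \text{const})}$ term and the parameter derivatives drop out; verifying this cleanly is the delicate part, but it is purely mechanical once the induction statement is phrased precisely. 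No new mathematical input beyond \Cref{lem:extz-good,lem:exttau-extsigma-one-step} and the definitions is needed.
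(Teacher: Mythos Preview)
Your proposal is correct and takes essentially the same approach as the paper: the paper's proof is literally ``similar to \Cref{lem:r-gamma-adf}, except that it uses \Cref{lem:extz-extzgamma-equiv} instead of \Cref{lem:r-gamma-f-df}'', i.e., the same inductive unfolding of the AD semantics against the chain-rule expansion of the $\gamma$-piece, matching evaluation points via $w \in \mcR^\gamma$. Your choice of backward induction (from $L+1$ down) rather than forward (from $l$ up, as in \Cref{lem:r-gamma-adf}) and your citation of \Cref{lem:exttau-extsigma-one-step} in place of \Cref{lem:extz-extzgamma-equiv} are cosmetic variations; the forward direction is marginally cleaner because it keeps the $\ttQ$-programs' variable layer fixed at $l$ throughout (your backward phrasing implicitly requires $\ttQ$-programs with variables at each intermediate layer $m$, which is fine but adds exactly the bookkeeping you flag), and you correctly identify that indexing alignment is the only substantive hurdle.
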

\begin{proof}
  The proof is similar to \Cref{lem:r-gamma-adf}, except that it uses \Cref{lem:extz-extzgamma-equiv} instead of \Cref{lem:r-gamma-f-df};
  thus, we omit it.
\end{proof}

\subsection{Lemmas (Technical: Part 2)}

\begin{lemma}
  \label{lem:stdderiv-part1-1}
  Let $w \in \bbR^W$.
  Suppose that for all $(l,i) \in \Idx$,
  $y_{l,i}(w) \in \ndf{\sigma_{l,i}}$ implies that
  \[ \PDF{\ext{z}_{l+1}^\gamma}{i}(z_l(w), w_{l+1}, \ldots, w_L) = (0, \ldots, 0) \]
  for the $\gamma \in \Gamma$ with $w \in \mcR^\gamma$.
  Then, for all $l \in [L+1]$ and $\gamma_1, \gamma_2 \in \Gamma$ with $w \in \clR^{\gamma_1} \cap \clR^{\gamma_2}$,
  \[ \DF{\ext{z}_{l}^{\gamma_1}}(z_{l-1}(w), w_{l}, \ldots, w_L)
  = \DF{\ext{z}_{l}^{\gamma_2}}(z_{l-1}(w), w_{l}, \ldots, w_L).\]
\end{lemma}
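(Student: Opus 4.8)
The plan is to follow the proof of \Cref{lem:djz-gamma-indep} almost verbatim, the only structural change being that the vanishing of $\PDF{\ext{z}_{l+1}^\gamma}{i}(z_l(w), w_{l+1}, \ldots, w_L)$ at non-differentiable points of $\sigma_{l,i}$ — which there was \emph{derived} from the bias-parameter assumption via \Cref{lem:djz-gamma-zero} — is now handed to us directly as a hypothesis. Concretely, I would prove by downward induction on $l \in [L+1]$ the following claim: for all $\gamma_1, \gamma_2 \in \Gamma$ with $w \in \clR^{\gamma_1} \cap \clR^{\gamma_2}$,
\[
\DF{\ext{z}_l^{\gamma_1}}(z_{l-1}(w), w_l, \ldots, w_L) = \DF{\ext{z}_l^{\gamma_2}}(z_{l-1}(w), w_l, \ldots, w_L).
\]
The base case $l = L+1$ is immediate since $\ext{z}_{L+1}^\gamma$ is the identity map for every $\gamma$, and this claim for $l$ ranging over $[L+1]$ is exactly the conclusion of the lemma.

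For the inductive step $l < L+1$, I would use the factorization $\ext{z}_l^\gamma = \ext{z}_{l+1}^\gamma \circ \ext{\sigma}_l^\gamma \circ \ext{\tau}_l$, the differentiability of all three maps (\Cref{lem:extz-good}), the chain rule, and the explicit block structure of $\ext{\tau}_l$ and $\ext{\sigma}_l^\gamma$ (compare the computations in the proof of \Cref{lem:djz-gamma-indep}), together with \Cref{lem:exttau-extsigma-one-step} to evaluate the relevant intermediate points at $w \in \clR^\gamma$. Note that $\DF{\ext{\tau}_l}$ does not depend on $\gamma$ at all, so, writing $x' \defeq (z_l(w), w_{l+1}, \ldots, w_L)$, this reduces the claim to showing, for every $i \in [N_l]$,
\[
\PDF{\ext{z}_{l+1}^{\gamma_1}}{i}(x') \cdot \DF{\sigma_{l,i}^{\gamma_1(l,i)}}(y_{l,i}(w)) = \PDF{\ext{z}_{l+1}^{\gamma_1}}{i}(x') \cdot \DF{\sigma_{l,i}^{\gamma_2(l,i)}}(y_{l,i}(w)),
\]
where I have already used the induction hypothesis on $l+1$ (applicable since $w \in \clR^{\gamma_1} \cap \clR^{\gamma_2}$) to replace the factor $\PDF{\ext{z}_{l+1}^{\gamma_2}}{i}(x')$ on the right by $\PDF{\ext{z}_{l+1}^{\gamma_1}}{i}(x')$.

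The displayed equation is verified by a case split on $i$. If $y_{l,i}(w) \in \ndf{\sigma_{l,i}}$, let $\gamma^\ast$ be the unique index with $w \in \mcR^{\gamma^\ast}$ (\Cref{lem:r-gamma-partition}); the hypothesis of the lemma gives $\PDF{\ext{z}_{l+1}^{\gamma^\ast}}{i}(x') = (0, \ldots, 0)$, and since $\mcR^{\gamma^\ast} \subseteq \clR^{\gamma^\ast}$, the induction hypothesis on $l+1$ applied to $\gamma_1$ and $\gamma^\ast$ yields $\PDF{\ext{z}_{l+1}^{\gamma_1}}{i}(x') = \PDF{\ext{z}_{l+1}^{\gamma^\ast}}{i}(x') = (0, \ldots, 0)$, so both sides of the display vanish. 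If instead $\sigma_{l,i}$ is differentiable at $y_{l,i}(w)$, I would show $\DF{\sigma_{l,i}^{\gamma_j(l,i)}}(y_{l,i}(w)) = \DF{\sigma_{l,i}}(y_{l,i}(w))$ for $j \in \{1,2\}$, which makes the two sides equal: if $y_{l,i}(w) \in \mcI_{l,i}^{\gamma_j(l,i)}$, use that the representation defines $\adf{\sigma_{l,i}}$, an extended derivative of $\sigma_{l,i}$ that therefore agrees with $\DF{\sigma_{l,i}}$ at points of differentiability; if $y_{l,i}(w) \notin \mcI_{l,i}^{\gamma_j(l,i)}$, apply \Cref{lem:f-same-df-same} to $\sigma_{l,i}^{\gamma_j(l,i)}$ and $\sigma_{l,i}$ at $y_{l,i}(w)$, using $w \in \clR^{\gamma_j}$ (so $y_{l,i}(w) \in \cl(\mcI_{l,i}^{\gamma_j(l,i)})$) and $\intr(\mcI_{l,i}^{\gamma_j(l,i)}) \neq \emptyset$ to produce an approximating sequence inside the interval on which the two functions coincide.

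The main obstacle I anticipate is bookkeeping rather than conceptual: the hypothesis only constrains the single index $\gamma$ with $w \in \mcR^\gamma$, whereas the conclusion quantifies over an arbitrary pair $\gamma_1, \gamma_2$ with $w \in \clR^{\gamma_1} \cap \clR^{\gamma_2}$. The resolution is that this transport is precisely what the induction hypothesis at level $l+1$ supplies — it already asserts $\gamma$-independence of $\DF{\ext{z}_{l+1}^\gamma}$ across all $\gamma$ with $w \in \clR^\gamma$ — so the induction closes without any new analytic input beyond the lemmas already established in \Cref{sec:pf-inc}.
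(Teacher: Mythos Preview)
Your proposal is correct and follows essentially the same approach as the paper: downward induction on $l$ from $L+1$, reducing via the chain rule and \Cref{lem:exttau-extsigma-one-step} to the same per-coordinate identity, then the identical two-subcase split, with the vanishing in the non-differentiable subcase now supplied by hypothesis and transported from the unique $\gamma^\ast$ with $w \in \mcR^{\gamma^\ast}$ to $\gamma_1$ via the induction hypothesis at level $l+1$. The paper's proof is precisely this argument, explicitly flagging that it mirrors \Cref{lem:djz-gamma-indep} except that the zero-partial-derivative condition is assumed rather than derived.
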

\begin{proof}
  The proof is similar to that of \Cref{lem:djz-gamma-indep},
  except that this lemma assumes that certain partial derivatives are all zero
  while \Cref{lem:djz-gamma-indep} derives this assumption (in addition to proving the conclusion of this lemma).
  Let $w \in \bbR^W$ that satisfies the assumption of this lemma.
  The proof proceeds by induction on $l$ (starting from $l = L+1$).

  {\bf Case $l = L+1$.}
  In this case, $\ext{z}_{L+1}^\gamma$ is the identity function for all $\gamma \in \Gamma$.
  Hence, the conclusion clearly holds.

  {\bf Case $l < L+1$.}
  For simple notation, let $x \defeq (z_{l-1}(w), w_l, \ldots, w_L)$ and $x' \defeq (z_{l}(w), w_{l+1}, \ldots, w_L)$.  
  Observe that the following hold for any $\gamma \in \Gamma$ with $w \in \smash{\clR^{\gamma}}$,
  due to \Cref{eq:tmp2-1,eq:tmp2-2,eq:tmp2-3} in the proof of \Cref{lem:djz-gamma-indep}:
  \begin{align*}
    \DF{\ext{z}_l^\gamma}(x)
    &= \DF{\ext{z}_{l+1}^\gamma}\big((\ext{\sigma}_l^\gamma \circ \ext{\tau}_l)(x)\big)
    \cdot \DF{\ext{\sigma}_l^\gamma}\big(\ext{\tau}_l(x)\big) \cdot \DF{\ext{\tau}_l}(x),
    \\
    \Big(\DF{\ext{z}_{l+1}^\gamma}((\ext{\sigma}_l^\gamma \circ \ext{\tau}_l)(x))
    \cdot \DF{\ext{\sigma}_l^\gamma}(\ext{\tau}_l(x)) \Big)_{*,\, i}
    &=
    \PDF{\ext{z}_{l+1}^{\gamma}}{i} (x')
    \cdot
    \begin{cases}
      \DF{\sigma_{l,i}^{\gamma(l,i)}}\big(y_{l,i}(w)\big) & \text{\!\!if $i \leq N_l$}
      \\
      1 & \text{\!\!if $i > N_l$}.
    \end{cases}
  \end{align*}
  Using this observation, we prove the conclusion for $l$.
  Let $\gamma_1, \gamma_2 \in \Gamma$ with $w \in \smash{\clR^{\gamma_1} \cap \clR^{\gamma_2}}$.
  We want to show $\smash{\DF{\ext{z}_l^{\gamma_1}}}(x) = \smash{\DF{\ext{z}_l^{\gamma_2}}}(x)$.
  By induction hypothesis on $l+1$, we obtain $\DF{\ext{z}_{l+1}^{\gamma_1}}(x') = \DF{\ext{z}_{l+1}^{\gamma_2}}(x')$.
  From this and the above equation, it suffices to show the following claim for all $i \in [N_l]$:
  \begin{align*}
    & \PDF{\ext{z}_{l+1}^{\gamma_1}}{i}(x') 
    \cdot \DF{\sigma_{l,i}^{\gamma_1(l,i)}}\big(y_{l,i}(w)\big)
    =
    \PDF{\ext{z}_{l+1}^{\gamma_1}}{i}(x') 
    \cdot \DF{\sigma_{l,i}^{\gamma_2(l,i)}}\big(y_{l,i}(w)\big).
  \end{align*}
  Let $i \in [N_l]$.
  We prove this claim by case analysis on $i$.

  {\it Subcase 1: $y_{l,i}(w) \in \ndf{\sigma_{l,i}}$.}
  Observe that for the $\gamma \in \Gamma$ with $w \in \mcR^\gamma$, we have
  \[ \PDF{\ext{z}_{l+1}^{\gamma_1}}{i}(x') = \PDF{\ext{z}_{l+1}^{\gamma}}{i}(x') = (0, \ldots, 0), \]
  where the first equality is by induction hypothesis on $l+1$ with $w \in \mcR^\gamma \subseteq \clR^\gamma$,
  and the second equality by assumption with $y_{l,i}(w) \in \ndf{\sigma_{l,i}}$.
  This directly implies the claim.

  {\it Subcase 2: $y_{l,i}(w) \notin \ndf{\sigma_{l,i}}$.}
  To show the claim, it suffices to show that for all $j \in [2]$,
  \begin{align*}
    \DF{\smash{\sigma_{l,i}^{\gamma_j(l,i)}}}\big(y_{l,i}(w)\big) = \DF{\sigma_{l,i}}\big(y_{l,i}(w)\big).
  \end{align*}
  This is exactly the same as \Cref{eq:tmp2-6} in the proof of \Cref{lem:djz-gamma-indep},
  and we can prove this in the exact same way as before.
  Therefore, the claim holds and this completes the proof.
\end{proof}

\begin{lemma}
  \label{lem:stdderiv-part1-2}
  Let $w \in \bbR^W$.
  Suppose that for all $\gamma_1, \gamma_2 \in \Gamma$ with $w \in \clR^{\gamma_1} \cap \clR^{\gamma_2}$,
  \[ \DF{z_L^{\gamma_1}}(w) = \DF{z_L^{\gamma_2}}(w).\]
  Then, $z_L$ is differentiable at $w$.
\end{lemma}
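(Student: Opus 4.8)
The plan is to exhibit an open neighbourhood of $w$ on which $z_L$ coincides, piecewise, with finitely many of the globally-defined differentiable functions $z_L^\gamma$, all having the same value and the same derivative at $w$, and then to invoke \Cref{lem:func-merge-diffl}. First I would set $\Gamma_w \defeq \{\gamma \in \Gamma \mid w \in \clR^\gamma\}$; this set is finite (as $|\Gamma| = \prod_{(l,i) \in \Idx} K_{l,i} < \infty$) and non-empty, since by \Cref{lem:r-gamma-partition} there is $\gamma_0 \in \Gamma$ with $w \in \mcR^{\gamma_0} \subseteq \clR^{\gamma_0}$. Each $\clR^\gamma$ is closed, because $y_{l,i}$ is continuous (\Cref{lem:yz-yz-gamma-good}) and $\cl(\mcI_{l,i}^{\gamma(l,i)})$ is closed; hence $C \defeq \bigcup_{\gamma \in \Gamma \setminus \Gamma_w} \clR^\gamma$ is closed with $w \notin C$, so there is an open $U \ni w$ disjoint from $C$. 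Using $\mcR^\gamma \subseteq \clR^\gamma$ and that $\{\mcR^\gamma\}_{\gamma \in \Gamma}$ covers $\bbR^W$, one gets $U = \bigcup_{\gamma \in \Gamma_w}(U \cap \mcR^\gamma) \subseteq \bigcup_{\gamma \in \Gamma_w} \mcR^\gamma$; in particular, for every $w' \in U$ there is $\gamma \in \Gamma_w$ with $w' \in \mcR^\gamma$, whence $z_L(w') = z_L^\gamma(w')$ by \Cref{lem:r-gamma-f-df}.

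Next I would prove the ``closure version'' of \Cref{lem:r-gamma-f-df} at the single point $w$: for every $\gamma \in \Gamma_w$, $z_L^\gamma(w) = z_L(w)$. By downward induction on $l \in \{1, \dots, L+1\}$ one shows $\ext{z}_l^\gamma(z_{l-1}(w), w_l, \dots, w_L) = z_L(w)$: the case $l = L+1$ holds since $\ext{z}_{L+1}^\gamma = \id$, and the inductive step follows from $(\ext{\sigma}_l^\gamma \circ \ext{\tau}_l)(z_{l-1}(w), w_l, \dots, w_L) = (z_l(w), w_{l+1}, \dots, w_L)$, which is \Cref{lem:exttau-extsigma-one-step} applied at $w \in \clR^\gamma$. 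Taking $l = 1$ and using $z_0^\gamma = z_0$ together with \Cref{lem:extz-z-equiv} gives $z_L^\gamma(w) = \ext{z}_1^\gamma(z_0^\gamma(w), w_1, \dots, w_L) = \ext{z}_1^\gamma(z_0(w), w_1, \dots, w_L) = z_L(w)$, as claimed.

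Finally, the hypothesis of the lemma says exactly that $\DF{z_L^{\gamma}}(w)$ is the same for all $\gamma \in \Gamma_w$ (since $\gamma, \gamma' \in \Gamma_w$ iff $w \in \clR^\gamma \cap \clR^{\gamma'}$), and each such $\DF{z_L^\gamma}(w) \neq \bot$ because $z_L^\gamma$ is differentiable (\Cref{lem:yz-yz-gamma-good}). I would then apply \Cref{lem:func-merge-diffl} with $f = z_L$, the finite family $\{z_L^\gamma\}_{\gamma \in \Gamma_w}$, the point $w$, and the neighbourhood $U$: its three hypotheses are verified by the first two paragraphs, so it yields $\DF{z_L}(w) = \DF{z_L^\gamma}(w) \neq \bot$, i.e.\ $z_L$ is differentiable at $w$.

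I expect the main obstacle to be the second paragraph. \Cref{lem:r-gamma-f-df} only gives $z_L^\gamma = z_L$ on $\mcR^\gamma$, and $\mcR^\gamma$ need not be dense in $\clR^\gamma$ (it may even be empty while $w \in \clR^\gamma$), so one cannot obtain value-agreement at $w$ merely by continuity and a limiting argument; it has to be re-derived on the closure directly through the $\ext{\tau}_l$, $\ext{\sigma}_l^\gamma$, $\ext{z}_l^\gamma$ machinery. Everything else is bookkeeping around the partition $\{\mcR^\gamma\}$ and a routine application of \Cref{lem:func-merge-diffl}.
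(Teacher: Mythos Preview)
Your proposal is correct and follows essentially the same approach as the paper: define $\Gamma_w$, verify the three hypotheses of \Cref{lem:func-merge-diffl} (local covering by $\{z_L^\gamma\}_{\gamma\in\Gamma_w}$, value agreement at $w$ via \Cref{lem:exttau-extsigma-one-step}, and derivative agreement from the hypothesis plus differentiability of $z_L^\gamma$), and conclude. Your construction of $U$ via closedness of $\clR^\gamma$ is a slightly cleaner variant of the paper's explicit construction, and your spelled-out induction for $z_L^\gamma(w)=z_L(w)$ is exactly what the paper's one-line appeal to \Cref{lem:exttau-extsigma-one-step} abbreviates.
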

\begin{proof}
  Consider the setup of this lemma.
  To apply \Cref{lem:func-merge-diffl}, we show the following claims for $\Gamma_w \defeq \{\gamma \in \Gamma \mid w \in \clR^\gamma\}$:
  \begin{itemize}
  \item[(i)] For some open neighborhood $U \subseteq \bbR^W$ of $w$,
    if $w' \in U$, then $z_L(w') = z_L^\gamma(w')$ for some $\gamma \in \Gamma_w$.
  \item[(ii)] $z_L(w) = z_L^\gamma(w)$ for all $\gamma \in \Gamma_w$.
  \item[(iii)] $\DF{z_L^{\gamma_1}}(w) = \DF{z_L^{\gamma_2}}(w) \neq \bot$ for all $\gamma_1, \gamma_2 \in \Gamma_w$.
  \end{itemize}
  If these claims hold, then \Cref{lem:func-merge-diffl} implies that $\DF{z_L}(w) \neq \bot$ (i.e., $z_L$ is differentiable at $w$).
  So what remains is to show these claims.
  First, (iii) follows from the assumption of this lemma and that $z_L^\gamma$ is analytic for all $\gamma \in \Gamma$.
  Second, (ii) follows from \Cref{lem:exttau-extsigma-one-step}.
  Finally, (i) holds as follows.
  Consider any $\gamma \in \Gamma \setminus \Gamma_w$.
  Then, by $w \notin \clR^\gamma$ and the definition of $\clR^\gamma$, there is $(l,i) \in \Idx$ such that
  $y_{l,i}(w) \in A$ and $A \cap \smash{\mcI_{l,i}^{\gamma(l,i)}} = \emptyset$ for some open $A \subseteq \bbR$.
  Since $y_{l,i}$ is continuous and $\mcR^\gamma \subseteq y_{l,i}^{-1}(\smash{\mcI_{l,i}^{\gamma(l,i)}})$,
  the set $U^\gamma \defeq y_{l,i}^{-1}(A)$ is an open neighborhood of $w$ such that
  $U^\gamma \cap \mcR^\gamma = \emptyset$.
  We now define \[ U \defeq \bigcap_{\gamma \in \Gamma \setminus \Gamma_w} U^\gamma. \]
  Then, because $\Gamma$ is finite, $U$ is an open neighborhood of $w$ such that
  $U \cap \smash{\bigcup_{\gamma \in \Gamma \setminus \Gamma_w}} \mcR^\gamma = \emptyset$.
  Using this, we obtain (i) as follows:
  for any $w' \in U$, we have $w' \notin \smash{\bigcup_{\gamma \in \Gamma \setminus \Gamma_w}} \mcR^\gamma$
  and so $w' \in \mcR^\gamma$ for some $\gamma \in \Gamma_w$ (by \Cref{lem:r-gamma-partition});
  this implies that $z_L(w') = z_L^\gamma(w')$ (by \Cref{lem:r-gamma-f-df}).
  This completes the proof.
\end{proof}

\subsection{\Cref{thm:cor-deriv-bias,thm:cor-deriv-nobias} (Main Lemmas)}

\begin{lemma}
  \label{lem:stdderiv-main-1}
  Let $w \in \bbR^W$.
  Suppose that the following holds:
  \begin{itemize}
  \item For all $(l,i) \in \Idx$, $y_{l,i}(w) \in \ndf{\sigma_{l,i}}$ implies that
    $\PDF{\ext{z}_{l+1}^\gamma}{i}(z_l(w), w_{l+1}, \ldots, w_L) = \vec{0}$ for the $\gamma \in \Gamma$ with $w \in \mcR^\gamma$.
  \end{itemize}
  Then, we have the following:
  \begin{itemize}
  \item $w \notin \ndfR{z_L}$ (i.e., $z_L$ is differentiable at $w$).
  \end{itemize}
\end{lemma}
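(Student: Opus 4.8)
The plan is to derive the conclusion by chaining the two technical results \Cref{lem:stdderiv-part1-1} and \Cref{lem:stdderiv-part1-2}, which are already available. First, I would invoke \Cref{lem:stdderiv-part1-1} for $w$: its hypothesis is exactly the hypothesis assumed in this lemma, so instantiating its conclusion at $l = 1$ yields, for all $\gamma_1, \gamma_2 \in \Gamma$ with $w \in \clR^{\gamma_1} \cap \clR^{\gamma_2}$,
\[
\DF{\ext{z}_1^{\gamma_1}}\big(z_0(w), w_1, \ldots, w_L\big) = \DF{\ext{z}_1^{\gamma_2}}\big(z_0(w), w_1, \ldots, w_L\big).
\]

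Next, I would transfer this equality of derivatives from $\ext{z}_1^{\gamma}$ to $z_L^{\gamma}$. By \Cref{lem:extz-z-equiv} together with the fact that $z_0^{\gamma} = z_0$ is a constant function, one has $z_L^{\gamma}(w') = \ext{z}_1^{\gamma}(z_0(0,\ldots,0), w')$ for all $w' \in \bbR^W$ and $\gamma \in \Gamma$. Since $\ext{z}_1^{\gamma}$ is differentiable (\Cref{lem:extz-good}), differentiating this identity in $w'$ and extracting the block of columns corresponding to the $w$-variables gives
\[
\DF{z_L^{\gamma}}(w) = \Big( \DF{\ext{z}_1^{\gamma}}\big(z_0(0,\ldots,0), w\big) \Big)_{*,\, N_0+1:\, N_0+W} = \Big( \DF{\ext{z}_1^{\gamma}}\big(z_0(w), w\big) \Big)_{*,\, N_0+1:\, N_0+W},
\]
where the last equality uses that $z_0$ is constant. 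Combining this with the previous step, I obtain $\DF{z_L^{\gamma_1}}(w) = \DF{z_L^{\gamma_2}}(w)$ for all $\gamma_1, \gamma_2 \in \Gamma$ with $w \in \clR^{\gamma_1} \cap \clR^{\gamma_2}$.

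Finally, I would apply \Cref{lem:stdderiv-part1-2}, whose hypothesis is precisely the statement just established; it gives that $z_L$ is differentiable at $w$, i.e., $w \notin \ndfR{z_L}$, which is the desired conclusion. Since both ingredient lemmas are in hand, there is no genuine obstacle here; the only point requiring a little care is the bookkeeping in the middle step — passing between $\ext{z}_1^{\gamma}$ evaluated at $(z_0(\cdot),\cdot)$ and $z_L^{\gamma}$, i.e.\ handling the constant initial argument $z_0$ and picking out the correct block of the Jacobian — but this is exactly the manipulation already carried out in the proof of \Cref{lem:djz-gamma-indep}.
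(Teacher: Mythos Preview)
Your proposal is correct and follows essentially the same route as the paper: invoke \Cref{lem:stdderiv-part1-1} at $l=1$, convert the equality for $\DF{\ext{z}_1^{\gamma}}$ into one for $\DF{z_L^{\gamma}}$ via \Cref{lem:extz-z-equiv} and the constancy of $z_0$, and then apply \Cref{lem:stdderiv-part1-2}. If anything, your middle step is spelled out more carefully than the paper's, which writes the equality $\DF{z_L^{\gamma}}(w) = \DF{\ext{z}_1^{\gamma}}(z_0(w), w)$ without explicitly extracting the $w$-block of columns.
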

\begin{proof}
  Consider the setup of this lemma.
  For all $\gamma_1, \gamma_2 \in \Gamma$ with $w \in \clR^{\gamma_1} \cap \clR^{\gamma_2}$,
  \[
  \DF{z_L^{\gamma_1}}(w)
  = \DF{\ext{z}_{1}^{\gamma_1}}(z_{0}(w), w_{1}, \ldots, w_L)
  = \DF{\ext{z}_{1}^{\gamma_2}}(z_{0}(w), w_{1}, \ldots, w_L)
  = \DF{z_L^{\gamma_2}}(w),
  \]
  where the first and last equalities are by \Cref{lem:extz-z-equiv},
  and the second equality is by \Cref{lem:stdderiv-part1-1}.
  Then, by applying \Cref{lem:stdderiv-part1-2}, we obtain that $z_L$ is differentiable at $w$, as desired.
\end{proof}

\begin{lemma}
  \label{lem:stdderiv-main-2}
  Let $w \in \bbR^W$.
  Suppose that the following hold:
  \begin{itemize}
  \item $w \notin \ndfR{z_L}$ (i.e., $z_L$ is differentiable at $w$).
  \item For all $(l,i) \in \Idx$, $y_{l,i}(w) \in \ndf{\sigma_{l,i}}$ implies that $\tau_l$ has bias parameters.
  \end{itemize}
  Then, we have the following:
  \begin{itemize}
  \item $w \notin \ndfR{z_L} \cup \incR{z_L}$ (i.e., $\ADF{z_L}(w) = \DF{z_L}(w) \neq \bot$).
  \item For all $(l,i) \in \Idx$, $y_{l,i}(w) \in \ndf{\sigma_{l,i}}$ implies that
    $\PDF{\ext{z}_{l+1}^\gamma}{i}(z_l(w), w_{l+1}, \ldots, w_L) = \vec{0}$ for the $\gamma \in \Gamma$ with $w \in \mcR^\gamma$.
  \end{itemize}
\end{lemma}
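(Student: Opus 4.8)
The plan is to recognize \Cref{lem:stdderiv-main-2} as a re-packaging of the proof of \Cref{thm:inc-zero-bias} together with \Cref{lem:djz-gamma-zero}. The first thing I would note is that the two hypotheses of this lemma — $z_L$ differentiable at $w$, and ``$y_{l,i}(w) \in \ndf{\sigma_{l,i}}$ implies $\tau_l$ has bias parameters'' for all $(l,i)$ — are exactly the hypotheses of \Cref{lem:djz-gamma-indep} (the second one being the contrapositive reformulation of ``for all $l$, if $\tau_l$ does not have bias parameters then $\sigma_{l,i}$ is differentiable at $y_{l,i}(w)$ for all $i$''). Hence \Cref{lem:djz-gamma-indep} is available for $w$.

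For the first conclusion I would repeat the argument in the main proof of \Cref{thm:inc-zero-bias} essentially verbatim. Let $\gamma \in \Gamma$ be the unique index with $w \in \mcR^\gamma$ (\Cref{lem:r-gamma-partition}). Since $z_L$ is differentiable at $w$, every $\PDF{z_L}{j}(w) \neq \bot$, so \Cref{lem:djz-gamma-exist-spc} gives $\gamma_j \in \Gamma$ with $w \in \clR^{\gamma_j}$ and $\PDF{z_L}{j}(w) = \PDF{z_L^{\gamma_j}}{j}(w)$; by \Cref{lem:djz-gamma-indep} (using $w \in \clR^{\gamma_j} \cap \clR^{\gamma}$) we have $\DF{z_L^{\gamma_j}}(w) = \DF{z_L^{\gamma}}(w)$, so matching the $j$-th columns and stacking over $j$ gives $\DF{z_L}(w) = \DF{z_L^{\gamma}}(w)$; and $\DF{z_L^{\gamma}}(w) = \ADF{z_L}(w)$ by \Cref{lem:r-gamma-adf} since $w \in \mcR^\gamma$. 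Thus $\ADF{z_L}(w) = \DF{z_L}(w) \neq \bot$, i.e.\ $w \notin \ndfR{z_L} \cup \incR{z_L}$.

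For the second conclusion, fix $(l,i) \in \Idx$ with $y_{l,i}(w) \in \ndf{\sigma_{l,i}}$ and let $\gamma$ be the unique index with $w \in \mcR^\gamma$; I would apply \Cref{lem:djz-gamma-zero} to $(w,(l,i),\gamma)$. Three of its four hypotheses are immediate: $z_L$ is differentiable at $w$ (given), $\sigma_{l,i}$ is non-differentiable at $y_{l,i}(w)$ (since $y_{l,i}(w)\in\ndf{\sigma_{l,i}}$), and $\tau_l$ has bias parameters (by the second hypothesis of the lemma). The fourth hypothesis — that $\PDF{\ext{z}_{l+1}^{\gamma_1}}{i}$ and $\PDF{\ext{z}_{l+1}^{\gamma_2}}{i}$ agree at $(z_l(w),w_{l+1},\ldots,w_L)$ for all $\gamma_1,\gamma_2$ with $w\in\clR^{\gamma_1}\cap\clR^{\gamma_2}$ — is the $i$-th column of the per-layer identity $\DF{\ext{z}_{l+1}^{\gamma_1}} = \DF{\ext{z}_{l+1}^{\gamma_2}}$ at that point, which is precisely the claim proved by descending induction over layers inside the proof of \Cref{lem:djz-gamma-indep}; I would invoke that claim (or, for self-containedness, promote it to a standalone lemma and re-prove it by the same induction, its Subcase~1 being exactly an application of \Cref{lem:djz-gamma-zero} at layer $l$). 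With all four hypotheses in hand, \Cref{lem:djz-gamma-zero} yields $\PDF{\ext{z}_{l+1}^{\gamma}}{i}(z_l(w),w_{l+1},\ldots,w_L) = \vec{0}$, as required.

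The main obstacle is this interface point in the second conclusion: \Cref{lem:djz-gamma-indep} is stated only for the global Jacobian $\DF{z_L^\gamma}$, whereas \Cref{lem:djz-gamma-zero} needs the analogous $\gamma$-independence for the intermediate map $\ext{z}_{l+1}^\gamma$. This per-layer strengthening is not a separate statement but is exactly what the inductive argument behind \Cref{lem:djz-gamma-indep} establishes, so closing the gap is bookkeeping rather than a genuinely new difficulty; everything else reuses \Cref{lem:djz-gamma-exist-spc}, \Cref{lem:djz-gamma-indep}, \Cref{lem:r-gamma-adf}, and \Cref{lem:djz-gamma-zero} as already assembled for \Cref{thm:inc-zero-bias}.
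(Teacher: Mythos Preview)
Your proposal is correct and matches the paper's own proof essentially verbatim: the paper derives the first conclusion by ``exactly following the proof of \Cref{thm:inc-zero-bias}'' under these hypotheses, and obtains the second conclusion by pointing to Subcase~1 of Case $l<L+1$ in the proof of \Cref{lem:djz-gamma-indep}, which is precisely the application of \Cref{lem:djz-gamma-zero} (using the inductive per-layer $\gamma$-independence) that you describe. Your identification of the interface issue --- that the per-layer identity is proved inside \Cref{lem:djz-gamma-indep} rather than stated separately --- is exactly right and is handled the same way in the paper.
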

\begin{proof}
  Consider the setup in the statement.
  By exactly following the proof of \Cref{thm:inc-zero-bias} (given in \Cref{sec:pf-inc-zero-bias-main}) under this setup,
  we obtain the conclusion of \Cref{thm:inc-zero-bias}: $\ADF{z_L}(w) = \DF{z_L}(w)$, which implies the first conclusion of this lemma.
  Moreover, the second conclusion was already shown in the proof of \Cref{lem:djz-gamma-indep}
  (which has the same assumption as this lemma),
  especially in Subcase 1 of Case $l < L+1$ in the proof.
  This completes the proof.
\end{proof}

\subsection{\Cref{thm:cor-deriv-bias,thm:cor-deriv-nobias} (Main Proofs)}

{\bf \Cref{thm:cor-deriv-bias}.}
{\it
  If $z_L$ has bias parameters, then the following are equivalent for all $w \in \bbR^W$.
  \begin{itemize}
  \item $z_L$ is non-differentiable at $w$.
  \item $y_{l,i}(w) \in \ndf{\sigma_{l,i}}$
    and $\smash{\APDF{z_L}} / \partial z_{l,i} \neq \smash{\vec{0}}$ at $w$ for some $(l,i)\in\Idx$.
  \end{itemize}
}
\begin{proof}
  Let $w \in \bbR^W$. Suppose that $z_L$ has bias parameters.
  Then, by \Cref{lem:stdderiv-main-1,lem:stdderiv-main-2}, the following are equivalent:
  \begin{itemize}
  \item[(i)] $w \notin \ndfR{z_L}$ (i.e., $z_L$ is differentiable at $w$).
  \item[(ii)] For all $(l,i) \in \Idx$, $y_{l,i}(w) \in \ndf{\sigma_{l,i}}$ implies that
    $\PDF{\ext{z}_{l+1}^\gamma}{i}(z_l(w), w_{l+1}, \ldots, w_L) = \vec{0}$ for the $\gamma \in \Gamma$ with $w \in \mcR^\gamma$.
  \end{itemize}
  By taking the negation of (i)-(ii) and applying \Cref{lem:apdf-dextgamma} to (ii),
  we obtain the conclusion.
\end{proof}

{\bf \Cref{thm:cor-deriv-nobias}.}
{\it
  Let $w \in \bbR^W$.
  If $y_{l,i}(w) \notin \ndf{\sigma_{l,i}}$ for all $(l,i) \in \Idx$ such that
  $\tau_l$ does not have bias parameters or $\smash{\APDF{z_L}} / \partial z_{l,i} \neq \smash{\vec{0} }$ at $w$,
  then
  \begin{align*}
    \ADF{z_L}(w) = \DF{z_L}(w) \neq \bot.
    \\[-1.2em]
  \end{align*}
}
\begin{proof}
  Let $w \in \bbR^W$. Suppose that it satisfies the given assumption, which is equivalent to the following by \Cref{lem:apdf-dextgamma}:
  \begin{itemize}
  \item[] For all $(l,i) \in \Idx$, $y_{l,i}(w) \in \ndf{\sigma_{l,i}}$ implies that
    \begin{itemize}
    \item[(i)] $\tau_l$ has bias parameters, and
    \item[(ii)] $\PDF{\ext{z}_{l+1}^\gamma}{i}(z_l(w), w_{l+1}, \ldots, w_L) = \vec{0}$ for the $\gamma \in \Gamma$ with $w \in \mcR^\gamma$.
    \end{itemize}
  \end{itemize}
  First, by \Cref{lem:stdderiv-main-1} with (ii), we have
  \begin{itemize}
  \item[]
    \begin{itemize}
    \item[(iii)] $w \notin \ndfR{z_L}$ (i.e., $z_L$ is differentiable at $w$).
    \end{itemize}
  \end{itemize}
  Next, by \Cref{lem:stdderiv-main-2} with (i) and (iii), we have the conclusion:
  \begin{itemize}
  \item[]
    \begin{itemize}
    \item[] $w \notin \ndfR{z_L} \cup \incR{z_L}$ (i.e., $\ADF{z_L} = \DF{z_L}(w) \neq \bot$).
      \qedhere
    \end{itemize}
  \end{itemize}
\end{proof}


\clearpage
\section{Computation of Clarke Subderivatives} 
\label{sec:pf-subdiff}

\subsection{Lemmas (Basic)}

\begin{definition}
  Let $A \subseteq \bbR^n$ and $x \in \bbR^n$ (where $A$ does not need to contain $x$).
  For $B \subseteq \bbR^n$, we say that {\em $A$ has $B$-directions around $x$}
  if for all $b \in B$, there is $\delta > 0$ such that \( \{x + t b \mid t \in (0, \delta)\} \subseteq A. \)
  We say that {\em $A$ has sufficient directions around $x$}
  if $A$ has $B$-directions around $x$ for some $B \subseteq \bbR^n$ with \(  \spann(B) = \bbR^n, \)
  where $\spann(B) \defeq \{\smash{\sum_{i=1}^k} t_i  b_i \mid k \in \bbN, t_i \in \bbR, b_i \in B\}$ denotes the span of $B$.
\end{definition}

\begin{lemma}
  \label{lem:dir-basic}
  Let $A \subseteq \bbR^n$ and $x \in \bbR^n$.
  \begin{enumerate}
  \item If $x \in \intr(A)$, then $A$ has $\bbR^n$-directions around $x$.
  \item Let $\alpha \in \{\pm 1\}$, $\epsilon \in \bbR_{>0} \cup \{\infty\}$, and $f : \bbR^n \to \bbR$.
    If $f$ is differentiable at $x$ and 
    \[A = \{y \in \bbR^n \mid \alpha \cdot (f(y) - f(x)) \in (0, \epsilon)\},\]
    then $A$ has $B$-directions around $x$ for
    \[B \defeq \{y \in \bbR^n \mid \alpha \cdot (\DF{f}(x) \cdot y) \in (0, \infty) \}.\]
  \item Let $A',B \subseteq \bbR^n$.
    If $A$ has $B$-directions around $x$ and  $A \subseteq A'$,
    then $A'$ has $B$-directions around $x$.
  \item Let $A', B, B' \subseteq \bbR^n$.
    If $A$ has $B$-directions around $x$ and $A'$ has $B'$-directions around $x$,
    then $(A \cap A')$ has $(B \cap B')$-directions around $x$.
  \item If $A$ has $B$-directions around $x$ for some nonempty, open $B \subseteq \bbR^n$,
    then $A$ has sufficient directions around $x$.
  \end{enumerate}
\end{lemma}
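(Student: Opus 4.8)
The plan is to verify all five parts directly from the definitions; only part~2 requires a genuine (if routine) estimate, and that estimate is the closest thing to an obstacle here.

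\emph{Parts 1, 3, 4.} For part~1: if $x\in\intr(A)$, fix $r>0$ so that the open ball of radius $r$ about $x$ lies in $A$; given any $b\in\bbR^n$, set $\delta\defeq r/(1+\|b\|)>0$, so that $\|tb\|<r$ and hence $x+tb\in A$ for every $t\in(0,\delta)$ (this also covers $b=\vec{0}$). Part~3 is immediate: the segment $\{x+tb\mid t\in(0,\delta)\}$ witnessing the $B$-direction property of $A$ is, by $A\subseteq A'$, contained in $A'$. For part~4: given $b\in B\cap B'$, let $\delta_1,\delta_2>0$ be the radii supplied by the $B$-direction property of $A$ and the $B'$-direction property of $A'$; then $\{x+tb\mid t\in(0,\min\{\delta_1,\delta_2\})\}\subseteq A\cap A'$.

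\emph{Part 2.} Write the first-order expansion $f(x+h)=f(x)+\DF{f}(x)\cdot h+r(h)$, where $r(h)/\|h\|\to0$ as $h\to\vec{0}$; this is exactly differentiability of $f$ at $x$. Fix $b\in B$, so that $c\defeq\alpha\cdot(\DF{f}(x)\cdot b)>0$, and in particular $b\neq\vec{0}$. Restricting to $h=tb$ with $t>0$ gives $\alpha\cdot(f(x+tb)-f(x))=t\bigl(c+r(tb)/t\bigr)$, and $r(tb)/t=\|b\|\cdot r(tb)/\|tb\|\to0$ as $t\to0^{+}$. Choose $t_0>0$ with $|r(tb)/t|<c/2$ for all $t\in(0,t_0)$; then $\alpha\cdot(f(x+tb)-f(x))\in(tc/2,\,3tc/2)$ for such $t$, which is strictly positive. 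If $\epsilon<\infty$ we shrink further to $\delta\defeq\min\{t_0,\,2\epsilon/(3c)\}$ to keep this quantity below $\epsilon$; if $\epsilon=\infty$ we take $\delta\defeq t_0$. Either way $\{x+tb\mid t\in(0,\delta)\}\subseteq A$, as required. The only subtlety is arranging the two-sided bound on $r(tb)/t$ so that a single radius $\delta$ makes the ray witness both ``$>0$'' and ``$<\epsilon$'' simultaneously; this is the mildest point of the whole argument.

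\emph{Part 5.} A nonempty open $B\subseteq\bbR^n$ contains an open ball; let $p$ be its center and $r>0$ its radius. For each standard basis vector $e_i$ we have $p\pm(r/2)e_i\in B$, hence $r e_i=(p+(r/2)e_i)-(p-(r/2)e_i)\in\spann(B)$, so $\spann(B)=\bbR^n$. Since $A$ has $B$-directions around $x$ by hypothesis and $\spann(B)=\bbR^n$, this is exactly the statement that $A$ has sufficient directions around $x$.
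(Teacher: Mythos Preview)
Your proof is correct and follows essentially the same approach as the paper's own proof; in fact you supply more detail, since the paper omits the arguments for parts~1, 3, and~4 entirely and for part~5 simply invokes the fact that any nonempty open subset of $\bbR^n$ spans $\bbR^n$, which you prove explicitly. The only minor imprecision is in part~2, where the remainder term should strictly be $\alpha\cdot r(tb)/t$ rather than $r(tb)/t$; but since $|\alpha|=1$ the magnitude bound is unaffected and the argument goes through unchanged.
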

\begin{proof}
  The proofs of (1), (3), and (4) are straightforward, so we omit them.

  \paragraph{\bf Proof of (2).}
  Consider the setup stated above.
  Assume that $f$ is differentiable at $x$, and let $b \in B$.
  We want to show there is $\delta > 0$ such that
  $\{x + t b \mid t \in (0, \delta) \} \subseteq A$.
  We show this when $\alpha = 1$; we omit the case when $\alpha =-1$, as the proof is similar.
  Observe that since $f$ is differentiable at $x$, there is $\delta'>0$ such that for all $h \in \bbR^n$,
  \begin{align}
    \label{eq:dir-basic-pf-1}
    0 < \|h\| < \delta' \implies \frac{|f(x+h) - f(x) - \DF{f}(x) \cdot h|}{\|h\|} < \frac{\DF{f}(x) \cdot b}{2 \|b\|},
  \end{align}
  where $\| \cdot \|$ denotes the $\ell_2$-norm.
  Here we used $\DF{f}(x) \cdot b >0$ and $\|b\|>0$, which hold by $b \in B$ and the definition of $B$.
    
  We claim that $\{x + t  b \mid t \in (0, \delta) \} \subseteq A$ holds for the following choice of $\delta$:
  \[ \delta \defeq \min\Big\{ \frac{\delta'}{\|b\|},\, \frac{2 \epsilon}{3 (\DF{f}(x) \cdot b)} \Big\} > 0. \]
  To show this, consider any $t \in (0, \delta)$.
  It suffices to show $x+ t b \in A$.
  Observe that for $h = t  b$, we have
  $0 < \|h\| = \|t b\| < \delta  \|b\| \leq (\delta' / \|b\|) \cdot \|b\| = \delta'$. Hence, \Cref{eq:dir-basic-pf-1} implies that
  \begin{gather*}
    \textstyle
    \big|f(x + t b ) - f(x) - \DF{f}(x) \cdot (t b)\big|
    < \|t b\| \cdot  \frac{1}{2 \|b\|} (\DF{f}(x) \cdot b) = \frac{1}{2} (\DF{f}(x) \cdot b)t,
    \\ \textstyle
    0 < \frac{1}{2}  (\DF{f}(x) \cdot b)t
    < f(x + tb) - f(x)
    < \frac{3}{2} (\DF{f}(x) \cdot b)t  < \epsilon,
  \end{gather*}
  where the second line uses $\DF{f}(x) \cdot b >0$ and $t < \delta \leq \frac{2}{3} \epsilon / {(\DF{f}(x) \cdot b)}$.
  From this, and by the definition of $A$, we have $x+ tb \in A$ as desired.
  
  \paragraph{\bf Proof of (5).}
  This follows from the fact that
  the span of any nonempty, open set in $\bbR^n$ is $\bbR^n$.
\end{proof}

\begin{lemma}
  \label{lem:suff-dir-df}
  Let $f, g : \bbR^n \to \bbR^m$, $A \subseteq \bbR^n$, and $x \in \bbR^n$.
  Suppose that $f = g$ on $A \cup \{x\}$, $A$ has sufficient directions around $x$, and $f$ and $g$ are differentiable at $x$.
  Then, \[ \DF{f}(x) = \DF{g}(x). \]
\end{lemma}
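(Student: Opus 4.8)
The plan is to reduce the claimed matrix equality to an equality of one-sided directional derivatives and then invoke linearity. First I would unpack the hypothesis that $A$ has sufficient directions around $x$: this gives a set $B \subseteq \bbR^n$ with $\spann(B) = \bbR^n$ such that $A$ has $B$-directions around $x$. Fix any nonzero $b \in B$; by definition there is $\delta > 0$ with $\{x + tb \mid t \in (0,\delta)\} \subseteq A$. On this segment $f$ and $g$ agree, by the hypothesis that $f = g$ on $A \cup \{x\}$, and they also agree at $x$ itself.

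Next I would compare the directional derivatives along $b$. Since $f$ is differentiable at $x$ (so $\DF{f}(x) \in \bbR^{m \times n}$, not $\bot$), applying the definition of the derivative to increments $h = tb$ with $t \downarrow 0$ yields $\lim_{t \to 0^+}\frac{f(x+tb) - f(x)}{t} = \DF{f}(x) \cdot b$, and likewise $\lim_{t \to 0^+}\frac{g(x+tb) - g(x)}{t} = \DF{g}(x) \cdot b$. For every $t \in (0,\delta)$ the two difference quotients are equal, because $f(x+tb) = g(x+tb)$ and $f(x) = g(x)$; hence the limits are equal, giving $\DF{f}(x) \cdot b = \DF{g}(x) \cdot b$. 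For $b = 0$ the same equality is trivial.

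Finally, the map $v \mapsto (\DF{f}(x) - \DF{g}(x)) \cdot v$ is linear on $\bbR^n$ and vanishes on all of $B$; since $\spann(B) = \bbR^n$, it vanishes identically, so $\DF{f}(x) = \DF{g}(x)$ as $m \times n$ matrices, as required. I expect no genuine obstacle here: the argument is routine, and the only step deserving an explicit line is the standard fact that differentiability at $x$ forces the one-sided limit $\lim_{t \to 0^+}(f(x+tb) - f(x))/t = \DF{f}(x) \cdot b$, which is immediate from the definition of the (Fréchet) derivative restricted to the ray $t \mapsto x + tb$.
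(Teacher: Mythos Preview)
Your proposal is correct and follows essentially the same approach as the paper's proof: both reduce to showing $\DF{f}(x)\cdot b = \DF{g}(x)\cdot b$ for every $b\in B$ and then invoke linearity together with $\spann(B)=\bbR^n$. The only cosmetic difference is that you phrase the key step via the one-sided directional derivative $\lim_{t\to 0^+}\frac{f(x+tb)-f(x)}{t}=\DF{f}(x)\cdot b$, whereas the paper writes out the equivalent $\epsilon$--$\delta$ estimate and uses the triangle inequality; your formulation is slightly more concise but logically identical.
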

\begin{proof}
  Consider the setup stated above.
  Since $A$ has sufficient directions around $x$, there is $B \subseteq \bbR^n$ such that
  $A$ has $B$-directions around $x$ and $\spann(B) = \bbR^n$.
  We claim that
  $\DF{f}(x) \cdot b = \DF{g}(x) \cdot b$ {for all $b \in B$}.
  Note that this claim implies the conclusion:
  by the claim and $\spann(B) = \bbR^n$, we have $\DF{f}(x) \cdot v = \DF{g}(x) \cdot v$ for all $v \in \bbR^n$,
  and so $\DF{f}(x) = \DF{g}(x)$.
  
  We now prove the above claim. Let $b \in B$.
  Note that it suffices to show:
  \[ \|(\DF{f}(x) - \DF{g}(x)) \cdot b\| < \|b\|\epsilon \qquad\text{for all $\epsilon > 0$}, \]
  since this implies $(\DF{f}(x) - \DF{g}(x)) \cdot b = 0$,
  where $\|\cdot\|$ denotes the $\ell_2$-norm.
  Let $\epsilon > 0$.
  Since $f$ and $g$ are differentiable at $x$,
  there is $\delta > 0$ such that for any $t \in (0,\delta)$,
  \begin{align}
    \label{eq:suff-dir-df-pf-1}
    \frac{\|f(x+tb) - f(x) - \DF{f}(x) \cdot (tb)\|}{\|tb\|} < \frac{\epsilon}{2}, \qquad
    \frac{\|g(x+tb) - g(x) - \DF{g}(x) \cdot (tb)\|}{\|tb\|} < \frac{\epsilon}{2}.
  \end{align}
  Also, since $b \in B$, there is $\delta'>0$ such that $\{x + tb \mid t \in (0, \delta')\} \subseteq A$.
  Fix $t \defeq \min\{\delta, \delta'\}/2 > 0$.
  Then, we obtain the desired equation based on this $t$:
  \begin{align*}
    \big\|\big(\DF{f}(x) - \DF{g}(x)\big) \cdot b\big\|
    &= \frac{1}{t} \big\|\big(\DF{f}(x) - \DF{g}(x)\big) \cdot (tb) \big\|
    \\
    &= \frac{1}{t} \big\|\big(f(x+tb) - f(x) - \DF{f}(x) \cdot (tb)\big) - \big(f(x+tb) - f(x) - \DF{g}(x) \cdot (tb)\big) \big\|
    \\
    &= \frac{1}{t} \big\|\big(f(x+tb) - f(x) - \DF{f}(x) \cdot (tb)\big) - \big(g(x+tb) - g(x) - \DF{g}(x) \cdot (tb)\big) \big\|
    \\
    &= \frac{1}{t} \Big(\big\|f(x+tb) - f(x) - \DF{f}(x) \cdot (tb) \big\| + \big\|g(x+tb) - g(x) - \DF{g}(x) \cdot (tb)\big\| \Big)
    \\
    &= \frac{1}{t} \cdot \Big(\frac{\epsilon}{2} + \frac{\epsilon}{2}\Big) \|tb\|
    = \|b\| \epsilon,
  \end{align*}
  where the third line uses that $f=g$ on $A \cup \{x\}$ (by assumption) and $x+tb \in A$ (by $t < \delta'$),
  and the last line uses \Cref{eq:suff-dir-df-pf-1} (by $t < \delta$).
\end{proof}



\begin{lemma}
  \label{lem:ineq-set-sat}
  Let $n \in \bbN$, $\{d_i \in \bbN\}_{i \in [n]}$ such that $d_1 < \cdots <d_n$, and $\{f_i : \bbR^{d_i-1} \to \bbR\}_{i \in [n]}$.
  Then, for any $c \in \bbR^n$, there is $u \in \bbR^{d_n}$ such that
  \begin{align*}
    \text{$f_i(u_1, \ldots, u_{d_i-1}) + u_{d_i} = c_i$}
    \qquad\text{for all $i \in [n]$}.
  \end{align*}
\end{lemma}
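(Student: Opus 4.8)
The plan is to build $u \in \bbR^{d_n}$ coordinate by coordinate, exploiting the fact that $d_1 < \cdots < d_n$ makes each coordinate $u_{d_i}$ a ``free variable'' for the $i$-th equation: in $f_i(u_1,\ldots,u_{d_i-1}) + u_{d_i} = c_i$ the coordinate $u_{d_i}$ occurs with coefficient $1$, and since $d_i > d_j$ for every $j < i$, this coordinate does not occur in any earlier equation (whose left-hand side $f_j(u_1,\ldots,u_{d_j-1}) + u_{d_j}$ involves only coordinates with index at most $d_j < d_i$). Hence the system can be solved by forward substitution.

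Concretely, I would first set $u_k \defeq 0$ for every $k \in [d_n]$ with $k \notin \{d_1,\ldots,d_n\}$, and then, processing $i = 1, 2, \ldots, n$ in increasing order, set
\[
  u_{d_i} \defeq c_i - f_i(u_1, \ldots, u_{d_i - 1}).
\]
The one thing to check is that this recursion is well-defined: when step $i$ is reached, every coordinate $u_k$ with $k < d_i$ has already been assigned a value — either $k$ is not among $d_1,\ldots,d_n$, in which case $u_k = 0$ was set at the outset, or $k = d_j$ for some $j$, and then $d_j < d_i$ forces $j < i$ by strict monotonicity of $(d_m)_{m \in [n]}$, so $u_{d_j}$ was fixed at an earlier step. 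Thus $f_i(u_1,\ldots,u_{d_i-1})$ is a well-defined real number, $u_{d_i}$ is determined, and no coordinate is assigned twice (the ``pivot'' indices $d_1,\ldots,d_n$ are distinct). By construction $f_i(u_1,\ldots,u_{d_i-1}) + u_{d_i} = c_i$ for every $i \in [n]$, which is exactly the claim. Equivalently, this can be phrased as an induction on $n$: apply the statement to $d_1 < \cdots < d_{n-1}$, $\{f_i\}_{i \in [n-1]}$, and $c_{1:n-1}$ to obtain $u' \in \bbR^{d_{n-1}}$, then extend $u'$ by zeros up to coordinate $d_n - 1$ and put $u_{d_n} \defeq c_n - f_n(u_1,\ldots,u_{d_n-1})$, using $d_{n-1} \le d_n - 1$ to see that all needed coordinates are present and that the first $n-1$ equations are unaffected by the new coordinates.

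There is no substantive obstacle here; the argument is pure bookkeeping. The only point needing a moment's care is the well-definedness of the forward substitution — that every coordinate needed at step $i$ is available before step $i$ and that distinct equations have distinct pivot coordinates — both of which are immediate from $d_1 < \cdots < d_n$. (Note also that $d_i \in \bbN$ gives $d_i \ge 1$, so $u_{d_i}$ is a legitimate coordinate and $f_i : \bbR^{d_i-1} \to \bbR$ is a genuine function, with $f_i$ a constant in the degenerate case $d_i = 1$.)
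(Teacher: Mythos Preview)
Your proposal is correct and takes essentially the same approach as the paper: the paper proves the lemma by induction on $n$, handling the base case $n=1$ by setting all coordinates to $0$ except $u_{d_1} \defeq c_1 - f_1(0,\ldots,0)$, and the inductive step exactly as in your ``equivalently'' paragraph---obtain $v \in \bbR^{d_{n-1}}$ from the hypothesis, then set $u \defeq (v,0,\ldots,0,c_n - f_n(v,0,\ldots,0))$. Your forward-substitution description is just the unrolled version of this induction.
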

\begin{proof}
  The proof proceeds by induction on $n$.

  \paragraph{\bf Case $n=1$.}
  For any $c \in \bbR$,
  $u \defeq (0, \ldots, 0, c - f_1(0, \ldots, 0)) \in \bbR^{d_1}$
  satisfies the desired equation.

  \paragraph{\bf Case $n>1$.}
  Let $c \in \bbR^n$.
  By induction hypothesis on $n-1$,
  there is $v \in \bbR^{d_{n-1}}$ such that
  $f_i(v_1, \ldots, v_{d_i-1}) + v_{d_i} = c_i$ for all $i \in [n-1]$.
  Define $u \defeq (v, 0, \ldots, 0, c_n - f_n(v, 0, \ldots, 0)) \in \bbR^{d_n}$.
  Then, $u$ satisfies the desired equations, since $(u_1, \ldots, u_{d_{n-1}}) = v$ by $d_{n-1} < d_n$.
\end{proof}

\subsection{Lemmas (Technical)}

In the following subsections, we consider a piecewise-$C^1$ (not piecewise-differentiable) representation of each $\sigma_{l,i}$,
using the same notation in the previous sections.
Formally, we make the following definitions.

\begin{definition*}
  \label{def:minrep-sigma-c1}
  For each $(l,i) \in \Idx$, let \[ \{(\mcI_{l,i}^k, \sigma_{l,i}^k) \}_{k \in [K_{l,i}]} \]
  be a piecewise-$C^1$ representation of $\sigma_{l,i} : \bbR \to \bbR$
  that defines $\adf{\sigma_{l,i}}$, 
  where $K_{l,i} \in \bbN$, $\smash{\mcI_{l,i}^k} \subseteq \bbR$, and $\smash{\sigma_{l,i}^k} : \bbR \to \bbR$.
  We assume that the representation satisfies:
  \begin{gather*}
    \text{$\bigcup_{k \in [K_{l,i}]} \bd(\mcI_{l,i}^k) = \bigcup_{k \in [K_{l,i}]} \pbd(\mcI_{l,i}^k) = \ncdf{\sigma_{l,i}}$},
  \end{gather*}
  where $\ncdf{f} \subseteq \bbR$ denotes the set of real numbers at which $f : \bbR \to \bbR$ is not continuously differentiable.
  If $\adf{\sigma_{l,i}}$ is consistent, we further assume that the representation satisfies the following:
  \begin{align*}
    \text{$\intr(\mcI_{l,i}^k) \neq \emptyset$ for all $k \in [K_{l,i}]$}.
  \end{align*}
  Note that such a representation always exists by \Cref{thm:int-deriv-basic}.
  Based on these new representations $\{(\mcI_{l,i}^k, \sigma_{l,i}^k) \}_{k \in [K_{l,i}]}$,
  we define $\Gamma$, $\mcR^\gamma$, $y_l^\gamma$, $z_l^\gamma$, and $\sigma_l^\gamma$ for $\gamma \in \Gamma$ and $l \in [L]$,
  as we defined them in \Cref{sec:pf-nn};
  we omit their definitions here.
  \qed
\end{definition*}

Since we consider a piecewise-$C^1$ (not piecewise-differentiable) representation of $\sigma_{l,i}$,
we have \Cref{lem:yz-yz-gamma-good-c1} (shown below) that is stronger than \Cref{lem:yz-yz-gamma-good}.
Moreover, \Cref{lem:r-gamma-partition,lem:r-gamma-equiv,lem:r-gamma-f-df,lem:r-gamma-adf} continue to hold under the new representations;
the proofs are exactly the same as before, so we omit them.

\begin{lemma}
  \label{lem:yz-yz-gamma-good-c1}
  For all $l \in [L]$ and $\gamma \in \Gamma$,
  $y_l$ and $z_l$ are continuous, and $\smash{y_l^\gamma}$ and $\smash{z_l^\gamma}$ are $C^1$.
\end{lemma}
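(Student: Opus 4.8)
The statement is the $C^1$-strengthening of \Cref{lem:yz-yz-gamma-good}, and the plan is simply to rerun the proof of \Cref{lem:yz-yz-gamma-good} verbatim, upgrading ``continuous/differentiable'' to ``$C^1$'' wherever the new piecewise-$C^1$ representations (\Cref{def:minrep-sigma-c1}) allow it. The two claims decouple: continuity of $y_l,z_l$ uses only that the building blocks are continuous, while the $C^1$-ness of $y_l^\gamma,z_l^\gamma$ uses that the building blocks are $C^1$.

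\textbf{Step 1 (continuity of $y_l$ and $z_l$).} This part is literally unchanged from \Cref{lem:yz-yz-gamma-good}, since nothing about the representation is used. I would argue by induction on $l$ using the recursions $y_l(w)=\tau_l(z_{l-1}(w),\pi_l(w))$ and $z_l(w)=\sigma_l(y_l(w))$ (with $z_0$ constant): $\tau_l$ is analytic hence continuous, $\pi_l$ is linear hence continuous, each $\sigma_{l,i}$ is continuous by the standing assumption, and compositions and tuplings of continuous maps are continuous.

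\textbf{Step 2 ($C^1$-ness of $y_l^\gamma$ and $z_l^\gamma$).} I would induct on $l$. For $l=0$, $z_0^\gamma=z_0$ is constant, hence $C^1$. For the inductive step, $y_l^\gamma(w)=\tau_l\big(z_{l-1}^\gamma(w),\pi_l(w)\big)$ is a composition of the map $w\mapsto (z_{l-1}^\gamma(w),\pi_l(w))$ — whose components are $C^1$ by the induction hypothesis and by linearity of $\pi_l$ — with $\tau_l$, which is analytic and therefore $C^1$; a composition of $C^1$ maps is $C^1$, so $y_l^\gamma$ is $C^1$. Then $z_l^\gamma(w)=\sigma_l^\gamma\big(y_l^\gamma(w)\big)$, and $\sigma_l^\gamma$ has component functions $\sigma_{l,i}^{\gamma(l,i)}$, each of which is $C^1$ because $\{(\mcI_{l,i}^k,\sigma_{l,i}^k)\}_{k\in[K_{l,i}]}$ is a piecewise-$C^1$ representation by \Cref{def:minrep-sigma-c1}; hence $z_l^\gamma$ is $C^1$.

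\textbf{Main obstacle.} There is essentially no substantive obstacle: the only point requiring attention is that the pieces $\sigma_{l,i}^k$ in the \emph{new} representation are genuinely $C^1$ (not merely differentiable), which is exactly what \Cref{def:minrep-sigma-c1} provides, this being legitimate by \Cref{thm:int-deriv-basic}. Consequently the write-up can reasonably be compressed to ``the proof is identical to that of \Cref{lem:yz-yz-gamma-good} with `differentiable' replaced by `$C^1$', using that each $\sigma_{l,i}^k$ is now $C^1$ by \Cref{def:minrep-sigma-c1}, that $\tau_l$ is analytic hence $C^1$, and that $\pi_l$ is linear hence $C^1$.''
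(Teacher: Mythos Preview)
Your proposal is correct and takes essentially the same approach as the paper: both argue that the continuity of $y_l,z_l$ and the $C^1$-ness of $y_l^\gamma,z_l^\gamma$ follow from the corresponding properties of the building blocks $\tau_{l'}$, $\pi_{l'}$, $\sigma_{l',i'}$, and $\sigma_{l',i'}^{k'}$ (the last being $C^1$ by \Cref{def:minrep-sigma-c1}). The paper's proof is even terser than your compressed version, but the content is identical.
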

\begin{proof}
  The continuity of $y_l$ and $z_l$ follows directly from that
  $\tau_{l'}$, $\pi_{l'}$, and $\sigma_{l',i'}$ are continuous for all $(l',i') \in \Idx$.
  Similarly, the continuous differentiability of $\smash{y_l^\gamma}$ and $\smash{z_l^\gamma}$ follows directly from that
  $\tau_{l'}$, $\pi_{l'}$, and $\smash{\sigma_{l',i'}^{k'}}$ are $C^1$ for all $(l',i') \in \Idx$ and $k' \in [K_{l',i'}]$.
\end{proof}

\subsection{\Cref{thm:clarke-subdiff-bias,thm:clarke-subdiff-nobias} (Main Lemmas)}

\begin{lemma}
  \label{lem:int-r-gamma-suff-dir}
  Let $\gamma \in \Gamma$ and $w \in \mcR^\gamma$.
  Suppose that for all $l \in [L]$,
  if $\tau_l$ does not have bias parameters, then $y_{l,i}(w) \notin \pbd(\smash{\mcI_{l,i}^{\gamma(l,i)}})$ for all $i \in [N_l]$. 
  Also, assume that $\adf{\sigma_{l,i}}$ is consistent for all $(l,i) \in \Idx$.
  Then, \[ \text{$\intr(\mcR^\gamma)$ has sufficient directions around $w$.} \]
\end{lemma}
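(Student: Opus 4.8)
The plan is to produce one nonempty open cone of directions $B\subseteq\bbR^W$ into which $\intr(\mcR^\gamma)$ points near $w$, and then invoke \Cref{lem:dir-basic}(5). By \Cref{lem:r-gamma-equiv} we may describe $\mcR^\gamma=\bigcap_{(l,i)\in\Idx}(y_{l,i}^\gamma)^{-1}(\mcI_{l,i}^{\gamma(l,i)})$, where each $y_{l,i}^\gamma$ is $C^1$ (\Cref{lem:yz-yz-gamma-good-c1}), hence differentiable at $w$, and $y_{l,i}^\gamma(w)=y_{l,i}(w)$ (\Cref{lem:r-gamma-f-df}). Partition $\Idx$ into $S\defeq\{(l,i)\mid y_{l,i}(w)\in\pbd(\mcI_{l,i}^{\gamma(l,i)})\}$ and its complement. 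For $(l,i)\notin S$ we have $y_{l,i}^\gamma(w)\in\intr(\mcI_{l,i}^{\gamma(l,i)})$, an open condition, so this index constrains nothing locally. For $(l,i)\in S$: the hypothesis forces $\tau_l$ to have bias parameters, and consistency of $\adf{\sigma_{l,i}}$ gives $\intr(\mcI_{l,i}^{\gamma(l,i)})\neq\emptyset$ (by the piecewise-$C^1$ representation), so the interval $\mcI_{l,i}^{\gamma(l,i)}$ extends strictly to exactly one side $\alpha_{l,i}\in\{\pm1\}$ of the included endpoint $c_{l,i}\defeq y_{l,i}(w)$; fix $\epsilon_{l,i}>0$ with $\{x\mid\alpha_{l,i}(x-c_{l,i})\in(0,\epsilon_{l,i})\}\subseteq\intr(\mcI_{l,i}^{\gamma(l,i)})$.

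Set $A_0\defeq\bigcap_{(l,i)\in S}\{w'\mid\alpha_{l,i}(y_{l,i}^\gamma(w')-c_{l,i})\in(0,\epsilon_{l,i})\}\cap\bigcap_{(l,i)\notin S}(y_{l,i}^\gamma)^{-1}(\intr(\mcI_{l,i}^{\gamma(l,i)}))$. Each factor is open (continuous preimage of an open interval) and $A_0\subseteq\bigcap_{(l,i)}(y_{l,i}^\gamma)^{-1}(\intr(\mcI_{l,i}^{\gamma(l,i)}))\subseteq\mcR^\gamma$, hence $A_0\subseteq\intr(\mcR^\gamma)$. For the direction count: \Cref{lem:dir-basic}(2), applied with $f=y_{l,i}^\gamma$ and noting $y_{l,i}^\gamma(w)=c_{l,i}$, shows the $(l,i)$-factor with $(l,i)\in S$ has $B_{l,i}\defeq\{b\mid\alpha_{l,i}(\DF{y_{l,i}^\gamma}(w)\cdot b)>0\}$-directions around $w$; for $(l,i)\notin S$, the factor is open and contains $w$, so by \Cref{lem:dir-basic}(1) it has $\bbR^W$-directions around $w$. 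Iterating \Cref{lem:dir-basic}(4), $A_0$ has $B$-directions around $w$ for $B\defeq\bigcap_{(l,i)\in S}B_{l,i}$, and then by \Cref{lem:dir-basic}(3) so does $\intr(\mcR^\gamma)$.

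It remains to show $B$ is nonempty and open. Openness is clear, as $B$ is a finite intersection of open half-spaces. For nonemptiness, write $j(l,i)$ for the global index of the bias parameter of $\tau_{l,i}$ when $(l,i)\in S$; these indices are distinct, one has $\partial y_{l,i}^\gamma/\partial w_{j(l,i)}\equiv1$ (from $\tau_{l,i}(x,(u,v))=f_i(x,u)+v_i$), and $y_{l,i}^\gamma$ does not depend on $w_{j(l',i')}$ for any $(l',i')\in S$ with either $l'>l$, or $l'=l$ and $i'\neq i$ (because $z_{l-1}^\gamma$ depends only on $w_1,\dots,w_{l-1}$ and $v_{i'}$ affects only the $i'$-th output of $\tau_l$). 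Hence, restricting $b$ to be supported on $\{j(l,i)\mid(l,i)\in S\}$, the quantities $\DF{y_{l,i}^\gamma}(w)\cdot b$ form a system that is triangular in layer order with unit diagonal, so one can choose such a $b$ solving $\DF{y_{l,i}^\gamma}(w)\cdot b=\alpha_{l,i}$ for every $(l,i)\in S$ — either by back-substitution ordering $S$ by layer, or by invoking \Cref{lem:ineq-set-sat} after sorting the $j(l,i)$'s. Then $\alpha_{l,i}(\DF{y_{l,i}^\gamma}(w)\cdot b)=1>0$ for all $(l,i)\in S$, so $b\in B$. With $B$ open and nonempty, \Cref{lem:dir-basic}(5) yields that $\intr(\mcR^\gamma)$ has sufficient directions around $w$. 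I expect the bookkeeping behind the nonemptiness of $B$ — namely that touching a proper boundary at layer $l$ forces a bias parameter there, giving the triangular dependency of $y_{l,i}^\gamma$ on the relevant bias coordinates — to be the crux; the rest is routine manipulation via \Cref{lem:dir-basic}.
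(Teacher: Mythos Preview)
Your proposal is correct and follows essentially the same approach as the paper: reduce $\intr(\mcR^\gamma)$ to the intersection of the open sets $(y_{l,i}^\gamma)^{-1}(\intr(\mcI_{l,i}^{\gamma(l,i)}))$, use \Cref{lem:dir-basic}(1)(2)(4)(3) to obtain $B$-directions for $B=\bigcap_{(l,i)\in S}\{b\mid \alpha_{l,i}(\DF{y_{l,i}^\gamma}(w)\cdot b)>0\}$, and then show $B\neq\emptyset$ by exploiting the bias-parameter structure to get a triangular linear system (solved via \Cref{lem:ineq-set-sat}). The only cosmetic difference is that the paper solves the system over all of $\bbR^W$ using the observation $(\DF{y_{l,i}^\gamma}(w))_j=0$ for $j>\psi_{l,i}$, whereas you restrict $b$ to the bias coordinates up front; both yield the same lower-triangular system with unit diagonal.
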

\begin{proof}
  First, observe that
  \begin{align*}
    \intr(\mcR^\gamma)
    &= \intr\Big( \bigcap_{(l,i) \in \Idx} \big\{w' \in \bbR^W \mid y_{l,i}(w') \in \mcI_{l,i}^{\gamma(l,i)}\big\} \Big)
    \\
    &= \intr\Big( \bigcap_{(l,i) \in \Idx} \big\{w' \in \bbR^W \mid y_{l,i}^\gamma(w') \in \mcI_{l,i}^{\gamma(l,i)}\big\} \Big)
    \\
    &= \bigcap_{(l,i) \in \Idx} \intr\Big( \big\{w' \in \bbR^W \mid y_{l,i}^\gamma(w') \in \mcI_{l,i}^{\gamma(l,i)}\big\} \Big)
    \\
    &\supseteq \bigcap_{(l,i) \in \Idx} A_{l,i}
    \qquad\text{for } A_{l,i} \defeq \big\{w' \in \bbR^W \mid y_{l,i}^\gamma(w') \in \intr(\mcI_{l,i}^{\gamma(l,i)}) \big\},
  \end{align*}
  where the second line uses \Cref{lem:r-gamma-equiv},
  the third line uses that $\intr(U \cap V) = \intr(U) \cap \intr(V)$ for any $U, V \subseteq \bbR^n$,
  and the fourth line uses that $\intr(f^{-1}(U)) \supseteq f^{-1}(\intr(U))$ for any $U \subseteq \bbR^m$ and continuous $f : \bbR^n \to \bbR^m$.
  Note that $\smash{A_{l,i}}$ is open, since $\intr(\smash{\mcI_{l,i}^{\gamma(l,i)}})$ is open
  and $\smash{y_{l,i}^\gamma}$ is continuous (by \Cref{lem:yz-yz-gamma-good-c1}).
  
  Next, we show that it suffices to find some $B_{l,i} \subseteq \bbR^W$ for every $(l,i) \in \Idx$ such that
  \begin{itemize}
  \item[(i)] $A_{l,i}$ has $B_{l,i}$-directions around $w$, and
  \item[(ii)] $\bigcap_{(l,i) \in \Idx} B_{l,i}$ is nonempty and open.
  \end{itemize}
  Suppose that there are such $B_{l,i}$'s.
  By applying \Cref{lem:dir-basic}-(4) to (i), we have
  \[ \textstyle \text{$\bigcap_{(l,i) \in \Idx} A_{l,i}$ has $\bigcap_{(l,i) \in \Idx} B_{l,i}$-directions around $w$.} \]
  By applying \Cref{lem:dir-basic}-(3) to the above and $\bigcap_{(l,i) \in \Idx} A_{l,i} \subseteq \intr(\mcR^\gamma)$, we have
  \[ \textstyle \text{$\intr(\mcR^\gamma)$ has $\bigcap_{(l,i) \in \Idx} B_{l,i}$-directions around $w$.} \]
  By applying \Cref{lem:dir-basic}-(5) to the above and (ii), we obtain the desired conclusion:
  \[ \textstyle \text{$\intr(\mcR^\gamma)$ has sufficient directions around $w$.} \]

  What remains is to show that there is $B_{l,i}$ satisfying (i) and (ii).
  We claim that the $B_{l,i}$ defined below satisfies (i) and (ii):
  \begin{align*}
    B_{l,i} =
    \begin{cases}
      \bbR^W & \text{if } w \in A_{l,i} \\
      \{ v \in \bbR^W \mid \alpha_{l,i} \cdot (\DF{y_{l,i}^\gamma}(w) \cdot v) \in (0, \infty) \} & \text{if } w \notin A_{l,i},
    \end{cases}
  \end{align*}
  where $\alpha_{l,i} \in \{\pm1\}$ is defined as
  \begin{align*}
    \alpha_{l,i} =
    \begin{cases}
      1 & \text{if $w \notin A_{l,i}$ and $y_{l,i}^\gamma(w) = \inf \mcI_{l,i}^{\gamma(l,i)}$} \\
      -1 & \text{if $w \notin A_{l,i}$ and $y_{l,i}^\gamma(w) = \sup \mcI_{l,i}^{\gamma(l,i)}$}.
    \end{cases}
  \end{align*}
  Before proving (i) and (ii), we point out that $B_{l,i}$ is well-defined.
  In particular, $\smash{\DF{y_{l,i}^\gamma}(w)}$ exists since $\smash{y_{l,i}^\gamma}$ is differentiable
  (by \Cref{lem:yz-yz-gamma-good-c1});
  and $\smash{\alpha_{l,i}}$ is well-defined (i.e., the cases in the definition of $\smash{\alpha_{l,i}}$ covers all possible cases)
  since $w \notin \smash{A_{l,i}}$ implies
  \begin{align}
    \label{eq:int-r-gamma-suff-dir-1}
    y_{l,i}^\gamma(w) \in \pbd(\mcI_{l,i}^{\gamma(l,i)}) = \{\inf {\mcI_{l,i}^{\gamma(l,i)}}, \sup {\mcI_{l,i}^{\gamma(l,i)}}\}.
  \end{align}
  Here the equality comes from  that $\smash{\mcI_{l,i}^{\gamma(l,i)}}$ is an interval in $\bbR$,
  and the inclusion comes from:
  \begin{align}
    \label{eq:int-r-gamma-suff-dir-2}
    \smash{y_{l,i}^\gamma}(w) &= y_{l,i}(w),
    &
    \smash{y_{l,i}^\gamma}(w) &\notin \intr(\smash{\mcI_{l,i}^{\gamma(l,i)}}),
    &
    y_{l,i}(w) &\in \smash{\mcI_{l,i}^{\gamma(l,i)}},
  \end{align}
  where the first equation is by \Cref{lem:r-gamma-f-df} and $w \in \mcR^\gamma$,
  the second equation by  $w \notin A_{l,i}$, and the third equation by $w \in \mcR^\gamma$.

  We now prove that the $B_{l,i}$ defined above satisfies (i) and (ii).

  \paragraph{\bf Proof of (i).}
  Consider $(l,i) \in \Idx$.
  If $w \in A_{l,i}$, then $A_{l,i}$ has $\bbR^W$-directions around $w$ by \Cref{lem:dir-basic}-(1),
  since $w \in \intr(A_{l,i}) = A_{l,i}$ (as $A_{l,i}$ is open); hence, (i) holds for this case.
  For the other case, suppose that $w \notin A_{l,i}$.
  Let $\epsilon_{l,i} \in \bbR \cup \{\infty\}$ be the length of the interval $\smash{\mcI_{l,i}^{\gamma(l,i)}}$.
  Then,
  \[
  \epsilon_{l,i} > 0, \qquad A_{l,i} = \{v \in \bbR^W \mid \alpha_{l,i} \cdot (y_{l,i}^\gamma(v) - y_{l,i}^\gamma(w)) \in (0, \epsilon_{l,i}) \}.
  \]
  Here the former holds, since we have $\smash{\intr(\mcI_{l,i}^{\gamma(l,i)})} \neq \emptyset$ 
  (by \Cref{def:minrep-sigma-c1}) and that $\adf{\sigma_{l,i}}$ is consistent (by assumption).
  The latter holds, since $\intr(\smash{\mcI_{l,i}^{\gamma(l,i)}})$ is either
  $(\smash{y_{l,i}^\gamma(w)}, \smash{y_{l,i}^\gamma(w)} + \epsilon_{l,i})$ or $(\smash{y_{l,i}^\gamma(w)} - \epsilon, \smash{y_{l,i}^\gamma(w)})$
  by $w \notin A_{l,i}$ and \Cref{eq:int-r-gamma-suff-dir-1}.
  By these two observations, and since $\smash{y_{l,i}^\gamma}$ is differentiable,
  \Cref{lem:dir-basic}-(2) is applicable to $(A_{l,i}, B_{l,i}, w)$ and directly implies (i).

  \paragraph{\bf Proof of (ii).}
  First, $\smash{\bigcap_{(l,i) \in \Idx}} B_{l,i}$ is open as desired, since every $B_{l,i}$ is open and $\Idx$ is finite.
  Second, we show that $\smash{\bigcap_{(l,i) \in \Idx}} B_{l,i}$ is nonempty.
  Let $\Idx' \defeq \{(l,i) \in \Idx \mid w \notin A_{l,i}\}$.
  By the definition of $B_{l,i}$, what we want to show is that for some $v' \in \bbR^W$,
  \[
  \alpha_{l,i} \cdot (\DF{y_{l,i}^\gamma}(w) \cdot v') > 0 \qquad\text{for all $(l,i) \in \Idx'$}.
  \]
  Since $\alpha_{l,i} \neq 0$ for all $(l,i) \in \Idx'$,
  it suffices to show that for some $v' \in \bbR^W$,
  \begin{align}
    \label{eq:int-r-gamma-suff-dir-3}
    \DF{y_{l,i}^\gamma}(w) \cdot v'  = \alpha_{l,i} \qquad\text{for all $(l,i) \in \Idx'$}.
  \end{align}
  To prove this, we analyze the above equation as follows.
  Consider any $(l,i) \in \Idx'$.
  Then, we have $w \notin A_{l,i}$, which implies $y_{l,i}(w) \in \pbd(\smash{\mcI_{l,i}^{\gamma(l,i)}})$
  by \Cref{eq:int-r-gamma-suff-dir-1,eq:int-r-gamma-suff-dir-2}.
  From this, $\tau_l$ has bias parameters (by assumption). 
  So, for all $v = (v_1, \ldots, v_W) \in \bbR^{W}$, 
  \begin{align}
    \nonumber
    y_{l,i}^\gamma(v)
    &= \tau_{l,i} \big( z_{l-1}^\gamma(v), \pi_l(v) \big)
    \\ \nonumber
    &= \tau_{l,i} \big( z_{l-1}^\gamma(v_1, \ldots, v_{W'}, 0, \ldots, 0),
    (v_{W'+1}, \ldots, v_{W' + W_l}) \big)
    \\
    \label{eq:int-r-gamma-suff-dir-4}
    &= \tau'_{l,i} \big( z_{l-1}^\gamma(v_1, \ldots, v_{W'}, 0, \ldots, 0),
    (v_{W'+1}, \ldots, v_{W' + (W_l - N_l)} \big) + v_{W' + (W_l - N_l + i)},
  \end{align}
  where the second line uses $W' \defeq W_1 + \cdots + W_{l-1}$ and
  the fact that $\smash{z_{l-1}^\gamma}$ depends only on the parameters of $\tau_1, \ldots, \tau_{l-1}$,
  and the third line uses that $\tau_l$ has bias parameters. 
  Let $\psi_{l,i} \defeq W' + (W_l - N_l + i)$.
  Since the first term in  \Cref{eq:int-r-gamma-suff-dir-4} does not depend on $v_{\psi_{l,i}}, \ldots, v_W$,
  the following holds for all $j \geq \psi_{l,i}$:
  \begin{align*}
    \big(\DF{y_{l,i}^\gamma}(w)\big)_{j}
    &
    =
    \begin{cases}
      1 & \text{if $j=\psi_{l,i}$}
      \\
      0 & \text{if $j > \psi_{l,i}$}.
    \end{cases}
  \end{align*}
  From this, the following holds for all $v \in \bbR^W$:
  \begin{align*}
    \DF{y_{l,i}^\gamma}(w) \cdot v
    &= \sum_{j \in [W]} \big(\DF{y_{l,i}^\gamma}(w)\big)_j \cdot v_j
    = f_{l,i}(v_1, \ldots, v_{\psi_{l,i}-1}) + v_{\psi_{l,i}},
  \end{align*}
  where $f_{l,i} : \bbR^{\psi_{l,i}-1} \to \bbR$ is defined as
  $
  f_{l,i}(u) \defeq \smash{\sum_{j \in [\psi_{l,i}-1]}} \smash{\big(\DF{y_{l,i}^\gamma}(w)\big){}_j} \cdot u_j.
  $
  Hence, what we planned to show (i.e., \Cref{eq:int-r-gamma-suff-dir-3} holds for some $v' \in \bbR^W$)
  is equivalent to the following: for some $v' \in \bbR^W$,
  \begin{align}
    \label{eq:int-r-gamma-suff-dir-5}
    f_{l,i}(v'_1, \ldots, v'_{\psi_{l,i}-1}) + v'_{\psi_{l,i}} = \alpha_{l,i}
    \qquad\text{for all $(l,i) \in \Idx'$}.
  \end{align}
  Since $\psi_{l,i} \neq \psi_{l',i'}$ for any $(l,i) \neq (l',i')$,
  \Cref{lem:ineq-set-sat} implies that there is $v' \in \bbR^W$ satisfying \Cref{eq:int-r-gamma-suff-dir-5}.
  This proves (ii), and concludes the proof.
\end{proof}

\begin{lemma}
  \label{lem:ad-consistent}
  Let $\gamma \in \Gamma$ and $w \in \mcR^\gamma$.
  Suppose that $\intr(\mcR^\gamma)$ has sufficient directions around $w$.
  Then,
  \begin{align*}
    \ADF{z_L}(w) =
    \begin{cases}
      \DF{z_L}(w)
      & \text{if\, $\DF{z_L}(w) \neq \bot$}
      \\
      \lim_{n \to \infty} \DF{z_L}(w'_n) \text{ for some $w'_n \to w$} 
      & \text{if\, $\DF{z_L}(w) = \bot$}.
    \end{cases}
  \end{align*}
\end{lemma}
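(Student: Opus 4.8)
The plan is to reduce everything to a comparison between $z_L$ and the ``frozen'' network $z_L^\gamma$, using three facts already available in this section: $z_L^\gamma$ is $C^1$ (\Cref{lem:yz-yz-gamma-good-c1}), $z_L^\gamma = z_L$ on $\mcR^\gamma$ (\Cref{lem:r-gamma-f-df}), and $\ADF{z_L}(w) = \DF{z_L^\gamma}(w)$ (\Cref{lem:r-gamma-adf}). With these, the statement splits cleanly into the two cases appearing in its conclusion. Throughout I would treat the degenerate case $W = 0$ separately (there $z_L$ is constant, hence differentiable, and only the first case arises), so that in the argument below $W \geq 1$ and any set with sufficient directions around $w$ is in particular nonempty.

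In the case $\DF{z_L}(w) \neq \bot$: then $z_L$ is differentiable at $w$, and so is $z_L^\gamma$ since it is $C^1$. Because $\intr(\mcR^\gamma) \subseteq \mcR^\gamma$ and $w \in \mcR^\gamma$, \Cref{lem:r-gamma-f-df} gives $z_L = z_L^\gamma$ on $\intr(\mcR^\gamma) \cup \{w\}$. By hypothesis $\intr(\mcR^\gamma)$ has sufficient directions around $w$, so \Cref{lem:suff-dir-df} applies to $(z_L, z_L^\gamma, \intr(\mcR^\gamma), w)$ and yields $\DF{z_L}(w) = \DF{z_L^\gamma}(w)$. Combined with \Cref{lem:r-gamma-adf}, this gives $\ADF{z_L}(w) = \DF{z_L^\gamma}(w) = \DF{z_L}(w)$, as required.

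In the case $\DF{z_L}(w) = \bot$: first I would extract an approximating sequence inside $\intr(\mcR^\gamma)$. Since $\intr(\mcR^\gamma)$ has sufficient directions around $w$, there is $B \subseteq \bbR^W$ with $\spann(B) = \bbR^W$ such that some $b \in B$ admits $\delta > 0$ with $\{w + tb : t \in (0,\delta)\} \subseteq \intr(\mcR^\gamma)$; set $w'_n \defeq w + \tfrac{\delta}{n+1} b$, so $w'_n \in \intr(\mcR^\gamma)$ and $w'_n \to w$. Next, since $\intr(\mcR^\gamma)$ is open and $z_L = z_L^\gamma$ there (\Cref{lem:r-gamma-f-df}) with $z_L^\gamma$ differentiable, $z_L$ is differentiable on $\intr(\mcR^\gamma)$ with $\DF{z_L} = \DF{z_L^\gamma}$ on it (the standard ``agree on an open set'' fact, already invoked in the proof of \Cref{lem:ndf-inc-bound}); in particular $\DF{z_L}(w'_n) = \DF{z_L^\gamma}(w'_n) \neq \bot$. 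Finally, since $z_L^\gamma$ is $C^1$ (\Cref{lem:yz-yz-gamma-good-c1}), $\DF{z_L^\gamma}$ is continuous, hence $\DF{z_L}(w'_n) = \DF{z_L^\gamma}(w'_n) \to \DF{z_L^\gamma}(w) = \ADF{z_L}(w)$ by \Cref{lem:r-gamma-adf}. This is exactly the asserted form $\ADF{z_L}(w) = \lim_{n\to\infty} \DF{z_L}(w'_n)$ with $w'_n \to w$.

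There is no deep obstacle here: the lemma is a consequence of the already-developed machinery. The only points that require care are that ``sufficient directions around $w$'' must be used for two distinct purposes — to supply the approximating segment lying inside $\intr(\mcR^\gamma)$ in the non-differentiable case, and to invoke \Cref{lem:suff-dir-df} in the differentiable case — and that it is the $C^1$-regularity of $z_L^\gamma$ (not merely its differentiability) that makes $\DF{z_L^\gamma}(w'_n)$ converge to $\DF{z_L^\gamma}(w)$; one should also note that the hypothesis forces $\intr(\mcR^\gamma) \neq \emptyset$ whenever $W \geq 1$, so the sequence construction is never vacuous.
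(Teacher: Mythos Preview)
Your proposal is correct and follows essentially the same approach as the paper's proof: both split into the two cases, invoke \Cref{lem:suff-dir-df} in the differentiable case (using that $z_L = z_L^\gamma$ on $\intr(\mcR^\gamma)\cup\{w\}$ and that both are differentiable at $w$), and in the non-differentiable case extract a sequence in $\intr(\mcR^\gamma)$ converging to $w$ and use continuity of $\DF{z_L^\gamma}$ together with $\DF{z_L} = \DF{z_L^\gamma}$ on the open set $\intr(\mcR^\gamma)$. Your explicit sequence construction and the remark on $W=0$ are minor additions, not departures.
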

\begin{proof}
  Let $\gamma \in \Gamma$ and $w \in \mcR^\gamma$ such that
  $\intr(\mcR^\gamma)$ has sufficient directions around $w$.
  By \Cref{lem:r-gamma-f-df,lem:r-gamma-adf},
  \begin{align}
    \label{eq:ad-consistent-1}
    z_L(w') &= z_L^\gamma(w') \;\;\land\;\; \ADF{z_L}(w') = \DF{z_L^\gamma}(w') \qquad\text{for all } w' \in \mcR^\gamma.
  \end{align}
  We prove the conclusion for each of the two cases: $\DF{z_L}(w) \neq \bot$ and $\DF{z_L}(w) = \bot$.

  \paragraph{\bf Case 1:} $\DF{z_L}(w) \neq \bot$ (i.e., $z_L$ is differentiable at $w$).
  We want to show \[\ADF{z_L}(w) = \DF{z_L}(w).\]
  This holds as follows: \[ \ADF{z_L}(w) = \DF{z_L^\gamma}(w) = \DF{z_L}(w), \]
  where the first equality is by \Cref{eq:ad-consistent-1},
  and the second equality follows directly from \Cref{lem:suff-dir-df} applied to $(z_L^\gamma, z_L, \mcR^\gamma, w)$.
  Here \Cref{lem:suff-dir-df} is applicable since its preconditions are satisfied:
  $z_L^\gamma$ is differentiable at $w$ (by \Cref{lem:yz-yz-gamma-good-c1});
  $z_L$ is differentiable at $w$ (by assumption);
  $z_L^\gamma = z_L$ on $\intr(\mcR^\gamma) \cup \{w\}$ (by \Cref{eq:ad-consistent-1});
  and $\intr(\mcR^\gamma)$ has sufficient directions around $w$
  (by assumption).

  \paragraph{\bf Case 2:} $\DF{z_L}(w) = \bot$ (i.e., $z_L$ is not differentiable at $w$).
  We want to show:
  \begin{align}
    \label{eq:ad-consistent-2}
    \ADF{z_L}(w) & = \lim_{n \to \infty} \DF{z_L}(w'_n) \qquad\text{for some $w'_n \to w$}. 
  \end{align}
  Since $\intr(\mcR^\gamma)$ has sufficient directions around $w$ (by assumption),
  there is $\{w'_n \in \intr(\mcR^\gamma)\}_{n \in \bbN}$ such that $w'_n \to w$.
  We show that these $w'_n$ satisfy \Cref{eq:ad-consistent-2} as follows:
  \begin{align*}
    \ADF{z_L}(w) = \DF{z_L^\gamma}(w) = \lim_{n \to \infty} \DF{z_L^\gamma}(w'_n) = \lim_{n \to \infty} \DF{z_L}(w'_n),
  \end{align*}
  where the first equality is by \Cref{eq:ad-consistent-1},
  the second equality uses that $\smash{\DF{z_L^\gamma}}$ is continuous
  (by \Cref{lem:yz-yz-gamma-good-c1}),
  and the third equality uses that $\smash{\DF{z_L^\gamma}}(w_n') = \DF{z_L}(w'_n)$ for all $n$
  (since $w'_n \in \intr(\mcR^\gamma)$ and $\smash{z_L^\gamma} = z_L$ on $\mcR^\gamma$ by \Cref{eq:ad-consistent-1}).
  This concludes the proof.
\end{proof}

\subsection{\Cref{thm:clarke-subdiff-bias,thm:clarke-subdiff-nobias} (Main Proofs)}

{\bf \Cref{thm:clarke-subdiff-bias}.}
{\it
  If $z_L$ has bias parameters and $\adf{\sigma_{l,i}}$ is consistent for all $(l,i) \in \Idx$, then for all $w \in \bbR^W$, 
  \begin{align*}
    \ADF{z_L}(w) =
    \begin{cases}
      \DF{z_L}(w) & \text{if $\DF{z_L}(w) \neq \bot$}
      \\
      \begin{array}{@{}l@{}}
        \lim_{n \to \infty} \DF{z_L}(w'_n)
        \;\;\text{for some $w'_n \to w$}
      \end{array}
      & \text{if $\DF{z_L}(w) = \bot$}.
    \end{cases}
  \end{align*}
  This implies that $\ADF{z_L}$ is a Clarke subderivative of~$z_L$.
}
\begin{proof} 
  This theorem is a special case of \Cref{thm:clarke-subdiff-nobias}; we omit the proof.
\end{proof}

{\bf \Cref{thm:clarke-subdiff-nobias}.}
{\it
  Let $w \in \bbR^W$
  and assume that $\adf{\sigma_{l,i}}$ is {consistent} for all $(l,i) \in \Idx$.
  If $y_{l,i}(w) \notin \ncdf{\sigma_{l,i}}$ for all $(l,i) \in \Idx$
  such that $\tau_l$ does not have bias parameters, 
  then
  \begin{align*}
    \ADF{z_L}(w) =
    \begin{cases}
      \DF{z_L}(w) & \text{if $\DF{z_L}(w) \neq \bot$}
      \\
      \begin{array}{@{}l@{}}
        \lim_{n \to \infty} \DF{z_L}(w'_n)
        \\[-2pt]
        \;\;\;\text{for some $w'_n \to w$}
      \end{array}
      & \text{if $\DF{z_L}(w) = \bot$}
    \end{cases}
  \end{align*}
  and so $\ADF{z_L}(w)$ is a Clarke subderivative of $z_L$ at $w$.
}
\begin{proof} 
  Let $w \in \bbR^W$ that satisfies the assumption in the statement.
  By \Cref{lem:r-gamma-partition},
  there is $\gamma \in \Gamma$ such that $w \in \mcR^\gamma$.
  Note that \Cref{lem:int-r-gamma-suff-dir} is applicable to $(\gamma, w)$ because:
  $\smash{\adf{\sigma_{l,i}}}$ is consistent for all $(l,i) \in \Idx$ (by assumption); 
  and for all $l \in [L]$, if $\tau_l$ does not have bias parameters, 
  then $y_{l,i}(w) \notin \ncdf{\sigma_{l,i}}$ and so $y_{l,i}(w) \notin \smash{\pbd(\mcI_{l,i}^{\gamma(l,i)})}$ for all $i \in [N_l]$,
  where the former follows from the assumption and the latter from $\smash{\pbd(\mcI_{l,i}^{\gamma(l,i)})} \subseteq \ncdf{\sigma_{l,i}}$
  (by \Cref{def:minrep-sigma-c1}).
  Hence, \Cref{lem:int-r-gamma-suff-dir} implies that $\intr(\mcR^\gamma)$ has sufficient directions around $w$,
  which subsequently implies the conclusion by \Cref{lem:ad-consistent}.
\end{proof}


\end{document}
